\documentclass[10pt]{article}
\usepackage{enumerate}
\usepackage[OT1]{fontenc}
\usepackage{amsmath,amssymb}
\usepackage{natbib}
\usepackage[usenames,dvipsnames]{xcolor}
\usepackage{geometry}
\usepackage{dsfont}
\usepackage{thm-restate}
\usepackage{pgfplots}
\usepackage{smile_new}
\usepackage{multirow}
\usepackage{rotating}
\usepackage{enumerate}
\usepackage{esvect}
\usepackage{tikz}
\usetikzlibrary{patterns}
\usetikzlibrary{arrows}
\usepackage{hyperref}
\usepackage{algorithm}
\usepackage{algpseudocode}
\usepackage{enumitem}
\usepackage{acronym}
\newcommand{\ouralgfull}{\textsc{Posterior Gap Weighted Sampling with Abstention}}
\newcommand{\ouralg}{{\textnormal{\textsc{PGWS}}}}
\newcommand{\dd}{\mathrm{d}}

\usepackage[most,skins]{tcolorbox}
\definecolor{rliableolive}{HTML}{BBCC33}
\definecolor{rliableblue}{HTML}{77AADD}
\definecolor{rliablered}{HTML}{EE8866}
\definecolor{LightCyan}{rgb}{0.88,1,1}
\definecolor{darkblue}{HTML}{2878D9}
\definecolor{navyblue}{HTML}{0000FF}
\usepackage{mathtools}

\theoremstyle{plain}

\usepackage{mathrsfs}
\usepackage{fullpage}

\usepackage[protrusion=false,expansion=false]{microtype}
\usepackage{bbm}

\usepackage{makecell}
\usepackage{float}        
\usepackage{tabularx}     
\usepackage[table]{xcolor} 
\usepackage{multirow}
\usepackage{cleveref}
\usepackage{subcaption}
\usepackage{xurl}

\crefname{lemma}{lemma}{lemmas}
\Crefname{lemma}{Lemma}{Lemmas}
\crefname{definition}{definition}{definitions}
\Crefname{definition}{Definition}{Definitions}

\title{Bayesian Best-Arm Identification with Abstention: A Polynomial-to-Exponential Phase Transition}

\author{Yuqi Huang\textsuperscript{1}\quad
Yunlong Hou\textsuperscript{1}\quad
Vincent Y.~F.~Tan\textsuperscript{1,2} \\ \\
\textsuperscript{1}Department of Mathematics, National University of Singapore\\  \textsuperscript{2}Department of Electrical and Computer Engineering, National University of Singapore
}
\date{}
\begin{document}
\maketitle

\begin{abstract}
We study the Bayesian fixed-budget best-arm identification problem in which a learner can abstain from making a terminal recommendation. Subject to an abstention budget $\alpha$, we analyze the probability of undetected error--the risk of recommending a suboptimal arm without abstaining.
Our central finding is that abstention induces a phase transition: without abstention, the error probability decays polynomially in the sampling budget $T$; in contrast, introducing any small positive abstention budget shifts this to an exponential decay.
For Gaussian priors and rewards, in the regime $T\to\infty$ followed by $\alpha\downarrow0$, we establish exact matching information-theoretic lower bounds and algorithmic upper bounds on the optimal error exponent, which takes the form $\exp(-\frac{\alpha^{2}T}{8\kappa_{\nu}^{2}})$. The hardness parameter $\kappa_{\nu}$ represents the prior density of the top-two gap at zero, highlighting that nearly tied instances drive the fundamental error. We introduce an adaptive algorithm, \textsc{PGWS}, that successfully achieves this optimal exponent by expending its abstention budget on statistically ambiguous instances.
We further demonstrate that this polynomial-to-exponential improvement is exclusively a Bayesian phenomenon--in the frequentist setting, abstention only affects lower-order exponent terms. We also extend our results beyond the Gaussian model.
\end{abstract}

\section{Introduction}\label{sec:Intro}

\subsection{Problem Overview and Motivation}
\label{subsec:problem-overview}

\emph{Fixed-Budget Best-Arm Identification} (FB-BAI) \citep{audibert2010best,karnin2013almost, carpentier2016tight} is a canonical pure exploration problem in
sequential decision making. A learner is given a  sampling budget, adaptively
collects noisy observations from a set of  $K$ arms with unknown mean rewards, and
 finally   recommends the arm believed to have the largest mean. This formulation
captures the statistical core of ranking and selection, 
online experimentation, and treatment-selection problems, where samples are
costly and the objective is to make a reliable terminal recommendation rather
than to maximize reward during the experiment~\citep{audibert2010best,kaufmann2016complexity,russo2020simple}.

Typical FB-BAI formulations require a firm terminal recommendation after the
sampling budget is exhausted. In many applications, however, this  forced decision, in which the decision maker commits to one of the $K$ arms, may pose undue  risks. When the collected evidence is ambiguous, the decision maker may
defer the decision, run a follow-up experiment, escalate the case to an expert,
or choose a conservative default action. This motivates a FB-BAI model
with terminal {\em abstention} option: after sampling, the learner may either recommend an arm
or output an inconclusive decision.

Abstention changes the notion of error. A policy that always abstains
never makes an incorrect recommendation, but it is not useful. We therefore
treat abstention as a scarce resource: the learner must keep the probability of
abstention below a prescribed level \(\alpha\). Subject to this constraint, the
performance criterion is the probability of an \emph{undetected error}, namely the probability that the learner does not abstain but  recommends a suboptimal arm. Informally, let $a^\star$ denote the best arm, $?$ be the abstention decision, and
$\hat a\in [K]\cup\{?\}$ be the terminal output. The objective is to minimize
\begin{equation}
     \mathbb P\big(\hat a\notin\{a^\star,?\}\big),
     \quad
     \textrm{subject to} \quad \mathbb P(\hat a=?)\le \alpha .
\end{equation}

This paper studies the problem in a Bayesian fixed-budget setting, where the
unknown reward means are drawn from known priors. The Bayesian formulation is
natural when the same experimental protocol is repeatedly applied to related
instances, or when historical data provide a prior model for plausible reward
configurations. It is also the setting in which the effect of abstention is particularly
pronounced. Under a continuous prior, the Bayes error probability of forced-decision (i.e., without abstention) FB-BAI is
dominated by instances in which the two best arms are nearly tied. These instances
are rare, but they are also the hardest to resolve with a finite sampling budget.

The central observation of this paper is that a small abstention budget allows the learner to spend its inconclusive decisions precisely on these statistically ambiguous near-tie cases. We show that this fundamentally results in a phase transition of  the  behavior of Bayesian
BAI: even a small positive abstention budget  substantially reduces the forced-decision
{\em polynomially} small  Bayes error  probability into an {\em exponentially}  small  Bayes undetected error probability.
We derive this  exponent and show that it is governed by the amount of prior probability near top-two ties. 
The rest of the paper formalizes this principle, develops an adaptive algorithm
that attains the optimal exponent, and contrasts the Bayesian phenomenon with the frequentist setting.

\subsection{Problem Formulation and Top-Two Gap Complexity}
\label{subsec:model-objective}

We now introduce the formal model. For an integer $K\ge 2$, let $[K]:=\{1,\ldots,K\}$. The learner is given a fixed sampling budget $T$ and a collection of $K$ arms.
Arm $i\in[K]$ has an unknown mean reward $\mu_i$. We study a Bayesian model in which the mean vector $\mu=(\mu_1,\ldots,\mu_K)$ is drawn from a known product Gaussian prior
\[
    P_\nu(\dd\mu)
    :=
    \bigotimes_{i=1}^K
    \mathcal N(\nu_i,\sigma_0^2)(\dd\mu_i),
\]
where the prior means $\nu_1,\ldots,\nu_K\in\mathbb R$ and prior variance (which is common to all arms)
$\sigma_0^2>0$ are fixed and known. 
 We define \emph{the best arm} $a^\star=a^\star(\mu):=\argmax_{i\in[K]}\mu_i$ which is unique almost surely since the   prior $P_\nu$ is continuous.
Conditional on $\mu$, the rewards from the arms are independent (across arms and time) and normally distributed:
\[
    Y_{i,s}\mid \mu_i \sim \mathcal N(\mu_i,1),
    \qquad s \in \mathbb{N},~i \in [K].\]

\paragraph{Policies and Probability Laws.}

A {\em policy} $\pi$ consists of a sequence of sampling rules and a terminal
decision rule. At each round $t=1,\ldots, T$, the policy selects an arm $A_t \in [K]$. Conditional on  $A_t=i$ and the mean vector $\mu$, the learner observes a random sample $X_t=Y_{i, N_i(t)}$ where  $N_i(t):=\sum_{s=1}^t \mathbbm{1}\{A_s=i\}$
denotes the number of samples collected from arm $i$ up to and including time $t$. The choice of the arm to be pulled $A_t$ depends on the previous arm choices $\{A_s\}_{s=1}^{t-1}$ and observations $\{  X_s  \}_{s=1}^{t-1}$. At the terminal time $T$, the policy recommends an arm from $[K]$ as the estimated best arm or an abstention decision denoted as $?$.  The class of all such policies is denoted as $\Pi_T$.



For a fixed mean vector $\mu$, we write $\mathbb P_\mu^\pi$ and
$\mathbb E_\mu^\pi$ for the probability and expectation over the reward
realizations and the potential randomization of the policy. The joint Bayesian law is the
mixture
\[
    \mathbb P_\nu^\pi(\cdot)
    :=
    \int_{\mathbb R^K}
    \mathbb P_\mu^\pi(\cdot)\,P_\nu(\dd\mu),
\]
and $\mathbb E_\nu^\pi$ is defined analogously. When the policy (resp. probability law) is clear from context, we suppress the superscript $\pi$ (resp. subscript $\nu$).


\paragraph{Terminal Decisions with Abstention.}

At the end of the experiment, the learner outputs $\hat a_T\in[K]\cup\{?\}$,
where $?$ denotes abstention, an inconclusive decision. The learner makes an
\emph{undetected error} if it recommends an arm and the recommended arm is not 
the true best arm, i.e., the event
    $\big\{\hat a_T\notin\{a^\star,?\} \big\}
    =
     \big\{\hat a_T\in[K],\ \hat a_T\neq a^\star\big\}$ occurs.

For a policy $\pi\in\Pi_T$, 
define its {\em Bayes abstention probability} and {\em Bayes undetected error probability} respectively as 
\[
    \mathcal A_T(\pi)
    :=
    \mathbb P_\nu^\pi(\hat a_T=?)
    \quad\mbox{and}\quad
    \mathcal E_T(\pi)
    :=
    \mathbb P_\nu^\pi
    \bigl(\hat a_T\in[K],\ \hat a_T\neq a^\star(\mu)\bigr).
\]
Given an abstention budget $\alpha\in(0,1)$, let
\[
    \Pi_T(\alpha)
    :=
    \{\pi\in\Pi_T:\mathcal A_T(\pi)\le\alpha\}.
\]
The \emph{optimal Bayes undetected error probability} is
\begin{equation}
\label{eq:bayes-risk-abstention}
    \mathfrak E_{T}(\alpha)
    :=
    \inf_{\pi\in\Pi_T(\alpha)}
    \mathcal E_T(\pi).
\end{equation}
For later comparison, we also define the {\em optimal  forced-decision Bayes probability of
misidentification} (in which abstention is not allowed) as 
\begin{equation}
    \mathfrak E_{T}(0)
    :=
    \inf_{\pi\in\Pi_T:\,\hat a_T\in[K]\ {\rm a.s.}}
    \mathbb P_\nu^\pi(\hat a_T\neq a^\star)=  \inf_{\pi\in\Pi_T(0) }
    \mathbb P_\nu^\pi(\hat a_T\neq a^\star)\ . \label{eqn:CT0}
\end{equation}

\paragraph{Top-Two Gap Complexity.}
We define the {\em top-two gap}
\begin{equation}
    \Gamma:=\mu_{(1)}-\mu_{(2)},
\end{equation}
where $\mu_{(1)}>\mu_{(2)}>\cdots>\mu_{(K)}$ denote the  means of the arms arranged in decreasing order. Theorems~\ref{thm:pgws-upper} and~\ref{thm:bayesian-abstention-lower} identify the density of $\Gamma$ at zero as the fundamental hardness parameter for the Bayesian BAI problem with abstention.

For each arm $i$, we denote the prior density and distribution function of $\mu_i$ as
\[
    f_i(x)
    :=
    \frac{1}{\sigma_0}
    \phi\left(\frac{x-\nu_i}{\sigma_0}\right),
    \quad\mbox{and}\quad
    F_i(x)
    :=
    \Phi\left(\frac{x-\nu_i}{\sigma_0}\right),
\]
where $\phi(x)=\frac{1}{\sqrt{2\pi}}e^{-x^2/2}$ and $\Phi(x)=\int_{-\infty}^x \phi(u)\, \dd u$ are the standard normal density and cumulative distribution function, respectively.

For a  pair of distinct arms $(i,j) \in[K]^2$, define their {\em average midpoint variable} as
    $X_{ij}
    \sim
    \mathcal N\!\left(
        \frac{\nu_i+\nu_j}{2},
        \frac{\sigma_0^2}{2}
    \right)$,
and define the \emph{top-relevance weight}
    $w_{ij}
    :=
    \mathbb E\!\left[
        \prod_{k\neq i,j}
        \Phi\!\left(\frac{X_{ij}-\nu_k}{\sigma_0}\right)
    \right]$.
The quantity $w_{ij}$ is the prior probability, averaged over a local tie value
of arms $i$ and $j$, that this tied value lies above all remaining arms.
Define the \emph{hardness parameter} for Bayesian BAI with abstention problem as
\[
    \kappa_\nu
    =
    \sum_{i\neq j}
    \int_{\mathbb R}
    f_i(x)f_j(x)
    \prod_{k\neq i,j}F_k(x)\,\dd x 
    =
    \frac{1}{\sigma_0\sqrt\pi} \sum_{1\le i<j\le K}
    w_{ij}
    \exp\!\left(
        -\frac{(\nu_i-\nu_j)^2}{4\sigma_0^2}
    \right).
\]
In Lemma~\ref{lem:top-gap-density}, we show that the top-two gap $\Gamma$ is related to the   hardness parameter $\kappa_\nu$ as follows:
\begin{equation}
    \mathbb P_\nu(\Gamma\le \varepsilon) = \kappa_\nu\varepsilon + o(\varepsilon)\quad \mbox{as}\quad \varepsilon\downarrow 0. \label{eqn:kappa_nu}
\end{equation}
In view of \eqref{eqn:kappa_nu}, we see that $\kappa_\nu$ measures the amount of
prior probability assigned to locally ambiguous instances.
Specifically, pairs with similar prior means and a realistic chance of being the top two arms contribute more to $\kappa_\nu$, whereas pairs that are unlikely to be  optimal contribute little.

\subsection{Main Contributions and Results}
\label{subsec:main-results}

We formulate the fixed-budget BAI with abstention problem under the Bayesian setting.
We study this problem under the large-budget and small-abstention probability regime through the iterated limit $T\to\infty$ followed by $\alpha\downarrow0$ .

Our main result is a sharp characterization of the exponential rate of decay of the optimal Bayes
undetected error probability under abstention. In the Gaussian
prior and Gaussian reward model, we prove the information-theoretic lower bound
\[
    \liminf_{\alpha\downarrow0}
    \liminf_{T\to\infty}
    \frac{1}{\alpha^2T}
    \log \mathfrak E_{T}(\alpha)
    \ge
    - \frac{1}{8\kappa_\nu^2}.
\]

We also propose an adaptive algorithm, \ouralgfull{} (\ouralg($\alpha$)), whose
abstention probability is  calibrated exactly to $\alpha$ and whose
undetected error probability $\mathcal E_T(\ouralg(\alpha))$ satisfies
\[
    \limsup_{\alpha\downarrow0}
    \limsup_{T\to\infty}
    \frac{1}{\alpha^2T}
    \log \mathcal E_T(\ouralg(\alpha))
    \le
    - \frac{1}{8\kappa_\nu^2}.
\]
Combining these results, we see that the fundamental hardness of  the Bayesian BAI with abstention problem depends on the prior through $\kappa_\nu$, the density of top-two gap at zero. In other words, the  complexity is intimately related to the prior probability of instances in which the means of two best arms are nearly the same.

Subsequently, we compare the abstention problem with ordinary (forced-decision) Bayesian BAI, where the learner is compelled to recommend an arm. We establish that the optimal forced-decision Bayes probability of misidentification satisfies
\[
    \liminf_{T\to\infty}
    \sqrt T\,\mathfrak E_T(0)
    \ge
    \sqrt{\frac{2}{\pi}}\,\kappa_\nu .
\]
Therefore, allowing an $\alpha$ fraction of abstentions transits the Bayes probability of error from being on the
polynomial scale to the exponential scale.


We further demonstrate that this phase transition is inherently Bayesian. In contrast, for any fixed frequentist instance with a positive gap, the error probability of conventional forced-decision FB-BAI already exhibits exponential decay in $T$; see, e.g., \citet{karnin2013almost}. As a result, terminal abstention cannot improve the leading error exponent and affects only lower-order terms.

Finally, the analysis is extended beyond the Gaussian model. For regular
one-parameter reward models, the same exponent takes the form
\(1/(8\kappa^2)\), where \(\kappa\) is the top-two gap density evaluated in Fisher--Rao information coordinates; see Eqn.~\eqref{eqn:fisher_rao}.

We also provide numerical experiments illustrating the benefit of abstention and the practical advantage of our adaptive algorithm which we call \ouralg{}.

\subsection{Related work}
\label{subsec:related-work}

\paragraph{Best-arm Identification and Ranking and Selection.}
Best-arm identification is a central pure-exploration problem in multi-armed
bandits. In the fixed-confidence setting, the objective is to identify the best
arm with prescribed confidence using as few samples as possible 
\citep{banditAlg,even-dar2006action,kaufmann2016complexity}. In the fixed-budget setting,
which is our focus, the learner is given a sampling budget and aims to minimize
the probability of an incorrect terminal recommendation
\citep{audibert2010best,karnin2013almost,carpentier2016tight,
komiyama2022minimax,wang2023large}. For a fixed instance with a positive gap,
this probability typically decays exponentially in the budget, with the exponent
determined by instance-dependent complexity. Our work instead studies a Bayesian
average over instances and allows terminal abstention. The problem is also
related to ranking and selection and ordinal optimization, where the objective is
often to maximize the probability of correct selection under a simulation budget
\citep{chen2000simulation,glynn2004large,hong2021review}. In contrast to most of
that literature, our performance criterion is an abstention-constrained Bayes undetected error probability.

\paragraph{Bayesian Best Arm Identification.}
Bayesian BAI involves a prior distribution over the unknown reward means. Bayesian
algorithms such as Thompson sampling and top-two Thompson sampling have been
studied for posterior convergence and BAI
\citep{russo2020simple,shang2020fixed}. Recent work develops prior-sensitive
guarantees for Bayesian BAI: \citet{atsidakou2023bayesian} study Gaussian fixed-budget Bayesian BAI, while \citet{komiyama2024rate} characterize Bayesian simple-regret rates under continuity assumptions and identify near-tie instances as the leading source of error. Related work also studies prior-dependent
allocations in structured Bayesian fixed-budget BAI
\citep{nguyen2025prior} and fixed-confidence BAI in the Bayesian setting
\citep{jang2024fixed}. Our contribution is to add a
terminal abstention option and provide a full characterization of  the undetected error exponent
under an abstention budget.

\paragraph{Abstention and Reject-Option Decisions.}
Abstention has a long history in statistical decision theory and classification.
Classical reject-option rules trade prediction error against rejection \citep{chow1970optimum}, and selective prediction studies risk-coverage tradeoffs in supervised learning \citep{bartlett2008classification,cortes2016learning}.
These settings are typically passive, whereas our learner actively allocates a fixed sampling budget before deciding whether to recommend or abstain. Bandit models with abstention-like actions include selective sampling for online BAI \citep{camilleri2021selective} and regret minimization with per-round abstention \citep{yang2026abstention}. In contrast, our model permits abstention only at the terminal recommendation stage.

\paragraph{Erasure Decoding and Undetected Error.}
An essential motivation for considering this setup arises from erasure decoding in information theory. In
channel coding with an erasure option, the decoder may either output a message or declare an erasure; performance is then measured by the tradeoff between the total   and undetected error probabilities. 
This line of work dates back to Forney's exponential bounds for erasure, list, and decision-feedback decoding \citep{forney1968exponential}, with later refinements for erasure/list exponents,
exact random-coding exponents, and asymmetric erasure-error regimes
\citep{merhav2008error,somekhbaruch2011exact,hayashi2015asymmetric}. Our
abstention option plays an analogous role to erasure, but the BAI setting is an
active sequential experiment, in contrast to the non-sequential nature of channel coding (without feedback). 

\section{Algorithm and Error Analysis}

\subsection{The Algorithm  \ouralg{}}
\label{subsec:pgws-algorithm}

We now present \ouralgfull{} (\ouralg{}). This policy has two components, a sampling rule and a terminal decision rule. During the sampling phase,
it maintains   Gaussian posterior distributions of the arm means and 
samples arms according to posterior mean gaps. At the terminal time, it either
recommends the posterior mean leader or abstains, with the abstention rule
calibrated to the prescribed budget \(\alpha\).

Let \(\mathcal F_t:=\sigma(A_1,X_1,\ldots,A_t,X_t)\) denote the history up to time \(t\), with $\mathcal F_0:=\{\varnothing,\Omega\}$, augmented to include any external randomization used by the policy. Under the Gaussian prior and
Gaussian reward model, the posterior distributions after \(t\) samples are
independent across arms:
\[
    \mu_i\mid \mathcal F_t
    \sim
    \mathcal N\big(M_i(t),V_i(t)\big),
    \qquad i\in[K],
\]
where the posterior variance $V_i(t)$ and mean $M_i(t)$ are, respectively,
\begin{equation}
\label{eq:pgws-posterior-update}
    V_i(t)
    =
    \frac{1}{\sigma_0^{-2}+N_i(t)},
    \quad\mbox{and}\quad
    M_i(t)
    =
    V_i(t)
    \left(
        \frac{\nu_i}{\sigma_0^2}
        +
        \sum_{s=1}^{N_i(t)}Y_{i,s}
    \right).
\end{equation}

At each time \(t\), define the \emph{posterior mean leader}
    $\widehat b_t
    :=
    \arg\max_{i\in[K]} M_i(t)$.
For \(i\neq \widehat b_t\), define the
\emph{posterior mean gap}
    $\widehat\Delta_i(t)
    :=
    M_{\widehat b_t}(t)-M_i(t),$
     with $\widehat\Delta_{\widehat b_t}(t)
    :=
    \min_{j\neq \widehat b_t}
    \widehat\Delta_j(t)$.

\begin{algorithm}[t]
\caption{\ouralgfull{} (\ouralg{}$(\alpha)$)}
\label{alg:pgws}
\begin{algorithmic}[1]
\Require Budget \(T\), abstention level \(\alpha\), prior parameters
\((\nu_1,\ldots,\nu_K)\) and \(\sigma_0^2\).
\State Initialize \(N_i(0)=0\), \(M_i(0)=\nu_i\), and
\(V_i(0)=\sigma_0^2\) for all \(i\in[K]\).
\For{\(t=0,1,\ldots,T-1\)}
    \If{there exists \(i\in[K]\) such that \(N_i(t)<\sqrt{t+1}\)}
        \State Sample one such arm, using deterministic tie-breaking.
    \Else
        \State Compute \(\widehat b_t\), the gaps
        \(\{\widehat\Delta_i(t)\}_{i\in[K]}\), and the sampling probabilities
        \(\{p_i(t)\}_{i\in[K]}\) in \eqref{eq:pgws-sampling-prob}.
        \State Sample arm \(i\) with probability \(p_i(t)\).
    \EndIf
    \State Observe the reward and update \(N_i(t+1)\), \(M_i(t+1)\), and
    \(V_i(t+1)\) according to \eqref{eq:pgws-posterior-update}.
\EndFor
\State Set \(\widehat b_T=\arg\max_{i\in[K]}M_i(T)\).
\State Compute the terminal statistic \(R_T\) in
\eqref{eq:pgws-terminal-statistic}.
\State Output \(\hat a_T\) according to the abstention rule
\eqref{eq:pgws-terminal-rule}.
\end{algorithmic}
\end{algorithm}

The forced-exploration step in Lines 3--4 ensures that every arm is 
 sufficiently sampled to yield an accurate estimate of the mean. 
On all
remaining rounds, 
\ouralg{} samples arm \(i\) with
probability
\begin{equation}
\label{eq:pgws-sampling-prob}
    p_i(t)
    =
    \frac{\widehat\Delta_i(t)^{-2}}
    {\sum_{\ell=1}^K \widehat\Delta_\ell(t)^{-2}},
    \qquad i\in[K].
\end{equation}
Consequently, the leader and the closest
challenger receive equal sampling probability, while arms with larger
posterior mean gaps receive smaller probabilities.

At the end of the sampling phase, \ouralg{} measures the strength of the
posterior evidence in favor of the final leader \(\widehat b_T\). Define
\begin{equation}
\label{eq:pgws-terminal-statistic}
    R_T
    :=
    \min_{j\neq \widehat b_T}
    \frac{
        \bigl(M_{\widehat b_T}(T)-M_j(T)\bigr)^2
    }{
        2\bigl(V_{\widehat b_T}(T)+V_j(T)\bigr)
    },
\end{equation}
which represents the smallest  pairwise posterior Gaussian exponent
separating the final leader from its challengers. A small value of \(R_T\)
indicates that at least one challenger remains statistically difficult
to be distinguished from the leader.

To calibrate abstention, let \(r_{T,\alpha}\) be the lower
\(\alpha\)-quantile of \(R_T\) under the joint Bayesian law induced by the
sampling rule, i.e., 
\begin{equation}
\label{equ:terminal_threshold}
    r_{T,\alpha}
    :=
    \inf\bigl\{
        r\ge 0:
        \mathbb P(R_T\le r)\ge \alpha
    \bigr\}.
\end{equation}
If the distribution of \(R_T\) has an atom at the
threshold, define
\begin{equation}
\label{eq:pgws-boundary-randomization}
    \theta_{T,\alpha}
    :=
    \begin{cases}
    \displaystyle
    \frac{
        \alpha-\mathbb P(R_T<r_{T,\alpha})
    }{
        \mathbb P(R_T=r_{T,\alpha})
    },
    & \mathbb P(R_T=r_{T,\alpha})>0,\\ 
    0,
    & \mathbb P(R_T=r_{T,\alpha})=0.
    \end{cases}
\end{equation}
The terminal decision is
\begin{equation}
\label{eq:pgws-terminal-rule}
    \hat a_T
    =
    \begin{cases}
        ?, & R_T<r_{T,\alpha},\\
        ?, & R_T=r_{T,\alpha}
             \text{ with probability } \theta_{T,\alpha},\\
        \widehat b_T, & \text{otherwise}.
    \end{cases}
\end{equation}
Thus \ouralg{} abstains on the least confident posterior histories, up to boundary randomization which is used only to obtain
{\em exact} calibration when \(R_T\) has an atom at its \(\alpha\)-quantile.

\begin{remark}[Computing the quantile]
\label{rem:pgws-threshold}
The quantile \(r_{T,\alpha}\) is deterministic once the prior, the sampling
rule, \(T\), and \(\alpha\) are fixed. For the purpose of implementation, it can be estimated
offline by Monte Carlo simulation from the prior. A    
large-\(T\), small-\(\alpha\) approximation is
$
    r_{T,\alpha}^{\mathrm{asy}}
    =
    \frac{1}{8\kappa_\nu^2}\alpha^2T .
$
\end{remark}

\subsection{Upper Bound Analysis}
\label{subsec:pgws-upper-analysis}

We now derive theoretical guarantees for \ouralg{}. In
this subsection, all probabilities are under the joint Bayesian law induced by
Algorithm~\ref{alg:pgws}.

\begin{theorem}[Upper Bound for \ouralg{}]
\label{thm:pgws-upper}
For any $T$ and any $\alpha\in(0,1)$, \ouralg{}($\alpha$) with the terminal rule \eqref{eq:pgws-terminal-rule} satisfies
$\mathcal A_T(\ouralg(\alpha))=\alpha$.
Furthermore, asymptotically,
\begin{equation}
    \limsup_{\alpha\downarrow0}
    \limsup_{T\to\infty}
    \frac{1}{\alpha^2T}\log \mathcal E_T(\ouralg(\alpha))
    \le
    - \frac{1}{8\kappa_\nu^2}.\label{eqn:ub}
\end{equation}
\end{theorem}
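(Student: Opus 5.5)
The calibration claim $\mathcal A_T(\ouralg(\alpha))=\alpha$ is immediate from the definitions \eqref{equ:terminal_threshold}--\eqref{eq:pgws-terminal-rule}. The abstention probability equals $\mathbb P(R_T<r_{T,\alpha})+\theta_{T,\alpha}\mathbb P(R_T=r_{T,\alpha})$. Right-continuity of the distribution function of $R_T$ gives $\mathbb P(R_T<r_{T,\alpha})\le\alpha\le\mathbb P(R_T\le r_{T,\alpha})$, so $\theta_{T,\alpha}\in[0,1]$. If there is no atom at $r_{T,\alpha}$, then $\mathbb P(R_T<r_{T,\alpha})=\alpha$. The boundary randomization is independent of everything else, so the sum is exactly $\alpha$ in both cases.

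For the exponent, I would decompose the undetected error by conditioning on the terminal $\sigma$-field $\mathcal F_T$. Given $\mathcal F_T$, the posterior probability that $\widehat b_T$ is not the best arm is at most $\sum_{j\ne\widehat b_T}\mathbb P(\mu_j>\mu_{\widehat b_T}\mid\mathcal F_T)=\sum_j \bar\Phi\bigl(\widehat\Delta_j(T)/\sqrt{V_{\widehat b_T}+V_j}\bigr)\le (K-1)e^{-R_T}$, by the Gaussian tail bound and the definition \eqref{eq:pgws-terminal-statistic}. Since the algorithm recommends only when $R_T\ge r_{T,\alpha}$, taking expectations gives
\[
\mathcal E_T(\ouralg(\alpha))\le (K-1)\,e^{-r_{T,\alpha}}.
\]
This is legitimate because the Bayesian law makes the posterior exact under any adaptive sampling rule. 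The whole problem therefore reduces to the quantile lower bound
\[
\liminf_{\alpha\downarrow 0}\liminf_{T\to\infty}\frac{r_{T,\alpha}}{\alpha^2T}\ge\frac{1}{8\kappa_\nu^2},
\]
which is exactly Remark~\ref{rem:pgws-threshold}. Equivalently, for every $c<1/(8\kappa_\nu^2)$ and all small $\alpha$, I need $\limsup_T\mathbb P(R_T\le c\alpha^2T)<\alpha$.

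To prove this I would analyze the sampling rule on typical trajectories. The forced exploration gives $N_i(T)\ge\sqrt T$. For fixed $\mu$ with distinct means, $M_i(t)\to\mu_i$ almost surely, so $\widehat b_t=a^\star$ eventually. The probabilities $p_i(t)$ then converge to $\Delta_i^{-2}/\sum_\ell\Delta_\ell^{-2}$, where $\Delta_i$ are the true gaps and $\Delta_{a^\star}:=\Gamma$. A martingale/SLLN argument on $N_i(T)-\sum_{t<T}p_i(t)$ then gives $N_i(T)/T\to p_i^\star$. For the leader and its closest challenger $j^\star$, $p^\star_{a^\star}=p^\star_{j^\star}=\Gamma^{-2}/S$ with $S=\sum_\ell \Delta_\ell^{-2}$. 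When $\Gamma$ is small, $S\approx 2\Gamma^{-2}$, so $p^\star\approx 1/2$ and $V_{a^\star}+V_{j^\star}\approx 4/T$. The binding term in $R_T$ is then
\[
\frac{\Gamma^2}{2\cdot 4/T}\approx\frac{\Gamma^2T}{8},
\]
and the other pairs have larger exponents, of order $\Delta_j^2\,\Delta_j^{-2}\Gamma^2 T$. Hence $R_T\le c\alpha^2T$ forces roughly $\Gamma\le\sqrt{8c}\,\alpha(1+o(1))$. By \eqref{eqn:kappa_nu} this event has probability $\kappa_\nu\sqrt{8c}\,\alpha+o(\alpha)$, which is below $\alpha$ because $\sqrt{8c}\,\kappa_\nu<1$.

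The main obstacle is making this uniform over the prior, and I would handle it with a truncation. Fix $\eta>0$ and split on $\{\Gamma\le\eta\alpha\}$, which has probability $O(\eta\alpha)$, and on $\{\Gamma>\eta\alpha\}$ intersected with a compact set of $\mu$ whose complement has tiny prior mass. On the good set, $\Gamma$ is bounded below by a fixed constant (for fixed $\alpha$), so dominated convergence together with the almost-sure convergence $R_T/T\to\Gamma^2 p^\star/2$ (with $p^\star$ the limiting leader share) lets me pass to the limit $\limsup_T\mathbb P(R_T\le c\alpha^2T)\le\mathbb P(\Gamma^2p^\star/2\le c\alpha^2)$. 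No uniform finite-$T$ rates are needed. The remaining step is to show that the limit satisfies $p^\star\ge\tfrac12(1-O(\Gamma^2))$ with constants depending on the compact set, and then let $\alpha\downarrow0$ and $\eta\downarrow0$. The delicate technical point is the convergence of $N_i(T)/T$ under randomized sampling with data-dependent leaders. I expect it to follow from continuity of $p_i(\cdot)$ in the posterior means away from ties plus the martingale SLLN, but I anticipate this is where most of the work lies.
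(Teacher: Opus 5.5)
Your architecture is the paper's: exact calibration from the lower-quantile inequalities plus boundary randomization; the certification bound $\mathcal E_T\le (K-1)e^{-r_{T,\alpha}}$ from exactness of the Gaussian posterior under adaptive sampling; and a threshold lower bound derived from the almost-sure limit of $R_T/T$.

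Three smaller points first. The allocation convergence you flag as the main obstacle is routine in the paper: forced pulls are $O(\sqrt T)$, Azuma plus Borel--Cantelli handles the martingale part, and $m\mapsto p(m)$ is continuous at $\mu$ when the top arm is unique, so the Ces\`aro limit follows. Your truncation to $\{\Gamma>\eta\alpha\}$ intersected with a compact set is unnecessary for passing to the limit, since $R_T/T\to C^\dagger(\mu)$ almost surely and Portmanteau on $(-\infty,x]$ already give $\limsup_T\mathbb P(R_T\le xT)\le\mathbb P(C^\dagger(\mu)\le x)$. Finally, there is a factor-two slip: the limit is $C^\dagger(\mu)=\Gamma^2p^\star/4=\Gamma^2/(4W^\dagger)$ with $W^\dagger=1+\sum_{j\neq a^\star}(\Gamma/\Delta_j)^2$. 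Your $\Gamma^2p^\star/2$ with $p^\star\approx 1/2$ contradicts your own $\Gamma^2T/8$ heuristic and would give the wrong threshold.

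The genuine gap is the step you leave as remaining: proving $\mathbb P(C^\dagger(\mu)\le c\alpha^2)\le\sqrt{8c}\,\kappa_\nu\alpha+o(\alpha)$.

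Your claim that $p^\star\ge\frac12(1-O(\Gamma^2))$ with constants depending on a compact set is false. Any compact neighbourhood of the origin contains $\mu=(0,-\Gamma,-2\Gamma)$, which has $W^\dagger=9/4$ for every $\Gamma$. So $p^\star\approx1/2$ requires the third-best gap to be $\gg\Gamma$; compactness does not give this.

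Discarding a set of merely ``tiny'' prior mass is also not enough. You need $\limsup_T\mathbb P(R_T\le c\alpha^2T)<\alpha$, so the discarded set's contribution to $\{C^\dagger\le c\alpha^2\}$ must be $o(\alpha)$, not just small.

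The paper closes this with two ingredients your plan never uses:
\begin{itemize}
\item The crude bound $W^\dagger\le K$, i.e.\ $C^\dagger\ge\Gamma^2/(4K)$. This gives $\{C^\dagger\le x\}\subseteq\{\Gamma\le\sqrt{4Kx}\}$ with no truncation at all.
\item The triple-near-tie estimate $\mathbb P(\Gamma_3\le\delta)=O(\delta^2)$, where $\Gamma_3=\mu_{(1)}-\mu_{(3)}$.
\end{itemize}
Take $\delta_x=x^{3/8}$. On $\{C^\dagger\le x,\ \Gamma_3>\delta_x\}$, every other arm has $(\Gamma/\Delta_k)^2\le 4Kx^{1/4}$, so $W^\dagger=2+o(1)$ and hence $\Gamma\le(1+o(1))\sqrt{8x}$. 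Meanwhile $\mathbb P(\Gamma_3\le\delta_x)=O(x^{3/4})=o(\sqrt x)$.

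The second-order rarity of triple ties is what makes the excluded region negligible at scale $\alpha$. A first-order bound $O(\delta)$ would force $\delta=o(\alpha)$, leaving $\Gamma/\Delta_{(3)}$ uncontrolled. An equivalent joint estimate, $\mathbb P(\Gamma\le C\alpha,\ \Gamma_3\le\delta)=O(\alpha\delta)$ with $\delta\downarrow0$ after $\alpha\downarrow0$, would also work. Some quantitative input of this kind is what your compact-set truncation lacks.
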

The proof uses two key lemmas. In particular, the first lemma gives a finite-sample posterior
certification bound: conditional on the sampled arms and observed rewards up to time step $T$, the statistic $R_T$
controls the posterior probability that the final leader is not the true best arm.

\begin{lemma}[Posterior Certification Error Bound]
\label{lem:pgws-certification}
For every $T$ and every $\alpha\in(0,1)$,
\[
    \mathcal E_T(\ouralg(\alpha))
    \le
    (K-1)e^{-r_{T,\alpha}}.
\]
\end{lemma}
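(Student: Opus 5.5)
The plan is to condition on the terminal history $\mathcal F_T$ and use the tower property. Undetected error requires that \ouralg{} does not abstain. The abstention rule \eqref{eq:pgws-terminal-rule} uses only $R_T$ and an independent boundary coin, so on the non-abstention event we have $R_T\ge r_{T,\alpha}$. We can therefore write
\[
\mathcal E_T(\ouralg(\alpha))=\mathbb E\bigl[\mathbbm 1\{\hat a_T\neq ?\}\,\mathbb P(a^\star(\mu)\neq \widehat b_T\mid \mathcal F_T)\bigr]\le \mathbb E\bigl[\mathbbm 1\{R_T\ge r_{T,\alpha}\}\,\mathbb P(a^\star\neq \widehat b_T\mid\mathcal F_T)\bigr].
\]
This step is legitimate for two reasons. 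First, $\widehat b_T$ and $R_T$ are $\mathcal F_T$-measurable. Second, the terminal coin is independent of $\mu$ given $\mathcal F_T$, because it is external randomization that enters only through the decision. The sampling randomization is included in $\mathcal F_T$, and it is likewise independent of $\mu$ given the past. Hence the posterior of $\mu$ given $\mathcal F_T$ is the standard Gaussian one from \eqref{eq:pgws-posterior-update}: the arms are independent with $\mu_i\mid\mathcal F_T\sim\mathcal N(M_i(T),V_i(T))$. This holds because the likelihood of the adaptively collected data factorizes over arms, and the sampling rule's probabilities do not depend on $\mu$.

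The second step bounds the posterior error by a union bound over challengers. Write $b=\widehat b_T$. Then
\[
\mathbb P(a^\star\neq b\mid\mathcal F_T)\le\sum_{j\neq b}\mathbb P(\mu_j\ge\mu_b\mid\mathcal F_T).
\]
Conditionally, $\mu_j-\mu_b\sim\mathcal N(-(M_b-M_j),V_b+V_j)$, and $M_b-M_j\ge 0$ since $b$ is the posterior mean leader. Each term therefore equals $\bar\Phi\bigl((M_b-M_j)/\sqrt{V_b+V_j}\bigr)$. By the Chernoff bound $\bar\Phi(x)\le e^{-x^2/2}$ for $x\ge 0$, each term is at most
\[
\exp\Bigl(-\tfrac{(M_b-M_j)^2}{2(V_b+V_j)}\Bigr)\le e^{-R_T},
\]
by the definition \eqref{eq:pgws-terminal-statistic} of $R_T$ as a minimum over $j$. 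Summing over the $K-1$ challengers gives $\mathbb P(a^\star\neq b\mid\mathcal F_T)\le (K-1)e^{-R_T}$.

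Finally, on $\{R_T\ge r_{T,\alpha}\}$ we have $e^{-R_T}\le e^{-r_{T,\alpha}}$. Taking expectations and bounding the indicator by $1$ yields
\[
\mathcal E_T(\ouralg(\alpha))\le (K-1)e^{-r_{T,\alpha}}.
\]

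Two details need care. The first is the atom case: when $R_T=r_{T,\alpha}$ the rule may recommend, but the bound still holds at equality. The second, and the only genuinely delicate point, is justifying that the posterior is unaffected by the adaptive, randomized sampling and forced exploration. This is the standard argument that the policy's kernel factors out of the likelihood ratio in $\mu$. Ties in $\arg\max$ occur with probability zero, or are resolved deterministically, and they do not affect the argument.
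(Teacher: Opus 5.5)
Your proof is correct and takes essentially the same route as the paper. Both condition on $\mathcal F_T$, use the arm-wise factorized Gaussian posterior, take a union bound over challengers with the tail bound $\Phi(-x)\le e^{-x^2/2}$, and then use $R_T\ge r_{T,\alpha}$ on the non-abstention event; you also explicitly justify that the terminal coin is conditionally independent of $\mu$ given $\mathcal F_T$, which the paper needs for the identity $\mathcal E_T=\mathbb E\bigl[\mathbbm 1\{\hat a_T\neq ?\}\,\mathbb P(a^\star\neq\widehat b_T\mid\mathcal F_T)\bigr]$ but uses without comment.
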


\begin{lemma}[Asymptotic Lower Bound on the Calibration Threshold]
\label{lem:pgws-threshold-lower}
For every $\varepsilon\in(0,1)$, there exists $\alpha_0>0$ such that for every
$0<\alpha\le \alpha_0$, there exists $T_0(\alpha,\varepsilon)$ such that for all
$T\ge T_0(\alpha,\varepsilon)$,
\[
    r_{T,\alpha}
    \ge
    (1-\varepsilon) \frac{1}{8\kappa_\nu^2} \alpha^2T.
\]
\end{lemma}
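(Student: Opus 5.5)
To prove Lemma~\ref{lem:pgws-threshold-lower}, we show that for $r=(1-\varepsilon)\alpha^2T/(8\kappa_\nu^2)$ one has $\mathbb P(R_T\le r)<\alpha$ for all large $T$. By the definition \eqref{equ:terminal_threshold} of $r_{T,\alpha}$ and monotonicity of the distribution function, this forces $r_{T,\alpha}\ge r$. So the task reduces to an upper bound on the lower tail of $R_T$ at scale $\alpha^2T$. The heuristic is as follows. Under \ouralg{} the leader and the closest challenger receive equal sampling probability, so on a near-tie instance each of them gets roughly $T/4$ of the budget (at most $T/2$ each). The relevant posterior variances are then about $4/T$ each, and $R_T\approx \widehat\Gamma^2/(2\cdot 8/T)=\widehat\Gamma^2T/16$. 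The condition $R_T\le r$ therefore becomes $\widehat\Gamma\lesssim 4\sqrt{r/T}=\sqrt{(1-\varepsilon)/2}\,\alpha/\kappa_\nu$, and $\mathbb P(\Gamma\le\delta)\approx\kappa_\nu\delta$ from \eqref{eqn:kappa_nu} makes this probability about $\sqrt{(1-\varepsilon)/2}\,\alpha<\alpha$. I will make this rigorous with a cruder but valid version.

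\textbf{Step 1: crude variance bound.} The first step uses a deterministic bound that does not depend on the sampling rule. Since $N_{\widehat b_T}(T)+N_j(T)\le T$, we have $V_{\widehat b_T}(T)+V_j(T)\ge 1/N_{\widehat b_T}+1/N_j\ge 4/T$ in the worst case. That inequality goes the wrong way: we need an \emph{upper} bound on the variances in order to lower bound $R_T$. So we need $N_{\widehat b_T}(T),N_j(T)\ge cT$ with $c$ close to $1/4$ whenever $\widehat\Gamma$ is small. For this, decompose $\{R_T\le r\}$ into three events. $E_1$ is the event that the true gap satisfies $\Gamma\le(1+\eta)\delta_\alpha$, where $\delta_\alpha:=4\sqrt{r/T}(1+\eta)$. $E_2$ is the event that the posterior means are inaccurate, say $\max_i|M_i(T)-\mu_i|>\eta\delta_\alpha$. $E_3$ is the event that the sample counts of the top two arms fall below $(1-\eta)T/4$ while $\Gamma$ is of order $\alpha$. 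On the complement of $E_2\cup E_3$, with $\Gamma>(1+\eta)\delta_\alpha$, the leader is the true best arm, $\widehat\Delta_j\ge\Gamma-2\eta\delta_\alpha$, and the variances are at most $4/((1-\eta)T)$ for the top two. For other challengers $j$, their gap is larger, and a forced-exploration bound suffices once we show their counts are order $T\cdot\Gamma^2/\Delta_j^2$ or larger. Together these give $R_T>r$.

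\textbf{Step 2: the three probabilities.} $\mathbb P(E_1)\le\kappa_\nu(1+\eta)\delta_\alpha+o(\alpha)=(1+\eta)^2\sqrt{(1-\varepsilon)/2}\,\alpha+o(\alpha)$ by Lemma~\ref{lem:top-gap-density}, which is at most $(1-\varepsilon')\alpha$ for small $\eta$ and $\alpha\le\alpha_0$. $\mathbb P(E_2)\to0$ as $T\to\infty$ for fixed $\alpha$. This holds because forced exploration gives $N_i(T)\ge\sqrt T$, so the posterior standard deviation is $O(T^{-1/4})$, while $\delta_\alpha$ is a fixed positive constant. A Gaussian tail bound, combined with a union over arms and times (or the martingale structure of $M_i$), handles the adaptive sampling.

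\textbf{Step 3: the main obstacle.} The hardest part is $\mathbb P(E_3)\to0$, which is a law-of-large-numbers statement for the randomized sampling rule \eqref{eq:pgws-sampling-prob}. Conditional on $\mu$ with a fixed gap profile, $\widehat\Delta_i(t)\to\Delta_i$ almost surely by forced exploration. Hence $p_i(t)$ converges to its limit $p_i^\ast$, and the leader and the closest challenger each get $p^\ast\ge$ (close to) $1/4$ only when the remaining arms have gaps much larger than $\Gamma$. Here I expect a subtlety: with $K\ge3$ arms at comparable gaps, the top two may receive less than $T/4$. I would first try to show that for $\Gamma$ of order $\alpha\to0$ the other arms' gaps are bounded away from $0$ except on a set of prior probability $o(\alpha)$. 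This holds because the density of a triple near-tie is of second order. The top-two share then tends to $1/2$ of the non-forced rounds. Concretely, I would write $N_i(T)-\sum_t p_i(t)$ as a martingale with Azuma concentration, and use convergence of $p_i(t)$ after a burn-in of $o(T)$ rounds, with uniformity over $\mu$ in a compact set obtained by truncating the prior at probability cost $o(1)$. Collecting the three bounds gives $\mathbb P(R_T\le r)\le(1-\varepsilon')\alpha+o_T(1)<\alpha$, which completes the argument.
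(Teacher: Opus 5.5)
There is a genuine gap. Your allocation constant is off by a factor of $2$, and an arithmetic slip hides it. In \eqref{eq:pgws-sampling-prob} the leader and the closest challenger both carry weight $\widehat\Gamma_t^{-2}$, while on a near-tie every other arm carries $\widehat\Delta_j(t)^{-2}\ll\widehat\Gamma_t^{-2}$. So each of the top two arms receives a fraction of the budget tending to $1/2$, not $1/4$; for $K=2$ this is exact.

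Your Step~1 only secures counts $\ge(1-\eta)T/4$. That gives $V_{\widehat b_T}(T)+V_j(T)\le 8/((1-\eta)T)$ and $R_T\gtrsim\widehat\Gamma^2T/16$, so $R_T\le r$ only forces $\widehat\Gamma\lesssim 4\sqrt{r/T}$. But $4\sqrt{r/T}=\sqrt{2(1-\varepsilon)}\,\alpha/\kappa_\nu$, not $\sqrt{(1-\varepsilon)/2}\,\alpha/\kappa_\nu$. With the arithmetic done correctly, your bound reads $\mathbb P(E_1)=(1+\eta)^2\sqrt{2(1-\varepsilon)}\,\alpha+o(\alpha)$, which exceeds $\alpha$ for every $\varepsilon<1/2$. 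Hence the argument proves the lemma only for $\varepsilon$ roughly above $1/2$, i.e.\ $r_{T,\alpha}\gtrsim\alpha^2T/(16\kappa_\nu^2)$. This factor-$2$ loss in the exponent is fatal, because Theorem~\ref{thm:pgws-upper} needs $\varepsilon\downarrow0$.

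A sanity check confirms the claimed bound cannot hold: in fact $\mathbb P(R_T\le r)$ tends to $\sqrt{1-\varepsilon}\,\alpha+o(\alpha)$, which is larger than your claimed $(1+\eta)^2\sqrt{(1-\varepsilon)/2}\,\alpha$. Separately, with $\widehat\Gamma\ge(1-\eta)\delta_\alpha$ and your choice of $\delta_\alpha$, you only get $R_T\ge(1-\eta)^3(1+\eta)^2 r<r$, so the slack factors also need adjusting.

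The repair is to show that on near-ties the top two arms each get about $(1-\eta)T/2$ samples. This requires excluding triple near-ties at a scale that shrinks with $\alpha$:
\begin{itemize}
\item On $\{\Gamma\le c\alpha,\ \Gamma_3>\alpha^{3/4}\}$, every other weight satisfies $(\Gamma/\Delta_k)^2\le c^2\alpha^{1/2}$. So the limiting share $1/W^\dagger(\mu)$ of each of the top two tends to $1/2$.
\item By Lemma~\ref{lem:triple-near-tie}, $\mathbb P(\Gamma_3\le\alpha^{3/4})=O(\alpha^{3/2})=o(\alpha)$, so this exclusion is affordable.
\item For the second-best arm $j$ you then get $V_{b^\star}+V_j\le 4/((1-\eta)T)$. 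This is exactly the value your ``crude'' bound identified as the best possible.
\item The threshold becomes $\sqrt{8r/T}=\sqrt{1-\varepsilon}\,\alpha/\kappa_\nu$, and $\mathbb P(E_1)\approx\sqrt{1-\varepsilon}\,\alpha<\alpha$.
\end{itemize}

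This is the paper's route in different packaging. It shows $R_T/T\to C^\dagger(\mu)=\Gamma^2/(4W^\dagger(\mu))$ almost surely. It then proves $\mathbb P(C^\dagger(\mu)\le x)=\kappa_\nu\sqrt{8x}+o(\sqrt x)$, using $W^\dagger=2+o(1)$ away from triple near-ties. It finishes with the Portmanteau theorem on the closed set $(-\infty,x]$. The Portmanteau step also makes your events $E_2,E_3$ and the uniform-over-compacts law of large numbers in Step~3 unnecessary: pointwise almost-sure convergence for each $\mu$ is enough.
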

The proofs of both Lemmas are deferred to
Appendix~\ref{app:pgws-analysis}.

\begin{proof}[Proof of Theorem~\ref{thm:pgws-upper}]
By the definition of the lower $\alpha$-quantile,
$ 
    \mathbb P(R_T<r_{T,\alpha})
    \le
    \alpha
    \le
    \mathbb P(R_T\le r_{T,\alpha}).
$
The boundary randomization in \eqref{eq:pgws-terminal-rule} is chosen so that
$ 
    \mathbb P(\hat a_T=?)
    =
    \mathbb P(R_T<r_{T,\alpha})
    +
    \theta_{T,\alpha}\mathbb P(R_T=r_{T,\alpha})
    =
    \alpha.
$ 
Thus $\mathcal A_T(\ouralg(\alpha))=\alpha$.

For the error probability, Lemma~\ref{lem:pgws-certification} gives
$ 
    \mathcal E_T(\ouralg(\alpha))
    \le
    (K-1)e^{-r_{T,\alpha}}.
$
By Lemma~\ref{lem:pgws-threshold-lower}, for every $\varepsilon\in(0,1)$, all
sufficiently small $\alpha$, and all sufficiently large $T$,
$ 
    r_{T,\alpha}
    \ge
    (1-\varepsilon) \frac{1}{8\kappa_\nu^2}\alpha^2T.
$
Therefore
\[
    \mathcal E_T(\ouralg(\alpha))
    \le
    (K-1)
    \exp \left\{
        -(1-\varepsilon) \frac{1}{8\kappa_\nu^2}\alpha^2T
    \right\}.
\]
This then yields the final result in \eqref{eqn:ub}.  

\end{proof}


\section{Lower Bounds}
\label{sec:lower-bounds}

This section presents three lower bounds. The first is the main impossibility result for
Bayesian BAI with abstention and matches the upper bound presented in
Section~\ref{subsec:pgws-upper-analysis}. The second
shows that  forced-decision Bayesian BAI has only polynomial decay in the error probability, which refines the lower bound established by~\citet{atsidakou2023bayesian}. The third considers the
frequentist (non-Bayesian) formulation and shows that, when the instance has a positive gap,
abstention improves only lower-order exponential terms. Together, these results
identify both the value of abstention and the role of the Bayesian formulation.

\subsection{Bayesian BAI with Abstention}
\label{subsec:lower-bound-bayes-abstention}

\begin{theorem}[Lower Bound for Bayesian BAI with Abstention]
\label{thm:bayesian-abstention-lower}
In the Gaussian prior and Gaussian reward model,
\[
    \liminf_{\alpha\downarrow0}
    \liminf_{T\to\infty}
    \frac{1}{\alpha^2T}
    \log \mathfrak E_{T}(\alpha)
    \ge
    -\frac{1}{8\kappa_\nu^2}.
\]
\end{theorem}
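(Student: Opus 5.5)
We will show that any policy $\pi\in\Pi_T(\alpha)$ must leave roughly a mass $\alpha$ of near-tied instances unabstained, and that on such instances the undetected error is at least $\exp(-(1+o(1))\alpha^2T/(8\kappa_\nu^2))$. The mechanism is that abstention can only cover near-ties with $\Gamma\lesssim\alpha/\kappa_\nu$. Just beyond that scale the learner must commit, and an instance with gap $\gamma\approx\alpha/\kappa_\nu$ cannot be distinguished from its ``swapped'' counterpart better than $\exp(-T\gamma^2/8)$. This swapped counterpart exchanges the top two means, and the best split of $T$ samples between the two arms gives KL $\approx T\gamma^2/8$.

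\textbf{Step 1: localization.} Fix a small $\eta>0$ and set $\gamma:=(1+\eta)\alpha/\kappa_\nu$. By Lemma~\ref{lem:top-gap-density}, the region $G:=\{\Gamma\le\gamma\}$ has prior mass $\kappa_\nu\gamma(1+o_\alpha(1))=(1+\eta)\alpha(1+o(1))>\alpha$ for small $\alpha$. Since $\mathbb P(\hat a_T=?)\le\alpha$, the event $G\cap\{\hat a_T\neq ?\}$ has probability at least $\eta\alpha/2$. We then pair instances. For $\mu\in G$ with top two arms $(i,j)$, let $\mu'$ be $\mu$ with $\mu_i,\mu_j$ swapped. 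This map is measure-preserving up to the prior density ratio $f_i(\mu_j)f_j(\mu_i)/(f_i(\mu_i)f_j(\mu_j))$, which tends to $1$ uniformly on compact sets as $\Gamma\to0$. We restrict to a compact set, losing only $o(\alpha)$ mass. On $\{\hat a_T\neq ?\}$ under $\mu$, either the output is wrong under $\mu$ or it is wrong under $\mu'$, because $a^\star(\mu)\neq a^\star(\mu')$. Hence, for a set of pairs of prior mass $\gtrsim\eta\alpha$, we get $\mathbb P_\mu(\hat a_T\in[K]\setminus\{a^\star(\mu)\})+\mathbb P_{\mu'}(\hat a_T\in[K]\setminus\{a^\star(\mu')\})\ge \mathbb P_\mu(\hat a_T\neq?,\ \hat a_T\ne a^\star(\mu))+\mathbb P_{\mu'}(\hat a_T=a^\star(\mu)\ \text{or other non-?})$.

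\textbf{Step 2: change of measure.} For each such pair we use the transportation inequality (Bretagnolle--Huber, or the data-processing inequality on the event $E=\{\hat a_T\neq ?, \hat a_T\neq a^\star(\mu)\}$ versus its complement within non-abstention). It gives $\mathbb P_\mu(E_\mu)+\mathbb P_{\mu'}(E_{\mu'})\ge \tfrac12 \mathbb P_\mu(\hat a_T\neq ?)\exp(-\mathrm{KL}(\mathbb P_\mu^\pi,\mathbb P_{\mu'}^\pi)/\mathbb P_\mu(\hat a_T\ne ?))$, up to a binary-entropy correction. We will use the more robust form $\mathbb P_{\mu'}(A)\ge \mathbb P_\mu(A)\exp(-(\mathrm{KL}+\log 2)/\mathbb P_\mu(A))$, applied to $A=\{\hat a_T=a^\star(\mu)\}$. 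The KL equals $\mathbb E_\mu[N_i(T)+N_j(T)]\,\gamma'^2/2$ with $\gamma'=\Gamma(\mu)\le\gamma$, and this is at most $T\gamma^2/2$. That is a factor $4$ too weak. Here lies the main obstacle. The sharp constant $1/8$ requires comparing $\mu$ and $\mu'$ not directly but through the midpoint instance $\bar\mu$, where $\mu_i=\mu_j=(\mu_i+\mu_j)/2$ and which is itself in the support. Distances from $\bar\mu$ to $\mu$ and to $\mu'$ are $\gamma/2$ each. At $\bar\mu$ the policy, if it does not abstain, must output some arm, which is wrong for $\mu$ or for $\mu'$. The KL from $\bar\mu$ to the worse of the two is at most $\tfrac{\gamma^2}{8}\max(\mathbb E N_i,\mathbb E N_j)$. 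Choosing the one where the arm least sampled deviates gives $\min_{\text{side}}\mathrm{KL}\le T\gamma^2/8$ after using $\mathbb E N_i+\mathbb E N_j\le T$. We will handle the fact that $\bar\mu$ is a null set by averaging the argument over a small window of instances around the tie, i.e.\ working with the Bayes posterior mixture. The prior density near ties, $\kappa_\nu$, controls how much non-abstention mass the policy is forced to place there.

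\textbf{Step 3: assembling.} Combining, $\mathfrak E_T(\alpha)\ge c\,\eta\alpha\,\exp\{-(1+\eta)^2\alpha^2T/(8\kappa_\nu^2)\cdot(1+o(1))\}$, with $o(1)$ vanishing as $T\to\infty$ and then $\alpha\downarrow0$. The lower-order factors $\log(\eta\alpha)$ and the $\log 2$ terms are $o(\alpha^2T)$ as $T\to\infty$. Taking $\liminf_T$, then $\liminf_{\alpha}$, then $\eta\downarrow0$ gives the claim. The delicate points are making the non-abstention probability at the compared instances bounded below, not merely on average, which needs a Markov/averaging step over the window, and keeping the prior-density ratio and localization errors at $o(1)$ in the exponent.
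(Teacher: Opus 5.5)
Your proposal has a genuine gap in Step~2. You change measure \emph{out of} the midpoint $\bar\mu$, using an event $A\subseteq\{\hat a_T\neq ?\}$. That requires $\mathbb P_{\bar\mu}(\hat a_T\neq ?)$, averaged over a window around the tie, to be bounded below. The abstention constraint does not force this, and good policies violate it.

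A policy can abstain with probability $1-o(1)$ on every instance with $\Gamma\le(1-\eta)\alpha/\kappa_\nu$, which costs only about $(1-\eta)\alpha$ of the budget. Concretely, take the oracle-reduced two-arm problem with midpoint $x$ known. The rule ``abstain iff $|V|\le a$'', with $V=T^{-1/2}\sum_t\pm(X_t-x)$ and $a\approx\alpha\sqrt T/(2\kappa_\nu)$, is asymptotically feasible and essentially optimal. Its non-abstention probability at the tie is $2\Phi(-a)=e^{-\alpha^2T/(8\kappa_\nu^2)+o(\alpha^2T)}$. So $\exp\{-(\mathrm{KL}+\log 2)/\mathbb P_{\bar\mu}(A)\}$ is vacuous, and no Markov or windowing step near the tie can repair it.

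Your closing claim, that $\kappa_\nu$ controls how much non-abstention mass the policy must place near ties, is therefore false. What the budget forces is commitment at gap scale $\approx\alpha/\kappa_\nu$. The unavoidable error is then collected on \emph{opposite-sign near-ties}, so the change of measure must run from committed instances at that scale to the near-ties. Step~1 does not localize where the non-abstention mass sits inside $\{\Gamma\le\gamma\}$, so it does not supply such instances either. A smaller slip: the KL from $\bar\mu$ to either side is $\tfrac{\gamma^2}{8}\mathbb E_{\bar\mu}(N_i+N_j)$, not $\tfrac{\gamma^2}{8}\max(\mathbb E N_i,\mathbb E N_j)$.

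Even in the right direction, the inequality you use is too lossy for the constant $1/8$. The bound $\mathbb P_{\mu''}(A)\ge\exp\{-(\mathrm{KL}+\log 2)/\mathbb P_\mu(A)\}$ divides the exponent by $\mathbb P_\mu(A)$. The budget only guarantees $\mathbb P_\mu(\hat a_T\neq ?)\gtrsim\eta$ at gaps below $(1+\eta)\alpha/\kappa_\nu$. For instance, a non-abstention profile $\min\{1,g^2/(Ca^2)\}$ in the rescaled gap $g=\Gamma\sqrt T/2$ uses only $\tfrac23\sqrt C\,a$ of the local budget $a$. With $C=9/4$ it is admissible, and against it this inequality certifies only an exponent $\tfrac94$ times too large.

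There are two ways to close the argument.
\begin{enumerate}
\item[(i)] Replace the KL bound by likelihood-ratio truncation. With the midpoint and background fixed, the log-likelihood ratio between $\mu$ and the tie (or an opposite-sign instance within $O(T^{-1/2})$ of it) is at most $g^2/2+O(g)$ plus a martingale of variance at most $g^2$, whatever the allocation. Hence $\mathbb P_{\mu''}(A)\ge e^{-(1+\epsilon)g^2/2}\bigl(\mathbb P_\mu(A)-o(1)\bigr)$. Combine this with a Markov argument over the background variables and over $g\le(1+\eta)a$ to locate committed instances.
\item[(ii)] Follow the paper's route. Build a flat local subprior $Q_\rho\le P_\nu$ and reveal the oracle $(i,j,x,\zeta_{-ij})$. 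The transformed observations $Z_t=\pm(X_t-x)\sim\mathcal N(\theta/2,1)$ are exactly your midpoint insight: they make the allocation irrelevant and $V$ sufficient. A bathtub/Neyman--Pearson argument on $V\sim\mathcal N(hg,1)$ under a flat prior on $g$ then solves the constrained problem exactly. The optimal abstention region is $|V|\le\alpha\sqrt T/(2c_\rho)$, and the error is of order $\int_a^\infty\Phi(-u)\,\mathrm{d}u=e^{-a^2/2+O(\log a)}$.
\end{enumerate}
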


Theorem~\ref{thm:bayesian-abstention-lower} applies to every adaptive sampling
and terminal decision rule satisfying the abstention constraint
$\mathcal A_T(\pi)\le\alpha$.
The proof, which is  deferred to Appendix \ref{pf_bayes_abs_lb}, proceeds by decomposing the parameter space into disjoint regions $\Omega_{ij}$ where arms~$i$ and~$j$ are the  two arms with the largest means. Since the undetected error is dominated by near-tie instances, we focus on small neighborhoods of the tie surfaces inside these regions and introduce a locally flat subprior that is a submeasure of the original prior. This reduction transforms the problem into a two-arm hypothesis testing problem under a convenient approximately uniform prior. Combining a Neyman--Pearson-like argument with the aggregation over all pairs yields the exponent $1/(8\kappa_\nu^2)$ that matches that of the upper bound.

Combining Theorem~\ref{thm:bayesian-abstention-lower} with the achievability
guarantee for \ouralg{} gives the tight  asymptotics on the exponential scale:
\begin{corollary}[Tight Asymptotics and Optimality of \ouralg{}]
\label{cor:tightness}
The following holds:
    \begin{equation}\label{equ:tightness}
    \limsup_{\alpha\downarrow0}
    \limsup_{T\to\infty}
    \frac{1}{\alpha^2T}
    \log \mathcal E_{T}(\ouralg(\alpha))
    =\liminf_{\alpha\downarrow0}
    \liminf_{T\to\infty}
    \frac{1}{\alpha^2T}
    \log \mathfrak E_{T}(\alpha)
    =
    - \frac{1}{8\kappa_\nu^2}.
\end{equation}
\end{corollary}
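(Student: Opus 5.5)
The corollary follows by sandwiching the \ouralg{} error between the minimax quantity $\mathfrak E_T(\alpha)$ and the upper bound of Theorem~\ref{thm:pgws-upper}. The first step is feasibility. Theorem~\ref{thm:pgws-upper} gives $\mathcal A_T(\ouralg(\alpha))=\alpha$, so $\ouralg(\alpha)\in\Pi_T(\alpha)$ for every $T$ and every $\alpha\in(0,1)$. By the definition \eqref{eq:bayes-risk-abstention} of the infimum, this yields $\mathfrak E_T(\alpha)\le \mathcal E_T(\ouralg(\alpha))$ pointwise in $(T,\alpha)$.

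The second step is to pass to the normalized logarithms. Since $\log$ is monotone and $\alpha^2T>0$, we get
\[
\frac{1}{\alpha^2T}\log\mathfrak E_T(\alpha)\le \frac{1}{\alpha^2T}\log\mathcal E_T(\ouralg(\alpha))
\]
for all $T$ and $\alpha$. Applying $\liminf_{T\to\infty}$ to the left side, and using $\liminf\le\limsup$ on the right side, gives an inequality between $\liminf_T$ of the left and $\limsup_T$ of the right. Repeating this with $\liminf_{\alpha\downarrow0}$ and $\limsup_{\alpha\downarrow 0}$, and using $\liminf\le\limsup$ once more, produces the chain
\[
-\frac{1}{8\kappa_\nu^2}\le \liminf_{\alpha\downarrow0}\liminf_{T\to\infty}\frac{\log\mathfrak E_T(\alpha)}{\alpha^2T}\le \limsup_{\alpha\downarrow0}\liminf_{T}\frac{\log\mathfrak E_T(\alpha)}{\alpha^2T}\le \limsup_{\alpha\downarrow0}\limsup_{T\to\infty}\frac{\log \mathcal E_T(\ouralg(\alpha))}{\alpha^2T}\le -\frac{1}{8\kappa_\nu^2}.
\]
The leftmost inequality is Theorem~\ref{thm:bayesian-abstention-lower}, and the rightmost is \eqref{eqn:ub}. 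All the inequalities therefore collapse to equalities, which gives \eqref{equ:tightness}.

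One technical point needs checking: $\mathfrak E_T(\alpha)>0$, so that the logarithms are finite. If $\mathfrak E_T(\alpha)$ were $0$, the left-hand $\liminf$ would be $-\infty$, contradicting Theorem~\ref{thm:bayesian-abstention-lower}. The theorem itself therefore guarantees that $\mathfrak E_T(\alpha)>0$ for small $\alpha$ and large $T$. If one prefers, one can instead adopt the convention $\log 0=-\infty$ throughout, and the chain of inequalities remains valid.

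The main obstacle is entirely contained in the two cited results. The corollary itself is only order bookkeeping, and the sole care needed is in matching the orders of the iterated limits.
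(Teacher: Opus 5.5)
Your proposal is correct and is the argument the paper intends: $\mathcal A_T(\ouralg(\alpha))=\alpha$ makes $\ouralg(\alpha)$ feasible, so $\mathfrak E_T(\alpha)\le\mathcal E_T(\ouralg(\alpha))$, and sandwiching between Theorem~\ref{thm:bayesian-abstention-lower} and \eqref{eqn:ub} collapses the chain of iterated limits to equalities. Your remark on positivity of $\mathfrak E_T(\alpha)$ (or the convention $\log 0=-\infty$) is a harmless detail that the paper leaves implicit.
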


Corollary~\ref{cor:tightness} identifies \(\kappa_\nu\), the density at zero of the prior top-two gap $\Gamma$,
as the 
prior-dependent hardness parameter on the leading exponential scale. 


\subsection{Bayesian BAI without Abstention}
\label{subsec:lower-bound-no-abstention}
To characterize the role of the abstention formulation, we next consider the forced-decision  Bayesian BAI problem \citep{atsidakou2023bayesian}, in which the learner must recommend
an arm at the end of the experiment. Recall that we defined the forced-decision Bayes error probability as in \eqref{eqn:CT0}.
The following theorem shows that, without abstention, the Bayes probability of
misidentification cannot decay exponentially fast.

\begin{theorem}[No-Abstention Lower Bound]
\label{thm:no-abstention-lower}
For the Gaussian prior and Gaussian reward model,
\[
    \liminf_{T\to\infty}
    \sqrt T\,\mathfrak E_{T}(0)
    \ge
    \sqrt{\frac{2}{\pi}}\,\kappa_\nu.
\]
\end{theorem}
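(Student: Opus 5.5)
The plan is to reduce the forced-decision problem to a collection of local two-arm Bayesian tests near the tie surfaces and to compute the Bayes error of each test exactly. Fix any policy $\pi\in\Pi_T$ that always recommends an arm. Partition the parameter space into the regions $\Omega_{ij}$ on which arms $i,j$ are the top two (ordered pairs, $\mu_i>\mu_j$, or unordered with both orientations), and restrict attention to the near-tie band $\{|\mu_i-\mu_j|\le c/\sqrt T\}$ inside $\Omega_{ij}$, where $c$ is a large constant to be sent to infinity at the end. On this band, the error event contains the event that $\pi$ outputs the lower of $i,j$. Moreover, conditioning on the remaining coordinates and on the midpoint $m=(\mu_i+\mu_j)/2$, the prior density in the gap coordinate $\delta=\mu_i-\mu_j$ is locally flat on the scale $1/\sqrt T$. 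Its mass equals $f_i(m)f_j(m)\prod_{k\ne i,j}F_k(m)$ up to $1+o(1)$, and summing over pairs reproduces $\kappa_\nu$ via Lemma~\ref{lem:top-gap-density}. A technical point is that the event $\{i,j$ are top two$\}$ depends on $\mu_i,\mu_j$ only through $m$ up to an $O(1/\sqrt T)$ boundary layer, so I replace $\prod_k F_k(\mu_{(2)})$ by $\prod_k F_k(m)$ at negligible cost.

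For the local problem I use a genie argument. Reveal to the learner all $\mu_k$ with $k\ne i,j$, together with the midpoint $m$. This can only lower the Bayes error, since a policy may ignore extra information. Conditional on this side information, the learner faces a test of $\delta>0$ versus $\delta<0$ under an approximately uniform prior on $[-c/\sqrt T,c/\sqrt T]$. With $n_i,n_j$ samples ($n_i+n_j\le T$), the sufficient statistic for $\delta$ (given $m$) is Gaussian with mean $\delta$ and variance $v=1/n_i+1/n_j\ge 4/T$, where the constraint holds under any adaptive allocation. By the Neyman--Pearson/Bayes rule, the error of any decision is at least $\int \Phi(-|\delta|/\sqrt v)\,\dd\delta$, and this is at least $\int\Phi(-|\delta|\sqrt T/2)\,\dd\delta$ in the worst (balanced) case.

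Adaptivity needs care. The allocation is random, and the posterior of $\delta$ given the history is still Gaussian with variance $\ge 4/T$ only if the prior in $\delta$ is treated as flat. I would handle this by computing the Bayes-optimal error pathwise: conditional on the full history, the posterior probability of the wrong sign is $\Phi(-|\hat\delta|/\sqrt{v})$. Averaging over $\hat\delta$ under the flat local prior, the posterior mean $\hat\delta$ has a density approximately equal to the flat prior density, because a flat prior convolved with Gaussian noise remains flat. This gives error $\ge (\text{density})\cdot\int_{\mathbb R}\Phi(-|x|\sqrt T/2)\,\dd x$ up to edge effects of order $\sqrt v/c$. The integral evaluates to $2\cdot\frac{2}{\sqrt T}\int_0^\infty\Phi(-u)\,\dd u=\frac{4}{\sqrt T}\phi(0)=\frac{4}{\sqrt{2\pi T}}$. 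One might worry that summing over unordered pairs with both orientations double-counts. The density of $\delta$ at $0$ summed over ordered pairs is $\kappa_\nu$, but each unordered pair appears twice in $\kappa_\nu$ while the tie band counts $\delta\in\mathbb R$ once. Hence the total is $\tfrac{\kappa_\nu}{2}\cdot\frac{4}{\sqrt{2\pi T}}=\sqrt{2/\pi}\,\kappa_\nu/\sqrt T$, which matches the claim, and the bookkeeping is consistent.

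The main obstacle is making the flat-prior step rigorous under adaptive sampling with the other arms' influence. I would dominate the true prior from below by a locally flat subprior, namely the minimum of the density over the band, times $(1-o(1))$, exactly as sketched for Theorem~\ref{thm:bayesian-abstention-lower}. Then I apply the Bayes-error lower bound to this submeasure, using that the pairwise bands for different pairs are disjoint. Finally I let $T\to\infty$ and then $c\to\infty$, so that the truncation loses only $\int_{|u|>c}\Phi(-|u|/2)\,\dd u\to0$.
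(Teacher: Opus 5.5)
Your architecture matches the paper's. You use a flat local subprior dominated by $P_\nu$ and an oracle that reveals the midpoint and the background means. This reduces the problem to a one-dimensional Gaussian sign test with a flat gap prior. Your bookkeeping is right: density $\kappa_\nu/2$ per signed gap, times $\frac{4}{\sqrt{2\pi T}}$, gives exactly $\sqrt{2/\pi}\,\kappa_\nu/\sqrt T$. Your shrinking band $|\delta|\le c/\sqrt T$ with a boundary-layer argument is a harmless variation. The paper instead uses a fixed band $[0,\delta_\rho]$ and forces the background arms to lie at least $\ell>\delta_\rho/2$ below the midpoint, so top-two status is exact. The real problem is the step you flag as the main obstacle, adaptivity, where the reasoning as written does not hold.

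\textbf{The variance claim is wrong.} You say that, given $m$, the sufficient statistic has variance $v=1/n_i+1/n_j$. That is the variance of $\bar Y_i-\bar Y_j$, which ignores the revealed midpoint. Once $m$ is known, $Z_t:=\pm(X_t-m)$ (with $+$ for arm $i$ and $-$ for arm $j$) is $\mathcal N(\delta/2,1)$ whichever of the two arms is pulled. So a policy that only ever pulls arm $i$, for which $1/n_i+1/n_j=\infty$, learns the sign exactly as well as a balanced one. Your final bound survives only because you evaluate it at the balanced case, where the two formulas agree.

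\textbf{The pathwise fix does not work under adaptive sampling.} You argue that $\hat\delta$ has a flat marginal because a flat prior convolved with Gaussian noise stays flat. This fails when the posterior variance depends on the history. If the number $N=n_i+n_j$ of informative pulls (or your $v$) is chosen adaptively, $\hat\delta-\delta$ is no longer independent of $\delta$, and the flat-prior marginal of $\hat\delta$ need not be flat. For example, take $T=2$: pull arm $i$ once, then pull a background arm iff $|Z_1|>1$. The marginal density of $\hat\delta$ at $0$ is then $\mathrm{erf}(1)\approx0.84$ times the prior density.

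\textbf{The fix is the paper's pair of reductions.} First, by Lemma~\ref{lem:sufficiency-local-pair-strategies}, pulls of background arms can be replaced by pulls of arm $i$ with simulated background rewards, since those means are revealed; so without loss of generality every pull is of $i$ or $j$ and $N=T$. Second, by Lemmas~\ref{lem:local-transformed-iid} and~\ref{lem:local-lr-W}, $Z_1,\dots,Z_T$ are then i.i.d.\ $\mathcal N(\delta/2,1)$. The sampling kernels cancel in the likelihood ratio $\exp(\delta W/2-T\delta^2/8)$, and $V=W/\sqrt T\sim\mathcal N(\delta\sqrt T/2,1)$ exactly for every adaptive policy. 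The Bayes error against the flat subprior is then the integral of the minimum of the two sign-mixture densities of $V$, with no approximation step, and adaptivity causes no difficulty at all.
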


The proof is deferred to Appendix~\ref{app:no-abstention-lower}.
Theorem~\ref{thm:no-abstention-lower} refines the two-arm Bayesian
fixed-budget BAI lower bound of~\citet{atsidakou2023bayesian}. They show that,
for \(K=2\) Gaussian arms with Gaussian priors, the Bayes probability of
misidentification is \( \Omega((T\log T)^{-1/2})\), with the multiplicative factor depending on the  prior as $\exp\big\{\!-\!\frac{(\nu_1-\nu_2)^2}{4\sigma_0^2} \big\}$.
For \(K=2\), our hardness parameter is
    $\kappa_\nu
    =
    2\int_{\mathbb R}f_1(x)f_2(x)\,\dd x
    =
    \frac{1}{\sigma_0\sqrt{\pi}}
    \exp \big\{
        \!-\!\frac{(\nu_1-\nu_2)^2}{4\sigma_0^2}
    \big\}$.
Thus Theorem~\ref{thm:no-abstention-lower} improves the dependence on $T$, recovers the same prior-separation
dependence through $\kappa_\nu$, and extends the
result from two arms to general \(K\ge 2\).
For \(K>2\), \(\kappa_\nu\) represents a weighted sum (with weights $\{w_{ij}\}$) over all distinct pairs of arms $(i,j)$ of the likelihood of $i$ and $j$ being equal. 
Additionally, this theorem is also materially different from the lower bound analysis of the Bayesian simple regret in~\citet{komiyama2024rate}.
 That work studies the \emph{simple regret}, rather than the probability of error, and establishes a decay rate of $\Theta(T^{-1})$.
 

Theorem~\ref{thm:no-abstention-lower} identifies the mechanism behind the
polynomial Bayes error probability in standard forced-decision Bayesian FB-BAI. The dominant error comes
from instances whose top-two gap, $\Gamma=\mu_{(1)}-\mu_{(2)}$, is of order \(T^{-1/2}\). The prior probability of these instances is of order \(\kappa_\nu/\sqrt T\), and on these instances, the two leading arms cannot be reliably separated with $T$ samples. Thus the same local near-tie density \(\kappa_\nu\) that governs
the abstention exponent  also determines the constant   in
the forced-decision problem.

This contrasts with the results for Bayesian BAI with abstention in Theorem~\ref{thm:bayesian-abstention-lower}. By allowing an \(\alpha\)-fraction of inconclusive decisions on the most ambiguous posterior
histories, the learner removes the near-tie region that dominates the
forced-decision error probability. 
Consequently, the error probability exhibits a fundamental phase transition from 
polynomial decay (i.e., $\Omega(T^{-1/2})$)
to exponential decay (i.e., $\exp( - \Theta( \alpha^2T) )$).

\subsection{Frequentist BAI with Abstention}
\label{subsec:frequentist-abstention}

The preceding results concern the Bayesian formulation, where the error probability is
averaged over a  prior. 
We now contrast this behavior with the frequentist (fixed-instance) setting, and we show that the polynomial-to-exponential improvement is specific to the Bayesian formulation. 
In particular, for a fixed positive-gap instance,  the error probability of ordinary frequentist FB-BAI decays exponentially with \(T\) \citep{audibert2010best,carpentier2016tight, kaufmann2016complexity}. 
We show that the terminal abstention can improve only lower-order (i.e., sublinear in $T$)
terms in the exponent.

We exemplify this point with a symmetric pair of two-arm Gaussian instances which we call the \emph{mirror
experiments}. Pulling arm \(i\in\{1,2\}\) produces an independent reward from
\(\mathcal N(\mu_i,1)\). A strategy \(\pi\) adaptively samples the arms for \(T\) rounds
and then outputs $\hat a_T\in\{1,2,?\}$,
where \(?\) denotes abstention. For an instance \(\mu\), let
\(a^\star(\mu)=\arg\max_{i\in\{1,2\}}\mu_i\) be the optimal arm, and
define
\[
    \mathcal E_{T,\mu}(\pi)
    :=
    \mathbb P_\mu^\pi\!\left(
        \hat a_T\notin \{a^\star(\mu),?\}
    \right),
    \quad\mbox{and}\quad
    \mathcal A_{T,\mu}(\pi)
    :=
    \mathbb P_\mu^\pi(\hat a_T=?).
\]
For \(\alpha\in(0,1)\), let
\[
    \Pi_T^{\rm freq}(\alpha)
    :=
    \left\{
        \pi:
        \sup_{\mu\in\mathbb R^2}
        \mathcal A_{T,\mu}(\pi)\le \alpha
    \right\}
\]
be the class of strategies with  abstention probability uniformly bounded above by $\alpha$.
Let $z_\alpha
    :=
    \Phi^{-1}\!\left(\frac{1+\alpha}{2}\right)$,
so that \(\mathbb P(|Z|\le z_\alpha)=\alpha\) for
\(Z\sim\mathcal N(0,1)\).

Fix \(x\in\mathbb R\) and \(\Delta>0\), and consider the following two \emph{mirror instances}
\[
    \mu^+
    :=
    \left(
        x+\frac{\Delta}{2},
        x-\frac{\Delta}{2}
    \right),
    \quad\mbox{and}\quad
    \mu^-
    :=
    \left(
        x-\frac{\Delta}{2},
        x+\frac{\Delta}{2}
    \right).
\]
Under \(\mu^+\), arm \(1\) is optimal; under \(\mu^-\), arm \(2\) is optimal.

\begin{theorem}[Two-Arm Frequentist Lower Bound with Abstention]
\label{thm:two-arm-frequentist-abstention}
Given \(T\in\mathbb N\), \(\alpha\in(0,1)\), \(x\in\mathbb R\),
\(\Delta>0\), and any strategy
\(\pi\in\Pi_T^{\rm freq}(\alpha)\), we have
\[
    \max\left\{
        \mathcal E_{T,\mu^+}(\pi),
        \mathcal E_{T,\mu^-}(\pi)
    \right\}
    \ge
    \Phi\!\left(
        -\frac{\Delta\sqrt T}{2}-z_\alpha
    \right).
\]
\end{theorem}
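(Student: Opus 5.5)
The plan is to exploit that the abstention constraint in $\Pi_T^{\rm freq}(\alpha)$ holds \emph{uniformly} in $\mu$, in particular at the tie instance $\mu^0:=(x,x)$. We reduce the three-instance problem $\{\mu^-,\mu^0,\mu^+\}$ to a one-dimensional Gaussian shift experiment. Then we apply a Neyman--Pearson (bathtub) argument under $\mu^0$, where the learner is forced to commit to some arm with probability at least $1-\alpha$.

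\emph{Step 1 (exact Gaussian reduction).} Let $\varepsilon_t:=+1$ if $A_t=1$ and $\varepsilon_t:=-1$ if $A_t=2$, and set $W:=\sum_{t=1}^T\varepsilon_t(X_t-x)$. Since $\varepsilon_t$ is predictable and $X_t-x$ is Gaussian with unit variance given the past, the increments $\varepsilon_t(X_t-x)$ are i.i.d. with the following laws:
\begin{itemize}[label={}]
\item under $\mu^0$, $\mathcal N(0,1)$;
\item under $\mu^+$, $\mathcal N(\Delta/2,1)$, because the sign of the mean shift always agrees with $\varepsilon_t$;
\item under $\mu^-$, $\mathcal N(-\Delta/2,1)$.
\end{itemize}
This holds regardless of the adaptive sampling rule. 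Expanding the Gaussian likelihoods (the policy's randomization cancels) gives
\[
\frac{\dd\mathbb P^\pi_{\mu^\pm}}{\dd\mathbb P^\pi_{\mu^0}}=\exp\Bigl(\pm\tfrac{\Delta}{2}W-\tfrac{T\Delta^2}{8}\Bigr).
\]
So both likelihood ratios are functions of $W$ alone, and $W/\sqrt T\sim\mathcal N(\pm\Delta\sqrt T/2,1)$ under $\mu^\pm$.

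\emph{Step 2 (commitment under the tie).} Let $p:=\mathbb P_{\mu^0}^\pi(\hat a_T=1)$ and $q:=\mathbb P_{\mu^0}^\pi(\hat a_T=2)$. Since $\mathcal A_{T,\mu^0}(\pi)\le\alpha$, we have $p+q\ge1-\alpha$, so $\max\{p,q\}\ge(1-\alpha)/2$. Note $\Phi^{-1}((1-\alpha)/2)=-z_\alpha$.

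\emph{Step 3 (Neyman--Pearson lower bound).} Define $g(w):=\mathbb P^\pi_{\mu^0}(\hat a_T=2\mid W=w)\in[0,1]$. Then
\[
\mathcal E_{T,\mu^+}(\pi)\ge\mathbb P_{\mu^+}^\pi(\hat a_T=2)=\mathbb E_{\mu^0}\bigl[g(W)e^{\Delta W/2-T\Delta^2/8}\bigr],
\]
subject to $\mathbb E_{\mu^0}[g(W)]=q$. The likelihood ratio is increasing in $W$, so the bathtub principle says the minimum is attained by $g=\mathbbm 1\{W\le\sqrt T\,\Phi^{-1}(q)\}$. This gives
\[
\mathcal E_{T,\mu^+}(\pi)\ge\Phi\bigl(\Phi^{-1}(q)-\Delta\sqrt T/2\bigr).
\]
Symmetrically, $\mathcal E_{T,\mu^-}(\pi)\ge\Phi\bigl(\Phi^{-1}(p)-\Delta\sqrt T/2\bigr)$. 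Taking the larger of $p,q$ and using Step 2 together with the monotonicity of $\Phi$ yields $\Phi(-\Delta\sqrt T/2-z_\alpha)$.

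The main obstacle is Step 1: verifying rigorously that the likelihood ratios depend on the history only through $W$, and that $W$ has an exact Gaussian law despite the adaptivity. This requires writing the joint density of $(A_1,X_1,\dots,A_T,X_T)$ together with the internal randomization, and checking that the policy factors cancel in the ratio. Once that is in place, the remaining steps are a routine one-dimensional Neyman--Pearson computation.
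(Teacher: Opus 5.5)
Your proof is correct, but the key optimization step differs from the paper's.

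\textbf{What is shared.} Both arguments use the tie instance $\mu^0=(x,x)$ to activate the uniform abstention constraint. Both reduce the adaptive experiment to the Gaussian shift in $W$; your Step 1 is exactly the paper's reduction.

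\textbf{The paper's route.} It averages the two mirror errors and eliminates the sign decision pointwise via $\psi_-(v)\phi(v-m)+\psi_+(v)\phi(v+m)\ge(1-\psi_?(v))\phi(|v|+m)$, where $m=\Delta\sqrt T/2$. It then solves a single symmetric bathtub problem for the abstention function under $\phi(v)\,\mathrm{d}v$; the optimum abstains on $[-z_\alpha,z_\alpha]$.

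\textbf{Your route.} You keep the tie-instance recommendation split $(p,q)$ explicit. You solve two one-sided Neyman--Pearson problems with monotone likelihood ratios, then finish with the pigeonhole bound $\max\{p,q\}\ge(1-\alpha)/2$. Your device $g(w)=\mathbb P^\pi_{\mu^0}(\hat a_T=2\mid W=w)$ is also lighter than the paper's construction of a $V$-measurable rule through a regular conditional distribution. You only need the tower property under $\mu^0$ and the fact that the likelihood ratios are $\sigma(W)$-measurable.

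\textbf{What each buys.} The paper's route bounds the \emph{average} $\tfrac12(\mathcal E_{T,\mu^+}+\mathcal E_{T,\mu^-})$ at no extra cost, in the same ``min of sign densities plus bathtub'' template used for its Bayesian lower bounds. Yours gives the finer per-instance trade-off $\mathcal E_{T,\mu^+}(\pi)\ge\Phi(\Phi^{-1}(q)-m)$ and $\mathcal E_{T,\mu^-}(\pi)\ge\Phi(\Phi^{-1}(p)-m)$. This makes the extremal rule transparent: $p=q=(1-\alpha)/2$, i.e., abstain iff $|W|/\sqrt T\le z_\alpha$.

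If you want the averaged statement as well, it follows from your bounds. The map $u\mapsto\Phi(\Phi^{-1}(u)-m)$ has derivative $e^{m\Phi^{-1}(u)-m^2/2}$, so it is increasing and convex. Jensen's inequality with $p+q\ge 1-\alpha$ then gives the average bound.
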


The proof is deferred to Appendix~\ref{app:frequentist-abstention}. The theorem is a   minimax statement over the mirror instances \((\mu^+,\mu^-)\): any
strategy whose abstention probability is uniformly bounded by \(\alpha\) 
necessarily incurs the stated undetected error on at least one of the two instances. 

This bound is sharp for the two-arm Gaussian mirror experiment: when \(T\) is even, sampling
each arm \(T/2\) times and abstaining when $|\hat\mu_1-\hat\mu_2|\le \frac{2z_\alpha}{\sqrt T}$
attains the lower bound.

Theorem~\ref{thm:two-arm-frequentist-abstention} shows that abstention does not
change the leading fixed-gap exponential rate. Without abstention, the policy that pulls each arm an equal number of times results in the error probability 
\[
    \Phi\!\left(-\frac{\Delta\sqrt T}{2}\right)
    =
    \exp\!\left\{
        -\frac{\Delta^2}{8}T+O(\log T)
    \right\}.
\]
With an abstention budget \(\alpha\), the undetected error probability becomes
\[
    \Phi\!\left(
        -\frac{\Delta\sqrt T}{2}-z_\alpha
    \right)
    =
    \exp\!\left\{
        -\frac{\Delta^2}{8}T
        -\frac{\Delta z_\alpha}{2}\sqrt T
        -\frac{z_\alpha^2}{2}
        +O(\log T)
    \right\}.
\]
Hence, abstention only improves the exponent by a lower order term. For small \(\alpha\),
\(z_\alpha=\sqrt{\pi/2}\,\alpha+O(\alpha^3)\), so the logarithmic improvement is
of order \(O(\alpha\sqrt T)\). 

In contrast to the   Bayesian formulation,  the error probability under the frequentist setup is already decaying exponentially without abstention, and 
 the additional option to abstain yields only a lower-order improvement in the error probability, characterized by a moderate-deviation term.
 

\section{Extensions Beyond the Gaussian Model}
\label{sec:generalization}

In the preceding sections, we established the polynomial-to-exponential phase transition under the Gaussian  model. In this section, we show that this transition extends beyond the Gaussian setting. 
Specifically, the same exponent arises for regular one-parameter reward models after measuring
gaps in the   Fisher--Rao information scale.




Let \(\mathcal I\subseteq\mathbb R\) be an open interval, called the \emph{quality parameter space}. At the beginning of each experiment, the \emph{quality parameters} \(\mu_1,\ldots,\mu_K\in\mathcal I\) are drawn from their prior distributions and remain fixed throughout the experiment. The best arm is $a^\star:=\arg\max_{i\in[K]}\mu_i$.
Conditional on \(\mu_1, \ldots, \mu_K \), each pull of arm \(i\) produces an independent reward with distribution \(P_{\mu_i}\). We assume that each $P_{\mu_i}$ comes from a regular one-parameter exponential family, defined as follows. 


\begin{restatable}[Regular One-Parameter Exponential Family]{assumption}{AssRegularExpfam}
\label{ass:regular-expfam}
The family \(\{P_\mu: \mu\in\mathcal I\}\) is a regular full
one-parameter exponential family, possibly after a smooth monotone
reparameterization (which we call $\theta(\cdot)$); see, e.g., \cite[Sec.~34.3.1]{banditAlg}. Specifically,
there exists an open natural parameter space
\(\Theta\subseteq\mathbb R\), a dominating measure \(\lambda\), a natural
statistic \(t(\cdot)\), a base distribution \(h(\cdot)\), a \(C^4\) log-partition function
\(A:\Theta\to\mathbb R\) with $A''(\theta)>0$, and a \(C^4\) map
\(\theta:\mathcal I\to\Theta\) onto $\Theta$ with $\theta'(\mu)>0$ such that \(\theta(\mu)\) is the natural
parameter corresponding to the quality parameter \(\mu\), and
\[
    \frac{\dd P_\mu}{\dd \lambda}(w)
    =
    h(w)\exp\big\{
        \theta(\mu)t(w)-A(\theta(\mu))
    \big\},
    \qquad \mu\in\mathcal I .
\]
Moreover,
the image of every compact subset of $\mathcal I$ under
$\theta$ is compact in \(\Theta\).
\end{restatable}
Under Assumption~\ref{ass:regular-expfam}, the Fisher information for the
quality parameter \(\mu\) is
\[
    I(\mu)
    :=
    A''(\theta(\mu))\big(\theta'(\mu)\big)^2 .
\]
We then define the \emph{Fisher--Rao information coordinate}
\begin{equation}
    s(\mu)
    :=
    \int_{\mu_\circ}^{\mu}\sqrt{I(v)}\,\dd v, \label{eqn:fisher_rao}    
\end{equation}
where \(\mu_\circ\in\mathcal I\) is arbitrary and all quantities and results below are independent of the choice of \(\mu_\circ\), as changing $\mu_\circ$ shifts $s(\cdot)$ by an additive constant, leaving $\kappa$ and gap-based results unchanged.
This is the arc-length coordinate induced by the
Fisher--Rao metric~\citep{Rao1992,amari2000methods}.
 Since \(s(\cdot)\) is strictly
increasing, the best arm is unchanged by the transformation
\(\mu_i\mapsto \eta_i:=s(\mu_i)\).

\begin{restatable}[Regular Prior in Information Coordinates]{assumption}{AssRegularInfoPrior}
\label{ass:regular-info-prior}
The arms' quality parameters \(\mu_1,\ldots,\mu_K\) are independent under the prior. In the information coordinate \(\eta_i=s(\mu_i)\), the induced prior distribution of arm \(i\) has density \(\bar f_i\) and distribution function \(\bar F_i\) on \(\mathcal H:=s(\mathcal I)\). For each \(i\in[K]\),
\[
    \bar f_i \text{ is continuous and bounded on }\mathcal H,
    \qquad
    \bar f_i(\eta)>0 \quad \forall\, \eta\in\mathcal H .
\]
\end{restatable}
Define the \emph{information-scaled top-two tie density}
\[
    \kappa
    :=
    \sum_{i\neq j}
    \int_{\mathcal H}
        \bar f_i(u)\bar f_j(u)
        \prod_{k\neq i,j}\bar F_k(u)
    \,\dd u .
\]
Equivalently, if \(f_i\) and \(F_i\) respectively denote the prior density and distribution function
of \(\mu_i\) in the original coordinate system, then
\[
    \kappa
    =
    \sum_{i\neq j}
    \int_{\mathcal I}
        \frac{
            f_i(\mu)f_j(\mu)\prod_{k\neq i,j}F_k(\mu)
        }{\sqrt{I(\mu)}}
    \,\dd\mu .
\]
Thus \(\kappa\) is the density at zero of the top-two gap after 
transforming the quality parameters to Fisher--Rao coordinates via~\eqref{eqn:fisher_rao}.
In the Gaussian   model with unit variance, $\theta(\mu)\equiv\mu$ and 
\(I(\mu)\equiv1\), so this reduces to the top-two gap density used in the preceding
sections.

For the achievability result we also assume a global quadratic lower bound on
the Kullback--Leibler (KL) divergence. Let
$ 
    D(\mu,z):=D(P_\mu\,\|\,P_z) .
$

\begin{restatable}[Quadratic KL Growth in the Fisher--Rao Information Coordinate]{assumption}{AssQuadraticKL} \label{ass:quadratic-kl}
There exists \(c_{\rm KL}>0\) such that, for all \(\mu,z\in\mathcal I\),
\begin{equation}
    D(\mu,z)
    \ge
    c_{\rm KL}\big(s(\mu)-s(z)\big)^2 . \label{eqn:quad_growth}
\end{equation}
\end{restatable}
Assumption \ref{ass:quadratic-kl} holds for several one-parameter exponential families when $s(\cdot)$ is  the Fisher--Rao information coordinate. In particular,   Gaussians with unit variance, Bernoulli,  and Poisson  satisfy~\eqref{eqn:quad_growth} with $c_{\rm KL}=1/4$.  Assumption \ref{ass:quadratic-kl} can be viewed as a global version of the local quadratic approximation $D(\mu,z)\approx\frac{1}{2}\big(s(\mu)-s( z)\big)^2$ where $|s(\mu)-s( z)|$ is the Fisher--Rao distance~\citep{amari2000methods}.

\begin{theorem}[Extension to General Distributions]
\label{thm:general-transfer}
Suppose Assumptions~\ref{ass:regular-expfam}--\ref{ass:quadratic-kl} hold, then
\[
    \limsup_{\alpha\downarrow0}
    \limsup_{T\to\infty}
    \frac{1}{\alpha^2T}
    \log \mathcal E_{T}(\text{\textsc{IC-PGWS}}(\alpha))
    =\liminf_{\alpha\downarrow0}
    \liminf_{T\to\infty}
    \frac{1}{\alpha^2T}
    \log \mathfrak E_{T}(\alpha)
    =
    -\frac{1}{8\kappa^2}.
\]
The lower bound holds for all adaptive policies satisfying the abstention
constraint, and the upper bound is attained by an information-coordinate version
of \ouralg{}, namely, \textsc{IC-PGWS}.
\end{theorem}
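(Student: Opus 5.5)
The plan is to reduce both directions to the Gaussian analysis already carried out for Theorem~\ref{thm:pgws-upper} and Theorem~\ref{thm:bayesian-abstention-lower}. The reduction works in the Fisher--Rao coordinate $\eta_i=s(\mu_i)$. Locally around any $\mu$, a single observation from $P_{\mu}$ carries Fisher information exactly $1$ about $\eta$, and the KL divergence satisfies $D(\mu,z)=\tfrac12(s(\mu)-s(z))^2(1+o(1))$ as $z\to\mu$. The near-tie instances that drive the exponent have gaps of order $\alpha$ in $\eta$, so they live in exactly the local regime where the family looks like the unit-variance Gaussian shift model in $\eta$. The only prior quantity that survives in the iterated limit is the tie density, which in $\eta$-coordinates is $\kappa$ (by the analogue of Lemma~\ref{lem:top-gap-density} with $\bar f_i,\bar F_i$). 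Continuity and positivity of $\bar f_i$ from Assumption~\ref{ass:regular-info-prior} are what is needed to run that lemma.

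\textbf{Lower bound.} I would repeat the proof of Theorem~\ref{thm:bayesian-abstention-lower} in $\eta$-coordinates.
\begin{enumerate}
\item Decompose into the regions $\Omega_{ij}$ and restrict to a compact window in $\mathcal H$ carrying all but $\delta$ of the prior mass.
\item Replace the prior near each tie surface $\{\eta_i=\eta_j\}$ by a locally flat subprior. This is possible by continuity of $\bar f_i$.
\item Reduce to a two-arm test between $\eta_i>\eta_j$ and $\eta_i<\eta_j$ with gap of size $O(\alpha)$, and apply the same Neyman--Pearson argument.
\end{enumerate}
The only change is that the log-likelihood ratio between the two hypotheses, accumulated over $T$ adaptive samples, must be controlled by $T$ times a KL divergence. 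That divergence is now $\tfrac12\Delta_\eta^2(1+O(\Delta_\eta))$ uniformly on the compact window, using the $C^4$ smoothness of $A$ and $\theta$ and the compact-image condition. Since $\Delta_\eta\to0$ as $\alpha\downarrow0$, the correction factor disappears in the iterated limit. The aggregation over pairs then gives $-1/(8\kappa^2)$, with the $\delta$-truncation sent to zero at the end.

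\textbf{Upper bound.} I would define \textsc{IC-PGWS}. It keeps the forced exploration and the gap-weighted sampling of Algorithm~\ref{alg:pgws}, but computes gaps from maximum-likelihood (or posterior-mean) estimates $\hat\eta_i=s(\hat\mu_i)$. Its terminal statistic is
\[
R_T=\min_{j\neq\hat b_T}\frac{(\hat\eta_{\hat b_T}-\hat\eta_j)^2}{2(1/N_{\hat b_T}+1/N_j)},
\]
and abstention is calibrated at the lower $\alpha$-quantile exactly as in \eqref{eq:pgws-terminal-rule}, which gives $\mathcal A_T=\alpha$ directly. Two analogues of the Gaussian lemmas are then needed.

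The first is an analogue of Lemma~\ref{lem:pgws-certification}, namely $\mathcal E_T\le C_K e^{-(1-o(1))r_{T,\alpha}}$. In the Gaussian case this followed from exact Gaussian posteriors. Here I would instead use Chernoff bounds for exponential-family empirical means: the event that arm $j$ truly beats the reported leader while the statistic exceeds $r$ costs, per arm, $N_i D(\cdot,\cdot)$. Assumption~\ref{ass:quadratic-kl} converts this into a quadratic in $\eta$, giving a global bound. Combined with the local sharp constant $\tfrac12$ on the high-probability event that all estimates are within $o(1)$ of the truth (guaranteed by forced exploration, since $N_i\ge\sqrt t$), this yields the needed exponent. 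A union bound over time handles adaptivity, costing only polynomial factors.

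The second is an analogue of Lemma~\ref{lem:pgws-threshold-lower}. By forced exploration, $\hat\eta$ concentrates, and the sampling rule equalizes $N_{\hat b}$ and $N_j$ for the top two, so $N\approx T/2$ each. Then $R_T\approx T\Gamma_\eta^2/8$, the probability $\mathbb P(R_T\le r)\approx\kappa\sqrt{8r/T}$, and setting this equal to $\alpha$ gives $r_{T,\alpha}\approx\alpha^2T/(8\kappa^2)$.

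\textbf{Main obstacle.} I expect the hardest step to be the certification bound. The Gaussian proof used exact conjugate posteriors, whereas here the constant must be sharp, matching $1/8$, and not merely $c_{\rm KL}$. My plan is to split the error event into two parts:
\begin{itemize}
\item \emph{Estimates far from the truth.} Assumption~\ref{ass:quadratic-kl} makes this part exponentially small at rate $\gtrsim T$ times a constant, which is negligible against $\alpha^2T$ as $\alpha\downarrow0$.
\item \emph{Estimates close to the truth.} On this part the local expansion of $D$ gives the sharp constant.
\end{itemize}
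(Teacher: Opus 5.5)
Your lower-bound outline follows the paper's architecture: flat local subprior, oracle reduction to the top pair, a sign test on an $O(\alpha)$ window, then a bathtub argument. The upper bound, however, has a genuine gap at exactly the step you flag. Your $R_T$ is a Wald statistic, so it prices every deviation at the Gaussian rate $\tfrac12(\Delta\eta)^2$ per sample. Assumption~\ref{ass:quadratic-kl} only guarantees $c_{\rm KL}(\Delta\eta)^2$ globally, and $c_{\rm KL}$ can be $1/4$; for Bernoulli and Poisson it cannot be improved.

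The far/close split does not repair this. The far part is not $e^{-\Theta(T)}$, for three reasons:
\begin{enumerate}
\item Forced exploration only gives $N_i\ge\sqrt t$.
\item In the near-tie instances that drive the exponent ($\Gamma_s\lesssim\alpha$), the other arms receive weight $\approx(\Gamma_s/\Delta_k)^2\lesssim\alpha^2$. Their estimates are therefore off by a constant with probability of order $e^{-c\alpha^2T}$, or even $e^{-c\sqrt T}$.
\item You have no large-deviation control of $N_{b^\star}(T)$, only a.s.\ convergence of proportions.
\end{enumerate}

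These events cannot be discarded, and on them a self-normalised statistic can still clear the threshold. Suppose the wrong leader and $b^\star$ have $n_1$ and $n_2$ samples. Then $R_T\ge r$ forces a separation $\sqrt{2r(1/n_1+1/n_2)}$ against the true order. This is non-local once $\min(n_1,n_2)=O(\alpha^2T)$, and there Assumption~\ref{ass:quadratic-kl} bounds its Chernoff cost below only by $c_{\rm KL}\cdot 2r$. Your argument therefore certifies at best $e^{-\min\{1,2c_{\rm KL}\}r_{T,\alpha}}$. Getting the sharp constant would need $D\ge\tfrac12(\Delta s)^2$ globally, which is essentially the Gaussian case. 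Relatedly, your algorithm never uses $c_{\rm KL}$, even though it is the only global information available.

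The missing idea is to make the certificate the posterior itself, $R_T=-\log\max_{j\neq\widehat b_T}\varPi_T(\mu_j\ge\mu_{\widehat b_T})$. Then $\mathbb P(a^\star\neq\widehat b_T\mid\mathcal F_T)\le(K-1)e^{-R_T}$ holds exactly for every history, and $\mathcal E_T\le(K-1)e^{-r_{T,\alpha}}$ follows with no concentration at all. The work then moves to two places:
\begin{itemize}
\item the a.s.\ limit $R_T/T\to C^\dagger(\mu)$, via posterior Laplace and ordering large deviations, where the global KL bound controls the normalising constant;
\item the lower tail of $C^\dagger$, where $c_{\rm KL}$ enters through the inflated weights $\min\{1,\widehat\Gamma_{s,t}^2/(4\widetilde c\,\widehat\Delta_{j,s}(t)^2)\}$, so that far challengers never undercut $\Gamma_s^2/8$.
\end{itemize}

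In the lower bound, ``control the log-likelihood ratio by $T$ times a KL divergence'' is not what is needed. The Gaussian bathtub step uses that the scalar $V$ is exactly sufficient with law $\mathcal N(hg,1)$. Here the local log-likelihood depends on $\sum_t S_t\,t(x_t)$ and on the counts, and it is not quadratic. A KL bound is an expectation, so it says nothing about the history-wise sign overlap $\dd\mathbb N_{T,\xi}/\dd\mathbb M_{T,\xi}$, nor about the mixture mass of a central set.

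The paper obtains pathwise control instead, in four steps:
\begin{enumerate}
\item Parametrising the pair linearly in the natural parameter (at the price of a Jacobian in the subprior) makes the curvature $-\ell_{T,\xi}''(r)=\frac{1}{Ta_0}\sum_tA''(\theta_0+S_tr/\sqrt{Ta_0})$ data-free, and it lies in $[1-C\alpha,1+C\alpha]$ on the window $\bar\delta_\alpha=D\alpha$.
\item This sandwiches $\ell_{T,\xi}$ between quadratics around the local MLE $V_{T,\xi}$, giving overlap $\gtrsim e^{-\varepsilon\alpha^2T}\Phi(-|V_{T,\xi}|)$.
\item Chebyshev on the score gives tightness of $V_{T,\xi}$ under every local alternative, hence enough central $\mathbb M_{T,\xi}$-mass.
\item A bathtub lemma for a general measure finishes the argument.
\end{enumerate}
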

The proof is postponed to Appendix~\ref{app:pf_thm_general}. We now provide a concrete example of applying Theorem~\ref{thm:general-transfer} to Beta-Bernoulli arms.

\begin{example}[Beta--Bernoulli arms]
\label{ex:beta-bernoulli}
Consider Bernoulli rewards with unknown success probabilities
\(\mu_i\in(0,1)\) that follow  Beta distributions, i.e., 
\[
    Y_{i,s}\mid \mu_i \sim {\rm Bernoulli}(\mu_i),
    \quad\mbox{and}\quad
    \mu_i\sim {\rm Beta}(a_i,b_i),
\]
independently across arms and time, where  \(a_i,b_i\ge 1/2\) for \(i \in [K]\).

 We show that the Beta--Bernoulli arms satisfy Assumptions~\ref{ass:regular-expfam}--\ref{ass:quadratic-kl}.
Firstly,
the Bernoulli family is a regular exponential family with
\[
    \theta(\mu)=\log\frac{\mu}{1-\mu},
    \quad
    A(\theta)=\log(1+e^\theta),
    \quad\mbox{and}\quad
    t(w)=w .
\]
Its Fisher information  is $I(\mu)=\frac{1}{\mu(1-\mu)}$. Hence, $s(\mu)= \int_0^\mu\sqrt{I(v)}\, \dd v= 2\arcsin\sqrt{\mu}$, and  $\mathcal H=(0,\pi)$.

Secondly, if \(\eta=s(\mu)\), then \(\mu=\sin^2(\eta/2)\), and the prior density of \(\eta_i\) is
\[
    \bar f_i(\eta)
    =
    \frac{
        \sin^{2a_i-1}(\eta/2)\cos^{2b_i-1}(\eta/2)
    }{
        {\rm Beta}(a_i,b_i)
    },
    \qquad 0<\eta<\pi .
\]
Because \(a_i,b_i\ge1/2\), this density is bounded, continuous, and strictly
positive on \((0,\pi)\). Hence, Assumption~\ref{ass:regular-info-prior} holds.

Lastly, a routine calculation shows that the Bernoulli KL divergence satisfies 
\[
    D(\mu,z)
    \ge
    \frac{1}{4}\big(s(\mu)-s(z)\big)^2,
    \qquad \mu,z\in(0,1),
\]
so Assumption~\ref{ass:quadratic-kl} also holds with $c_{\rm KL}=1/4$.

Therefore, the conditions of  Theorem~\ref{thm:general-transfer} hold and the theorem applies. Define
\[
    \kappa_{\rm BB}
    :=
    \sum_{i\neq j}
    \int_0^\pi
        \bar f_i(u)\bar f_j(u)
        \prod_{k\neq i,j}\bar F_k(u)
    \,\dd u 
    =
    \sum_{i\neq j}
    \int_0^1
        f_i(\mu)f_j(\mu)
        \prod_{k\neq i,j}F_k(\mu)
        \sqrt{\mu(1-\mu)}
    \,\dd\mu
\]
where \(\bar F_i\) is the cumulative distribution function  of \(2\arcsin\sqrt{\mu_i}\), and \(f_i\) and \(F_i\) are, respectively, the Beta\((a_i,b_i)\) density and CDF. So by Theorem~\ref{thm:general-transfer}, we have
\[
    \limsup_{\alpha\downarrow0}
    \limsup_{T\to\infty}
    \frac{1}{\alpha^2T}
    \log \mathcal E_{T}^{\rm BB}(\text{\textsc{IC-PGWS}}(\alpha))
    =\liminf_{\alpha\downarrow0}
    \liminf_{T\to\infty}
    \frac{1}{\alpha^2T}
    \log \mathfrak E_{T}^{\rm BB}(\alpha)
    =
    -\frac{1}{8\kappa^2_{\rm {BB}}}.
\]
in the Beta-Bernoulli case.
\end{example}

\section{Numerical Experiments}
\label{sec:experiments}

We conduct numerical experiments to illustrate  the benefit brought by the abstention option and the superior design of the sampling rule in \ouralg{}. 
First,
we evaluate the practical benefit of abstention by comparing the undetected error
probability for several abstention levels. Second, we isolate the value of the
\ouralg{} allocation by comparing it with uniform allocation
 under the same posterior abstention threshold. 

\subsection{Experimental Setup}
\label{subsec:experiment-setup}

We adopt the Gaussian prior and reward model from
Section~\ref{subsec:model-objective}. The prior is $\mu_i\sim \mathcal N(\nu_i,\sigma_0^2)$ for $i\in[K]$, independently across arms, and rewards satisfy $Y_{i,s}\mid \mu_i\sim \mathcal N(\mu_i,1)$. We consider the arm prior with $K=5$, $\sigma_0=1$, and $ \nu=[5,5,3,3,2]$.
%
The following methods are involved in the comparison: 
\begin{itemize}
    \item \ouralg$(\alpha)$: \ouralg{} with the
    quantile-calibrated posterior abstention threshold $\widehat r_{T,\alpha}$, so that the abstention probability is $\alpha$.

    \item \textsc{Unif}$(\alpha)$: Uniform allocation, followed by recommending the posterior leader. Use the same terminal statistic and calibration procedure as \ouralg.

    \item \textsc{BayesElim}~\citep{atsidakou2023bayesian}: Bayesian elimination without abstention, used as a
    forced-decision Bayesian BAI baseline. 
\end{itemize}

For \ouralg$(\alpha)$ and \textsc{Unif}$(\alpha)$, the abstention threshold is
calibrated separately for each sampling rule, budget $T$, so that the abstention probability is upper bounded by $\alpha$. 
Specifically, the abstention threshold is the lower $\alpha$-quantile $r_{T,\alpha}$ defined in~\eqref{equ:terminal_threshold}.
For \ouralg{}$(\alpha)$, for each given budget $T$ and abstention level $\alpha$, we apply the given sampling rule and generate terminal statistic samples $\{R_{T}^{(m)}\}_{m\in [M_{\rm cal}]}$ of independent $M_{\rm cal}=10^5$ simulations from the prior. We approximate the abstention threshold $r_{T,\alpha}$ by its empirical estimate
\[
    \widehat r_{T,\alpha}
    :=
    \min\left\{
        r:
        \frac1{M_{\rm cal}}
        \sum_{m=1}^{M_{\rm cal}}
        \mathbbm{1}\{R_T^{(m)}\le r\}
        \ge \alpha
    \right\}.
\]
For $\alpha=0$, there is no abstention  and the algorithm always
recommends the posterior mean leader.

The performance is evaluated on an independent test sample of
\(M_{\rm test}=10^6\) prior-generated experiments. For each method, budget \(T\),
and abstention level \(\alpha\), we 
compute the empirical abstention probability and empirical
undetected error probability as follows:
\[
    \widehat{\mathcal A}_T
    :=
    \frac1{M_{\rm test}}
    \sum_{m=1}^{M_{\rm test}}
    \mathbbm{1}\big\{\hat a_T^{(m)}=?\big\} ,\quad
    \mbox{and}
    \quad
     \widehat{\mathcal E}_T
    :=
    \frac1{M_{\rm test}}
    \sum_{m=1}^{M_{\rm test}}
    \mathbbm{1}\big\{
        \hat a_T^{(m)}\notin\{a^{\star,(m)},?\}
    \big\},
\]
where \(a^{\star,(m)}\) denotes the true best arm in the \(m\)th prior-generated
experiment, and \(\hat a_T^{(m)}\in[K]\cup\{?\}\) is the terminal output of the
algorithm.


\subsection{Effect of Abstention}
\label{subsec:experiment-abstention-benefit}
 In the first experiment, we aim to demonstrate the value of  abstention. 
We run
\ouralg$(\alpha)$ for
$\alpha\in\{0,0.01,0.03,0.05\}$
over a range of budgets $T$, 
and report the empirical undetected error probability 
$\widehat{\mathcal E}_T$. 
The empirical abstention rate $\widehat{\mathcal A}_T$ is reported separately to verify the correctness of threshold calibration. 
The results are shown in Figure \ref{fig:effect_of_abstention}:
\begin{itemize}
    \item \textbf{Significant reduction in undetected error probability} (Figure \ref{fig:effect_of_abstention}(a)): The forced-decision baselines (\ouralg$(0)$ and \textsc{BayesElim}~\citep{atsidakou2023bayesian}) exhibit a significantly slower decay in the error probability compared to the case where abstention is allowed, i.e., \ouralg$(\alpha)$ for positive $\alpha$. 
    This corroborates  our main result in Theorem~\ref{thm:pgws-upper}: spending a small fraction of inconclusive decisions on the most ambiguous instances yields an exponential reduction in terminal misidentifications.  In addition,  Figure~\ref{fig:effect_of_abstention}(a) also shows that the curves for \ouralg{}$(\alpha)$ when  $\alpha$  is  positive are approximately affine, demonstrating exponential decay in the undetected error probability. 
    \item \textbf{Abstention calibration} (Figure \ref{fig:effect_of_abstention}(b)): Across all budgets $T$, \ouralg$(\alpha)$ adheres to the prescribed abstention targets without being overly conservative. This validates the practical reliability of our calibrated abstention threshold.
\end{itemize}


\begin{figure}[t]
    \centering
    \begin{subfigure}{0.48\textwidth}
        \centering
        \includegraphics[width=\textwidth]{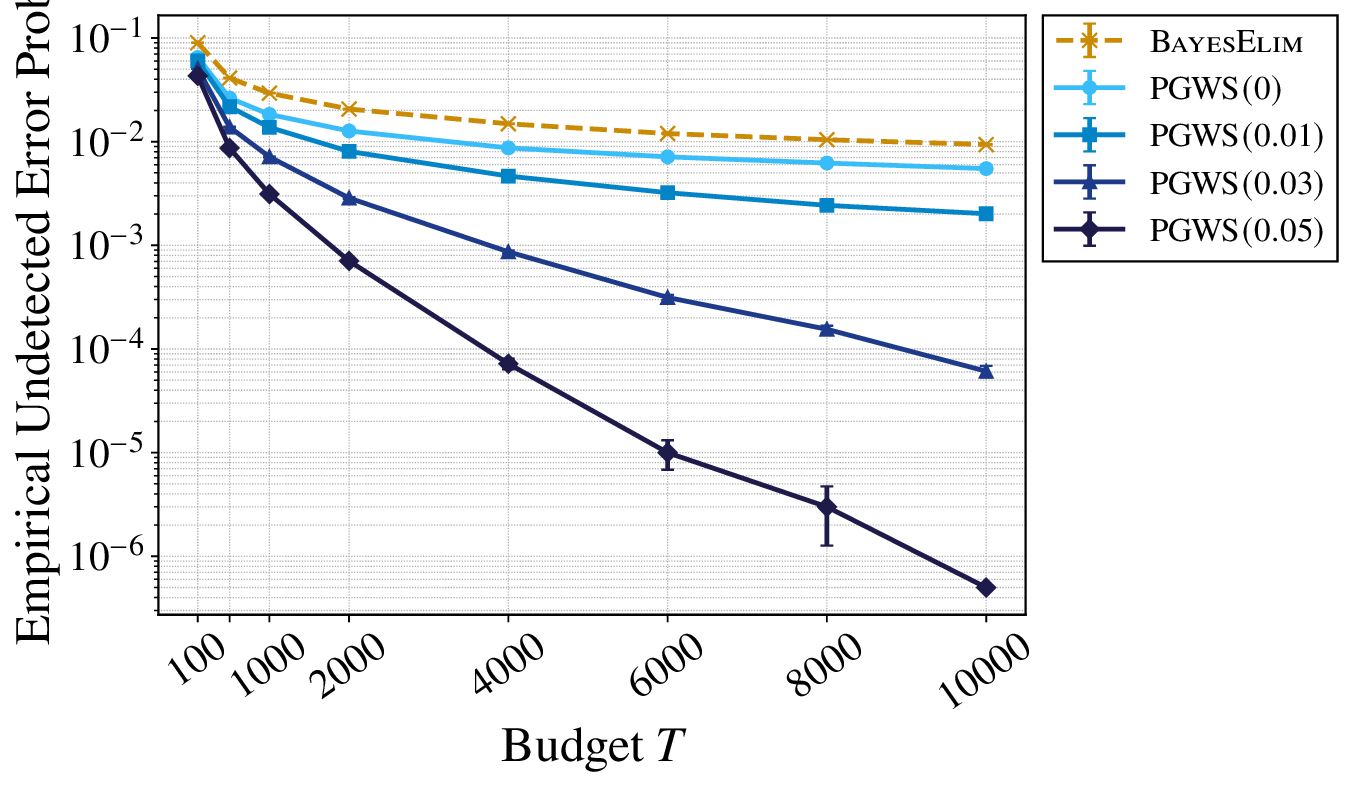}
        \caption{$\widehat{\mathcal E}_T$ against $T$ }
        \label{fig:sub1}
    \end{subfigure}
    \hfill 
    \begin{subfigure}{0.48\textwidth}
        \centering
        \includegraphics[width=\textwidth]{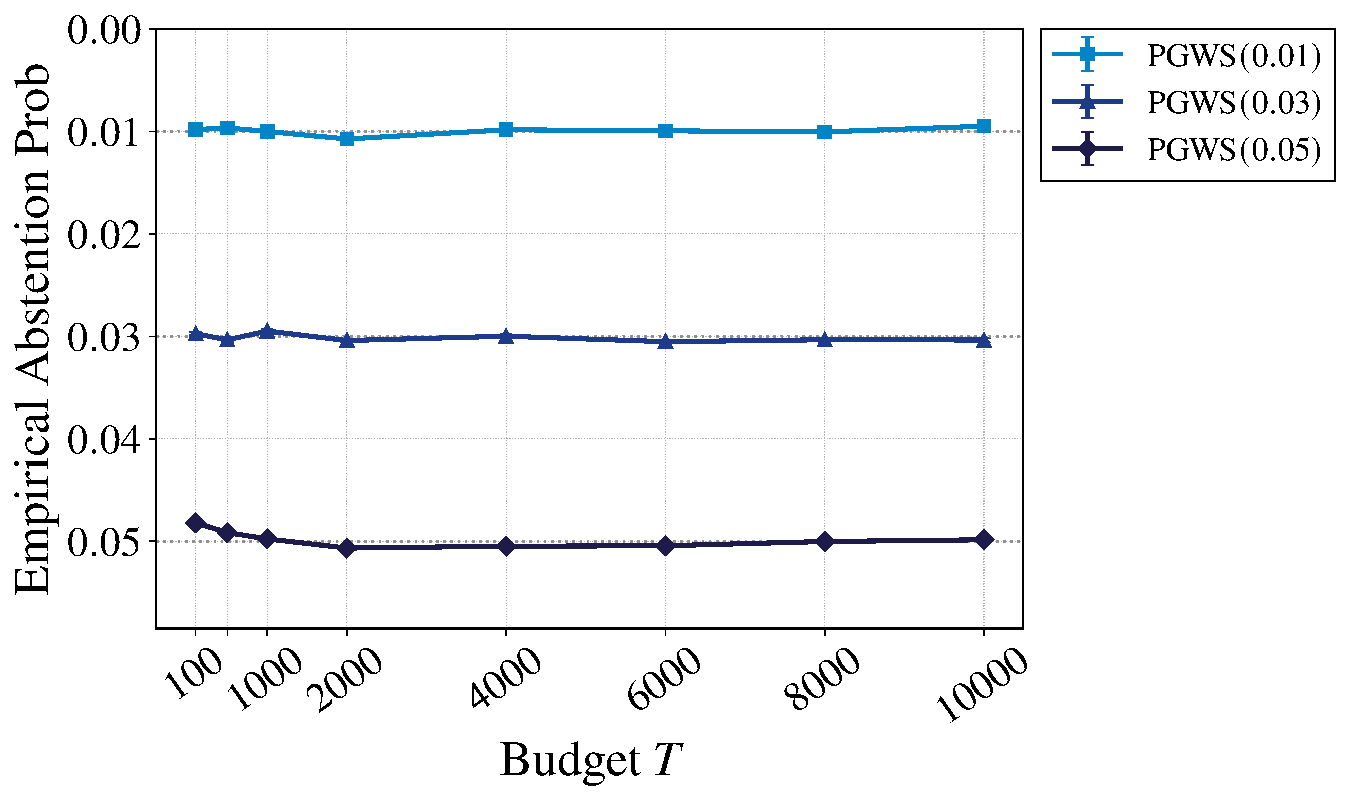}
        \caption{$\widehat{\mathcal A}_T$ against $T$}
        \label{fig:sub2}
    \end{subfigure}
    
    \caption{Plots of the undetected and abstention  probabilities as functions of  the budget $T$} 
    \label{fig:effect_of_abstention}
\end{figure}

\subsection{Effect of Adaptive Sampling in \ouralg{}}
\label{subsec:experiment-allocation}

The second experiment illustrates the effectiveness of the sampling strategy of \ouralg{} as in~\eqref{eq:pgws-sampling-prob}, which adapts to the instance. We compare
\ouralg$(\alpha)$ with uniform allocation \textsc{Unif}$(\alpha)$  for the abstention
levels $\alpha\in\{0,0.01,0.05\}$, and the results are shown in Figure \ref{fig:pgws_unif}. 
We observe that for every abstention level,  \ouralg$(\alpha)$ achieves a significantly lower undetected error probability than \textsc{Unif}$(\alpha)$ across all budgets.
The performance gap highlights the practical superiority of our algorithm's adaptive sampling design. By dynamically concentrating the sampling budget on the posterior mean leader and its most relevant challengers, rather than wasting pulls uniformly across all arms, the algorithm more efficiently resolves the ambiguous near-tie instances that dominate the Bayes error probability.

\begin{figure}[t]
    \centering
    \includegraphics[width=0.7\textwidth]{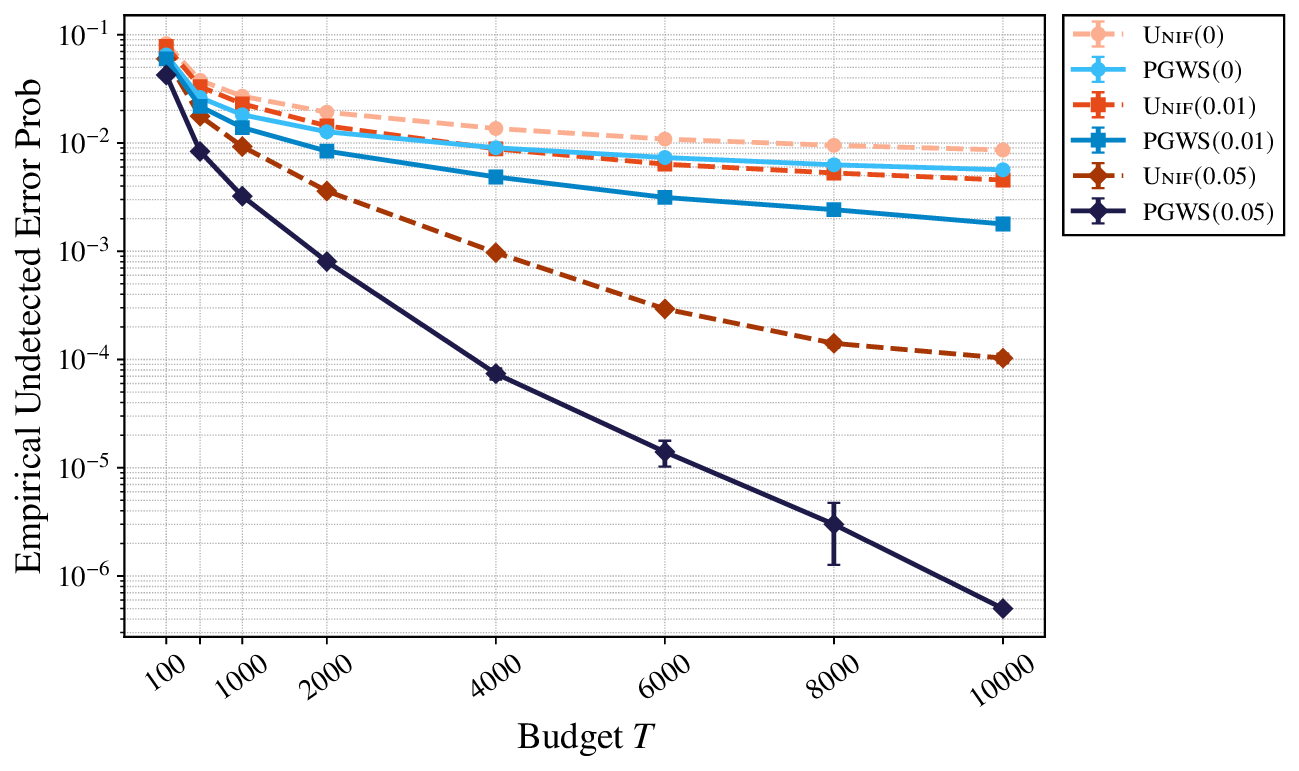}
    \caption{Comparison of  \ouralg{} to the algorithm which samples arms uniformly at random. }
    \label{fig:pgws_unif}
\end{figure}

\section{Conclusion}\label{sec:Conclusion}

In this paper, we studied Bayesian FB-BAI  with an abstention option. We characterized
the sharp small-abstention, large-budget exponent:
\[
    \mathfrak E_{T}(\alpha)
    =
    \exp\!\left\{
        -\frac{\alpha^2T}{8\kappa_\nu^2}
        +o(\alpha^2T)
    \right\}.
\]
The exponent  is governed by the density $\kappa_\nu$ of the prior top-two gap at
zero.

We proposed \ouralg{}, an adaptive algorithm that
allocates samples to the posterior mean leader and its  challengers.
 \ouralg{} samples arms according to the empirical allocation distribution which is proportional to the reciprocal of the squared posterior mean gaps.
It achieves the same   exponent as the information-theoretic lower bound. Hence it is asymptotically optimal on the $\alpha^2T$ logarithmic scale. Numerical experiments support the practicality of our theoretical findings.

Our results clarify the benefits of abstention. Without abstention, Bayesian BAI has only a polynomial error decay, driven by prior probability on instances with top-two gap of order $T^{-1/2}$.
With abstention, the learner can spend its abstention budget on these near-tie instances, leading to exponentially decaying undetected error. In contrast, in the frequentist fixed-gap setting, ordinary BAI is already exponentially accurate, and abstention improves only lower-order
terms in the exponent.

Finally, we showed that the same mechanism extends beyond the Gaussian model
under continuous prior, one parameter exponential family reward, and quadratic KL growth conditions, with the Gaussian top-two gap density replaced by an information-scaled top-two gap density.

\begingroup \parindent 0pt \parskip 4ex
\def\enotesize{\normalsize} 
\endgroup

%
%
%


\newpage
\appendix
\noindent
{\Large{\textbf{Appendix}}}
\section{Prior Density of the Top-Two Gap at Zero} \label{pf:lem:top-gap-density}

\begin{lemma}[Prior Density of the Top-Two Gap at Zero]
\label{lem:top-gap-density}
The hardness parameter $\kappa_\nu$ and the top-two gap $\Gamma$ are related as follows:
\[
    \kappa_\nu
    =
    \lim_{\varepsilon\downarrow0}
    \frac{\mathbb P_\nu(\Gamma\le \varepsilon)}{\varepsilon}.
\]
\end{lemma}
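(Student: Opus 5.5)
We decompose the event $\{\Gamma\le\varepsilon\}$ according to which ordered pair of arms occupies the top two positions. Since the prior is continuous, almost surely all $\mu_i$ are distinct, and
\[
    \mathbb P_\nu(\Gamma\le\varepsilon)
    =
    \sum_{i\neq j}
    \mathbb P_\nu\bigl(\mu_i>\mu_j,\ \mu_i-\mu_j\le\varepsilon,\ \mu_k<\mu_j\ \forall k\neq i,j\bigr),
\]
where the ordered pair $(i,j)$ means that $i$ is the top arm and $j$ the runner-up. By independence of the arms, we condition on $\mu_j=x$ and $\mu_i=x+g$ with $g\in(0,\varepsilon]$. The $(i,j)$ term then equals
\[
    \int_{\mathbb R}\int_0^{\varepsilon} f_i(x+g)\,f_j(x)\prod_{k\neq i,j}F_k(x)\,\dd g\,\dd x .
\]
Dividing by $\varepsilon$ and applying Fubini gives the average $\frac1\varepsilon\int_0^\varepsilon h_{ij}(g)\,\dd g$, where
\[
    h_{ij}(g):=\int_{\mathbb R} f_i(x+g)f_j(x)\prod_{k\neq i,j}F_k(x)\,\dd x .
\]

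The key step is to show that $h_{ij}$ is continuous at $g=0$. For each $x$, the integrand is continuous in $g$. It is also dominated uniformly for $|g|\le1$ by $\frac{1}{\sigma_0\sqrt{2\pi}}f_j(x)$, since $f_i\le 1/(\sigma_0\sqrt{2\pi})$ and $F_k\le1$. This dominating function is integrable, so dominated convergence gives $h_{ij}(g)\to h_{ij}(0)$ as $g\downarrow0$. It follows that $\frac1\varepsilon\int_0^\varepsilon h_{ij}\to h_{ij}(0)$. Summing over the finitely many ordered pairs yields
\[
    \lim_{\varepsilon\downarrow0}\frac{\mathbb P_\nu(\Gamma\le\varepsilon)}{\varepsilon}=\sum_{i\ne j}\int f_i f_j\prod_{k\ne i,j}F_k\,\dd x=\kappa_\nu,
\]
which is the first expression in the definition of $\kappa_\nu$.

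For completeness, we also verify the closed form given in the definition. Completing the square gives
\[
    f_i(x)f_j(x)=\frac{1}{2\sigma_0\sqrt\pi}\exp\!\Bigl(-\frac{(\nu_i-\nu_j)^2}{4\sigma_0^2}\Bigr)\,g_{ij}(x),
\]
where $g_{ij}$ is the $\mathcal N\bigl(\frac{\nu_i+\nu_j}{2},\frac{\sigma_0^2}{2}\bigr)$ density. Integrating against $\prod_{k\neq i,j}F_k$ therefore produces $w_{ij}$. The ordered pairs $(i,j)$ and $(j,i)$ give equal contributions, which doubles the factor $\frac{1}{2\sigma_0\sqrt\pi}$ to $\frac{1}{\sigma_0\sqrt\pi}$.

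The main technical point is justifying the interchange of limit and integral, which the domination above handles. The rest is bookkeeping, and the only other thing to check is that ties have probability zero, so that the decomposition into ordered pairs is exact.
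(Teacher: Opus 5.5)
Your proof is correct. It differs from the paper's mainly in how the event is decomposed.

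\textbf{The paper's route.} The paper splits $\{\Gamma\le\varepsilon\}$ only by the identity of the top arm $i$ and conditions on $\mu_i=x$. This gives $\mathbb P_\nu(\Gamma\le\varepsilon)=\sum_i\int f_i(x)\,[G_i(x)-G_i(x-\varepsilon)]\,\dd x$ with $G_i=\prod_{k\neq i}F_k$. The limit then comes from the fundamental theorem of calculus together with dominated convergence applied to the difference quotient of $G_i$. The sum over runner-ups $j$ enters only through the product rule for $G_i'$.

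\textbf{Your route.} You split by the ordered (top, runner-up) pair from the outset and condition on the runner-up's value. Each term is then exactly $\int_0^\varepsilon h_{ij}(g)\,\dd g$, and the argument reduces to continuity of $h_{ij}$ at $0$. You also give an explicit dominating function, $\|f_i\|_\infty f_j$. The paper invokes dominated convergence without naming one; the natural choice there would be $\sup|G_i'|\,f_i$, via the mean value theorem.

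\textbf{What your route buys.} The conclusion is slightly stronger and more transparent. It shows that $\sum_{i\neq j}h_{ij}$ is the actual density of $\Gamma$ on $(0,\infty)$, continuous in $g$ by the same domination, with $\kappa_\nu$ its value at $0$. It also avoids differentiating the product of CDFs and needs only bounded, continuous prior densities. So it would carry over essentially unchanged to the non-Gaussian priors treated later in the paper, where the paper instead argues that $G_i$ is Lipschitz and differentiable almost everywhere.

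Your closed-form verification via completing the square is the same as the paper's.
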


\begin{proof}
For each arm $i \in [K]$, the event that arm $i$ is the best and the top-two gap is at most $\varepsilon$ is
$ 
\left\{
\mu_i=\mu_{(1)},\ 
\mu_{(1)}-\mu_{(2)}\le \varepsilon
\right\}.
$
Conditional on $\{\mu_i=x\}$, this event says that the means of  all the other arms  are at most $x$ and at least one other arm is in $[x-\varepsilon,x]$. Hence
$$
\begin{aligned}
    \mathbb P(\Gamma\le \varepsilon)
&= \sum_{i=1}^K \mathbb P(\mu_i=\mu_{(1)},\ 
\mu_{(1)}-\mu_{(2)}\le \varepsilon) =\sum_{i=1}^K
\int_{\mathbb R}
f_i(x)
\left[
\prod_{k\neq i}F_k(x)
-
\prod_{k\neq i}F_k(x-\varepsilon)
\right]\dd x
\end{aligned}
$$
Let
$G_i(x):=\prod_{k\neq i}F_k(x)$, then $G_i'(x)
=
\sum_{j\neq i} \big(
f_j(x)\prod_{k\neq i,j}F_k(x) \big)$.
By the fundamental theorem of calculus,
$ 
G_i(x)-G_i(x-\varepsilon)
=
\int_{x-\varepsilon}^x G_i'(u)\,\dd u.
$
Additionally, by the dominated convergence theorem, as $\varepsilon \to 0$,
$$
\begin{aligned}
\frac{\mathbb P(\Gamma\le \varepsilon)}{\varepsilon} 
&= \sum_{i=1}^K \int_{\mathbb R} f_i(x) \left[ \frac{G_i(x) - G_i(x-\varepsilon)}{\varepsilon} \right]  \, \dd x\to
\sum^K_{i=1} \sum_{j\neq i}
\int_{\mathbb R}
f_i(x)f_j(x)\prod_{k\neq i,j}F_k(x)\,\dd x.
\end{aligned}
$$
It remains to simplify the product of the Gaussian densities. For $i<j$,
$$
\begin{aligned}
f_i(x)f_j(x) 
&= \frac{1}{2\pi\sigma_0^2} \exp\!\left( -\frac{(x-\nu_i)^2 + (x-\nu_j)^2}{2\sigma_0^2} \right) 
=
\frac{1}{2\sqrt\pi\,\sigma_0}
\exp\left(
-\frac{(\nu_i-\nu_j)^2}{4\sigma_0^2}
\right)
g_{ij}(x),
\end{aligned}
$$
where $g_{ij}$ is the density of
$\mathcal N\!\left(
\frac{\nu_i+\nu_j}{2},
\frac{\sigma_0^2}{2}
\right)$.
Therefore
$$
\begin{aligned}
2\int_{\mathbb R}
f_i(x)f_j(x)\prod_{k\neq i,j}F_k(x)\,\dd x
&=
\frac{1}{\sigma_0\sqrt\pi}
\exp\!\left(
-\frac{(\nu_i-\nu_j)^2}{4\sigma_0^2}
\right)
\int_{\mathbb R}
g_{ij}(x)\prod_{k\neq i,j}F_k(x)\,\dd x \\
&=
\frac{1}{\sigma_0\sqrt\pi}
\exp\!\left(
-\frac{(\nu_i-\nu_j)^2}{4\sigma_0^2}
\right)
w_{ij}.
\end{aligned}
$$
Summing over \(i<j\) proves the claim.

\end{proof}

\section{Proof of Theorem \ref{thm:pgws-upper}} \label{app:pgws-analysis}

\subsection{Error Guarantee}

\begin{proof}[Proof of Lemma \ref{lem:pgws-certification}]
If the algorithm recommends $\widehat b_T$, then conditional on $\mathcal F_T$,
\[
\mathbb P(a^\star\neq \widehat b_T\mid \mathcal F_T)
\le
\sum_{j\neq \widehat b_T}
\mathbb P(\mu_j\ge \mu_{\widehat b_T}\mid \mathcal F_T).
\]

As usual, let \(\mu=(\mu_1,\dots,\mu_K)\). By the independent Gaussian prior and the arm-wise reward model, the posterior factorizes across arms even under adaptive sampling:
\[
p(\mu\mid\mathcal F_T)
\propto
\prod_{i=1}^K
\left[
p(\mu_i)
\prod_{s=1}^{N_i(T)}
p(Y_{i,s}\mid \mu_i)
\right].
\]
Consequently, the law of the means is
\[
\mathcal L(\mu_1,\dots,\mu_K\mid\mathcal F_T)
=
\bigotimes_{i=1}^K \mathcal N(M_i(T),V_i(T)),
\]
thus the arm means \(\mu_1,\dots,\mu_K\) are mutually independent conditional on
\(\mathcal F_T\).

Furthermore, \(M_i(T)\) and \(V_i(T)\) are \(\mathcal F_T\)-measurable, and therefore
$\widehat b_T=\arg\max_i M_i(T)$
is also \(\mathcal F_T\)-measurable. Hence,  conditional on \(\mathcal F_T\), the random index
\(\widehat b_T\) is fixed. Thus, for each \(j\neq \widehat b_T\),
\[
\mu_{\widehat b_T}-\mu_j\mid\mathcal F_T
\sim
\mathcal N\!\left(
M_{\widehat b_T}(T)-M_j(T),
V_{\widehat b_T}(T)+V_j(T)
\right).
\]
Therefore, using $\Phi(-x)\le e^{-x^2/2}$ for $x\ge0$,
\[
\mathbb P(\mu_j\ge\mu_{\widehat b_T}\mid\mathcal F_T)
=
\Phi\left(
-\frac{M_{\widehat b_T}(T)-M_j(T)}
{\sqrt{V_{\widehat b_T}(T)+V_j(T)}}
\right)
\le
\exp\left(
-\frac{
\bigl(M_{\widehat b_T}(T)-M_j(T)\bigr)^2
}{
2\bigl(V_{\widehat b_T}(T)+V_j(T)\bigr)
}
\right).
\]
By definition of $R_T$, on the event that the algorithm does not abstain we have
$R_T\ge r_{T,\alpha}$. Hence, on this event,
$
\mathbb P(a^\star\neq \widehat b_T\mid\mathcal F_T)
\le
(K-1)e^{-R_T}
\le
(K-1)e^{-r_{T,\alpha}}.
$

Taking expectation gives
\[
\mathcal E_T
=
\mathbb E\!\left[
\mathbbm{1}\{\hat a_T\neq ?\}
\mathbb P(a^\star\neq \widehat b_T\mid\mathcal F_T)
\right]
\le
(K-1)e^{-r_{T,\alpha}}.  
\]
\end{proof}

\subsection{Asymptotic Behavior of the Statistic $R_T$}

To analyze the (asymptotic) behavior of $r_{T,\alpha}$, we first analyze $R_T$.

For ease of notation, we introduce auxiliary unnormalized weights.  At time \(t\), define \(\widehat b_t\) and \(\widehat\Delta_i(t)\) from the
posterior means as in Algorithm~\ref{alg:pgws}.  We also  define $\widehat{\Gamma}_t = \widehat\Delta_{\widehat b_t}(t)$. Set  $w_{\widehat b_t}(t)=1$, $w_{j}(t)=\big(\frac{\widehat{\Gamma}_t}{\widehat\Delta_j(t)}\big)^2$ for $j \neq \widehat b_t$,
and $W(t):=\sum_{\ell=1}^K w_\ell(t)$. Then, on every non-forced round, the sampling rule in~\eqref{eq:pgws-sampling-prob} in
Algorithm~\ref{alg:pgws} can be written equivalently as $p_i(t)=\frac{w_i(t)}{W(t)}$ for $i\in[K]$.

Let
$b^\star:=\arg\max_{i\in[K]}\mu_i$
be the true best arm which is  unique almost surely.
For $j\neq b^\star$, define the true gap
$\Delta_j:=\mu_{b^\star}-\mu_j>0$.
Recall  that the true top-two gap is
$\Gamma:=\min_{j\neq b^\star}\Delta_j
=
\mu_{(1)}-\mu_{(2)}$.

Define the (deterministic) unnormalized limiting sampling weights
\[
w_{b^\star}^\dagger(\mu):=1,
\quad\mbox{and}\quad
w_j^\dagger(\mu)
:=
\left(
\frac{\Gamma}{\Delta_j}
\right)^2,
\qquad j\neq b^\star,
\]
Also define their sum, which is the normalization constant for the sampling distribution
\begin{equation}
W^\dagger(\mu)
:=
\sum_{\ell=1}^K w_\ell^\dagger(\mu)
=
1+\sum_{j\neq b^\star}
\left(
\frac{\Gamma}{\Delta_j}
\right)^2. \label{eqn:W_dagger}
\end{equation}
Finally, define the limiting sampling proportions
\[
p_i^\dagger(\mu)
:= \frac{w_i^\dagger(\mu)}
{W^\dagger(\mu)} =
\frac{w_i^\dagger(\mu)}
{\sum_{\ell=1}^K w_\ell^\dagger(\mu)},
\]
as well as the worst-case pairwise error exponent
\[
C^\dagger(\mu)
:=
\min_{j\neq b^\star}
\frac{
\Delta_j^2
}{
2\left(
1/p_{b^\star}^\dagger(\mu)+1/p_j^\dagger(\mu)
\right)
}.
\]

\noindent Next we simplify $C^\dagger(\mu)$.

\begin{lemma}[Explicit Form of the Limiting Exponent]\label{lem:Cdagger-explicit}
Almost surely, $$C^\dagger(\mu)
=
\frac{\Gamma^2}{4W^\dagger(\mu)}.$$
In particular,
$$\frac{\Gamma^2}{4K}
\le
C^\dagger(\mu)
\le
\frac{\Gamma^2}{8}.$$
\end{lemma}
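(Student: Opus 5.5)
Fix a realization of $\mu$ in the almost-sure event where all means are distinct. Then $b^\star$ is unique and $\Delta_j>0$ for every $j\neq b^\star$. The plan is to compute the term inside the minimum defining $C^\dagger(\mu)$ for each challenger $j$ directly, and then check which $j$ attains the minimum.

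First, since $p_i^\dagger=w_i^\dagger/W^\dagger$, we have $1/p_{b^\star}^\dagger=W^\dagger$ and $1/p_j^\dagger=W^\dagger\Delta_j^2/\Gamma^2$. Hence
\[
\frac{\Delta_j^2}{2\left(1/p_{b^\star}^\dagger+1/p_j^\dagger\right)}
=\frac{\Delta_j^2}{2W^\dagger\left(1+\Delta_j^2/\Gamma^2\right)}
=\frac{\Gamma^2}{2W^\dagger}\cdot\frac{\Delta_j^2}{\Gamma^2+\Delta_j^2}.
\]
Second, the map $x\mapsto x/(\Gamma^2+x)$ is increasing in $x>0$. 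So the minimum over $j\neq b^\star$ is attained at the challenger with the smallest gap, namely $\Delta_j=\Gamma$. There the factor equals $1/2$, which gives $C^\dagger(\mu)=\Gamma^2/(4W^\dagger(\mu))$.

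Third, for the bounds we use the form of $W^\dagger$ in \eqref{eqn:W_dagger}. Every term $(\Gamma/\Delta_j)^2$ lies in $(0,1]$, and the term for the second-best arm equals exactly $1$. Therefore $2\le W^\dagger(\mu)\le 1+(K-1)=K$. Substituting these into the identity yields $\Gamma^2/(4K)\le C^\dagger(\mu)\le\Gamma^2/8$.

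No step is a genuine obstacle. The only care needed is to restrict to the almost-sure event of distinct means, so that $b^\star$ is well defined, $\Gamma>0$, and the divisions are legitimate. The monotonicity argument identifying the minimizing $j$ is the one substantive observation.
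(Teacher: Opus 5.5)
Your proposal is correct and follows essentially the same route as the paper's proof. You substitute the limiting proportions, factor out $\Gamma^2/(2W^\dagger(\mu))$, and use monotonicity (yours in $\Delta_j^2$, the paper's in $r_j=\Delta_j/\Gamma$) to see that the minimum is attained at the second-best arm. You then bound $2\le W^\dagger(\mu)\le K$ exactly as the paper does.
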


\begin{proof}
By definition, the normalized sampling proportions are:
$$p_{b^\star}^\dagger=\frac{1}{W^\dagger(\mu)},
\quad\mbox{and}\quad
p_j^\dagger
=
\frac{\Gamma^2/\Delta_j^2}{W^\dagger(\mu)}.$$
We substitute these proportions into the pairwise error exponent for a specific challenger $j$:
$$\frac{
\Delta_j^2
}{
2\left(
1/p_{b^\star}^\dagger+1/p_j^\dagger
\right)
}
=
\frac{
\Delta_j^2
}{
2W^\dagger(\mu)\left(1+\Delta_j^2/\Gamma^2\right)
}.$$
To isolate the effect of the gap, we factor out $\frac{\Gamma^2}{2W^\dagger(\mu)}$ and express the remainder as a function of the relative gap  $r_j := \Delta_j/\Gamma$: $$\frac{
\Delta_j^2
}{
2\left(
1/p_{b^\star}^\dagger+1/p_j^\dagger
\right)
}
=
\frac{\Gamma^2}{2W^\dagger(\mu)}
\left[
\frac{(\Delta_j/\Gamma)^2}
{1+(\Delta_j/\Gamma)^2}
\right]
=
\frac{\Gamma^2}{2W^\dagger(\mu)}
\left(
\frac{r_j^2}{1+r_j^2}
\right).$$

Now, recall the limiting exponent $C^\dagger(\mu)$ is defined as the minimum of this expression over all challengers $j \neq b^\star$. Observe that the function $f(r) = \frac{r^2}{1+r^2}$ is increasing on $[1,\infty)$. Since $\Delta_j/\Gamma\ge1$ for all $j\neq b^\star$, the minimum is attained by
a second-best arm, for which $\Delta_j=\Gamma$:
$$C^\dagger(\mu)
=
\frac{\Gamma^2}{2W^\dagger(\mu)}
\left(
\frac{1^2}{1+1^2}
\right)
=
\frac{\Gamma^2}{4W^\dagger(\mu)}.$$
Also, because $w_{b^\star}^\dagger=1$, the second-best arm has weight $1$, and all other
weights are at most $1$, we have 
$2\le W^\dagger(\mu)\le K$.
Substituting back gives $$\frac{\Gamma^2}{4K}
\le
C^\dagger(\mu)
\le
\frac{\Gamma^2}{8}.  $$ 
\end{proof}

Next, we prove that the forced-exploration safeguard collects enough data to guarantee the posterior mean of every arm converges to its true mean.

\begin{lemma}[Posterior Consistency]\label{lem:pgws-posterior-consistency}
For every arm $i$,
\begin{equation}
M_i(t)\to \mu_i    \qquad\text{almost surely}. \label{eqn:Mit_as}
\end{equation}
\end{lemma}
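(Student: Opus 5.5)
The plan is to first show $N_i(t)\to\infty$ deterministically, and then apply a strong law of large numbers to the reward sequence $(Y_{i,s})_{s\ge1}$ of each arm, conditional on $\mu$.

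\textbf{Step 1 (forced exploration).} I will check that Lines 3--4 of Algorithm~\ref{alg:pgws} force $N_i(t)\ge \sqrt{t+1}-K$ (or a similar bound) for all $t$ and $i$, independently of the realization. Suppose some arm is deficient at round $t$, i.e.\ $N_i(t)<\sqrt{t+1}$. Since $\sqrt{t+1}$ increases by less than one per round, and a deficient arm is always served, the total deficiency can never accumulate by more than $K$. Concretely, I will argue by induction that $\min_i N_i(t)\ge \lfloor\sqrt{t+1}\rfloor - K$. An easier alternative is to show directly that each arm is pulled infinitely often: if arm $i$ were pulled only finitely often, then $N_i(t)$ would be eventually constant. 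It would then be deficient for all large $t$, so the forced-exploration step would fire at every such round and pull some deficient arm. Only finitely many arms exist, and every other deficient arm eventually leaves the deficient set after finitely many pulls, since each such pull increases its count by one while $\sqrt{t+1}$ grows sublinearly. Deterministic tie-breaking would therefore eventually select arm $i$, which is a contradiction. Either route gives $N_i(t)\to\infty$ surely, as the algorithm runs with unbounded horizon (we may view the sampling rule as defined for all $t$).

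\textbf{Step 2 (SLLN).} Conditional on $\mu$, the sequence $(Y_{i,s})_{s\ge1}$ is i.i.d.\ $\mathcal N(\mu_i,1)$ by the model (the $s$-th pull of arm $i$ returns $Y_{i,s}$, regardless of the adaptive timing). Hence $\frac1n\sum_{s=1}^n Y_{i,s}\to\mu_i$ almost surely under $\mathbb P_\mu$, for every $\mu$, and integrating over the prior gives almost sure convergence under $\mathbb P_\nu$. By \eqref{eq:pgws-posterior-update},
\[
M_i(t)=\frac{\nu_i/\sigma_0^2+N_i(t)\,\bar Y_{i,N_i(t)}}{\sigma_0^{-2}+N_i(t)},
\]
where $\bar Y_{i,n}:=\frac1n\sum_{s\le n}Y_{i,s}$. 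Since $N_i(t)\to\infty$, the prior term vanishes, and along the subsequence $n=N_i(t)\to\infty$ we have $\bar Y_{i,n}\to\mu_i$. This gives \eqref{eqn:Mit_as}.

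\textbf{Main obstacle.} The subtle point is Step 1, specifically making rigorous that the forced-exploration rule guarantees every arm grows, rather than starving one arm while serving others. Using the quantitative induction bound $N_i(t)\gtrsim\sqrt t$ is the route I would try first, since a rate is also likely needed later (for example, to control $\widehat\Delta_i(t)$ and sampling proportions). The reduction from adaptive sampling to the i.i.d.\ arm-wise reward tables is standard, given the model's definition $X_t=Y_{i,N_i(t)}$.
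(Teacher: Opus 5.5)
Your proposal is correct and follows essentially the same route as the paper: forced exploration gives $N_i(t)\to\infty$ on every realization, and the strong law for the arm-wise reward sequence $(Y_{i,s})_{s\ge1}$ under $\mathbb P_\mu$, integrated over the prior and evaluated along $n=N_i(t)$, is then pushed through the posterior-mean formula \eqref{eq:pgws-posterior-update}. Your Step~1 is more careful than the paper's one-line assertion. If you use the contradiction route, close it by counting rather than by arms leaving the deficient set, since they re-enter it infinitely often: after $t_0$ every round is forced, but arms $k\neq i$ are pulled only while deficient, so they can absorb only $O(\sqrt t)$ of the $t-t_0$ pulls.
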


\begin{proof}

First, recall that the algorithm's forced-exploration samples arm $i$ whenever $N_i(t) < \sqrt{t+1}$, this guarantees that as $t \to \infty$, $N_i(t)\to\infty$ for every $i$.

Second, conditional on a specific realization of the true mean $\mu_i$, the reward sequence $(Y_{i,s})_{s\ge1}$ consists of mutually independent draws from $\mathcal N(\mu_i,1)$. Therefore, by the Strong Law of Large Numbers, the conditional probability of the empirical average converging to $\mu_i$ is $1$. Integrating this over the prior of $\mu_i$, we have 
$$\frac1{N_i(t)}
\sum_{s=1}^{N_i(t)}Y_{i,s}
\to
\mu_i
\qquad\text{almost surely}.$$ 

Finally, recall our algorithm's update for the posterior mean:
$$M_i(t)
=
\frac{
\sigma_0^{-2}\nu_i+\sum_{s=1}^{N_i(t)}Y_{i,s}
}{
\sigma_0^{-2}+N_i(t)
}.$$
Combining the previous two displayed equations yields~\eqref{eqn:Mit_as}.  
\end{proof}

Next, we show that the empirical allocation of arm pulls converges to the  sampling proportions. 

\begin{lemma}[Convergence of Empirical Sampling Proportions]
\label{lem:pgws-sampling-proportions}
For every arm $i$,
\begin{equation}
\frac{N_i(T)}{T}
\to
p_i^\dagger(\mu)
\qquad\text{almost surely}.    \label{eqn:allocations}
\end{equation}
\end{lemma}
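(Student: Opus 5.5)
Throughout, I work on the almost-sure event where $\mu$ has distinct coordinates and Lemma~\ref{lem:pgws-posterior-consistency} holds, so $\Gamma>0$. On this event $M_i(t)\to\mu_i$ for every $i$. Since the true best arm $b^\star$ is unique, there is a random time $t_0$ after which the posterior mean leader satisfies $\widehat b_t=b^\star$. For $t\ge t_0$ we also have $\widehat\Delta_j(t)\to\Delta_j$ for all $j\neq b^\star$, and $\widehat\Gamma_t\to\Gamma$, because $\widehat\Gamma_t$ is a minimum of finitely many gaps that converge. Consequently the unnormalized weights satisfy $w_i(t)\to w_i^\dagger(\mu)$ and $W(t)\to W^\dagger(\mu)$, and therefore $p_i(t)\to p_i^\dagger(\mu)$ almost surely. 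All limits are strictly positive, since $p_i^\dagger\ge 1/K\cdot(\Gamma/\Delta_{\max})^2>0$.

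Next I account for the forced-exploration rounds. Lines 3--4 trigger only when some $N_i(t)<\sqrt{t+1}$. Once an arm has been pulled it stays at least $\sqrt{t+1}$ for about $2\sqrt{t}$ further rounds. So I claim the number of forced rounds up to $T$ is at most $K(\sqrt{T}+1)$, which is $o(T)$. I would verify this by showing that each arm is forced at most $\lceil\sqrt{T+1}\rceil$ times: each forced pull of arm $i$ happens while $N_i<\sqrt{t+1}\le\sqrt{T+1}$ and increments $N_i$. It therefore suffices to handle the randomized rounds.

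Write $\xi_t:=\mathbbm 1\{A_{t+1}=i\}-p_i(t)\mathbbm 1\{\text{round } t \text{ non-forced}\}$ on the randomized rounds. Then $\sum_t \xi_t$ is a martingale with bounded increments, adapted to $\mathcal F_t$ together with the randomization. By the martingale strong law (Azuma--Hoeffding with Borel--Cantelli, or the standard $L^2$ martingale SLLN), $\frac1T\sum_{t<T}\xi_t\to0$ almost surely. This gives
\[
\frac{N_i(T)}{T}=\frac1T\sum_{t<T}p_i(t)\mathbbm 1\{\text{non-forced}\}+\frac{O(\sqrt T)}{T}+o(1).
\]
By the previous paragraph $p_i(t)\to p_i^\dagger(\mu)$, and the non-forced rounds form a $1-o(1)$ fraction of rounds. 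Cesàro averaging then yields \eqref{eqn:allocations}.

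The main obstacle is the Cesàro/martingale step, and specifically making sure $p_i(t)$ is well defined and convergent before the leader stabilizes. Ties or zero gaps in $\widehat\Delta$ could make the weights undefined, but they occur with probability zero because the posterior means are continuous random variables. Before $t_0$ the early rounds contribute only a bounded amount, so they vanish after dividing by $T$. I would also check that conditioning on $\mu$ preserves the martingale structure: given $\mu$, the increments $\xi_t$ remain martingale differences, so the almost-sure statement holds conditionally for each $\mu$ and then integrates over the prior.
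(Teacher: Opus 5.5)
Your proposal is correct and follows essentially the same route as the paper: posterior consistency gives $p_i(t)\to p_i^\dagger(\mu)$, each arm has at most $O(\sqrt{T})$ forced pulls, Azuma plus Borel--Cantelli removes the martingale term, and Ces\`aro averaging finishes the argument. The one thing to tidy is the increment: write it as $B_t\bigl(\mathbbm{1}\{A_{t+1}=i\}-p_i(t)\bigr)$, with $B_t$ the non-forced indicator so that it vanishes on forced rounds, exactly as in the paper's decomposition $N_i(T)=F_i(T)+\sum_t B_t\mathbbm{1}\{A_{t+1}=i\}$.
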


\begin{proof}
Fix an arm $i$. We proceed in four steps:

\textbf{Step 1: Forced exploration is negligible.}
Let $F_i(T)$ be the number of forced-exploration pulls of arm $i$ up to time $T$.
If the $m$-th forced pull of arm $i$ occurs before time $T$, then immediately before that pull,
$N_i(t)\ge m-1$.
But a forced pull of arm $i$ can occur only if
$N_i(t)<\sqrt{t+1}\le \sqrt{T+1}$.
Hence
$m-1<\sqrt{T+1}$,
and therefore
$F_i(T)\le \lceil \sqrt{T+1}\rceil+1$.
Consequently,
\[
\frac{F_i(T)}{T}\to0.
\]
Let $F(T)$ be the total number of forced-exploration rounds up to time $T$. Since
$F(T)=\sum_{\ell=1}^K F_\ell(T)$,
we also have
\[
\frac{F(T)}{T}\le
\frac{K(\lceil\sqrt{T+1}\rceil+1)}{T}
\to0.
\]

\textbf{Step 2: Martingale concentration on non-forced rounds.}
Let $\mathcal F_t$ be the history before round $t+1$, and define
$B_t
:=
\mathbbm{1}\{\text{round }t+1\text{ is non-forced}\}$.
The event $\{B_t=1\}$ is $\mathcal F_t$-measurable. On non-forced rounds, the algorithm samples arm $i$ with conditional probability $p_i(t)$.

Define
$D_{i,t+1}
:=
B_t\bigl(\mathbbm{1}\{A_{t+1}=i\}-p_i(t)\bigr)$.
Then
$\mathbb E[D_{i,t+1}\mid \mathcal F_t]=0$,
so $(D_{i,t+1})_{t\ge0}$ is a martingale difference sequence. Also,
$|D_{i,t+1}|\le1$.

Therefore, by Azuma's inequality, for every $\varepsilon>0$,
\[
\mathbb P\left(
\bigg|
\frac1T\sum_{t=0}^{T-1}D_{i,t+1}
\bigg|
\ge \varepsilon
\right)
\le
2\exp\left(-\frac{\varepsilon^2T}{2}\right).
\]
The right-hand side is summable in $T$. Hence, by the Borel--Cantelli lemma,
\[
\frac1T\sum_{t=0}^{T-1}D_{i,t+1}\to0
\qquad\text{almost surely}.
\]

\textbf{Step 3: Decomposition of the empirical pull count.}
The total number of pulls of arm $i$ can be decomposed as
\[
N_i(T)
=
F_i(T)
+
\sum_{t=0}^{T-1}
B_t\mathbbm{1}\{A_{t+1}=i\}.
\]
Using the definition of $D_{i,t+1}$, we  have
$B_t\mathbbm{1}\{A_{t+1}=i\}
=
D_{i,t+1}+B_t p_i(t)$.
Thus
\[
\frac{N_i(T)}{T}
=
\frac{F_i(T)}{T}
+
\frac1T\sum_{t=0}^{T-1}D_{i,t+1}
+
\frac1T\sum_{t=0}^{T-1}B_t p_i(t).
\]
By Steps 1 and 2,
\[
\frac{N_i(T)}{T}
-
\frac1T\sum_{t=0}^{T-1}B_t p_i(t)
\to0
\qquad\text{almost surely}.
\]

\textbf{Step 4: Ces\`aro convergence of the target probabilities.}
By Lemma~\ref{lem:pgws-posterior-consistency}, $M(t)\to \mu$ almost surely. 
Since the prior is continuous, the true best arm and true second-best arm are unique almost surely. On this probability-one event, the map $m\mapsto p(m)$ is continuous at $m=\mu$. Therefore, as $t \to \infty$,
\[
p_i(t)\to p_i^\dagger(\mu)
\qquad\text{almost surely}.
\]
Hence, the sequence $\{p_i(t)\}$ is also  Ces\`aro convergent, i.e., 
\[
\frac1T\sum_{t=0}^{T-1}p_i(t)
\to
p_i^\dagger(\mu)
\qquad\text{almost surely}.
\]

Now write
\[
\frac1T\sum_{t=0}^{T-1}B_t p_i(t)
=
\frac1T\sum_{t=0}^{T-1}p_i(t)
-
\frac1T\sum_{t=0}^{T-1}(1-B_t)p_i(t).
\]
Since $0\le p_i(t)\le1$,
\[
0\le
\frac1T\sum_{t=0}^{T-1}(1-B_t)p_i(t)
\le
\frac1T\sum_{t=0}^{T-1}(1-B_t)
=
\frac{F(T)}{T}
\to0.
\]
Hence
\[
\frac1T\sum_{t=0}^{T-1}B_t p_i(t)
\to
p_i^\dagger(\mu)
\qquad\text{almost surely}.
\]
Combining this with Step 3 gives \eqref{eqn:allocations}.  
\end{proof}
Finally, we show that the statistic $R_T/T$ converges to $ C^\dagger(\mu)$.

\begin{lemma}[Convergence of the Terminal Statistic]
\label{lem:RT-converges}
Under the joint Bayesian law,
\begin{equation}
    \frac{R_T}{T}
\to
C^\dagger(\mu)
\qquad\text{almost surely}. \label{eqn:terminal_stat}
\end{equation} 
\end{lemma}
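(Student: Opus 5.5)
We will show almost-sure convergence pathwise, on the probability-one event where the conclusions of Lemmas~\ref{lem:pgws-posterior-consistency} and~\ref{lem:pgws-sampling-proportions} both hold and where $\mu$ has distinct coordinates (so $b^\star$ and the second-best arm are unique). Fix an outcome in this event. The idea is that $R_T/T$ is a continuous function of three inputs: the posterior means $M_i(T)$, the rescaled variances $T V_i(T)$, and the leader index $\widehat b_T$. Each of these converges to a deterministic limit, so $R_T/T$ converges to that function evaluated at the limits, which is $C^\dagger(\mu)$.

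The first step is to identify the leader. Since $M_i(T)\to\mu_i$ for every $i$ and the $\mu_i$ are distinct, there is a $T_1$ (depending on the outcome) such that $\widehat b_T=b^\star$ for all $T\ge T_1$. For such $T$ the minimum in \eqref{eq:pgws-terminal-statistic} therefore runs over the fixed index set $j\neq b^\star$.

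The second step is to control the variances. By \eqref{eq:pgws-posterior-update}, $T V_i(T)=T/(\sigma_0^{-2}+N_i(T))$. Lemma~\ref{lem:pgws-sampling-proportions} gives $N_i(T)/T\to p_i^\dagger(\mu)$. The limit $p_i^\dagger(\mu)$ is strictly positive almost surely, because $w_j^\dagger=(\Gamma/\Delta_j)^2>0$ when all gaps are finite and positive, and $W^\dagger\le K$. Hence $T V_i(T)\to 1/p_i^\dagger(\mu)\in(0,\infty)$.

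The third step is to pass to the limit in each term. For each fixed $j\neq b^\star$, write
\[
\frac{1}{T}\cdot\frac{(M_{b^\star}(T)-M_j(T))^2}{2(V_{b^\star}(T)+V_j(T))}
=\frac{(M_{b^\star}(T)-M_j(T))^2}{2\bigl(TV_{b^\star}(T)+TV_j(T)\bigr)}.
\]
The numerator converges to $\Delta_j^2$ and the denominator to $2(1/p^\dagger_{b^\star}+1/p^\dagger_j)$, which is finite and positive. So each term converges to the corresponding term in the definition of $C^\dagger(\mu)$. A minimum over finitely many convergent sequences converges to the minimum of the limits, so $R_T/T\to C^\dagger(\mu)$ on this event, which proves \eqref{eqn:terminal_stat}. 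If desired, Lemma~\ref{lem:Cdagger-explicit} then rewrites the limit as $\Gamma^2/(4W^\dagger(\mu))$.

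The only delicate point is the positivity of the limiting proportions, since a zero value of some $p_i^\dagger$ would make $TV_i(T)$ blow up. This is handled by the almost-sure uniqueness of the ordering together with the bound $W^\dagger\le K$ established in Lemma~\ref{lem:Cdagger-explicit}. Once that is in place, everything else is a continuous-mapping argument on a single almost-sure event.
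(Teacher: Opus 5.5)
Your proposal is correct and follows essentially the same route as the paper: posterior consistency gives eventual identification $\widehat b_T=b^\star$, the allocation lemma gives $TV_i(T)\to 1/p_i^\dagger(\mu)$, and the result follows by termwise convergence and taking a minimum over finitely many challengers. Your explicit check that $p_i^\dagger(\mu)>0$ almost surely (via $w_j^\dagger>0$ and $W^\dagger\le K$) spells out a step the paper uses only implicitly.
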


\begin{proof}
By Lemma~\ref{lem:pgws-posterior-consistency}, $M_i(T)\to \mu_i$ for every $i$.
Since the true best arm is unique almost surely, $\widehat b_T=b^\star$ for $T$ large enough.
So for every $j\neq b^\star$,
$M_{b^\star}(T)-M_j(T)\to \Delta_j$.

Recall $V_i(T)=\frac{1}{\sigma_0^{-2}+N_i(T)}$.
Furthermore, by Lemma~\ref{lem:pgws-sampling-proportions},
$\frac{N_i(T)}{T}\to p_i^\dagger(\mu)$.

Hence
\[
T\ V_i(T)
=
\frac{T}{\sigma_0^{-2}+N_i(T)}
\to
\frac{1}{p_i^\dagger(\mu)}.
\]
Therefore, for every $j\neq b^\star$,
\[
\frac1T
\frac{
\bigl(M_{b^\star}(T)-M_j(T)\bigr)^2
}{
2\bigl(V_{b^\star}(T)+V_j(T)\bigr)
}
\to
\frac{
\Delta_j^2
}{
2\left(
1/p_{b^\star}^\dagger(\mu)+1/p_j^\dagger(\mu)
\right)
}.
\]
Taking the minimum over $j\neq b^\star$ proves \eqref{eqn:terminal_stat} as desired.  
\end{proof}

\subsection{Lower Tail of the  Exponent}

We now analyze the  lower tail of $C^\dagger(\mu)$. We first bound the event that the means of three or more arms are $\varepsilon$-close. 
\begin{lemma}[Triple Near-Tie Probability]
\label{lem:triple-near-tie}
Let $\Gamma_3:=\mu_{(1)}-\mu_{(3)}$ when $K\ge3$. If $K=2$, set $\Gamma_3:=+\infty$. We have that as $\varepsilon\downarrow0$,
\[
\mathbb P(\Gamma_3\le \varepsilon)=O(\varepsilon^2).
\]
\end{lemma}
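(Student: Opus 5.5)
The plan is to reduce the event $\{\Gamma_3\le\varepsilon\}$ to a union of events in which three specific arms have means within $\varepsilon$ of one another, and then bound each such event using the bounded prior densities. The case $K=2$ is trivial, since $\Gamma_3=+\infty$ and the probability is zero. So assume $K\ge3$. If $\Gamma_3\le\varepsilon$, then the three arms attaining $\mu_{(1)},\mu_{(2)},\mu_{(3)}$ have means in an interval of length $\varepsilon$. Hence
\[
\mathbb P(\Gamma_3\le\varepsilon)\le\sum_{\{i,j,k\}\subseteq[K]}\mathbb P\bigl(\max(\mu_i,\mu_j,\mu_k)-\min(\mu_i,\mu_j,\mu_k)\le\varepsilon\bigr),
\]
a sum over at most $\binom K3$ unordered triples. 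No ordering information is needed, which simplifies matters.

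For a fixed triple, I would condition on $\mu_i=x$. On the event above, both $\mu_j$ and $\mu_k$ lie in $[x-\varepsilon,x+\varepsilon]$. By independence of the prior across arms and the bound $f_\ell\le \frac{1}{\sigma_0\sqrt{2\pi}}$,
\[
\mathbb P(\cdot)\le\int_{\mathbb R}f_i(x)\,\bigl(F_j(x+\varepsilon)-F_j(x-\varepsilon)\bigr)\bigl(F_k(x+\varepsilon)-F_k(x-\varepsilon)\bigr)\dd x\le\Bigl(\frac{2\varepsilon}{\sigma_0\sqrt{2\pi}}\Bigr)^2\int f_i=\frac{2\varepsilon^2}{\pi\sigma_0^2}.
\]
Summing over triples gives
\[
\mathbb P(\Gamma_3\le\varepsilon)\le\binom K3\frac{2\varepsilon^2}{\pi\sigma_0^2}=O(\varepsilon^2),
\]
and this holds uniformly in $\varepsilon$, not only asymptotically.

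There is no real obstacle here. The only point to check is that the reduction to triples is valid: the three top arms are distinct almost surely, and their spread is exactly $\Gamma_3$. If one later wants the analogous statement in the information coordinates of Section~\ref{sec:generalization}, the same argument applies with the boundedness of $\bar f_\ell$ from Assumption~\ref{ass:regular-info-prior} replacing the explicit Gaussian bound.
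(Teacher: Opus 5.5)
Your proof is correct and follows the paper's argument: a union bound over triples of arms, conditioning on one arm's mean, and bounding each of the other two window probabilities by the uniform bound on the Gaussian density. The only difference is cosmetic. You use unordered triples with a symmetric window $[x-\varepsilon,x+\varepsilon]$, whereas the paper uses ordered triples with the one-sided window $[x-\varepsilon,x]$ anchored at the largest mean; this changes only the constant in the $O(\varepsilon^2)$ bound.
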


\begin{proof}
For $K=2$ the statement is trivial. Assume $K\ge3$.
If $\Gamma_3\le \varepsilon$, then there exist three distinct arms $i,j,k$
such that all three means lie within an interval of length $\varepsilon$ around
the largest of the three. By a union bound over ordered triples,
\[
\mathbb P(\Gamma_3\le \varepsilon)
\le
\sum_{i\neq j\neq k}
\int_{\mathbb R}
f_i(x)
\left[
\int_{x-\varepsilon}^x f_j(y)\,\dd y
\right]
\left[
\int_{x-\varepsilon}^x f_k(z)\,\dd z
\right]\dd x.
\]
Gaussian densities are bounded uniformly by $(\sigma_0\sqrt{2\pi})^{-1}$, so each
inner integral is $O(\varepsilon)$ uniformly in $x$. Since $\int f_i(x)\,\dd x=1$,
the whole expression is $O(\varepsilon^2)$.  
\end{proof}

\begin{remark}
    This lemma reveals why asymptotically the leading-order complexity of the abstention problem depends only on pairwise gaps. While a two-arm near-tie forms the primary statistical bottleneck (occurring with probability $O(\varepsilon)$), the event that three or more arms are simultaneously tied is one order rarer (namely, $O(\varepsilon^2)$). Consequently, in the small-abstention regime, the asymptotic error exponent is perfectly captured by the closest challenger alone; the statistical cost of separating third-best (and worse) arms vanishes. 
\end{remark}

Next, we show that $W^\dagger(\mu)$, defined in \eqref{eqn:W_dagger}, is approximately $2$ when $C^\dagger(\mu)$ is small, because it is unlikely (by  Lemma~\ref{lem:triple-near-tie}) that many other arms are close to the optimal. Consequently, $\Gamma^2/8$ and $C^\dagger(\mu)$ behave similarly.

\begin{lemma}[Lower Tail of $C^\dagger(\mu)$]
\label{lem:Cdagger-tail}
As $x\downarrow0$,
\begin{equation}
    \mathbb P(C^\dagger(\mu)\le x)
=
\kappa_\nu\sqrt{8x}
+
o(\sqrt x). \label{eqn:Cdagger}
\end{equation}
\end{lemma}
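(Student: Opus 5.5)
By Lemma~\ref{lem:Cdagger-explicit}, $C^\dagger(\mu)=\Gamma^2/(4W^\dagger(\mu))$ with $2\le W^\dagger(\mu)\le K$. The event $\{C^\dagger\le x\}$ is therefore $\{\Gamma\le 2\sqrt{W^\dagger x}\}$. The plan is to sandwich this event between two events of the form $\{\Gamma\le c\sqrt{x}\}$ and then invoke Lemma~\ref{lem:top-gap-density}, which gives $\mathbb P(\Gamma\le\varepsilon)=\kappa_\nu\varepsilon+o(\varepsilon)$. The target constant comes from $W^\dagger=2$: then $\Gamma\le 2\sqrt{2x}=\sqrt{8x}$, which gives the main term $\kappa_\nu\sqrt{8x}$.

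\textbf{Lower bound.} Since $W^\dagger\ge2$, we have $C^\dagger\le\Gamma^2/8$. Hence $\{\Gamma\le\sqrt{8x}\}\subseteq\{C^\dagger\le x\}$, and Lemma~\ref{lem:top-gap-density} gives
\[
\mathbb P(C^\dagger\le x)\ge \kappa_\nu\sqrt{8x}+o(\sqrt x).
\]

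\textbf{Upper bound.} This is the main step: we must show that $W^\dagger$ is close to $2$ on the relevant event, except on a negligible set. Fix $\delta\in(0,1)$ and a threshold $\eta=\eta(x)$ with $\sqrt x\ll\eta$ (for instance $\eta=x^{1/4}$). Let $\Delta_{(3)}:=\Gamma_3$ denote the gap between the best and the third-best arm. Every non-second-best challenger $j$ has $\Delta_j\ge\Gamma_3$. So on the event $\{\Gamma\le 2\sqrt{Kx}\}\cap\{\Gamma_3>\eta\}$,
\[
W^\dagger\le 2+(K-2)\frac{\Gamma^2}{\eta^2}\le 2+\frac{4K^2x}{\eta^2},
\]
which tends to $2$ as $x\to0$. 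For small $x$ this bound is at most $2(1+\delta)^2$. We then split
\[
\{C^\dagger\le x\}\subseteq\bigl\{\Gamma\le 2\sqrt{2(1+\delta)^2x}\bigr\}\cup\bigl\{\Gamma\le 2\sqrt{Kx},\ \Gamma_3\le\eta\bigr\}.
\]
On the first event, $\Gamma\le(1+\delta)\sqrt{8x}$, so Lemma~\ref{lem:top-gap-density} bounds its probability by $(1+\delta)\kappa_\nu\sqrt{8x}+o(\sqrt x)$.

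\textbf{Controlling the second event.} A plain application of Lemma~\ref{lem:triple-near-tie} gives only $O(\eta^2)=O(\sqrt x)$ with $\eta=x^{1/4}$, which is not $o(\sqrt x)$. The crude bound is too weak, so we need a joint estimate. We redo the union-bound integral from the proof of Lemma~\ref{lem:triple-near-tie}: require the second arm within $2\sqrt{Kx}$ of the top, and the third within $\eta$. Bounded Gaussian densities then give
\[
\mathbb P\bigl(\Gamma\le 2\sqrt{Kx},\ \Gamma_3\le\eta\bigr)=O(\sqrt{x}\,\eta)=O(x^{3/4})=o(\sqrt x).
\]

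\textbf{Conclusion.} Combining the two pieces gives
\[
\limsup_{x\downarrow0}\frac{\mathbb P(C^\dagger\le x)}{\sqrt x}\le(1+\delta)\kappa_\nu\sqrt8 .
\]
Letting $\delta\downarrow0$ and pairing this with the lower bound yields \eqref{eqn:Cdagger}. For $K=2$ we have $W^\dagger\equiv2$, so the statement follows directly from Lemma~\ref{lem:top-gap-density}.
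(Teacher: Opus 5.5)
Your proof is correct and follows the paper's argument: the lower bound via $W^\dagger\ge 2$, and the upper bound by splitting on whether the third-best arm lies within a threshold of the top arm. The only difference is the threshold. The paper takes $\delta_x=x^{3/8}$, strictly between $\sqrt{x}$ and $x^{1/4}$, so the plain bound $\mathbb P(\Gamma_3\le\delta_x)=O(x^{3/4})$ from Lemma~\ref{lem:triple-near-tie} already suffices. Your boundary choice $\eta=x^{1/4}$ instead requires the joint estimate $O(\sqrt{x}\,\eta)$, which is valid.
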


\begin{proof}
By Lemma~\ref{lem:Cdagger-explicit},
$C^\dagger(\mu)
=
\frac{\Gamma^2}{4W^\dagger(\mu)}
\le
\frac{\Gamma^2}{8}$.
Therefore
$\{\Gamma\le \sqrt{8x}\}
\subseteq
\{C^\dagger(\mu)\le x\}$.

Using Lemma~\ref{lem:top-gap-density},
\[
\mathbb P(C^\dagger(\mu)\le x)
\ge
\mathbb P(\Gamma\le \sqrt{8x})
=
\kappa_\nu\sqrt{8x}+o(\sqrt x).
\] 

\noindent On the other hand, to derive an upper bound, set
$\delta_x:=x^{3/8}$.
On the event $\{C^\dagger(\mu)\le x\}$, Lemma~\ref{lem:Cdagger-explicit} yields
$\Gamma^2\le 4Kx$,
because $W^\dagger(\mu)\le K$.
On the additional event $\{\Gamma_3>\delta_x\}$, every arm other than the top two
has gap at least $\delta_x$ from the best arm. Hence, for every such arm $k$,
\[
\left(\frac{\Gamma}{\Delta_k}\right)^2
\le
\frac{4Kx}{\delta_x^2}
=
4Kx^{1/4}
=o(1)\quad \text{ as } x \downarrow 0.
\]
Thus, on
$\{C^\dagger(\mu)\le x,\ \Gamma_3>\delta_x\}$,
we have
\[
W^\dagger(\mu)
=\sum_{\ell=1}^K w_\ell^\dagger(\mu)
=
1+\sum_{j\neq b^\star}
\left(
\frac{\Gamma}{\Delta_j}
\right)^2=
2+o(1),
\]
uniformly as $x\downarrow0$. Therefore, for every fixed $\eta>0$ and all
sufficiently small $x$,
\[
C^\dagger(\mu)\le x,\ \Gamma_3>\delta_x
\quad\Longrightarrow\quad
\Gamma = 2 \sqrt{C^\dagger(\mu) W^\dagger(\mu)}
\le
(1+\eta)\sqrt{8x}.
\]
Consequently,
\[
\mathbb P(C^\dagger(\mu)\le x)
\le
\mathbb P\bigl(\Gamma\le(1+\eta)\sqrt{8x}\bigr)
+
\mathbb P(\Gamma_3\le\delta_x).
\]
By Lemma~\ref{lem:top-gap-density},
\[
\mathbb P\bigl(\Gamma\le(1+\eta)\sqrt{8x}\bigr)
=
(1+\eta)\kappa_\nu\sqrt{8x}+o(\sqrt x).
\]
By Lemma~\ref{lem:triple-near-tie},
\[
\mathbb P(\Gamma_3\le\delta_x)
=
O(\delta_x^2)
=
O(x^{3/4})
=
o(\sqrt x).
\]
Thus
\[
\limsup_{x\downarrow0}
\frac{\mathbb P(C^\dagger(\mu)\le x)}{\sqrt x}
\le
(1+\eta)\kappa_\nu\sqrt8.
\]
Letting $\eta\downarrow0$ gives the matching upper bound, and hence we obtain \eqref{eqn:Cdagger}.  
\end{proof}

\subsection{Lower Bound on the Calibration Threshold $r_{T,\alpha}$}

\begin{proof}[Proof of Lemma \ref{lem:pgws-threshold-lower}]

Fix $\varepsilon\in(0,1)$ and set
$x_{\alpha,\varepsilon}
:=
(1-\varepsilon) \frac{1}{8\kappa_\nu^2} \alpha^2$. By Lemma~\ref{lem:Cdagger-tail},
\[
\mathbb P(C^\dagger(\mu)\le x_{\alpha,\varepsilon})
=
\kappa_\nu
\sqrt{8(1-\varepsilon) \frac{1}{8\kappa_\nu^2} }\,\alpha
+ o(\alpha)
=
\sqrt{1-\varepsilon}\,\alpha
+
o(\alpha).
\]
Hence, for sufficiently small $\alpha$,
$\mathbb P(C^\dagger(\mu)\le x_{\alpha,\varepsilon})
<
\alpha$.

Additionally, by Lemma~\ref{lem:RT-converges}, $\frac{R_T}{T}\to C^\dagger(\mu)$ almost surely.
Therefore we also have convergence in distribution. By the Portmanteau theorem applied to the closed set
\((-\infty,x_{\alpha,\varepsilon}]\),
\[
\limsup_{T\to\infty}
\mathbb P\left(\frac{R_T}{T}\le x_{\alpha,\varepsilon}\right)
\le
\mathbb P(C^\dagger(\mu)\le x_{\alpha,\varepsilon}) < \alpha.
\]
Hence for all sufficiently large $T$,
\[
\mathbb P(R_T\le x_{\alpha,\varepsilon}T)<\alpha.
\]
By the definition of $r_{T,\alpha}$ as the lower $\alpha$-quantile of $R_T$, this implies
\[
r_{T,\alpha}
\ge
x_{\alpha,\varepsilon}T
=
(1-\varepsilon) \frac{1}{8\kappa_\nu^2} \alpha^2T.  
\]

\end{proof}

\section{Proof of Theorem \ref{thm:bayesian-abstention-lower}} \label{pf_bayes_abs_lb}

We aim to lower bound the Bayes undetected error probability subject to an upper bound on the Bayes abstention probability. We begin by expressing these global probabilities over the full parameter (mean of the arms) space $\mathbb{R}^K$, i.e., 
$$\mathcal{A}_T(\pi) = \int_{\mathbb{R}^K} \mathbb{P}_\mu^\pi(\hat{a}=?)\,P_\nu(\dd\mu),\quad \mbox{and}\quad \mathcal{E}_T(\pi) = \int_{\mathbb{R}^K} \mathbb{P}_\mu^\pi\bigl(\hat{a}_T \notin \{a^\star(\mu), ?\}\bigr)\,P_\nu(\dd\mu).$$

Inspired by the form of the prior complexity $\kappa_\nu$, it is natural to partition the parameter space $\mathbb{R}^K$ according to the top pair of arms. For any pair $i < j$, define the region $\Omega_{ij}$ where arms $i$ and $j$ strictly dominate all the other arms, i.e., 
$$\Omega_{ij} := \left\{ \mu \in \mathbb{R}^K : \min\{\mu_i, \mu_j\}> \max_{k \neq i,j} \mu_k \right\}.$$

Because the boundaries between these regions correspond to tie events and hence have $P_\nu$-measure zero, the sets $\Omega_{ij}$ form a   partition of the parameter space up to \(P_\nu\)-null sets.  Thus, we can decompose the abstention and undetected error probability as:
$$\mathcal{A}_T(\pi) = \sum_{i<j} \int_{\Omega_{ij}} \mathbb{P}_\mu^\pi(\hat{a}=?)\,P_\nu(\dd\mu),\quad \mbox{and}\quad
\mathcal{E}_T(\pi) = \sum_{i<j} \int_{\Omega_{ij}} \mathbb{P}_\mu^\pi\bigl(\hat{a}_T \notin \{a^\star(\mu), ?\}\bigr)\,P_\nu(\dd\mu).$$

Ultimately, we want to lower bound the undetected error  $\mathcal{E}_T(\pi)$ using a Neyman--Pearson lemma (cf.\  \citet{testhypo2005}) style rearrangement. However, working directly with the restricted true prior $\mathbbm{1}_{\Omega_{ij}}(\mu)P_\nu(\dd\mu)$ presents significant technical hurdles.  This is overcome by the introduction of a submeasure,  which we call a {\em  flat local subprior}.  

\subsection{A Submeasure: Flat Local Subprior}

To resolve the difficulties of integrating over $\Omega_{ij}$, we construct a mathematically cleaner  measure $Q_\rho$ that is a submeasure of the true Gaussian prior $P_\nu$, where $\rho \in (0,1)$ is a fixed slack parameter. 

\begin{definition}[Local Change of Variable]
For a fixed pair of arms $i < j$, we reparameterize the mean vector $\mu \in \mathbb{R}^K$ into $(i,j,x,\delta,h,\zeta_{-ij})$, a ``local'' coordinate system defined by the top two arms, their gap, and the other arms. Specifically,
\begin{itemize}
    \item $x \in \mathbb{R}$: the midpoint of the top two arms' means.
    \item $\delta \ge 0$: the absolute gap between the top two arms.
    \item $h \in \{+1, -1\}$: the sign indicating which arm is   better ($h=+1$ for arm $i$, $h=-1$ for arm~$j$).
    \item $\zeta_{-ij} = (\zeta_k)_{k \neq i,j}$: the means of the remaining $K-2$ arms.
\end{itemize}
We consolidate the background variables, $i, j, x, \zeta_{-ij}$, into one tuple $\xi = (i, j, x, \zeta_{-ij})$. Given $\xi$, $\delta$, and $h$, the inverse mapping $\mu(\xi, \delta, h) \in \mathbb{R}^K$ is defined as: 
\begin{equation}
     \mu_i = x + h\frac{\delta}{2}, \quad \mu_j = x - h\frac{\delta}{2}, \quad \mbox{and}\quad\mu_k = \zeta_k \text{ for } k \neq i,j.\label{eqn:local_coord}
\end{equation}
Note the absolute value of the determinant of the Jacobian of the map $(x, \delta) \mapsto (\mu_i, \mu_j)$ is~$1$.
\end{definition}


\begin{remark} To see why the restricted true prior is difficult to work with, let $\psi: \mathbb{R}^K \to \mathbb{R}_{\ge 0}$ be any non-negative measurable function. Applying the local coordinate transformation in~\eqref{eqn:local_coord} to the true prior over the partitioned space yields:
$$
\begin{aligned}
&\int_{\mathbb{R}^K} \psi(\mu) P_\nu(\dd\mu)
= \sum_{i<j} \int_{\Omega_{ij}} \psi(\mu) p_\nu(\mu) \,\dd\mu \\
&= \sum_{i<j} \sum_{h \in \{+1, -1\}} \int_0^\infty \int_{\mathcal{D}_{ij}(\delta)} \psi\bigl(\mu(\xi, \delta, h)\bigr) \bigg[ f_i\Big(x + h\frac{\delta}{2}\Big) f_j\Big(x - h\frac{\delta}{2}\Big) \prod_{k \neq i,j} f_k(\zeta_k) \bigg] \dd\xi \, \dd\delta, 
\end{aligned}
$$
where $\dd\xi = \dd x \, \dd\zeta_{-ij}$, and the   domain of integration 
$$\mathcal{D}_{ij}(\delta) := \left\{ (x, \zeta_{-ij}) \in \mathbb{R}^{K-1} : \max_{k \neq i,j} \zeta_k < x - \frac{\delta}{2} \right\}$$
depends on the gap $\delta$.
The expansion of $\int_{\mathbb{R}^K} \psi(\mu) P_\nu(\dd\mu)$ above reveals the technical hurdles: the domain of integration $\mathcal{D}_{ij}(\delta)$ shrinks as the gap~$\delta$ grows, and the densities $f_i$ and $f_j$ fluctuate with both the gap $\delta$ and the sign $h$. To overcome these hurdles, we construct $Q_\rho$ to eliminate these dependencies by restricting $(x,\zeta_{-ij})$ to a smaller  set, and by lower bounding $f_i\left(x + h\frac{\delta}{2}\right) f_j \left(x - h\frac{\delta}{2}\right)$ with $(1-\rho)f_i(x)f_j(x)$, which  depends on neither $\delta$ nor $h$.  
\end{remark}

For $\ell > 0$ and $M > 0$, and for each pair $1\leq i < j \leq K$, define the {\em $(\ell,M)$-restricted set}
$$ \mathcal{C}_{ij}(\ell, M) := \left\{ (x, \zeta_{-ij}) : |x| \le M,\ |\zeta_k| \le M,\ \zeta_k \le x - \ell \text{ for all } k \neq i,j \right\}. $$
Let $\mathcal{C}_\rho = \cup_{i<j}\mathcal{C}_{ij}(\ell, M)$ denote the disjoint union of $\mathcal{C}_{ij}(\ell, M)$ over all arm pairs $i < j$. Recall that we defined
$$\kappa_\nu = \sum_{i\neq j}\int_{\mathbb R}f_i(x)f_j(x)\prod_{k\neq i,j}F_k(x) \ \dd x 
=2\sum_{i<j}\int_{-\infty}^{\infty}\left[\int_{-\infty}^{x}\cdots\int_{-\infty}^{x}f_i(x)f_j(x)\prod_{k\neq i,j}f_k(\zeta_k) \ \dd\zeta_{-ij}\right] \ \dd x.$$
By the monotone convergence theorem, for any  $\rho >0$, we can choose $\ell > 0$ sufficiently small and $M > 0$ sufficiently large such that:
$$ 2\sum_{i<j} \int_{\mathcal{C}_{ij}(\ell,M)} f_i(x)f_j(x)\prod_{k \neq i,j} f_k(\zeta_k) \,\dd\zeta_{-ij}\,\dd x \ge (1-\rho)\kappa_\nu. $$
We now define a   measure $\Lambda_\rho$ whose support is the set $\mathcal{C}_\rho$. Since $\mathcal{C}_\rho$ is the disjoint union of $\mathcal{C}_{ij}(\ell,M)$, we only need to specify $\Lambda_\rho$ for each pair $i<j$.  For the pair $i < j$, $\Lambda_\rho$ takes the form
\begin{equation} \label{eq:Lambda-rho}
\Lambda_\rho(\dd\xi) := (1-\rho) f_i(x)f_j(x) \prod_{k \neq i,j} f_k(\zeta_k) \,\dd x\,\dd\zeta_{-ij}
\end{equation}
Let $c_\rho := 2\Lambda_\rho(\mathcal{C}_\rho) \ge (1-\rho)^2\kappa_\nu$.

\begin{lemma}[Uniform   Lower Bound]
\label{lem:uniform-density}
There exists a gap $\delta_\rho \in (0,  2\ell)$ such that, for all $\delta \in [0, \delta_\rho]$,   $\xi \in \mathcal{C}_{ij}(\ell, M)$, and    $h \in \{+1, -1\}$, the original densities satisfy the multiplicative bound:
$$ (1-\rho)f_i(x)f_j(x)
\le
f_i\!\left(x + h\frac{\delta}{2}\right) f_j\!\left(x - h\frac{\delta}{2}\right). $$
Furthermore, for any $\delta \in [0, \delta_\rho]$, arms $i$ and $j$ are the  top two arms because $x-\frac{\delta}{2}>x-\ell\ge \zeta_k$. 
\end{lemma}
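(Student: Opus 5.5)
The plan is to write out the ratio of the shifted to the unshifted Gaussian densities explicitly and bound it uniformly over the compact set $\mathcal C_{ij}(\ell,M)$. For $f_i(x)=\sigma_0^{-1}\phi((x-\nu_i)/\sigma_0)$, expanding the squares gives
\[
\frac{f_i(x+h\delta/2)\,f_j(x-h\delta/2)}{f_i(x)f_j(x)}
=\exp\!\left(-\frac{h\delta\,(\nu_j-\nu_i)}{2\sigma_0^2}-\frac{\delta^2}{4\sigma_0^2}\right),
\]
because the cross terms linear in $x$ cancel: $-(x-\nu_i)h\delta/2+(x-\nu_j)h\delta/2=h\delta(\nu_i-\nu_j)/2$, and each square contributes $\delta^2/8$. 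Remarkably, the ratio does not depend on $x$ or on $\zeta_{-ij}$, so uniformity over $\mathcal C_{ij}(\ell,M)$ is automatic; the bound $|x|\le M$ is not even needed.

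Since $h\in\{\pm1\}$, the ratio is at least
\[
\exp\!\left(-\frac{\delta\,|\nu_i-\nu_j|}{2\sigma_0^2}-\frac{\delta^2}{4\sigma_0^2}\right).
\]
This lower bound is continuous and decreasing in $\delta\ge0$ and equals $1$ at $\delta=0$. We can therefore choose $\delta_\rho>0$ small enough that it is at least $1-\rho$ for all $\delta\in[0,\delta_\rho]$, simultaneously for every one of the finitely many pairs $i<j$, by taking the minimum over pairs. Shrinking further if necessary, we also ensure $\delta_\rho<2\ell$. Multiplying the ratio bound by $f_i(x)f_j(x)>0$ then gives the claimed inequality.

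For the second claim, take $\delta\le\delta_\rho<2\ell$ and $\xi\in\mathcal C_{ij}(\ell,M)$. The two top coordinates are $x\pm\delta/2$, so both are at least $x-\delta/2>x-\ell$. Every other coordinate satisfies $\zeta_k\le x-\ell$. Hence $\min\{\mu_i,\mu_j\}>\max_{k\neq i,j}\mu_k$, i.e.\ $\mu(\xi,\delta,h)\in\Omega_{ij}$.

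The only step needing care is checking that the dependence on $x$ cancels exactly, which relies on the common prior variance $\sigma_0^2$. If the variances differed, one would instead use $|x|\le M$ to get a uniform bound via compactness. Either way there is no real obstacle.
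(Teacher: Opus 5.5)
Your proof is correct, but it takes a more explicit route than the paper. The paper argues qualitatively. It notes that $\mathcal C_{ij}(\ell,M)$ is compact and that the Gaussian densities are strictly positive and uniformly continuous there. Continuity then gives a small $\delta_\rho<2\ell$ making the multiplicative bound hold uniformly, and the paper takes the minimum over the finitely many pairs.

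You instead compute the ratio in closed form as $\exp\bigl(-h\delta(\nu_j-\nu_i)/(2\sigma_0^2)-\delta^2/(4\sigma_0^2)\bigr)$, which is correct. The $x$-dependence cancels exactly because the prior variance is common to all arms. One minor slip: ``each square contributes $\delta^2/8$'' should read $\delta^2/(8\sigma_0^2)$, though your displayed formula is right. Your argument for the second claim (the top-two property via $x-\delta/2>x-\ell\ge\zeta_k$) is the same as the paper's.

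Your route buys two things. First, it gives an explicit admissible $\delta_\rho$: any $\delta_\rho\in(0,2\ell)$ with $\delta_\rho|\nu_i-\nu_j|/(2\sigma_0^2)+\delta_\rho^2/(4\sigma_0^2)\le-\log(1-\rho)$ for all pairs. Second, it shows the truncation $|x|\le M$, $|\zeta_k|\le M$ is not needed for this lemma.

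The paper's route uses only continuity and strict positivity of the densities, so it carries over directly to the non-Gaussian extension (Lemma~\ref{lem:uniform-density-transfer-lower}). There the densities $\bar f_i$ are arbitrary and the Jacobian $J_{ijh}(u,\delta)$ is not identically $1$. No exact cancellation is available in that setting, and compactness of $\mathcal C_\rho$ becomes essential.
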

\begin{proof}
    For each $(i,j)$, because $\mathcal{C}_{ij}(\ell, M)$ is compact, the Gaussian densities defined on $\mathcal{C}_{ij}(\ell, M)$  are (strictly) positive and uniformly continuous. Because $\ell > 0$ separates the top pair of arms $i,j$ from the other arms $k \ne i,j$, we can choose a sufficiently small $\delta_\rho < 2\ell$ such that the continuity of $f_i$ and $f_j$ guarantees the multiplicative bound over the interval $[0,\delta_\rho]$. Since there are finitely many pairs, taking minimum of the pair-specific values of $\delta_\rho$ yields the desired result.  
\end{proof}

We now define a convenient submeasure to work with later. 

\begin{definition}[Flat Local Subprior]
\label{def:flat-local-subprior}
Define the {\em  flat local subprior} $Q_\rho$ as
\begin{equation}
\label{eq:Q-rho-definition}
Q_\rho(E)
:=
\sum_{i<j}\sum_{h\in\{+1,-1\}}
\int_{\mathcal C_{ij}(\ell,M)}
\int_0^{\delta_\rho}
\mathbbm 1_E\!\bigl(\mu(\xi,\delta,h)\bigr)
\,\dd\delta\,\Lambda_\rho(\dd\xi)\qquad\forall\, E\in\mathcal{B}(\mathbb{R}^K).
\end{equation}
\end{definition}


\begin{lemma}[Domination]
\label{lem:subprior-domination}
The constructed measure $Q_\rho$ in \eqref{eq:Q-rho-definition} is a submeasure of the true Gaussian prior $P_\nu$, i.e., $Q_\rho \le P_\nu$ everywhere.
\end{lemma}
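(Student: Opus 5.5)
Fix an arbitrary Borel set $E\subseteq\mathbb R^K$; it suffices to show $Q_\rho(E)\le P_\nu(E)$. The plan is to write $Q_\rho(E)$ as the integral of a nonnegative function against Lebesgue measure on $\mathbb R^K$ that is pointwise dominated by the Gaussian density $p_\nu(\mu)=\prod_k f_k(\mu_k)$. Substituting the definition \eqref{eq:Lambda-rho} of $\Lambda_\rho$ into \eqref{eq:Q-rho-definition} gives
\[
Q_\rho(E)=\sum_{i<j}\sum_{h\in\{\pm1\}}\int_{\mathcal C_{ij}(\ell,M)}\int_0^{\delta_\rho}\mathbbm 1_E\bigl(\mu(\xi,\delta,h)\bigr)(1-\rho)f_i(x)f_j(x)\prod_{k\neq i,j}f_k(\zeta_k)\,\dd\delta\,\dd x\,\dd\zeta_{-ij}.
\]
By Lemma~\ref{lem:uniform-density}, for $\delta\in[0,\delta_\rho]$, $\xi\in\mathcal C_{ij}(\ell,M)$ and either sign $h$, the factor $(1-\rho)f_i(x)f_j(x)$ is at most $f_i(x+h\delta/2)f_j(x-h\delta/2)$. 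Hence the integrand is bounded by $\mathbbm 1_E(\mu)\,p_\nu(\mu)$ evaluated at $\mu=\mu(\xi,\delta,h)$.

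Next, change variables back via \eqref{eqn:local_coord}. For fixed $(i,j,h)$, the map $(x,\delta,\zeta_{-ij})\mapsto\mu$ is affine with Jacobian of absolute value $1$. It is injective on $\{\delta>0\}$; the slice $\delta=0$ is Lebesgue-null and can be discarded. Its image lies in $S_{ij}^h:=\{\mu\in\Omega_{ij}: h(\mu_i-\mu_j)\in(0,\delta_\rho]\}$. Membership in $\Omega_{ij}$ follows from the second part of Lemma~\ref{lem:uniform-density}: $\zeta_k\le x-\ell<x-\delta/2$, so arms $i,j$ strictly dominate the others. Therefore each $(i,j,h)$ term is at most $\int_{S^h_{ij}}\mathbbm 1_E(\mu)p_\nu(\mu)\,\dd\mu$.

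Finally, check that the sets $S_{ij}^h$ over all $i<j$ and $h\in\{\pm1\}$ are pairwise disjoint. Different pairs give disjoint sets because the $\Omega_{ij}$ are disjoint. The same pair with opposite $h$ gives disjoint sets because the sign of $\mu_i-\mu_j$ differs. Summing then yields $Q_\rho(E)\le\int_{\mathbb R^K}\mathbbm 1_E\,p_\nu\,\dd\mu=P_\nu(E)$.

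The only delicate step is the bookkeeping of the change of variables: injectivity of the map and disjointness of the images across $(i,j,h)$. This is where the constraint $\delta_\rho<2\ell$, built into Lemma~\ref{lem:uniform-density}, is used. Measurability of the integrands is routine, since $\mu(\xi,\delta,h)$ is continuous and Tonelli applies to nonnegative integrands.
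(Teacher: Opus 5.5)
Your proof is correct and follows essentially the same route as the paper. Both rest on three ingredients: a unit-Jacobian change of variables, the uniqueness of the $(i,j,h)$ representation off the null slice $\{\delta=0\}$ (where $\delta_\rho<2\ell$ puts the images in the disjoint regions $\Omega_{ij}$), and the pointwise bound of Lemma~\ref{lem:uniform-density}. The only difference is cosmetic: you bound the integrand and then change variables term by term into disjoint images, whereas the paper first identifies the Lebesgue density $q_\rho$ of $Q_\rho$ and then shows $q_\rho\le p_\nu$ almost everywhere.
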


\begin{proof}
For each $i<j$, let
$
\mathcal S_{ij}
:=
\left\{
\mu(\xi,\delta,h):
\xi\in\mathcal C_{ij}(\ell,M),\
\delta\in[0,\delta_\rho],\
h\in\{+1,-1\}
\right\},$
and let $\mathcal{S}:=\bigcup_{i<j}\mathcal S_{ij}$. For any Borel set $E\subseteq\mathbb R^K$, substituting the definition of
$\Lambda_\rho$ in~\eqref{eq:Lambda-rho} into \eqref{eq:Q-rho-definition} gives
\[
Q_\rho(E)
=
\sum_{i<j}\sum_{h\in\{+1,-1\}}
\int_{\mathcal C_{ij}(\ell,M)}
\int_0^{\delta_\rho}
\mathbbm 1_E\!\bigl(\mu(\xi,\delta,h)\bigr)
(1-\rho)f_i(x)f_j(x)
\prod_{k\neq i,j}f_k(\zeta_k)
\,\dd\delta\,\dd x\,\dd\zeta_{-ij}.
\]
For each fixed $i<j$ and $h$, the Jacobian determinant of the coordinate transformation in~\eqref{eqn:local_coord} has absolute value $1$.
Moreover, except on the null set $\{\delta=0\}$, the sign $h$ is uniquely
determined by $\mu_i-\mu_j$, and the top-two pair $\{i,j\}$ is uniquely
determined on $\mathcal{S}$. Hence $Q_\rho$ has Lebesgue density
\[
q_\rho\bigl(\mu(\xi,\delta,h)\bigr)
=
(1-\rho)f_i(x)f_j(x)
\prod_{k\neq i,j}f_k(\zeta_k)
\]
for Lebesgue-almost every
$\mu(\xi,\delta,h)\in\mathcal S_{ij}$. Per the definition in
\eqref{eq:Q-rho-definition}, $Q_\rho$ is supported on $\mathcal{S}$, hence we also have $q_\rho(\mu)=0$ for Lebesgue-almost every $\mu\notin\mathcal{S}$.

For $\mu=\mu(\xi,\delta,h)\in\mathcal S_{ij}$, the true prior density is
\[
p_\nu(\mu)
=
f_i\left(x+h\frac{\delta}{2}\right)
f_j\left(x-h\frac{\delta}{2}\right)
\prod_{k\neq i,j}f_k(\zeta_k).
\]
Lemma~\ref{lem:uniform-density} therefore gives
$q_\rho(\mu)\le p_\nu(\mu)$ for Lebesgue-almost every
$\mu\in\mathcal{S}$. The same inequality holds outside
$\mathcal{S}$ because $q_\rho=0$ there. Consequently, for every
Borel set $E\subseteq\mathbb R^K$,
\[
Q_\rho(E)
=
\int_E q_\rho(\mu)\,\dd\mu
\le
\int_E p_\nu(\mu)\,\dd\mu
=
P_\nu(E).
\]
Hence $Q_\rho\le P_\nu$.   
\end{proof}
Figure \ref{fig:subprior-domination} provides a visual intuition for this result, illustrating how the constructed flat subprior bounds the true Gaussian prior from below over the compact gap interval.

\begin{figure}[t]
    \centering
\begin{tikzpicture}[x=9cm,y=5cm]

\def\deltarho{0.45}
\def\yflat{0.30}
\pgfmathsetmacro{\yrho}{0.92*exp(-3.4*\deltarho*\deltarho)}

\draw[->, thick] (0,0) -- (1.18,0);
\draw[->, thick] (0,0) -- (0,1.02);

\fill[orange, opacity=0.9] (0,0) rectangle (\deltarho,\yflat);
\draw[orange, very thick] (0,\yflat) -- (\deltarho,\yflat);

\draw[blue, very thick, domain=0:1.08, samples=200]
    plot (\x,{0.92*exp(-3.4*\x*\x)});

\draw[black, dotted, very thick] (0,\yrho) -- (\deltarho,\yrho);

\draw[gray, dashed, thick] (\deltarho,0) -- (\deltarho,\yrho);

\node[below] at (0,0) {$0$};
\node[below] at (\deltarho,0) {$\delta_\rho$};

\node[left] at (-0.01,\yrho) {$p_\nu(\delta_\rho)$};
\node[left] at (-0.01,\yflat) {$(1-\rho)p_\nu(\delta_\rho)$};

\node[below, font=\normalsize] at (0.55,-0.12) {Local Pairwise Gap $(\delta)$};
\node[rotate=90, font=\normalsize] at (-0.33,0.50) {Density (Unnormalized)};

\node[font=\normalsize] at (0.23,0.15) {$q_\rho\le p_\nu$};

\draw[->, thick] (0.20,\yflat) -- (0.20,\yrho);
\node[right, font=\normalsize] at (0.21,{0.5*(\yflat+\yrho)}) {$\rho\downarrow 0$};

\draw[blue, very thick] (0.68,0.94) -- (0.76,0.94);
\node[right, font=\normalsize] at (0.78,0.94) {True Prior $p_\nu(\delta)$};

\draw[orange, very thick] (0.68,0.86) -- (0.76,0.86);
\node[right, font=\normalsize] at (0.78,0.86) {Flat Subprior $q_\rho(\delta)$};

\draw[black, dotted, very thick] (0.68,0.78) -- (0.76,0.78);
\node[right, font=\normalsize] at (0.78,0.78) {Limit $\lim_{\rho\to 0} q_\rho$};

\end{tikzpicture}
    \caption{The true Gaussian prior $P_\nu$ is bounded from below by the flat subprior $Q_\rho$ on the interval $[0, \delta_\rho]$. In the above, for brevity, we write the density  as $p_\nu(\delta):=p_\nu(\mu(\xi,\delta,h))$ and similarly for $q_\rho$.}
    \label{fig:subprior-domination}
\end{figure}

\begin{lemma}[Properties of $Q_\rho$]
\label{lem:q_rho_properties}
Let $\Xi$ (background oracle), $\Delta$ (gap), and $H$ (sign) denote  random quantities under   $Q_\rho$. The construction of $Q_\rho$ results in the following joint distribution:
\begin{equation}
Q_\rho(\dd\mu) =Q_\rho(\Xi\in \dd\xi,\Delta\in \dd\delta,H=h) = \Lambda_\rho(\dd\xi)\,\dd\delta
, \qquad h\in\{+1,-1\}.\label{eqn:Q_rho_dist}    
\end{equation}
Because $Q_\rho$ is not a probability measure in general, we refer to the $Q_\rho$ measure of a set as its {\em $Q_\rho$-mass} in the following. This measure has two important properties:
\begin{enumerate}
    \item \textbf{Fair Sign:} Conditional on any realization $(\Xi=\xi, \Delta=\delta)$, the two signs have equal $Q_\rho$-mass, i.e., $Q_\rho(H=+1 \mid \xi, \delta) = Q_\rho(H=-1 \mid \xi, \delta)$.
    \item \textbf{Flat Gap:} The marginal unnormalized density of the gap is uniform, i.e., $Q_\rho(\Delta \in \dd\delta) = c_\rho\,\dd\delta$ for $\delta \in [0, \delta_\rho]$.
\end{enumerate}
\end{lemma}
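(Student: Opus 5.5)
The plan is to read off the joint law directly from the definition of $Q_\rho$ in \eqref{eq:Q-rho-definition}, viewing it as a pushforward. Define the map $\Psi:(\xi,\delta,h)\mapsto \mu(\xi,\delta,h)$ from $\mathcal C_\rho\times[0,\delta_\rho]\times\{+1,-1\}$ into $\mathbb R^K$. Equation \eqref{eq:Q-rho-definition} says exactly that $Q_\rho$ is the pushforward under $\Psi$ of the product measure $\Lambda_\rho(\dd\xi)\otimes \mathrm{Leb}|_{[0,\delta_\rho]}(\dd\delta)\otimes(\text{counting measure on }h)$.

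To make $(\Xi,\Delta,H)$ well-defined functions of $\mu$, I will show that $\Psi$ is injective off a $Q_\rho$-null set, reusing the observations from the proof of Lemma~\ref{lem:subprior-domination}. On $\mathcal S$ the top-two pair $\{i,j\}$ is determined because, by Lemma~\ref{lem:uniform-density}, $\mu_i,\mu_j>\zeta_k$. Given the pair, we recover $x=(\mu_i+\mu_j)/2$, $\delta=|\mu_i-\mu_j|$, $\zeta_{-ij}=\mu_{-ij}$, and, when $\delta>0$, $h=\mathrm{sgn}(\mu_i-\mu_j)$. The exceptional set $\{\delta=0\}$ has zero Lebesgue measure in $\delta$, so it is $Q_\rho$-null. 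Defining $(\Xi,\Delta,H):=\Psi^{-1}(\mu)$ on the complement, the change-of-variables identity $Q_\rho(\Psi(B))=(\Lambda_\rho\otimes\dd\delta\otimes\#)(B)$ for product sets $B$ then gives \eqref{eqn:Q_rho_dist}, extended to all measurable $B$ by a $\pi$–$\lambda$ argument.

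The two properties follow immediately from the product form.

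\textbf{Fair Sign.} The density $\Lambda_\rho(\dd\xi)\,\dd\delta$ in \eqref{eqn:Q_rho_dist} does not depend on $h$. Hence for every measurable rectangle $A\times D$,
\[
Q_\rho(\Xi\in A,\Delta\in D,H=+1)=Q_\rho(\Xi\in A,\Delta\in D,H=-1),
\]
so the conditional masses given $(\xi,\delta)$ are equal, each being half of the total. This is in the sense of disintegration: both equal $1/2$ times $\Lambda_\rho\otimes\dd\delta$ when normalized.

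\textbf{Flat Gap.} Marginalizing \eqref{eqn:Q_rho_dist} over $\xi$ and $h$ gives, for $\delta\in[0,\delta_\rho]$,
\[
Q_\rho(\Delta\in\dd\delta)=\sum_h\Lambda_\rho(\mathcal C_\rho)\,\dd\delta=2\Lambda_\rho(\mathcal C_\rho)\,\dd\delta=c_\rho\,\dd\delta,
\]
using the definition $c_\rho=2\Lambda_\rho(\mathcal C_\rho)$. For $\delta$ outside $[0,\delta_\rho]$ the mass is $0$.

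The only step needing care is the a.e. injectivity, specifically that different pairs $i<j$ produce disjoint images. This follows because the pair is identified as the top two coordinates, which Lemma~\ref{lem:uniform-density} guarantees on the support; everything else is bookkeeping.
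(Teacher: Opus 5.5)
Your proposal is correct and follows the paper's argument. By definition, $Q_\rho$ is the image of the product measure $\Lambda_\rho\otimes\dd\delta\otimes(\text{counting measure on }h)$, and both the fair-sign and flat-gap properties are read off from that product form using $c_\rho=2\Lambda_\rho(\mathcal C_\rho)$. Your extra check that $(\Xi,\Delta,H)$ are well defined off the $Q_\rho$-null tie set $\{\delta=0\}$ (the pair is identified as the strict top two, and the sign is $\mathrm{sgn}(\mu_i-\mu_j)$) is the same bookkeeping the paper spells out in its non-Gaussian version of this lemma, and it usefully fills in what the terse Gaussian-case proof leaves implicit.
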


\begin{proof}
By the definition of $Q_\rho$, the measure on the product space $\mathcal{C}_\rho \times [0, \delta_\rho] \times \{+1, -1\}$ factors as $\Lambda_\rho(\dd\xi) \otimes \dd\delta \otimes \mathbbm{1}_{\{h\}}$. Because this is a   product measure that does not depend on $h$, the sign $H$ occurs with equal probability $1/2$ regardless of $\Xi$ or $\Delta$. Furthermore, integrating out $\Xi$ and $H$ leaves the measure of the gap $\Delta$ as $2\Lambda_\rho(\mathcal{C}_\rho)\, \dd\delta = c_\rho \,\dd\delta$, proving the flat gap property. 
 
\end{proof}

\begin{lemma}[Probability Domination Under $Q_\rho$]\label{lem:risk-domination}

For any policy $\pi$, the Bayes probability evaluated under the subprior $Q_\rho$   is a lower bound of  the probability under the original prior $P_\nu$. Specifically, if a strategy $\pi$ satisfies the abstention condition $\mathcal{A}_T(\pi) \le \alpha$ under $P_\nu$, it also satisfies:
$$\mathcal{A}_T^{Q_\rho}(\pi) := \int_{\mathbb{R}^K} \mathbb{P}_\mu^\pi(\hat{a}_T=? )\,Q_\rho(\dd\mu) \le \mathcal{A}_T(\pi) \le \alpha.$$
In addition, the true undetected error is lower bounded by the error evaluated under the subprior:
$$\mathcal{E}_T(\pi) \ge \mathcal{E}_T^{Q_\rho}(\pi) := \int_{\mathbb{R}^K} \mathbb{P}_\mu^\pi(\hat{a}_T\notin\{a^\star,?\})\,Q_\rho(\dd\mu).$$
\end{lemma}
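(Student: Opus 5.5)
The argument is a direct consequence of the domination $Q_\rho\le P_\nu$ from Lemma~\ref{lem:subprior-domination}, applied to two particular non-negative integrands. Fix a policy $\pi$ and define
\[
\psi_A(\mu):=\mathbb P_\mu^\pi(\hat a_T=?),\qquad \psi_E(\mu):=\mathbb P_\mu^\pi\bigl(\hat a_T\notin\{a^\star(\mu),?\}\bigr).
\]
Both functions take values in $[0,1]$. Their measurability in $\mu$ follows from the construction of $\mathbb P_\mu^\pi$ as a kernel. The law of the history depends measurably on $\mu$ through Gaussian reward densities and the measurable sampling and decision rules. The map $\mu\mapsto a^\star(\mu)$ is Borel measurable; it is well defined off the null tie set, and we fix any deterministic tie-breaking convention there. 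Both the mixture $\mathbb P_\nu^\pi$ and the definitions of $\mathcal A_T^{Q_\rho}$ and $\mathcal E_T^{Q_\rho}$ already presuppose this measurability, so only a remark is needed.

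The key step is to pass from domination on sets to domination on integrals. From $Q_\rho(E)\le P_\nu(E)$ for all Borel $E$, we get $\int \psi\,\dd Q_\rho\le\int\psi\,\dd P_\nu$ for every non-negative measurable $\psi$. For indicator functions this is the hypothesis itself. It extends to non-negative simple functions by linearity, since the coefficients are non-negative. It then extends to general non-negative measurable $\psi$ by taking an increasing sequence of simple functions and applying monotone convergence under each measure separately. Alternatively, the proof of Lemma~\ref{lem:subprior-domination} already gives Lebesgue densities with $q_\rho\le p_\nu$ almost everywhere. Then
\[
\int\psi\,q_\rho\,\dd\mu\le\int\psi\,p_\nu\,\dd\mu
\]
holds pointwise under the integral, which gives the same inequality in one line.

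Applying this to $\psi_A$ gives
\[
\mathcal A_T^{Q_\rho}(\pi)\le\int\psi_A\,\dd P_\nu=\mathcal A_T(\pi)\le\alpha,
\]
where the last inequality uses the assumption $\pi\in\Pi_T(\alpha)$. Applying it to $\psi_E$ gives $\mathcal E_T^{Q_\rho}(\pi)\le\mathcal E_T(\pi)$. This uses the identity $\mathcal E_T(\pi)=\int\psi_E\,\dd P_\nu$, which follows from the definition of the mixture law $\mathbb P_\nu^\pi$ via Fubini/Tonelli.

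There is no real obstacle here. The only point needing care is the justification of measurability and of the mixture representation. Everything else is the monotonicity of integration with respect to dominated measures.
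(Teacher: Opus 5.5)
Your proposal is correct and follows the same route as the paper, whose proof is the single line that the lemma follows from the domination $Q_\rho\le P_\nu$ in Lemma~\ref{lem:subprior-domination}. Your write-up supplies the details that line leaves implicit: the standard extension from sets to non-negative measurable integrands, or equivalently the pointwise density bound $q_\rho\le p_\nu$, applied to the two $[0,1]$-valued integrands.
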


\begin{proof}
This follows directly from Lemma~\ref{lem:subprior-domination}.  
\end{proof}
This lemma explains how to  use  $Q_\rho$. Any strategy satisfying $\mathcal A_T(\pi)\le\alpha$ under the original prior also has $Q_\rho$-abstention mass at most $\alpha$. Consequently, to lower bound the original Bayes undetected error probability, it suffices to lower bound the error contributed by the flat subprior $Q_\rho$.

\subsection{Oracle Reduction: Lower Bounding via a 2-Arm Hypothesis Testing Problem} 
\label{subsec:local-oracle-reduction}

Next, we evaluate the probabilities under the local subprior $Q_\rho$. To isolate the fundamental difficulty of the problem, we reduce the $K$-arm task to a $2$-arm hypothesis testing problem by giving the learner additional side (oracle) information. 

\paragraph{Revealing Oracle Information.}
Before sampling begins, we reveal the \emph{background oracle} $\Xi = (I, J, X, \zeta_{-IJ})$, which depends on the random means $\mu$, to the learner. For a specific realization $\Xi = \xi = (i, j, x, \zeta_{-ij})$, this means the learner knows the identities of the top two arms, their midpoint $x$, and the   means $\zeta_{-ij}$ of all remaining background arms.

Conditional on any realization of the background oracle $\Xi = \xi$, the only remaining random variables relevant to the decision under $Q_\rho$ are the gap magnitude and the sign, i.e.,
$$\Delta \in [0, \delta_\rho], \quad\mbox{and}\quad H \in \{+1, -1\}.$$
For ease of notation, we define the unknown signed gap $\Theta := H\Delta$. Let $\mathbb{P}_\theta^\xi$ denote the probability law of the experiment conditional on the background oracle $\Xi = \xi$ and the signed gap $\Theta = \theta$.

Since any policy in the original experiment can be implemented in the oracle-revealed setting by simply ignoring the side information $\Xi$, any lower bound derived for this oracle-revealed experiment provides a lower bound for the original problem. Furthermore, since the background means are known, querying any arm other than $i$ or $j$ provides no information about the unknown gap $\Theta$. We show that we can restrict our analysis to strategies that only sample the top two arms and output the decision of the sign; we call these {\em local pair strategies}.

\begin{lemma}[Sufficiency of Local Pair Strategies]
\label{lem:sufficiency-local-pair-strategies}
Fix a realization of the background oracle $\Xi=\xi=(i,j,x,\zeta_{-ij})$. For any adaptive strategy $\pi$, there exists a modified strategy $\pi'$ that samples only arms $i$ and $j$, restricts its terminal recommendation to $\hat{a}'_T \in \{+,-,?\}$, and satisfies for every signed-gap realization $\theta \in [-\delta_\rho, \delta_\rho] \setminus \{0\}$ with true sign $h = \text{sgn}(\theta) \in \{+,-\}$, i.e., 
\begin{equation}
    \mathbb{P}_\theta^\xi(\hat{a}'_T = ?) = \mathbb{P}_\theta^\xi(\hat{a}_T = ?) , \quad\mbox{and}\quad
\mathbb{P}_\theta^\xi(\hat{a}'_T = -h) \le \mathbb{P}_\theta^\xi(\hat{a}_T \notin \{a^\star(h), ?\}), \label{eqn:simulate}
\end{equation}
where $a^\star(+) = i$ and $a^\star(-) = j$. This says that the abstention probability of $\pi'$ is unchanged from that of $\pi$ and the undetected error probability of $\pi'$ is less than or equal to that of $\pi$.
\end{lemma}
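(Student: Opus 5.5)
The plan is a simulation argument: given $\xi$, the learner knows the means of all background arms and can generate their rewards itself. I will define $\pi'$ to run $\pi$ internally. Whenever $\pi$ asks to pull an arm $k\neq i,j$, $\pi'$ does not pull it. Instead it draws an independent sample $Y\sim\mathcal N(\zeta_k,1)$ using external randomization and feeds that sample to $\pi$. Whenever $\pi$ asks for arm $i$ or $j$, $\pi'$ pulls that arm for real and passes on the observation. The fake pulls consume no real budget, so $\pi'$ uses at most $T$ real pulls, all on arms $i$ and $j$. If fewer than $T$ real pulls are used, $\pi'$ spends the remaining rounds on arm $i$ and discards those observations; this keeps the policy class $\Pi_T$ unchanged.

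The first key step is to check that, for every $\theta$, the internal transcript of $\pi$ has the same law under $\mathbb P^\xi_\theta$ as in the real experiment. I will do this by induction on $t$ over the history. At each step, the conditional law of the next observation given the past and the chosen arm is the same in both experiments. For arm $i$ or $j$ it is $\mathcal N(\mu_i,1)$ or $\mathcal N(\mu_j,1)$, with $\mu_i=x+\theta/2$ and $\mu_j=x-\theta/2$ from \eqref{eqn:local_coord}. For a background arm $k$ it is $\mathcal N(\zeta_k,1)$, drawn independently of everything else, and this uses that $\zeta_k$ is known under the oracle. Hence the terminal output $\hat a_T$ of the simulated $\pi$ has the same distribution as in the original experiment.

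Next, the output map of $\pi'$ is defined as follows: if $\hat a_T=?$, set $\hat a'_T=?$. If $\hat a_T=i$, set $\hat a'_T=+$. Otherwise (that is, $\hat a_T=j$ or a background arm $k$), set $\hat a'_T=-$. This choice is deliberate. By Lemma~\ref{lem:uniform-density}, arms $i$ and $j$ are the top two for $|\theta|\le\delta_\rho$, so a background recommendation is always an error. Mapping it to a fixed sign makes it an error only when the true sign is $+$.

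Finally, the two claims in \eqref{eqn:simulate} follow. The abstention events coincide, so the abstention probabilities are equal. If $h=+$, then $\{\hat a'_T=-\}=\{\hat a_T\in[K]\setminus\{i\}\}=\{\hat a_T\notin\{a^\star(+),?\}\}$. If $h=-$, then $\{\hat a'_T=+\}=\{\hat a_T=i\}\subseteq\{\hat a_T\notin\{j,?\}\}$. So the inequality holds in both cases, and with equality when $h=+$.

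The main obstacle is stating the distributional equivalence rigorously for adaptive and randomized policies. I will handle it by writing the joint density of the history $(A_1,X_1,\dots,A_T,X_T)$ as a product of the policy kernels and the observation kernels, and showing that these products are identical term by term. The only subtlety is that the extra randomization must be independent of the real rewards. This is allowed, since $\mathcal F_t$ is augmented with external randomness.
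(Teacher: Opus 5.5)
Your proposal is correct and follows the paper's simulation argument: run $\pi$ internally, supply background-arm rewards as independent $\mathcal N(\zeta_k,1)$ draws, prove equality of the internal transcript's law by induction, then map the terminal output to a sign. The differences are cosmetic. You defer the discarded real pulls to the end, whereas the paper pulls arm $i$ in place of each simulated pull. You also send background recommendations to $-$ rather than $+$, so the inequality becomes an equality when $h=+$ instead of when $h=-$.
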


\begin{proof}
Fix $\xi = (i, j, x, \zeta_{-ij})$. We construct a modified strategy $\pi'$ from $\pi$ that possesses the desired properties in \eqref{eqn:simulate}. 

When $\pi$ samples from an arm $k \in \{i, j\}$, $\pi'$ queries that same arm and feeds the observed reward back to $\pi$. Whenever $\pi$ chooses to sample from an arm $k \notin \{i,j\}$, $\pi'$ instead queries arm $i$, discards the true reward, and feeds $\pi$ an independent simulated reward drawn from $\mathcal{N}(\zeta_k, 1)$.

Because $\Xi = \xi$ is known, $\zeta_k$ is the true mean of arm $k$. Furthermore, for $k\ne i,j$, the reward distributions of arm $k$ are independent of the unknown signed gap $\Theta$. Therefore, the simulated reward from $\pi'$ matches that from the true distribution of arm $k$. By induction, the internal history observed by $\pi$ under $\pi'$ is identically distributed to the true history in the original experiment. Consequently, the terminal distribution of $\pi$ is preserved for every $\theta$.

When $\pi$ stops and outputs a recommendation $\hat{a}_T$, $\pi'$ maps this to a sign decision $\hat{a}'_T \in \{+,-,?\}$ as follows: $i \mapsto +$, $j \mapsto -$, $? \mapsto ?$, and any $k \notin \{i, j, ?\}$ is mapped arbitrarily to $+$. This map preserves the abstention probability. For the undetected error, recall that on the support of $Q_\rho$, arms $i$ and $j$ are strictly the top two arms. Thus, any original output $k \notin \{i, j, ?\}$ is  an error. If $h = -$, mapping $k$ to $+$ yields $\hat{a}'_T = +$, which is an incorrect sign, thereby preserving the error probability. If $h = +$, mapping $k$ to $+$ converts an original error into a correct guess,  reducing the error probability. Thus, across all cases, the probability of an incorrect sign under $\pi'$ is less than or equal to the probability of an undetected error under $\pi$, showing~\eqref{eqn:simulate}.  
\end{proof}

\newcommand{\sgn}{\mathrm{sgn}}

\subsection{Transformed Observations and Sufficient Statistics:}
\label{subsec:local-transformed-observations}

To establish clean analytical expressions for the abstention and undetected error probabilities, we now demonstrate that any decision rule acting on the full history of this restricted two-arm game can be reduced to a randomized decision rule acting on a   scalar sufficient statistic, without altering any terminal-output probability.

Fix the revealed background oracle realization $\Xi=\xi=(i,j,x,\zeta_{-ij})$, and consider strategies that sample only from arms $i$ and $j$. Define the sign indicator for the arms: $\sgn(i):=+1$ and $\sgn(j):=-1$. At time $t$, if the learner samples $A_t\in\{i,j\}$ and observes the reward $X_t$, we define the {\em transformed observation}: $Z_t:=\sgn(A_t)(X_t-x).$


For any realization of the signed gap $\Theta=H\Delta=\theta$, the means of the top two arms are:
$$\mu_i=x+\frac{\theta}{2}, \quad\mbox{and}\quad \mu_j=x-\frac{\theta}{2}.$$

Let $\mathbb{P}_\theta^\xi$ denote the probability law of the experiment conditional on $\Xi=\xi$ and $\Theta=\theta$. Furthermore, let:
$\mathcal{H}_t^\xi:=(\xi,A_1,Z_1,\dots,A_t,Z_t)$
denote the \emph{transformed oracle history} up to time $t$.

\begin{lemma}[Transformed Observations are i.i.d.]
\label{lem:local-transformed-iid}
Fix an oracle realization $\Xi=\xi$ and a signed-gap realization $\theta\in[-\delta_{\rho},\delta_{\rho}]$. Under $\mathbb{P}_\theta^\xi$, conditionally on $\mathcal{H}_{t-1}^\xi$ and the chosen arm $A_t\in\{i,j\}$, the transformed observation satisfies:
$Z_t \sim \mathcal{N}\!\left(\frac{\theta}{2},1\right).$
i.e., under the full joint Bayesian law, 
\begin{equation}
    Z_t \mid \mathcal{H}_{t-1}^{\xi},A_t=i,\Theta=\theta,\Xi=\xi \sim \mathcal{N}\!\left(\frac{\theta}{2},1\right). \label{eqn:Z_t}
\end{equation}
Because this conditional law does not depend on the chosen arm $A_t$, the sequence $Z_1,\dots,Z_T$ is identically and independently distributed as $\mathcal{N}(\theta/2,1)$ under $\mathbb{P}_\theta^\xi$.
\end{lemma}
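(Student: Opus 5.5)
The argument is a direct computation of the conditional law of each reward, followed by a standard induction to turn the arm-independence of that law into independence of the whole sequence.

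\textbf{Step 1: conditional law of a single transformed observation.} Fix $\xi=(i,j,x,\zeta_{-ij})$ and $\theta$. Under $\mathbb{P}_\theta^\xi$, the means are deterministic: $\mu_i=x+\theta/2$ and $\mu_j=x-\theta/2$. Conditional on $\mu$, rewards are independent across arms and time, with $Y_{k,s}\sim\mathcal N(\mu_k,1)$. The sampling decision $A_t$ depends only on the past history and on the policy's external randomization, both of which are independent of the not-yet-observed reward $Y_{A_t,N_{A_t}(t)}$. Hence, conditional on $\mathcal H_{t-1}^\xi$ and $A_t=k$, the observation satisfies $X_t\sim\mathcal N(\mu_k,1)$. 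This holds because the index $N_k(t)$ is determined by the past and picks out a fresh reward that has never been revealed.

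\textbf{Step 2: the law does not depend on the arm.} Using Step 1, I compute $Z_t=\sgn(A_t)(X_t-x)$ in the two cases.
\begin{itemize}
\item If $A_t=i$, then $Z_t=X_t-x\sim\mathcal N(\theta/2,1)$.
\item If $A_t=j$, then $Z_t=-(X_t-x)$, and $X_t-x\sim\mathcal N(-\theta/2,1)$. By symmetry of the centered Gaussian, $Z_t\sim\mathcal N(\theta/2,1)$.
\end{itemize}
This gives \eqref{eqn:Z_t} for either choice of arm.

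\textbf{Step 3: i.i.d.\ structure of the sequence.} Since the conditional law of $Z_t$ given $(\mathcal H_{t-1}^\xi,A_t)$ is the fixed law $\mathcal N(\theta/2,1)$, $Z_t$ is independent of $(\mathcal H_{t-1}^\xi,A_t)$ and hence of $Z_1,\dots,Z_{t-1}$. I then argue by induction on $t$ using the tower property. For bounded measurable $g_1,\dots,g_T$,
\[
\mathbb E_\theta^\xi\Bigl[\prod_{s\le T} g_s(Z_s)\Bigr]=\prod_{s\le T}\mathbb E[g_s(W)],\qquad W\sim\mathcal N(\theta/2,1).
\]
To show this, condition on $(\mathcal H_{T-1}^\xi,A_T)$: the factor $\mathbb E[g_T(Z_T)\mid \mathcal H_{T-1}^\xi,A_T]=\mathbb E[g_T(W)]$ is a constant, and it pulls out of the expectation. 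Repeating this peeling for $t=T-1,\dots,1$ gives the product formula, so $Z_1,\dots,Z_T$ are i.i.d.\ $\mathcal N(\theta/2,1)$.

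\textbf{Main obstacle.} The only delicate point is justifying Step 1 under adaptive sampling: the reward revealed at round $t$ must be a fresh sample independent of the history. I would handle this with the standard stack-of-rewards construction, in which each $Y_{k,s}$ is drawn independently in advance and $A_t$ is measurable with respect to the past observations and the independent randomization. Under this construction, conditioning on $A_t=k$ does not alter the law of the unrevealed $Y_{k,N_k(t)}$.
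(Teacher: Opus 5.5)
Your proposal is correct and follows the paper's proof essentially step for step: the case split on $A_t\in\{i,j\}$ shows $Z_t\mid(\mathcal H_{t-1}^\xi,A_t)\sim\mathcal N(\theta/2,1)$ whichever arm is pulled, and the tower property is then iterated to factor the joint law of $Z_1,\dots,Z_T$. Your explicit stack-of-rewards justification that the revealed reward is a fresh sample under adaptive sampling is a detail the paper leaves implicit; it does not change the argument.
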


\begin{proof}
If $A_t=i$, the true observation follows
$X_t \mid \mathcal{H}_{t-1}^{\xi},A_t=i,\Theta=\theta,\Xi=\xi \sim \mathcal{N}\!\left(x+\frac{\theta}{2},1\right).$
Applying the transformation $Z_t=X_t-x$ yields~\eqref{eqn:Z_t}.
While if $A_t=j$, the true observation follows $X_t \mid \mathcal{H}_{t-1}^{\xi},A_t=j,\Theta=\theta,\Xi=\xi \sim \mathcal{N}\!\left(x-\frac{\theta}{2},1\right)$. Similarly, applying the transformation yields~\eqref{eqn:Z_t}.
Thus, the conditional law of $Z_t$ is  $\mathcal{N}(\theta/2,1)$ regardless of which of the two arms is chosen.

Next we prove independence. Let $\gamma_\theta$ denote the measure corresponding to $\mathcal{N}(\theta/2,1)$. For any Borel set $B$, we have:
$$\mathbb{P}_\theta^{\xi}(Z_t\in B\mid \mathcal{H}_{t-1}^{\xi},A_t)=\gamma_\theta(B).$$
Iterating this identity using the tower property shows that  for any sequence of Borel sets $B_1,\dots,B_T$:
$$\mathbb{P}_\theta^{\xi}(Z_1\in B_1,\dots,Z_T\in B_T)=\prod_{t=1}^T\gamma_\theta(B_t).$$
Hence, given $\xi$ and $\theta$, $Z_1,\dots,Z_T$ are i.i.d.\ over the transformed history.  
\end{proof}

We consolidate the sequence of observations into a sum and its time-scaled counterpart. Define:
$$W:=\sum_{t=1}^T Z_t, \quad\mbox{and}\quad V:=\frac{W}{\sqrt{T}}.$$

\begin{lemma}[Likelihood Ratios Depend Only on $W$]
\label{lem:local-lr-W}
For every $\xi$ and every $\theta\in[-\delta_{\rho},\delta_{\rho}]$, the measure $\mathbb{P}_\theta^{\xi}$ is absolutely continuous with respect to $\mathbb{P}_0^{\xi}$ (i.e., $\mathbb{P}_\theta^{\xi}\ll \mathbb{P}_0^{\xi}$), and the likelihood ratio of the transformed oracle history $\mathcal{H}_T^{\xi}$ is:
$$\frac{\dd\mathbb{P}_\theta^{\xi}}{\dd\mathbb{P}_0^{\xi}}(\mathcal{H}_T^{\xi})
=\exp\!\left(\frac{\theta}{2}W-\frac{T\theta^2}{8}\right)
=\exp\!\left(\frac{\theta\sqrt{T}}{2}V-\frac{T\theta^2}{8}\right).$$
Consequently, $W$ (and equivalently $V$) is a sufficient statistic for $\theta$.
\end{lemma}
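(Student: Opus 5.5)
We will compute the likelihood of the transformed oracle history $\mathcal H_T^\xi$ under $\mathbb P_\theta^\xi$ by the chain rule, factoring it into policy terms and observation terms. Fix $\xi$ and a local pair strategy, which may be randomized through external randomization $U$ independent of everything else. By the chain rule, the joint density of $(U,A_1,Z_1,\dots,A_T,Z_T)$ under $\mathbb P_\theta^\xi$, with respect to the product of the law of $U$, counting measure on arms, and Lebesgue measure on each $Z_t$, is
\[
\prod_{t=1}^T \pi_t\bigl(A_t\mid \mathcal H_{t-1}^\xi,U\bigr)\,\varphi\!\left(Z_t-\tfrac{\theta}{2}\right).
\]
Here $\pi_t$ is the policy's sampling kernel and $\varphi$ is the standard normal density. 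The factor for $Z_t$ comes from Lemma~\ref{lem:local-transformed-iid}: conditional on the past and on $A_t$, $Z_t\sim\mathcal N(\theta/2,1)$ regardless of which of the two arms is pulled. The sampling kernels are $\theta$-free, since they are functions of the observed history and $U$ only.

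Next we take the ratio of the $\theta$-density to the $0$-density. The policy factors cancel identically. Since $\varphi>0$ everywhere, the $\theta=0$ density is strictly positive wherever the $\theta$-density is, which gives $\mathbb P_\theta^\xi\ll\mathbb P_0^\xi$. The ratio is
\[
\prod_{t=1}^T\exp\!\left(\tfrac{\theta}{2}Z_t-\tfrac{\theta^2}{8}\right)=\exp\!\left(\tfrac{\theta}{2}W-\tfrac{T\theta^2}{8}\right).
\]
Substituting $W=\sqrt T\,V$ gives the second form.

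Sufficiency then follows from the Fisher--Neyman factorization. The density under $\mathbb P_\theta^\xi$ equals the $\theta$-free density under $\mathbb P_0^\xi$ times a function of $(\theta,W)$. Equivalently, the conditional law of $\mathcal H_T^\xi$ given $W$ does not depend on $\theta$. Since $V$ is a bijective function of $W$, $V$ is sufficient as well.

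The only delicate point is the correct handling of adaptivity and of the policy's external randomization. The arms $A_t$ depend on past $Z$'s, so the observations alone are not a product experiment. We resolve this by absorbing $U$ into the history and using the fact that every arm-selection factor is $\theta$-independent. Then the cancellation is exact and no optional-stopping argument is needed, because $T$ is fixed.
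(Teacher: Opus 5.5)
Your proposal is correct and follows essentially the same route as the paper: you factor the density of the transformed oracle history into $\theta$-free sampling kernels times the Gaussian factors $\phi(z_t-\theta/2)$, cancel the kernels in the ratio, and collect the exponents into $\frac{\theta}{2}W-\frac{T\theta^2}{8}$. Your explicit handling of external randomization and of absolute continuity via positivity of $\phi$ only spells out what the paper leaves implicit; the Gaussian factor pulls out of the integral over $U$, so the ratio for $\mathcal H_T^\xi$ alone is unchanged.
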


\begin{proof}
Let $h_T=(\xi,a_1,z_1,\dots,a_T,z_T)$ be the realized transformed oracle history, and let $\phi$ be the standard normal density as usual. Under $\mathbb{P}_\theta^{\xi}$, the density of $h_T$ factors as:
$$p_\theta^{\xi}(h_T)=\left[\prod_{t=1}^T\pi_t(a_t\mid \xi,a_1,z_1,\dots,a_{t-1},z_{t-1})\right]\prod_{t=1}^T\phi\!\left(z_t-\frac{\theta}{2}\right),$$
where $\pi_t$ is the strategy's sampling kernel at time $t$. Because the action probabilities are identical functions of the realized history under both $\mathbb{P}_\theta^{\xi}$ and $\mathbb{P}_0^{\xi}$, they cancel   in the likelihood ratio, i.e., 
$$\frac{p_\theta^{\xi}(h_T)}{p_0^{\xi}(h_T)}=\prod_{t=1}^T\frac{\phi(z_t-\theta/2)}{\phi(z_t)}=\prod_{t=1}^T\exp\!\left(\frac{\theta}{2}z_t-\frac{\theta^2}{8}\right).$$
Aggregating the sum inside the exponential yields:
$$\frac{p_\theta^{\xi}(h_T)}{p_0^{\xi}(h_T)}=\exp\!\left(\frac{\theta}{2}\sum_{t=1}^T z_t-\frac{T\theta^2}{8}\right)=\exp\!\left(\frac{\theta}{2}W-\frac{T\theta^2}{8}\right).  $$

\end{proof}

\begin{lemma}[Sufficiency of $V$ for Terminal Decisions]
\label{lem:local-reduce-to-V}
Fix the revealed background oracle realization $\Xi=\xi$. Let $\tilde{\psi}^\xi = (\tilde{\psi}_+^\xi, \tilde{\psi}_-^\xi, \tilde{\psi}_?^\xi)$ be any randomized terminal rule based on the transformed oracle history $\mathcal{H}_T^{\xi}$, and let $\hat{a} \in \{+,-,?\}$ be its random output. There exists a modified terminal rule $\psi^\xi$ depending on the history only through $V$, with random output $\hat{a}^V \in \{+,-,?\}$, such that for every $\theta\in[-\delta_{\rho},\delta_{\rho}]$ and every decision $d\in\{+,-,?\}$:
$$ \mathbb{P}_\theta^{\xi}(\hat{a}^V=d) = \mathbb{P}_\theta^{\xi}(\hat{a}=d). $$
\end{lemma}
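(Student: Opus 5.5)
The plan is to use the classical Rao--Blackwell-type construction: replace the history-based rule by its conditional expectation given the sufficient statistic, computed under the reference law $\mathbb P_0^\xi$. Concretely, for $d\in\{+,-,?\}$ define
\[
\psi_d^\xi(v):=\mathbb E_0^{\xi}\bigl[\tilde\psi_d^\xi(\mathcal H_T^\xi)\,\big|\,V=v\bigr],
\]
choosing a regular version of the conditional expectation. This exists because $\mathcal H_T^\xi$ takes values in a standard Borel space, being a finite tuple of reals and arm labels. Since $\tilde\psi^\xi_d\ge0$ and $\sum_d\tilde\psi^\xi_d=1$ pointwise, we get $\psi^\xi_d\ge 0$ and $\sum_d\psi^\xi_d=1$ for $\mathbb P_0^\xi$-a.e.\ $v$. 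After modifying $\psi^\xi$ on a null set (say, setting it to $(1,0,0)$ there), it becomes a bona fide randomized rule depending on the history only through $V$. Externally randomizing with these probabilities produces the output $\hat a^V$.

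The key step is to show that the conditional expectation computed under $\theta=0$ is also a version of the conditional expectation under every $\theta\in[-\delta_\rho,\delta_\rho]$. By Lemma~\ref{lem:local-lr-W}, $\mathbb P_\theta^\xi\ll\mathbb P_0^\xi$ with density $L_\theta(V):=\exp(\theta\sqrt T V/2-T\theta^2/8)$, which is a measurable function of $V$ alone. For any bounded measurable $g$, I would write
\[
\mathbb E_\theta^\xi[\tilde\psi_d^\xi\, g(V)]
=\mathbb E_0^\xi[\tilde\psi_d^\xi\, g(V)\,L_\theta(V)]
=\mathbb E_0^\xi[\psi_d^\xi(V)\, g(V)\,L_\theta(V)]
=\mathbb E_\theta^\xi[\psi_d^\xi(V)\, g(V)].
\]
The middle equality is the defining property of conditional expectation under $\mathbb P_0^\xi$, applied to the $\sigma(V)$-measurable factor $g(V)L_\theta(V)$. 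This factor is nonnegative and integrable, so monotone convergence or truncation of $L_\theta$ handles the unboundedness.

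Taking $g\equiv1$ then gives
\[
\mathbb P_\theta^\xi(\hat a=d)=\mathbb E_\theta^\xi[\tilde\psi_d^\xi]=\mathbb E_\theta^\xi[\psi_d^\xi(V)]=\mathbb P_\theta^\xi(\hat a^V=d)
\]
for each $d$, which is the claim. The last equality holds because, conditional on the history, $\hat a^V=d$ with probability $\psi_d^\xi(V)$.

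The main technical point, though not a deep one, is measure-theoretic care. First, the rule must be well defined simultaneously for all $\theta$, which the single $\theta=0$ version achieves because of absolute continuity. Second, a $\mathbb P_0^\xi$-null modification is also $\mathbb P_\theta^\xi$-null. Third, the external randomization used by $\tilde\psi^\xi$ (and possibly by the sampling rule) must be included in the history on which we condition. I would absorb any such randomization into $\mathcal H_T^\xi$ as an extra independent uniform variable whose law does not depend on $\theta$. Lemma~\ref{lem:local-lr-W} then still applies, since that variable's density cancels in the likelihood ratio exactly like the sampling kernels do.
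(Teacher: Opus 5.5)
Your proposal is correct and follows essentially the same route as the paper. The paper also conditions on $V$ under the tie law $\mathbb{P}_0^\xi$, via a regular conditional distribution $\mathbb{Q}^\xi(v,\dd h_T)$ of the history given $V$; it uses that $\dd\mathbb{P}_\theta^\xi/\dd\mathbb{P}_0^\xi$ depends on $V$ alone to show this conditional law remains valid under every $\mathbb{P}_\theta^\xi$, and sets $\psi_d^\xi(v):=\int\tilde\psi_d^\xi(h_T)\,\mathbb{Q}^\xi(v,\dd h_T)$. Working directly with the three conditional expectations, as you do, is a slightly leaner form of the same argument, and your handling of external randomization is a harmless extra precaution.
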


\begin{proof}
By Lemma~\ref{lem:local-lr-W}, the likelihood ratio $\dd\mathbb{P}_\theta^{\xi}/\dd\mathbb{P}_0^{\xi}$ is   $\sigma(W)$-measurable. Since $\sigma(V)=\sigma(W)$, it is also $\sigma(V)$-measurable.

Let $\tilde{\psi}_d^\xi(h_T)$ denote the probability that the original terminal rule outputs decision $d\in\{+,-,?\}$ after observing transformed history $h_T$. Let $\mathbb{Q}^{\xi}(v,\dd h_T)$ be a regular conditional distribution of $\mathcal{H}_T^{\xi}$ given $V=v$ under $\mathbb{P}_0^{\xi}$. i.e., $\mathbb{Q}^{\xi}(V,B)$ is a version of $\mathbb{P}_0^{\xi}\left( \mathcal{H}_T^{\xi} \in B \mid V \right)$.

By Lemma~\ref{lem:local-lr-W}, $\mathbb{P}^\xi_{\theta}(V \in dv)
=\exp\!\left(\frac{\theta\sqrt{T}}{2}v-\frac{T\theta^2}{8}\right) \mathbb{P}^\xi_{0}(V \in dv)$. Therefore, for every measurable set $B$ and $C \in \mathcal B(\mathbb R)$, $$
\begin{aligned}
    \mathbb{P}_\theta^{\xi}\left(\mathcal{H}_T^{\xi} \in B , V \in C \right) &= \mathbb E^\xi_0 \left[ \exp\!\left(\frac{\theta\sqrt{T}}{2}V-\frac{T\theta^2}{8}\right) \mathbbm{1}_{\{\mathcal{H}_T^{\xi} \in B\}} \mathbbm{1}_{\{V \in C\}} \right] \\
    &= \int_C \exp\!\left(\frac{\theta\sqrt{T}}{2}v-\frac{T\theta^2}{8}\right) Q^\xi(v,B) \mathbb P^\xi_{0}(V \in \dd v) \\
    &= \int_C Q^\xi(v,B) \mathbb P^\xi_{\theta}(V \in \dd v).
\end{aligned}
$$
This implies   that the kernel $\mathbb{Q}^{\xi}(v,\dd h_T)$ is a conditional law under   $\mathbb{P}_\theta^{\xi}$. Hence, for every bounded measurable $f$, 
\begin{equation}\label{RCD_Q_xi}
  \mathbb E^\xi_\theta \big[ f(\mathcal H^\xi_T) \mid V \big]  = \int f(h_T) Q^\xi(V,\dd h_T) \quad \mathbb P^\xi_\theta\text{-a.s.,}
\end{equation}

Define the terminal rule $\psi^\xi = (\psi_+^\xi, \psi_-^\xi, \psi_?^\xi)$ via the kernel  $\mathbb{Q}^{\xi}(v,\dd h_T)$ as follows:
$$ \psi_d^\xi(v) := \int \tilde{\psi}_d^\xi(h_T)\,\mathbb{Q}^{\xi}(v,\dd h_T), \qquad d\in\{+,-,?\}. $$
By definition, $\psi_{(\cdot)}^\xi(v)$ is a probability distribution on $\{+,-,?\}$. For every signed-gap realization $\theta$ and every decision $d \in\{+,-,?\}$, taking $f=\tilde{\psi}_d^\xi$ in~\eqref{RCD_Q_xi}, the tower property guarantees:
$$ \mathbb{P}_\theta^{\xi}(\hat{a}^V=d) = \mathbb{E}_\theta^{\xi}[\psi_d^\xi(V)] = \mathbb{E}_\theta^{\xi}\big[\mathbb{E}_\theta^{\xi}[\tilde{\psi}_d^\xi(\mathcal{H}_T^{\xi})\mid V]\big] = \mathbb{E}_\theta^{\xi}[\tilde{\psi}_d^\xi(\mathcal{H}_T^{\xi})] = \mathbb{P}_\theta^{\xi}(\hat{a}=d). $$
Thus, the  $V$-based rule   preserves all  probabilities under every   realization of the signed-gap~$\theta$.  
\end{proof}

\subsection{The Resulting Bayesian Flat Sign Experiment}
\label{subsec:flat-sign-experiment}

We are now ready to  express the abstention and undetected error probability by integrating over the local subprior $Q_\rho$. 

Recall from Lemma~\ref{lem:q_rho_properties} that on the local coordinate space, the subprior measure factors as in~\eqref{eqn:Q_rho_dist}.
Under the  measure $Q_\rho$, conditional on $(\Xi,\Delta)$, the two signs have equal mass. The unnormalized marginal measure of the gap is $Q_\rho(\Delta\in \dd\delta) = c_\rho\,\dd\delta$, where $c_\rho = 2\Lambda_\rho(\mathcal{C}_\rho)$.

Define the  rescaled gaps:
$G := \frac{\Delta\sqrt{T}}{2}$ and $L_T := \frac{\delta_{\rho}\sqrt{T}}{2}$.
Under $Q_\rho$, the support of the rescaled gap $G$ is  $[0, L_T]$. For any realization $G=g$ on the interval $[0, L_T]$, the corresponding absolute gap is $\delta = \frac{2g}{\sqrt{T}}$, yielding the differential change of variables as follows:
$\dd\delta = \frac{2}{\sqrt{T}}\,\dd g.$

Applying Lemma~\ref{lem:local-transformed-iid}, the conditional distribution of $V$ given the rescaled gap and sign is Gaussian and independent of the background oracle $\Xi$, namely, 
\begin{equation}
V \mid (G=g, H=h, \Xi=\xi) \sim \mathcal{N}(hg, 1).
\end{equation}


This reduces the given experiment to one that has the following properties under $Q_\rho$:
\begin{itemize}
    \item The unnormalized marginal density of $G$ is constant on $[0, L_T]$ (and zero elsewhere).
    \item $H \in \{+1, -1\}$ is uniform.
    \item $V \mid (G=g, H=h) \sim \mathcal{N}(hg, 1)$.
\end{itemize}

As established in Lemma~\ref{lem:local-reduce-to-V}, any terminal decision can be represented by a $V$-based rule. Thus it suffices to consider rules that only depend on $V$. We represent this rule via the tuple of measurable functions $\psi^\xi = (\psi_+^\xi, \psi_-^\xi, \psi_?^\xi)$, mapping the scalar $v$ to the probability simplex on $\{+,-,?\}$. In other words,
$\psi_+^\xi(v) + \psi_-^\xi(v) + \psi_?^\xi(v) = 1.$
The superscript $\xi$ indicates that the decision rule is allowed to depend on the revealed background oracle~$\xi$. It follows that we can write 
\begin{equation} \label{integral_of_abs_prob}
    \mathbb P^\xi_{h\delta}\left( \hat{a}^V=? \right) 
    = \mathbb E^\xi_{h\delta}\left[ \psi^\xi_?(V) \right] 
    = \int_{\mathbb R} \psi^\xi_?(v) \phi\left( v - h\frac{\delta\sqrt{T}}{2} \right) \dd v
\end{equation}


For ease of notation, we define the unnormalized marginal density of $V$ corresponding to ($H=+1$) and ($H=-1$) sign realizations as follows:
$$A_T(v) := \int_0^{L_T} \phi(v-g)\,\dd g, \quad\mbox{and}\quad B_T(v) := \int_0^{L_T} \phi(v+g)\,\dd g.$$
Then we can express the total $Q_\rho$-abstention mass of any rule $\psi$ as:
$$
\begin{aligned}
\mathcal{A}_T^{Q_\rho}(\psi)
&= \int_{\mathbb R^K} \mathbb P_\mu\left(\hat{a}^V=?\right) Q_\rho(\dd\mu) \\
&\stackrel{(a)}{=} \int_{\mathcal{C}_\rho} \sum_{h \in \{+1,-1\}} \int^{\delta_\rho}_0 \mathbb P^\xi_{h\delta}\left(\hat{a}^V=?\right) \, \dd\delta \, \Lambda_\rho(\dd\xi) \\
&\stackrel{(b)}{=} \int_{\mathcal{C}_\rho} \sum_{h \in \{+1,-1\}} \int^{\delta_\rho}_0 \int_{\mathbb R} \psi^\xi_?(v) \phi\left(v - h\frac{\delta\sqrt{T}}{2}\right) \,\dd v\, \dd\delta\,  \Lambda_\rho(\dd\xi) \\
&\stackrel{(c)}{=} \int_{\mathcal{C}_\rho} \Lambda_\rho(\dd\xi) \sum_{h \in \{+1,-1\}} \int_0^{L_T} \left( \int_{\mathbb{R}} \psi_?^\xi(v)\phi(v-hg)\,\dd v \right) \frac{2}{\sqrt{T}}\,\dd g \\
&= \frac{2}{\sqrt{T}} \int_{\mathcal{C}_\rho} \int_{\mathbb{R}} \psi_?^\xi(v) \bigl[A_T(v) + B_T(v)\bigr] \,\dd v\,\Lambda_\rho(\dd\xi),
\end{aligned}
$$
where $(a)$ is due to the fair sign property and factorization of Lemma~\ref{lem:q_rho_properties}, $(b)$ is due to \eqref{integral_of_abs_prob} and $(c)$ is due to change of variables with $\dd\delta = \frac{2}{\sqrt{T}}\,\dd g.$ 

Similarly, an undetected error occurs when the rule outputs $-$ when $H=+1$, or outputs $+$ when $H=-1$. Hence the $Q_\rho$-undetected error mass can be expressed as:
$$\mathcal{E}_T^{Q_\rho}(\psi) = \frac{2}{\sqrt{T}} \int_{\mathcal{C}_\rho} \int_{\mathbb{R}} \left[ \psi_-^\xi(v)A_T(v) + \psi_+^\xi(v)B_T(v) \right] \,\dd v\,\Lambda_\rho(\dd\xi).$$

\subsection{Solving the Bayesian Flat Sign Experiment}
\label{subsec:solve-flat-sign-experiment}

We are now ready to solve the core optimization problem: establishing a lower bound on the undetected error probability $\mathcal{E}_T^{Q_\rho}(\psi)$ for any terminal decision rule $\psi$ that satisfies the abstention budget $\mathcal{A}_T^{Q_\rho}(\psi) \le \alpha$.

\begin{lemma}[Pointwise Optimal Sign Decision]
\label{lem:flat-pointwise-sign}
For every terminal decision rule $\psi$, the undetected error mass is bounded below as:
$$ \mathcal{E}_T^{Q_\rho}(\psi) \ge \frac{2}{\sqrt{T}} \int_{\mathcal{C}_\rho} \int_{\mathbb{R}} \bigl(1-\psi_?^\xi(v)\bigr) \min\{A_T(v),B_T(v)\} \,\dd v\,\Lambda_\rho(\dd\xi). $$
\end{lemma}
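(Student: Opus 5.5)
The bound follows pointwise from the integral representation of $\mathcal{E}_T^{Q_\rho}(\psi)$ derived at the end of Section~\ref{subsec:flat-sign-experiment}:
\[
\mathcal{E}_T^{Q_\rho}(\psi) = \frac{2}{\sqrt{T}} \int_{\mathcal{C}_\rho} \int_{\mathbb{R}} \left[ \psi_-^\xi(v)A_T(v) + \psi_+^\xi(v)B_T(v) \right] \dd v\,\Lambda_\rho(\dd\xi).
\]
Since $\Lambda_\rho$ is a nonnegative measure, $2/\sqrt{T}>0$, and the integrand is nonnegative, it suffices to lower bound the bracketed integrand for each fixed $\xi$ and each $v$. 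Monotonicity of the integral then gives the claim; Tonelli justifies the iterated integration because everything is nonnegative and measurable.

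For the pointwise step, fix $\xi$ and $v$. The functions $A_T(v)=\int_0^{L_T}\phi(v-g)\,\dd g$ and $B_T(v)=\int_0^{L_T}\phi(v+g)\,\dd g$ are nonnegative, and $\psi_\pm^\xi(v)\ge 0$. Hence
\[
\psi_-^\xi(v)A_T(v)+\psi_+^\xi(v)B_T(v) \ge \bigl(\psi_-^\xi(v)+\psi_+^\xi(v)\bigr)\min\{A_T(v),B_T(v)\}.
\]
Using the simplex constraint $\psi_+^\xi(v)+\psi_-^\xi(v)+\psi_?^\xi(v)=1$, the prefactor equals $1-\psi_?^\xi(v)$. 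This is exactly the Bayes-optimal sign decision: given that the rule does not abstain, it should put all its non-abstention mass on the sign with the larger unnormalized likelihood. Here $A_T$ corresponds to $H=+1$ and $B_T$ to $H=-1$, and equal sign mass holds by the fair-sign property of Lemma~\ref{lem:q_rho_properties}.

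Integrating this inequality over $v\in\mathbb{R}$ and then over $\xi\in\mathcal{C}_\rho$ against $\Lambda_\rho$, and multiplying by $2/\sqrt{T}$, yields the stated bound.

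There is no real obstacle. The only points to check are measurability of $v\mapsto \min\{A_T(v),B_T(v)\}$, which holds because $A_T$ and $B_T$ are continuous, and joint measurability of $(\xi,v)\mapsto\psi^\xi_d(v)$, which is implicit in the paper's representation of the $Q_\rho$-masses as iterated integrals. I would state these explicitly so that Tonelli applies.
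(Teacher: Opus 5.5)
Your proof is correct and matches the paper's argument: fix $(\xi,v)$, combine nonnegativity of $\psi_\pm^\xi(v)$ with $\psi_+^\xi(v)+\psi_-^\xi(v)=1-\psi_?^\xi(v)$ to get the pointwise lower bound $(1-\psi_?^\xi(v))\min\{A_T(v),B_T(v)\}$, then integrate against $\dd v\,\Lambda_\rho(\dd\xi)$. Your remarks on measurability and Tonelli are a harmless, slightly more careful supplement to the paper's one-line integration step.
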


\begin{proof}
Fix $(\xi,v)$. Since $\psi_+^{\xi}(v)+\psi_-^{\xi}(v)=1-\psi_?^{\xi}(v)$, and both $\psi_+^{\xi}(v)$ and $\psi_-^{\xi}(v)$ are non-negative, $$ \psi_-^{\xi}(v)A_T(v)+\psi_+^{\xi}(v)B_T(v) \ge \bigl(\psi_-^{\xi}(v)+\psi_+^{\xi}(v)\bigr)\min\{A_T(v),B_T(v)\} = \bigl(1-\psi_?^{\xi}(v)\bigr)\min\{A_T(v),B_T(v)\}. $$
Integrating over $v$ and $\xi$ proves the claim.  
\end{proof}

\begin{lemma}[Uniform Central Bound]
\label{lem:flat-central-approx}
Fix $\eta\in(0,1)$. For all sufficiently large $T$, the following bounds hold   for all $|v|\le \frac{L_T}{2}$:
$$
    A_T(v)+B_T(v)  \ge 1-\eta, \quad\mbox{and}\quad
  \min\{A_T(v),B_T(v)\} \ge (1-\eta)\Phi(-|v|). 
$$ 
\end{lemma}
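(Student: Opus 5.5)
The plan is to write both quantities explicitly in terms of $\Phi$ and then use $L_T=\delta_\rho\sqrt T/2\to\infty$. Substituting $u=v-g$ in $A_T$ and $u=v+g$ in $B_T$ gives
\[
A_T(v)=\Phi(v)-\Phi(v-L_T),\qquad B_T(v)=\Phi(L_T+v)-\Phi(v)=\Phi(-v)-\Phi(-v-L_T).
\]
By symmetry, $B_T(v)=A_T(-v)$. Hence it suffices to control $A_T(v)$ and $A_T(-v)$ for $|v|\le L_T/2$.

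\textbf{Sum bound.} Adding the two expressions gives
\[
A_T(v)+B_T(v)=\Phi(L_T+v)-\Phi(v-L_T).
\]
For $|v|\le L_T/2$ we have $L_T+v\ge L_T/2$ and $v-L_T\le -L_T/2$. Therefore
\[
A_T(v)+B_T(v)\ge 1-2\Phi(-L_T/2).
\]
Since $L_T\to\infty$, this is at least $1-\eta$ for all large $T$, uniformly in $v$.

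\textbf{Minimum bound.} Take $v\ge0$ first, so the minimum is $B_T(v)=\Phi(-v)-\Phi(-v-L_T)$. The goal is the uniform ratio bound
\[
\frac{\Phi(-v-L_T)}{\Phi(-v)}\le\eta\qquad\text{for }0\le v\le L_T/2,
\]
which gives $B_T(v)\ge(1-\eta)\Phi(-v)$.

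This ratio is the main obstacle, because it must be uniform even though $\Phi(-v)$ itself becomes tiny when $v$ is of order $L_T$. A naive additive bound fails there. I would use the Gaussian tail ratio instead. The function $s\mapsto \Phi(-s-L)/\Phi(-s)$ is the survival probability of a standard normal beyond $s+L$ given that it exceeds $s$. It is nonincreasing in $s\ge0$, by log-concavity of $\Phi(-\cdot)$ (equivalently, the increasing hazard rate of the normal). So the ratio is maximized at $s=0$, where it equals $2\Phi(-L_T)$.

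If I want to avoid the monotonicity fact, I would use the explicit Mills-ratio bounds instead. For $s\ge0$,
\[
\Phi(-s-L)\le \frac{\phi(s+L)}{s+L},\qquad \Phi(-s)\ge\frac{s}{s^2+1}\phi(s)\ \text{ for } s\ge1,
\]
and for $s\le1$, $\Phi(-s)\ge\Phi(-1)$. Either case gives a ratio bound of order $e^{-sL-L^2/2}$ times polynomial factors, which tends to $0$ uniformly. In both routes the ratio is at most $2\Phi(-L_T)\le\eta$ for large $T$.

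The case $v<0$ follows in the same way with the roles of $A_T$ and $B_T$ swapped, since $A_T(v)=B_T(-v)$. Together these give $\min\{A_T,B_T\}\ge(1-\eta)\Phi(-|v|)$ uniformly on $|v|\le L_T/2$, and choosing $T$ large enough for both requirements completes the argument.
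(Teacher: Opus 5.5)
Your proof is correct and follows the paper's structure. You use the same closed forms $A_T(v)=\Phi(v)-\Phi(v-L_T)$ and $B_T(v)=\Phi(-v)-\Phi(-v-L_T)$. The sum bound $1-2\Phi(-L_T/2)$ is identical. The minimum bound reduces, as in the paper, to showing $\Phi(-(|v|+L_T))/\Phi(-|v|)\le\eta$.

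The only real difference is how you bound that ratio. The paper does not treat it as an obstacle. On $0\le|v|\le L_T/2$ it bounds the numerator by $\Phi(-L_T)$ and the denominator by $\Phi(-L_T/2)$ separately, so the ratio is at most $\Phi(-L_T)/\Phi(-L_T/2)\to0$. The restriction $|v|\le L_T/2$ is exactly what stops the denominator from shrinking as fast as the numerator, so the ``naive'' decoupled bound already works there.

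Your log-concavity (increasing hazard rate) argument is a bit heavier. In exchange it gives the sharper bound $2\Phi(-L_T)$ uniformly over all $v\ge0$. So your minimum bound would hold without the restriction $|v|\le L_T/2$; only the sum bound actually needs it.

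One small gap: you assert without justification that $B_T(v)\le A_T(v)$ when $v\ge0$. The paper justifies this by noting that $[v-L_T,v]$ lies closer to the origin than $[v,v+L_T]$ and $\phi$ is symmetric and decreasing in $|u|$. Alternatively, you can avoid identifying the minimum altogether. For $0\le v\le L_T/2$ one has $A_T(v)\ge\tfrac12-\Phi(-L_T/2)\ge(1-\eta)\Phi(-v)$ once $\Phi(-L_T/2)\le\eta/2$.
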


\begin{proof}
First, evaluating the integrals of the standard normal density $\phi$ yields:
$$  A_T(v) = \int_0^{L_T}\phi(v-g)\,\dd g = \Phi(v) - \Phi(v-L_T),\;\;\mbox{and}\;\;
B_T(v) = \int_0^{L_T}\phi(v+g)\,\dd g = \Phi(v+L_T) - \Phi(v).$$
Adding these yields:
$$A_T(v) + B_T(v) = \Phi(v+L_T) - \Phi(v-L_T).$$
For $|v| \le L_T/2$, this domain restriction implies $v+L_T \ge L_T/2$ and $v-L_T \le -L_T/2$. Since $\Phi$ is strictly increasing, we can lower bound the sum by evaluating it at these extremes:
$$A_T(v) + B_T(v) \ge \Phi(L_T/2) - \Phi(-L_T/2) = 1 - 2\Phi(-L_T/2).$$
As $T \to \infty$, $L_T \to \infty$, so $1 - 2\Phi(-L_T/2) \to 1$. Thus, for sufficiently large $T$, the sum is bounded below by $1-\eta$.

For the second inequality, observe that $\phi$ is symmetric and strictly decreasing away from zero. Thus, the relationship between $A_T(v)$ and $B_T(v)$ depends only on the sign of $v$. For $v \ge 0$, the integration interval for $A_T(v)$ is $[v-L_T, v]$, which is closer to the origin than the interval $[v, v+L_T]$ for $B_T(v)$. Hence, $A_T(v) \ge B_T(v)$, yielding:
$$
\min\{A_T(v), B_T(v)\} = B_T(v) 
= \Phi(v+L_T) - \Phi(v)
= \Phi(-v) - \Phi(-(v+L_T))
$$
since $\Phi(x) = 1 - \Phi(-x)$.
By identical symmetric reasoning for $v < 0$, we find $\min\{A_T(v), B_T(v)\} = A_T(v)$. Consolidating both cases yields a single identity for all $v$:
\begin{equation}\label{eq:min_AB_with_Phi}
    \min\{A_T(v), B_T(v)\} = \Phi(-|v|) - \Phi(-(|v|+L_T)).
\end{equation}
We wish to show that $\Phi(-|v|) - \Phi(-(|v|+L_T)) \ge (1-\eta)\Phi(-|v|)$, which is equivalent to bounding the ratio:
$$\frac{\Phi(-(|v|+L_T))}{\Phi(-|v|)} \le \eta.$$
Since $0 \le |v| \le L_T/2$, the numerator is bounded above by $\Phi(-L_T)$ and the denominator is bounded below by $\Phi(-L_T/2)$. Therefore:
$$\frac{\Phi(-(|v|+L_T))}{\Phi(-|v|)} \le \frac{\Phi(-L_T)}{\Phi(-L_T/2)} \to 0$$
uniformly as $T \to \infty$. Thus, the bound holds for sufficiently large $T$.  
\end{proof}

To complete the lower bound, we use the following specialized consequence of
the \emph{Bathtub Principle} (cf. Theorem~1.14 of \citet{Lieb2001Analysis}). The following lemma states the form of this principle needed for our subsequent analysis.


\begin{lemma}[Abstention Rearrangement / Bathtub Principle]
\label{lem:flat-rearrangement}
Let $B>0$, let $\Lambda$ be a finite measure on a space $\mathcal{O}$, and let $w(v)\geq0$ be an even function that is non-increasing in $|v|$ on $[-B,B]$. Let $r:\mathcal{O} \times \mathbb{R} \to [0,1]$ be measurable. If there exists some $a\in[0,B]$ such that:
$$ \int_{\mathcal{O}}\int_{-B}^{B}r(\xi,v)\,\dd v\,\Lambda(\dd\xi) \le 2a\,\Lambda(\mathcal{O}), $$
then the complementary integral is bounded below as:
$$ \int_{\mathcal{O}}\int_{-B}^{B} (1-r(\xi,v))w(v)\,\dd v\,\Lambda(\dd\xi) \ge 2\,\Lambda(\mathcal{O})\,\int_a^B w(u)\,\dd u. $$
\end{lemma}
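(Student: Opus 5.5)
The plan is to prove Lemma~\ref{lem:flat-rearrangement} by a pointwise rearrangement in $v$ followed by integration against $\Lambda$, reducing to the classical one-dimensional bathtub principle. I will first aggregate over $\xi$. Set $\bar r(v):=\Lambda(\mathcal O)^{-1}\int_{\mathcal O} r(\xi,v)\,\Lambda(\dd\xi)$ for $v\in[-B,B]$. If $\Lambda(\mathcal O)=0$ both sides vanish, so I may assume it is positive. By Tonelli's theorem, $\bar r$ is measurable with values in $[0,1]$. The hypothesis becomes $\int_{-B}^B \bar r(v)\,\dd v\le 2a$. The target inequality becomes
\[
\int_{-B}^{B}(1-\bar r(v))\,w(v)\,\dd v\ \ge\ 2\int_a^B w(u)\,\dd u .
\]
Since $w\ge 0$ does not depend on $\xi$, this is exactly the original claim divided by $\Lambda(\mathcal O)$. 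So the problem reduces to a statement about a single function $\bar r:[-B,B]\to[0,1]$.

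For the one-dimensional statement, note first that $\int_{-B}^B w = 2\int_0^B w$ because $w$ is even. It therefore suffices to show
\[
\int_{-B}^B \bar r\,w \ \le\ 2\int_0^a w = \int_{-a}^{a} w .
\]
I will compare $\bar r$ with the indicator $\mathbbm 1_{[-a,a]}$, which is the ``bathtub'' filling of the largest values of $w$. Consider the difference
\[
\int_{-B}^B\bigl(\mathbbm 1_{[-a,a]}(v)-\bar r(v)\bigr)\,w(v)\,\dd v .
\]
On $|v|\le a$ the factor $\mathbbm 1_{[-a,a]}-\bar r$ is nonnegative and $w(v)\ge w(a)$. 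On $a<|v|\le B$ it is nonpositive and $w(v)\le w(a)$. These two facts use exactly the evenness and monotonicity of $w$ in $|v|$. Hence the integrand is bounded below by $\bigl(\mathbbm 1_{[-a,a]}-\bar r\bigr)w(a)$ everywhere. Integrating gives the lower bound $w(a)\bigl(2a-\int\bar r\bigr)\ge 0$, using $w(a)\ge0$ and the budget hypothesis. This yields the desired inequality.

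The main obstacle is the endpoint $a=B$ and the handling of $w(a)$ as a threshold value. The comparison only needs some constant $c$ with $w\ge c$ on $[-a,a]$ and $w\le c$ outside it. Taking $c=w(a)$ works whenever $a\le B$, because monotonicity holds on $[-B,B]$, and $c\ge0$ is what makes the slack term harmless. Beyond that, the only care needed is measurability of $\bar r$ and the interchange of integrals, both of which follow from Tonelli's theorem since all integrands are nonnegative and $\Lambda$ is finite.
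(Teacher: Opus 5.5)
Your proof is correct and is essentially the paper's argument. The paper also integrates the pointwise inequality $(\mathbbm{1}_C-r)(w-w(a))\ge 0$ over the cylinder $C=\mathcal{O}\times\{|v|<a\}$, then uses $w(a)\ge0$ and the budget to drop the slack term. The only difference is that you first average $r$ over $\xi$ via Tonelli to reduce to one dimension, whereas the paper works directly on the product space; this step is harmless but not needed.
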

\begin{proof}
Define the central cylinder set $C:=\mathcal{O}\times\{v:|v|<a\}$. The measure of $C$ is $2a\,\Lambda(\mathcal{O})$.
Because $w(v)$ is non-increasing in $|v|$, we have $w(v)\ge w(a)$ on $C$ and $w(v)\le w(a)$ on $C^c$. Thus, the integrand $\bigl(\mathbbm{1}_C(\xi,v)-r(\xi,v)\bigr)\bigl(w(v)-w(a)\bigr)$ is  non-negative everywhere. Consequently:
$$ 0 \le \int_{ \mathcal{O}\times (-B,B) } \bigl(\mathbbm{1}_C-r\bigr)\bigl(w-w(a)\bigr). $$
Expanding this inequality gives:
$$ \int_{ \mathcal{O}\times (-B,B) } r w \le \int_C w - w(a)\left( \int_C 1 - \int_{ \mathcal{O}\times (-B,B) } r \right) \le \int_C w, $$
where the final step follows because the assumption $\int_{ \mathcal{O}\times (-B,B) } r \le 2a\,\Lambda(\mathcal{O}) = \int_C 1$ forces the bracketed term to be non-negative. Therefore:
$$ \int_{ \mathcal{O}\times (-B,B) } (1-r)w \ge \int_{ \mathcal{O}\times (-B,B) } w - \int_C w = \int_{C^c} w = 2\ \Lambda(\mathcal{O})\ \int_a^B w(u)\,\dd u.  $$
\end{proof}

\begin{lemma}[Flat Bayesian Sign Lower Bound]
\label{lem:flat-sign-lower-detailed}
Fix $\eta\in(0,1)$ and suppose $\alpha < \frac{(1-\eta)c_\rho\delta_{\rho}}{4}$. Define the threshold parameter:
\begin{equation}
a_{T,\eta} := \frac{\alpha\sqrt{T}}{2(1-\eta)c_\rho}. \label{eqn:aTeta}
\end{equation}
Then, for every terminal rule $\psi$ satisfying $\mathcal{A}_T^{Q_\rho}(\psi) \le \alpha$, we have that as $T\to\infty$:
$$ \mathcal{E}_T^{Q_\rho}(\psi) \ge (1-o_T(1)) \frac{2(1-\eta)c_\rho}{\sqrt{T}} \int_{a_{T,\eta}}^\infty \Phi(-u)\,\dd u. $$
\end{lemma}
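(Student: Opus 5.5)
We restrict every integral to the central window $|v|\le B$ with $B:=L_T/2$. This lets us combine Lemma~\ref{lem:flat-central-approx} with the bathtub rearrangement of Lemma~\ref{lem:flat-rearrangement}.

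\textbf{Step 1: transfer the budget to the window.} By the formula for $\mathcal A_T^{Q_\rho}$ derived above, the constraint $\mathcal A_T^{Q_\rho}(\psi)\le\alpha$ reads
\[
\frac{2}{\sqrt T}\int_{\mathcal C_\rho}\int_{\mathbb R}\psi^\xi_?(v)\,[A_T(v)+B_T(v)]\,\dd v\,\Lambda_\rho(\dd\xi)\le\alpha.
\]
Discard the region $|v|>B$, since the integrand is nonnegative. On $|v|\le B$ use $A_T+B_T\ge 1-\eta$ for large $T$. This gives
\[
\int_{\mathcal C_\rho}\int_{-B}^{B}\psi^\xi_?(v)\,\dd v\,\Lambda_\rho(\dd\xi)\le \frac{\alpha\sqrt T}{2(1-\eta)}.
\]
Since $\Lambda_\rho(\mathcal C_\rho)=c_\rho/2$, the right-hand side equals $2a_{T,\eta}\Lambda_\rho(\mathcal C_\rho)$, with $a_{T,\eta}$ as in \eqref{eqn:aTeta}. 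The assumption $\alpha<(1-\eta)c_\rho\delta_\rho/4$ gives $a_{T,\eta}<\delta_\rho\sqrt T/8=L_T/4<B$. So the hypothesis $a\in[0,B]$ of Lemma~\ref{lem:flat-rearrangement} holds, with room to spare.

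\textbf{Step 2: pointwise sign bound and rearrangement.} Start from Lemma~\ref{lem:flat-pointwise-sign}, again keep only $|v|\le B$, and apply $\min\{A_T,B_T\}\ge(1-\eta)\Phi(-|v|)$ from Lemma~\ref{lem:flat-central-approx}. This yields
\[
\mathcal E_T^{Q_\rho}(\psi)\ge\frac{2(1-\eta)}{\sqrt T}\int_{\mathcal C_\rho}\int_{-B}^{B}(1-\psi^\xi_?(v))\,\Phi(-|v|)\,\dd v\,\Lambda_\rho(\dd\xi).
\]
The weight $w(v)=\Phi(-|v|)$ is even and nonincreasing in $|v|$. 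Apply Lemma~\ref{lem:flat-rearrangement} with $r(\xi,v)=\psi^\xi_?(v)$, $\Lambda=\Lambda_\rho$, $\mathcal O=\mathcal C_\rho$, and $a=a_{T,\eta}$. Together with $2\Lambda_\rho(\mathcal C_\rho)=c_\rho$, this gives
\[
\mathcal E_T^{Q_\rho}(\psi)\ge\frac{2(1-\eta)c_\rho}{\sqrt T}\int_{a_{T,\eta}}^{B}\Phi(-u)\,\dd u.
\]

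\textbf{Step 3: remove the truncation at $B$.} This is the only delicate point. We must show
\[
\int_{B}^\infty\Phi(-u)\,\dd u=o\Big(\int_{a_{T,\eta}}^\infty\Phi(-u)\,\dd u\Big).
\]
Use the standard Mills-ratio asymptotic $\int_a^\infty\Phi(-u)\,\dd u\asymp \phi(a)/a^2$ as $a\to\infty$. Since $B\ge 2a_{T,\eta}$ and both scale like $\sqrt T$, the ratio of the two tails is of order $\exp(-(B^2-a_{T,\eta}^2)/2)$ up to polynomial factors, which tends to $0$. Hence $\int_{a_{T,\eta}}^B=(1-o_T(1))\int_{a_{T,\eta}}^\infty$, which proves the claim. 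The main care needed is ordering the limits: fix $\eta$ and $\alpha$ first, then take $T$ large enough that both bounds of Lemma~\ref{lem:flat-central-approx} hold. The $o_T(1)$ term depends on $\eta,\alpha,\rho$, which is acceptable.
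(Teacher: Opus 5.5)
Your proposal is correct and follows the paper's proof essentially step for step. You restrict to $|v|\le L_T/2$, use both bounds of Lemma~\ref{lem:flat-central-approx}, apply the bathtub rearrangement of Lemma~\ref{lem:flat-rearrangement} with $a=a_{T,\eta}<L_T/4$ and $B=L_T/2$, and then drop the tail beyond $L_T/2$. Your Mills-ratio argument, $\int_a^\infty\Phi(-u)\,\dd u\sim\phi(a)/a^2$ combined with $B> 2a_{T,\eta}$, simply makes explicit the final tail-removal step that the paper asserts in one line.
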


\begin{proof}
By Lemma~\ref{lem:flat-central-approx}, for all sufficiently large $T$, $A_T(v)+B_T(v)\ge 1-\eta$ on the interval $|v|\le L_T/2$. Given the abstention constraint $\mathcal{A}_T^{Q_\rho}(\psi) \le \alpha$, we can further restrict the integration domain to this interval $[-L_T/2,L_T/2]$ to obtain:
$$ \frac{2}{\sqrt{T}} \int_{\mathcal{C}_\rho} \int_{|v|\le L_T/2} \psi_?^\xi(v) \bigl[A_T(v)+B_T(v)\bigr] \,\dd v\,\Lambda_\rho(\dd\xi) \le \alpha. $$
Substituting the uniform central bound (first bound in Lemma \ref{lem:flat-central-approx}) yields:
$$ \int_{\mathcal{C}_\rho} \int_{|v|\le L_T/2} \psi_?^\xi(v) \,\dd v\,\Lambda_\rho(\dd\xi) \le \frac{\alpha\sqrt{T}}{2(1-\eta)}. $$
Because $\Lambda_\rho(\mathcal{C}_\rho) = c_\rho/2$, the right-hand side is   $2a_{T,\eta}\,\Lambda_\rho(\mathcal{C}_\rho)$. 

Next, we evaluate the undetected error mass. Discarding the non-negative contribution from $|v|>L_T/2$ and applying Lemma~\ref{lem:flat-pointwise-sign} and the second bound of Lemma~\ref{lem:flat-central-approx}, we obtain:
$$ \mathcal{E}_T^{Q_\rho}(\psi) \ge \frac{2(1-\eta)}{\sqrt{T}} \int_{\mathcal{C}_\rho} \int_{|v|\le L_T/2} \bigl(1-\psi_?^\xi(v)\bigr)\Phi(-|v|) \,\dd v\,\Lambda_\rho(\dd\xi). $$
We now apply the abstention rearrangement result from Lemma~\ref{lem:flat-rearrangement} with $B=L_T/2$, $w(v)=\Phi(-|v|)$, and $r(\xi,v)=\psi_?^{\xi}(v)$. Our assumption on $\alpha$  ensures that $a_{T,\eta} < L_T/4 < L_T/2$. This allows the rearrangement to yield:
$$ \int_{\mathcal{C}_\rho} \int_{|v|\le L_T/2} \bigl(1-\psi_?^\xi(v)\bigr)\Phi(-|v|) \,\dd v\,\Lambda_\rho(\dd\xi) \ge \Lambda_\rho(\mathcal{C}_\rho) \,2\int_{a_{T,\eta}}^{L_T/2}\Phi(-u)\,\dd u. $$
Substituting $\Lambda_\rho(\mathcal{C}_\rho)=c_\rho/2$:
$$ \mathcal{E}_T^{Q_\rho}(\psi) \ge \frac{2(1-\eta)c_\rho}{\sqrt{T}} \int_{a_{T,\eta}}^{L_T/2}\Phi(-u)\,\dd u. $$
Finally, because $a_{T,\eta} < L_T/4$, the omitted tail $\int_{L_T/2}^\infty \Phi(-u)\,\dd u$ decays exponentially faster than $\int_{a_{T,\eta}}^\infty \Phi(-u)\,\dd u$. Thus
$$ \int_{a_{T,\eta}}^{L_T/2}\Phi(-u)\,\dd u = (1-o_T(1)) \int_{a_{T,\eta}}^\infty \Phi(-u)\,\dd u, $$
which concludes the proof.  
\end{proof}

\subsection{Proof of the  Lower Bound Theorem}
\begin{proof}[Proof of Theorem \ref{thm:bayesian-abstention-lower}]
\label{subsec:lower-bound-conclusion}

Let $\pi$ be any policy satisfying the global abstention budget $\mathcal{A}_T(\pi) \le \alpha$. By Lemma~\ref{lem:risk-domination}, the abstention and undetected error mass of this strategy under the local subprior $Q_\rho$ are  bounded  above by their true global counterparts, i.e., 
$$ \mathcal{A}_T^{Q_\rho}(\pi) \le \mathcal{A}_T(\pi) \le \alpha, \quad \mbox{and} \quad
\mathcal{E}_T^{Q_\rho}(\pi)\le \mathcal{E}_T(\pi).$$

By Lemma~\ref{lem:sufficiency-local-pair-strategies} and Lemma~\ref{lem:local-reduce-to-V}, the original strategy $\pi$ induces a $V$-based terminal rule   $\psi^\xi$ in the experiment with background oracle revealed. This reduction preserves the abstention probability and ensures the induced sign-rule error is no larger than the original undetected error. Therefore
$$ \mathcal{A}_T^{Q_\rho}(\psi^\xi) \le \alpha. $$
Because the abstention constraint is satisfied, we may apply Lemma~\ref{lem:flat-sign-lower-detailed}. This yields:
$$ \mathcal{E}_T(\pi) \ge \mathcal{E}_T^{Q_\rho}(\psi^\xi) \ge (1-o_T(1)) \frac{2(1-\eta)c_\rho}{\sqrt{T}} \int_{a_{T,\eta}}^\infty \Phi(-u)\,\dd u, $$
where the lower limit of the integral $a_{T,\eta}$ is defined in \eqref{eqn:aTeta}.

Fix $\rho \in (0,1)$ and $\eta \in (0,1)$. The construction of the local subprior gives constants $c_\rho > 0$ and $\delta_{\rho} > 0$. We define the critical abstention threshold
$ \alpha_0(\rho,\eta) := \frac{(1-\eta)c_\rho\delta_{\rho}}{4}. $
For every fixed $\alpha \in (0, \alpha_0(\rho,\eta))$, Lemma~\ref{lem:flat-sign-lower-detailed} applies. Furthermore, as $T \to \infty$, $  a_{T,\eta} = \frac{\alpha\sqrt{T}}{2(1-\eta)c_\rho} \to \infty. $

Using $\int_a^\infty \Phi(-u)\,\dd u = \phi(a)-a\Phi(-a)$ and the standard Gaussian tail asymptotics:
$$ \log\left(\int_a^\infty \Phi(-u)\,\dd u\right) = -\frac{a^2}{2} + O(\log a) \qquad \text{as } a \to \infty, $$
we extract the exponential decay rate for any feasible strategy $\pi$:
$$ \liminf_{T\to\infty} \frac{1}{\alpha^2 T} \log \mathcal{E}_T(\pi) \ge -\frac{1}{8(1-\eta)^2 c_\rho^2}. $$
Taking the infimum over all feasible strategies yields the lower bound on the optimal undetected error probability $\mathfrak{E}_{T}(\alpha)$. Since this bound holds for all $\alpha \in (0, \alpha_0(\rho,\eta))$, taking the limit as $\alpha \downarrow 0$:
\begin{equation}
    \liminf_{\alpha\downarrow 0} \liminf_{T\to\infty} \frac{1}{\alpha^2 T} \log \mathfrak{E}_{T}(\alpha) \ge -\frac{1}{8(1-\eta)^2 c_\rho^2} 
\ge -\frac{1}{8(1-\eta)^2 (1-\rho)^4 \kappa_\nu^2}, \label{eqn:lowerbd}
\end{equation}
where the second inequality follows because our definition of the restricted set $\mathcal C_{ij}$ ensures that the constant satisfies $c_\rho \ge (1-\rho)^2 \kappa_\nu$.
Because \eqref{eqn:lowerbd} holds for arbitrary $\eta \in (0,1)$ and $\rho \in (0,1)$, we may take $\eta \downarrow 0$ and then $\rho \downarrow 0$ to obtain the desired lower bound.   
\end{proof}

\section{Proof of Theorem~\ref{thm:no-abstention-lower}}
\label{app:no-abstention-lower}

We prove the no-abstention (forced-decision) lower bound by reusing the local subprior and
oracle reduction developed in the proof of
Theorem~\ref{thm:bayesian-abstention-lower}. The only difference is that the
terminal rule is not allowed to abstain. Thus, in the resulting one-dimensional
flat sign experiment, the rule must always choose one of the two signs.

Fix $\rho\in(0,1)$. Let $Q_\rho$, $\mathcal{C}_\rho$, $\Lambda_\rho$,
$\delta_\rho$, and $c_\rho$ be the flat local subprior objects
constructed in the proof of Theorem~\ref{thm:bayesian-abstention-lower}. We
recall the two properties of the construction of $Q_\rho$. First, by Lemma~\ref{lem:subprior-domination},
$Q_\rho\le P_\nu $.
Second, by Lemma \ref{lem:q_rho_properties}, under $Q_\rho$, the local coordinates
$(\Xi,\Delta,H)$ satisfy \eqref{eqn:Q_rho_dist}
and
    $c_\rho:=
    2\Lambda_\rho(\mathcal{C}_\rho)
    \ge
    (1-\rho)^2\kappa_\nu $.
On the support of $Q_\rho$, the two arms in $\Xi=(i,j,x,\zeta_{-ij})$ are (strictly) the top two arms.

Let $\pi$ be any forced-decision policy, so 
$\hat a_T\in[K]$ almost surely. Since abstention never occurs,
misidentification and undetected error coincide:
$\{\hat a_T\neq a^\star\}
=
\{\hat a_T\notin\{a^\star,?\}\}$.
So by $Q_\rho\le P_\nu $,
\begin{equation}
\label{eq:no-abstention-risk-domination}
\mathbb{P}_\nu^\pi(\hat{a}_T \neq a^\star) 
= \int \mathbb{P}_\mu^\pi(\hat{a}_T \neq a^\star) P_\nu(\dd\mu) 
\ge \int \mathbb{P}_\mu^\pi(\hat{a}_T \notin \{a^\star, ?\}) Q_\rho(\dd\mu) 
:= \mathcal{E}_T^{Q_\rho}(\pi)
\ge \mathcal{E}_T^{Q_\rho}(\psi^\xi).
\end{equation}

The lower bound of the error probability in the oracle-revealed experiment provides a lower bound for the original problem. Moreover, by Lemma~\ref{lem:sufficiency-local-pair-strategies}, after
revealing $\Xi=\xi$ it is enough to consider a strategy that samples only the
two local top arms and outputs a sign in $\{+,-\}$. By
Lemma~\ref{lem:local-reduce-to-V}, the terminal rule can further be taken to
depend on the history only through the sufficient statistic
$V=\frac1{\sqrt T}\sum_{t=1}^T Z_t$,
where $Z_t$ denotes the transformed observation defined in the local reduction.

Let $G:=\frac{\Delta\sqrt T}{2}$ and $L_T:=\frac{\delta_\rho\sqrt T}{2}$.
Then $G$ has support $[0,L_T]$ under $Q_\rho$, and the flat sign experiment satisfies
\[
    V\mid(G=g,H=h,\Xi=\xi)\sim \mathcal N(hg,1),
    \qquad h\in\{+1,-1\}.
\]
Any forced-decision terminal rule $\psi^\xi$ satisfies
\[
    \psi_+^\xi(v)+\psi_-^\xi(v)=1,
    \quad\mbox{and}\quad
    \psi_+^\xi(v),\psi_-^\xi(v)\ge0 .
\]
Define
\[
    A_T(v)
    :=
    \int_0^{L_T}\phi(v-g)\,\dd g,
    \quad\mbox{and}\quad
    B_T(v)
    :=
    \int_0^{L_T}\phi(v+g)\,\dd g .
\]
Using \eqref{eqn:Q_rho_dist} and the change of variables
$\dd\delta=2\,\dd g/\sqrt T$, 
$$
    \mathcal E_T^{Q_\rho}(\psi)
    =
    \frac{2}{\sqrt T}
    \int_{\mathcal{C}_\rho}\int_{\mathbb R}
    \left[
        \psi_-^\xi(v)A_T(v)
        +
        \psi_+^\xi(v)B_T(v)
    \right]\,\dd v\,\Lambda_\rho(\dd\xi).\notag
$$
Since $\psi_+^\xi(v)+\psi_-^\xi(v)=1$, we have pointwise
\[
    \psi_-^\xi(v)A_T(v)
    +
    \psi_+^\xi(v)B_T(v)
    \ge
    \min\{A_T(v),B_T(v)\}.
\]
Therefore,
\begin{equation}
\label{eq:no-abstention-min-lower}
    \mathcal E_T^{Q_\rho}(\psi)
    \ge
    \frac{2}{\sqrt T}
    \Lambda_\rho(\mathcal{C}_\rho)
    \int_{\mathbb R}\min\{A_T(v),B_T(v)\}\,\dd v
    =
    \frac{c_\rho}{\sqrt T}
    \int_{\mathbb R}\min\{A_T(v),B_T(v)\}\,\dd v .
\end{equation}

It remains to evaluate the one-dimensional integral. Recall  that~\eqref{eq:min_AB_with_Phi} gives
    $\min\{A_T(v),B_T(v)\}
    =
    \Phi(-|v|)
    -
    \Phi(-(|v|+L_T))$.
Consequently,
$$    \int_{\mathbb R}\min\{A_T(v),B_T(v)\}\,\dd v
    =
    2\int_0^\infty
    \left[
        \Phi(-v)-\Phi(-(v+L_T))
    \right]\,\dd v  \notag
    =
    2\int_0^{L_T} \Phi(-u)\,\dd u .
$$
Since $L_T\to\infty$ as $T\to\infty$,
\[
    \int_{\mathbb R}\min\{A_T(v),B_T(v)\}\,\dd v
    \to
    2\int_0^\infty \Phi(-u)\,\dd u .
\]
Finally,
\[
    \int_0^\infty \Phi(-u)\,\dd u
    =
    \int_0^\infty\int_u^\infty \phi(z)\,\dd z\,\dd u
    =
    \int_0^\infty z\phi(z)\,\dd z
    =
    \frac{1}{\sqrt{2\pi}}.
\]
Thus
\begin{equation}
\label{eq:no-abstention-integral-limit}
    \int_{\mathbb R}\min\{A_T(v),B_T(v)\}\,\dd v
    \to
    \sqrt{\frac{2}{\pi}} .
\end{equation}

Combining \eqref{eq:no-abstention-risk-domination},
\eqref{eq:no-abstention-min-lower}, and
\eqref{eq:no-abstention-integral-limit}, we obtain a policy-independent bound: for any $\pi\in\Pi_T(0)$, 
\[
    \liminf_{T\to\infty}
    \sqrt T\,
    \mathbb P_\nu^\pi(\hat a_T\neq a^\star)
    \ge
    c_\rho\sqrt{\frac{2}{\pi}} .
\]
Minimizing over all policies and using the definition of $c_\rho$ yields
\[
    \liminf_{T\to\infty}
    \sqrt T\,
    \mathfrak E_{T}(0)
    \ge
    c_\rho\sqrt{\frac{2}{\pi}} 
    \ge
    (1-\rho)^2\kappa_\nu\sqrt{\frac{2}{\pi}}.
\]
Because $\rho\in(0,1)$ is arbitrary, taking $\rho\downarrow0$ yields the desired lower bound.  

\section{Proof of Theorem~\ref{thm:two-arm-frequentist-abstention}}
\label{app:frequentist-abstention}

\begin{proof} \label{sec:proof_thm4}
Fix any strategy $\pi\in\Pi_T^{\rm freq}(\alpha)$. The proof uses
the midpoint instance $\mu^0:=(x,x)$
as a change-of-measure reference. Since the abstention constraint is uniform over
all two-arm Gaussian instances,
\begin{equation}
\label{eq:freq-abst-midpoint}
    \mathbb P_{\mu^0}^\pi(\hat a_T=?)\le \alpha .
\end{equation}
For $\theta\in\{-1,0,+1\}$, define
\[
    \mu^\theta
    :=
    \left(
        x+\theta\frac{\Delta}{2},
        x-\theta\frac{\Delta}{2}
    \right).
\]
Thus $\mu^0=(x,x)$. Let
\[
    S_t:=\mathbbm{1}\{A_t=1\}-\mathbbm{1}\{A_t=2\}\in\{+1,-1\},
\]
and define the centered signed observation $Z_t:=S_t(X_t-x)$.
Finally, set $W:=\sum_{t=1}^T Z_t$, $V:=\frac{W}{\sqrt T}$ and $m:=\frac{\Delta\sqrt T}{2}$.

We first reduce the adaptive experiment to a one-dimensional Gaussian shift
experiment. Under $\mu^\theta$, conditionally on the past and on the chosen arm
$A_t$, the transformed observation satisfies $Z_t\sim \mathcal N\!\left(\theta\frac{\Delta}{2},1\right)$.
The conditional law of $Z_t$ is therefore independent of both the past and the
chosen arm. Hence
    $Z_1,\ldots,Z_T
    \overset{\mathrm{i.i.d.}}{\sim}
    \mathcal N\!\left(\theta\frac{\Delta}{2},1\right)$,
and consequently
\[
    V\sim
    \begin{cases}
        \mathcal N(m,1), & \theta=+1,\\
        \mathcal N(0,1), & \theta=0,\\
        \mathcal N(-m,1), & \theta=-1.
    \end{cases}
\]

The same argument as in Lemma~\ref{lem:local-reduce-to-V} applies here, with the background oracle consisting only of the two arms and their midpoint $x$. Therefore, for the three instances
$\mu^0,\mu^+,\mu^-$, the terminal decision rule may be represented by measurable
functions $\psi = (\psi_+, \psi_-,\psi_?)$,
where $\psi_+(v)$ is the probability of recommending arm $1$ after observing
$V=v$, $\psi_-(v)$ is the probability of recommending arm $2$, and
$\psi_?(v)$ is the probability of abstaining. These functions satisfy, for every $v$,
\[
    \psi_+(v)+\psi_-(v)+\psi_?(v)=1,
    \qquad
    \psi_+(v),\psi_-(v),\psi_?(v)\in[0,1].
\]

Let $\phi$ denote the standard normal density. Under the midpoint instance
$\mu^0$, we have $V\sim\mathcal N(0,1)$, so
\eqref{eq:freq-abst-midpoint} means
\begin{equation}
\label{eq:freq-psi-abstention}
    \int_{\mathbb R}\psi_?(v)\phi(v)\,\dd v\le \alpha .
\end{equation}
Under $\mu^+$, arm $1$ is uniquely optimal and $V\sim\mathcal N(m,1)$.
Therefore
\begin{equation}
\label{eq:freq-error-plus-psi}
    \mathcal E_{T,\mu^+}(\pi)
    =
    \mathbb P_{\mu^+}^\pi(\hat a_T=2)
    =
    \int_{\mathbb R}\psi_-(v)\phi(v-m)\,\dd v .
\end{equation}
Similarly, under $\mu^-$, arm $2$ is uniquely optimal and
$V\sim\mathcal N(-m,1)$, so
\begin{equation}
\label{eq:freq-error-minus-psi}
    \mathcal E_{T,\mu^-}(\pi)
    =
    \mathbb P_{\mu^-}^\pi(\hat a_T=1)
    =
    \int_{\mathbb R}\psi_+(v)\phi(v+m)\,\dd v .
\end{equation}
We next lower bound the average of the two mirror-instance errors. Define
\[
    p_+(v):=\phi(v-m),
    \qquad
    p_-(v):=\phi(v+m).
\]
By \eqref{eq:freq-error-plus-psi} and \eqref{eq:freq-error-minus-psi},
$$
\begin{aligned}
    \overline{\mathcal E}
    :=
    \frac12\mathcal E_{T,\mu^+}(\pi)
    +
    \frac12\mathcal E_{T,\mu^-}(\pi)
    &=
    \frac12
    \int_{\mathbb R}
    \left[
        \psi_-(v)p_+(v)
        +
        \psi_+(v)p_-(v)
    \right]\,\dd v \\
    &\ge
    \frac12
    \int_{\mathbb R}
    \bigl(1-\psi_?(v)\bigr)
    \min\{p_+(v),p_-(v)\}\,\dd v .
\end{aligned}
$$
Note that
$
    \min\{p_+(v),p_-(v)\}
    =
    \phi(|v|+m)
$. Hence
\begin{equation}
\label{eq:freq-average-error-phi}
    \overline{\mathcal E}
    \ge
    \frac12
    \int_{\mathbb R}
    \bigl(1-\psi_?(v)\bigr)\phi(|v|+m)\,\dd v .
\end{equation}

It remains to identify the least possible value of the right-hand side subject
to the midpoint abstention constraint \eqref{eq:freq-psi-abstention}. Let $r(v):=\psi_?(v)$.
Then $0\le r(v)\le1$ and
\[
    \int_{\mathbb R}r(v)\phi(v)\,\dd v\le\alpha .
\]
Write
\[
    \phi(|v|+m)
    =
    q_m(v)\phi(v),
    \qquad
    q_m(v):=
    \exp\!\left(-m|v|-\frac{m^2}{2}\right).
\]
The function $q_m$ is even and nonincreasing in $|v|$. Let $C_\alpha:=[-z_\alpha,z_\alpha]$.
By the definition of $z_\alpha$,
\[
    \int_{C_\alpha}\phi(v)\,\dd v=\alpha .
\]
Since $q_m(v)\ge q_m(z_\alpha)$ on $C_\alpha$ and
$q_m(v)\le q_m(z_\alpha)$ on $C_\alpha^c$, we have pointwise
\[
    \bigl(r(v)-\mathbbm{1}_{C_\alpha}(v)\bigr)
    \bigl(q_m(v)-q_m(z_\alpha)\bigr)
    \le 0 .
\]
Integrating with respect to $\phi(v)\,\dd v$ gives
\[
    \int_{\mathbb R} r(v)q_m(v)\phi(v)\,\dd v
    \le
    \int_{C_\alpha}q_m(v)\phi(v)\,\dd v
    +
    q_m(z_\alpha)
    \left[
        \int_{\mathbb R}r(v)\phi(v)\,\dd v
        -
        \int_{C_\alpha}\phi(v)\,\dd v
    \right].
\]
The bracketed term is nonpositive by \eqref{eq:freq-psi-abstention}, and hence
\[
    \int_{\mathbb R} r(v)q_m(v)\phi(v)\,\dd v
    \le
    \int_{C_\alpha}q_m(v)\phi(v)\,\dd v .
\]
Equivalently,
\begin{equation}
\label{eq:freq-rearrangement}
    \int_{\mathbb R}
    \bigl(1-r(v)\bigr)\phi(|v|+m)\,\dd v
    \ge
    \int_{\mathbb R\setminus C_\alpha}\phi(|v|+m)\,\dd v .
\end{equation}
By symmetry,
\[
    \int_{\mathbb R\setminus C_\alpha}\phi(|v|+m)\,\dd v
    =
    2\int_{z_\alpha}^{\infty}\phi(v+m)\,\dd v
    =
    2\Phi(-m-z_\alpha).
\]
Combining this identity with \eqref{eq:freq-average-error-phi} and
\eqref{eq:freq-rearrangement} yields $\overline{\mathcal E}
    \ge
    \Phi(-m-z_\alpha)$.
Finally,
\[
    \max\left\{
        \mathcal E_{T,\mu^+}(\pi),
        \mathcal E_{T,\mu^-}(\pi)
    \right\}
    \ge
    \overline{\mathcal E}
    \ge
    \Phi(-m-z_\alpha)
    =\Phi\!\left(
        -\frac{\Delta\sqrt T}{2}-z_\alpha
    \right). 
\]

\end{proof}

\section{Proof of Theorem~\ref{thm:general-transfer}} \label{app:pf_thm_general}

\subsection{Assumptions}

We start by recalling the assumptions for Theorem~\ref{thm:general-transfer} and providing some more explanations.
Let \(\mathcal I\subseteq \mathbb R\) be an open interval, called the
{\em quality parameter space}. For each arm \(i\in[K]\), let
\(\mu_i\in\mathcal I\) denote its {\em quality parameter}. Define the best
arm by $a^\star := \arg\max_{i\in[K]}\mu_i$. 
When arm \(i\) is sampled, its reward distribution $P_{\mu_i}$ belongs to a one-parameter
family \(\{P_\mu:\mu\in\mathcal I\}\).

\begingroup
\renewcommand{\thesection}{4}
\AssRegularExpfam*
\endgroup
Under Assumption~\ref{ass:regular-expfam}, the Fisher information for the
quality parameter \(\mu\) is
\[
    I(\mu)
    =
    A''(\theta(\mu))\{\theta'(\mu)\}^2 .
\]
Thus \(I(\mu)>0\), and \(I\) is bounded away from zero and infinity on every
compact subset of \(\mathcal I\).  Fix \(\mu_\circ\in\mathcal I\), and
define the \emph{Fisher--Rao information coordinate}
\[
    s(\mu)
    :=
    \int_{\mu_\circ}^\mu\sqrt{I(v)}\,\dd v,
\]
where \(\mu_\circ\in\mathcal I\) is arbitrary. This is the arc-length coordinate induced by the
Fisher--Rao metric \citep{Rao1992,amari2000methods}.
We remark that all our results are independent of the choice of \(\mu_\circ\).
Indeed, if another base point \(\widetilde\mu_\circ\in\mathcal I\) is used, and 
$\widetilde s(\mu) := \int_{\widetilde\mu_\circ}^{\mu}\sqrt{I(v)}\,\dd v$,
then $\widetilde s(\mu) = s(\mu)-s(\widetilde\mu_\circ)$. Thus all transformed arm qualities are shifted by the same constant. Consequently, the ordering of arms, all pairwise information-coordinate gaps, and the event of
a near top-two tie are unchanged. The interval \(\mathcal H\), the densities
\(\bar f_i\), and the distribution functions \(\bar F_i\) are merely translated,
so the integral defining \(\kappa\) is invariant under the corresponding change
of variables. The quadratic KL condition below is also unchanged because it
depends only on differences \(s(\mu)-s(z)\).

Also let $\eta_i:=s(\mu_i)$ and $\beta(\eta):=\theta(s^{-1}(\eta))$. Note $\theta(\mu)=\beta(s(\mu))$. Since \(s\) is strictly increasing, the best arm is unchanged by the
transformation \(\mu_i\mapsto \eta_i\).

\begingroup
\renewcommand{\thesection}{4}
\AssRegularInfoPrior*
\endgroup

Define the information-scaled top-two tie density
\begin{equation}
\label{eq:kappa-transfer-upper}
    \kappa
    :=
    \sum_{i\neq j}
    \int_{\mathcal H}
        \bar f_i(u)\bar f_j(u)
        \prod_{k\neq i,j}\bar F_k(u)
    \,\dd u .
\end{equation}
Equivalently, if \(f_i\) and \(F_i\) respectively denote the prior density and distribution function
of \(\mu_i\) in the original quality coordinate system, then $$
\kappa = \sum_{i \neq j} \int_{\mathcal I} \frac{f_i(\mu)f_j(\mu)\prod_{k\neq i,j}{F_k(\mu)}}{\sqrt{I(\mu)}}  \dd\mu
$$
Thus \(\kappa\) is the density at zero of the top-two gap after transforming to
the Fisher--Rao information coordinate. Note that under Assumption~\ref{ass:regular-info-prior},
\(0<\kappa<\infty\). 

\begin{remark} We note that, each arm has 3 (equivalent) descriptions:
$$
\mu \leftrightarrow \eta :=s(\mu) \leftrightarrow \theta := \theta(\mu) := \beta(\eta),
$$
where \begin{enumerate}
    \item $\mu$ is the actual parameter of an arm, the value drawn from the prior. It is the \emph{quality (parameter) coordinate} in which the decision problem is defined: $a^*=\argmax_i \mu_i$, and $\mu$ measures the \emph{quality} of the arms.

    \item $\theta(\mu)$ is the \emph{natural (parameter) coordinate}. It is the coordinate in which the reward distribution has exponential-family form.

    \item $\eta$ is the \emph{Fisher--Rao information (parameter) coordinate}. It rescales the quality coordinate so that local Fisher information is equal to one.
\end{enumerate}

Note that in the Gaussian case, with $\mu_\circ=0$, we have $\mu=s(\mu)=\theta(\mu)$.
\end{remark}

For the achievability result we also assume a global quadratic lower bound on
the Kullback--Leibler divergence. For \(u,v\in\mathcal I\), let
\[
    D(u,v)
    :=
    D(P_u\,\|\,P_v)
    =
    \{\theta(u)-\theta(v)\}A'(\theta(u))
    -
    A(\theta(u))
    +
    A(\theta(v)).
\]

\begingroup
\renewcommand{\thesection}{4}
\AssQuadraticKL*
\endgroup
The assumption holds for several common one-parameter families when $s(\cdot)$ is chosen as the information coordinate. In particular, Gaussian location, Bernoulli,  and Poisson  satisfy \eqref{eqn:quad_growth} with $c_{\rm KL}=1/4$.  Assumption \ref{ass:quadratic-kl} can be viewed as a global version of the local quadratic approximation $D(\mu,z)\approx\frac{1}{2}d_{\mathrm{FR}}(\mu,z)^2$ where $d_{\mathrm{FR}}$ is the Fisher--Rao distance; cf.\  \citet{amari2000methods}.

\subsection{Lower Bound for Theorem~\ref{thm:general-transfer}}
\label{app:transfer-lower-proof}
We first prove the lower bound
\begin{theorem}[Lower Bound for the General Case]
\label{thm:transfer-lower}
Under Assumptions~\ref{ass:regular-expfam} and \ref{ass:regular-info-prior}, 
\[
    \liminf_{\alpha\downarrow0}
    \liminf_{T\to\infty}
    \frac{1}{\alpha^2T}
    \log \mathfrak E_T(\alpha)
    \ge
    -\frac{1}{8\kappa^2}.
\]
\end{theorem}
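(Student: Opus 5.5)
We will transplant the proof of Theorem~\ref{thm:bayesian-abstention-lower} into Fisher--Rao coordinates $\eta_i=s(\mu_i)$. Because $s$ is strictly increasing, the events $\{\hat a_T=?\}$ and $\{\hat a_T\notin\{a^\star,?\}\}$ keep their meaning when the prior is pushed forward to $\mathcal H^K$ with density $\prod_k\bar f_k$. The first step is to build a flat local subprior $Q_\rho\le \bar P$ exactly as in Definition~\ref{def:flat-local-subprior}. We use the local coordinates $(i,j,x,\delta,h,\zeta_{-ij})$ in $\eta$-space and restricted sets $\mathcal C_{ij}(\ell,M)$, now taken as compact subsets of $\mathcal H$ (so $|x|\le M$ is replaced by $x\in[a_M,b_M]\Subset\mathcal H$). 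We choose them by monotone convergence so that $c_\rho\ge(1-\rho)^2\kappa$. Lemma~\ref{lem:uniform-density} carries over, since we only need continuity and positivity of $\bar f_i$ on a compact set (Assumption~\ref{ass:regular-info-prior}). Lemmas~\ref{lem:subprior-domination}, \ref{lem:q_rho_properties} and~\ref{lem:risk-domination} then hold verbatim. The oracle reduction, Lemma~\ref{lem:sufficiency-local-pair-strategies}, also holds verbatim: for $k\ne i,j$ we simulate rewards from $P_{s^{-1}(\zeta_k)}$.

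The new ingredient is the two-arm experiment. Given $\xi$ and signed gap $\theta$, arm $i$ has natural parameter $\beta(x+\theta/2)$ and arm $j$ has $\beta(x-\theta/2)$. The observations are no longer Gaussian, so exact sufficiency of a scalar statistic fails. We therefore replace Lemmas~\ref{lem:local-transformed-iid}--\ref{lem:local-reduce-to-V} by an asymptotic (LAN-type) Gaussian approximation uniform over $\xi$ in the compact set and $|\theta|\le\delta_\rho$. The log-likelihood ratio of the history against $\theta=0$ equals
\[\sum_t \pm\big[(\beta(x\pm\theta/2)-\beta(x))t(X_t)-A(\beta(x\pm\theta/2))+A(\beta(x))\big],\]
which is exactly computable for exponential families.

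We then work with a smaller effective window. The prior flatness is used only on gaps $\theta=2g/\sqrt T$ with $g$ up to $a_{T,\eta}(1+\epsilon)\asymp\alpha\sqrt T$, i.e.\ $\theta=O(\alpha)$. For $\delta_\rho$ small, the information-coordinate normalisation $A''(\beta)\beta'^2=1$ gives per-sample KL between the sign hypotheses of $\theta^2/8\cdot(1+O(\delta_\rho))$, uniformly on compacts thanks to the $C^4$ smoothness in Assumption~\ref{ass:regular-expfam}.

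The hardest step is redoing the Neyman--Pearson/bathtub argument of Lemma~\ref{lem:flat-sign-lower-detailed} without an exact Gaussian statistic. My plan is to avoid sufficiency and instead argue pair by pair in $g$. We lower bound the $Q_\rho$-error contributed by gaps $g\ge a$ by a change of measure to the null $\theta=0$ (midpoint), as in the proof of Theorem~\ref{thm:two-arm-frequentist-abstention}. The abstention constraint, integrated over the flat gap density, forces $\int\mathbb P_{g}(\hat a\ne ?)\,dg$ to be large over most $g$ outside an interval of length about $\alpha\sqrt T/(2c_\rho)$. On those $g$, distinguishing sign $+g$ from $-g$ has error at least $\exp(-g^2/2(1+o(1)))$. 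This follows from a Bretagnolle--Huber or data-processing bound on the non-abstention event, combined with the KL computation above, since the total KL is $T\theta^2/8\cdot(1+O(\delta_\rho))=g^2/2$. This yields $\exp(-a_{T,\eta}^2(1+O(\delta_\rho))/2)$ up to polynomial factors, i.e.\ exponent $-1/(8(1-\eta)^2c_\rho^2)(1+O(\delta_\rho))$. If Bretagnolle--Huber proves too lossy in constants, I would instead use a Cram\'er/Bahadur tilted-measure argument on the log-likelihood ratio. Finally we let $T\to\infty$, then $\alpha\downarrow0$, then $\eta,\delta_\rho,\rho\downarrow0$.
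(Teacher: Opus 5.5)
Your reduction to the oracle two-arm problem is fine: the push-forward to $\eta$-coordinates, a unit-Jacobian flat subprior on compact sub-boxes of $\mathcal H$, domination, and simulating background arms from $P_{s^{-1}(\zeta_k)}$.

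\textbf{The per-gap bound is false.} You claim that wherever non-abstention is large, the sign error at rescaled gap $g$ is at least $e^{-g^2(1+o(1))/2}$. This fails already in the exact Gaussian flat-sign experiment $V\mid(G=g,H=h)\sim\mathcal N(hg,1)$. Abstaining on $\{|V|\le g-z\}$ with $z$ fixed keeps the abstention probability near $\Phi(-z)$ at $\pm g$, yet the error is $\Phi(-(2g-z))=e^{-2g^2(1+o(1))}$. Neither tool you name can repair this. Between the two sign hypotheses the KL divergence is $2g^2$; your $g^2/2$ is the divergence to the midpoint. So Bretagnolle--Huber yields only $\mathbb P_{+g}(-)+\mathbb P_{-g}(\{+,?\})\ge\frac12 e^{-2g^2}$, and the abstention term swamps it. The midpoint change of measure works in Theorem~\ref{thm:two-arm-frequentist-abstention} only because the uniform constraint caps abstention \emph{at} the midpoint. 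The Bayesian budget gives no such cap, and a good rule abstains at near-ties with probability close to one.

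\textbf{Any gap-by-gap argument loses a factor $4$.} Write $a:=a_{T,\eta}$. Suppose you bound each gap separately, using only its sign-averaged abstention probability $q(g)$ and the integrated budget $\int_0^{L_T}q(g)\,\mathrm{d}g\lesssim a$. Then you cannot certify more than $e^{-2a^2(1+o(1))}$. To see this, abstain always for $g\le(1-\epsilon)a$, and on $\{|V|\le g-z\}$ beyond, with a $T$-independent $z$ chosen so that $\Phi(-z)L_T\le\epsilon a$. The resulting exponent is $\alpha^2T/(2c_\rho^2)$, four times the target. For the actual optimal common rule $\{|V|\le a\}$, the error at gap $g$ is $\Phi(-(a+g))$. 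Its total $2\int_0^{L_T}\Phi(-a-g)\,\mathrm{d}g$ therefore comes from near-ties $g\approx0$, not from $g\ge a$. Your final number agrees only because $\int_0^\infty\Phi(-a-g)\,\mathrm{d}g=\int_a^\infty\Phi(-u)\,\mathrm{d}u$.

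\textbf{What is missing.} The terminal rule is a single history-measurable function serving all gaps at once. The lower bound must therefore be a rearrangement in history space against the $g$-mixture, not a sequence of gap-by-gap tests. Set $\mathbb M_{T,\xi}=\int_{-L_T}^{L_T}\mathbb P^\xi_r\,\mathrm{d}r$ and $\mathbb N_{T,\xi}=\mathbb A_{T,\xi}\wedge\mathbb B_{T,\xi}$. The error mass is then at least $\frac{2}{\sqrt T}\int\!\int(1-\psi_?^\xi)\,\mathrm{d}\mathbb N_{T,\xi}\,\Lambda_\rho(\mathrm{d}\xi)$. The budget becomes $\int\!\int\psi_?^\xi\,\mathrm{d}\mathbb M_{T,\xi}\,\Lambda_\rho(\mathrm{d}\xi)\le\alpha\sqrt T/2$. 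This is what your midpoint change of measure turns into when applied to the whole mixture, since $\mathrm{d}\mathbb M_{T,\xi}/\mathrm{d}\mathbb P_0^\xi=\int e^{\ell_{T,\xi}(r)}\,\mathrm{d}r$.

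Without an exact sufficient statistic you need a scalar $V$ with two properties:
\begin{itemize}
\item its $\mathbb M$-law is, up to $1-\epsilon$ factors, at least Lebesgue measure on $[-L_T/2,L_T/2]$;
\item $\mathrm{d}\mathbb N/\mathrm{d}\mathbb M\ge e^{-o(\alpha^2T)}\Phi(-|V|)$ pointwise.
\end{itemize}
A bathtub lemma for a general finite measure then finishes the argument.

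\textbf{How the paper builds $V$.} It takes $V$ to be the bounded local MLE of the signed rescaled gap. It perturbs the pair linearly in the \emph{natural} parameter, not in $\eta$ as you do, on a window $\bar\delta_\alpha=D\alpha$. This makes $-\ell''$ deterministic given the actions and confined to $[1-C\alpha,1+C\alpha]$, and a quadratic sandwich then gives the pointwise overlap bound. With your $\eta$-linear chart, $-\ell''$ carries the data-dependent term $\beta''(\cdot)\bigl(t(X_t)-A'(\beta(\cdot))\bigr)$, so you lose this. Finally, a LAN or weak-convergence approximation cannot by itself control events of probability $e^{-\Theta(\alpha^2T)}$. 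The paper avoids needing it by placing the exponential smallness in the pointwise weight $\Phi(-|V|)$ and using only Chebyshev-level tightness of the MLE for the mass comparisons.
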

Throughout this proof, \(o_T(\alpha^2T)\) denotes a quantity $r_T(\alpha)\alpha^2T$ such that \(r_T(\alpha)\to0\) as \(T\to\infty\) for each fixed \(\alpha>0\).
All $o_T(\alpha^2T)$ terms below are uniform over $\xi\in\mathcal C_\rho$ and over all local pair strategies, for fixed $\rho$, $\varepsilon$, and $D$.

\subsubsection{Step 1. A Dominated Flat Local Subprior}
\label{subsubsec:flat-local-subprior-transfer-lower}

Fix $\rho\in(0,1)$.

Recall we let $\beta(\eta):=\theta\bigl(s^{-1}(\eta)\bigr)$ for $\eta\in\mathcal H$.
Since both $\theta(\cdot)$ and $s(\cdot)$ are strictly increasing,
$\beta:\mathcal H\to\Theta$ is strictly increasing. Moreover, as \(\eta=s(\mu)\),
\[
    \beta'(\eta)
    =
    \frac{\theta'(\mu)}{s'(\mu)}
    =
    \frac{\theta'(\mu)}
         {\sqrt{A''(\theta(\mu))}\theta'(\mu)}
    =
    \frac{1}{\sqrt{A''(\theta(\mu))}}.
\]

\begin{definition}[Local Change of Variable]
\label{def:local-coordinate-transfer-lower}
Fix a pair $i<j$, and write
\[
    \xi=(i,j,u,\zeta_{-ij}),
    \qquad
    u\in\mathcal H,
    \qquad
    \zeta_{-ij}\in\mathcal H^{K-2}.
\]
For $\delta\ge0$ and $h\in\{+1,-1\}$, define
\[
\begin{aligned}
    \theta_i(u,\delta,h)
    &:=
    \beta(u)
    +
    h\frac{\delta}{2\sqrt{A''(\beta(u))}},\\
    \theta_j(u,\delta,h)
    &:=
    \beta(u)
    -
    h\frac{\delta}{2\sqrt{A''(\beta(u))}}.
\end{aligned}
\]
Whenever these two natural parameters belong to $\Theta$, define
$\eta(\xi,\delta,h)\in\mathcal H^K$ by
\begin{equation}
\label{eq:local-chart-transfer-lower}
\begin{aligned}
    \eta_i(\xi,\delta,h)
    &:=
    \beta^{-1}\bigl(\theta_i(u,\delta,h)\bigr),\\
    \eta_j(\xi,\delta,h)
    &:=
    \beta^{-1}\bigl(\theta_j(u,\delta,h)\bigr),\\
    \eta_k(\xi,\delta,h)
    &:=
    \zeta_k,
    \qquad k\neq i,j.
\end{aligned}
\end{equation}
The corresponding vector in the original parameter space is defined
coordinatewise as
\[
    \mu_k(\xi,\delta,h)
    :=
    s^{-1}\bigl(\eta_k(\xi,\delta,h)\bigr),
    \qquad k\in[K].
\]
\end{definition}

Note that for each fixed pair $i<j$ and sign $h$, the map
    $(u,\delta,\zeta_{-ij})
    \longmapsto
    \eta(\xi,\delta,h)$
is one-to-one. Indeed, writing
$\theta_i=\beta(\eta_i)$ and $\theta_j=\beta(\eta_j)$, one recovers
\[
    u
    =
    \beta^{-1}\left(\frac{\theta_i+\theta_j}{2}\right),
    \qquad
    \delta
    =
    h\sqrt{A''(\beta(u))}\,(\theta_i-\theta_j),
    \qquad
    \zeta_k=\eta_k
    \quad(k\neq i,j).
\]
In particular, when $\delta>0$, the sign is uniquely determined by
$h=\operatorname{sign}(\eta_i-\eta_j)$.

For fixed $i<j$ and $h$, let
\[
    J_{ijh}(u,\delta)
    :=
    \left|
        \det
        \frac{\partial(\eta_i,\eta_j)}
             {\partial(u,\delta)}
    \right|.
\]
The remaining coordinates satisfy $\eta_k=\zeta_k$, so
$J_{ijh}(u,\delta)$ is also the absolute Jacobian determinant of the full
transformation
$(u,\delta,\zeta_{-ij})\mapsto\eta(\xi,\delta,h)$.

At $\delta=0$, we have
$\eta_i(\xi,0,h)=\eta_j(\xi,0,h)=u$. Using
$\beta'(u)=1/\sqrt{A''(\beta(u))}$ gives
\[
    \left.
    \frac{\partial(\eta_i,\eta_j)}
         {\partial(u,\delta)}
    \right|_{\delta=0}
    =
    \begin{pmatrix}
        1 & h/2\\
        1 & -h/2
    \end{pmatrix}.
\]
Consequently, $J_{ijh}(u,0)=1$. Unlike the Gaussian location model,
however, $J_{ijh}(u,\delta)$ is not generally equal to one when
$\delta>0$.

For each pair $i<j$, define the top-pair tie domain
\[
    \mathcal T_{ij}
    :=
    \left\{
        (i,j,u,\zeta_{-ij})
        \in\mathcal H\times\mathcal H^{K-2}:
        \zeta_k<u
        \text{ for every }k\neq i,j
    \right\},
\]
and let $\mathcal T:=\biguplus_{i<j}\mathcal T_{ij}$ denote the disjoint
union over pairs. Define a finite measure $\lambda$ on $\mathcal T$
componentwise as
\[
    \lambda(\dd\xi)
    :=
    \bar f_i(u)\bar f_j(u)
    \prod_{k\neq i,j}\bar f_k(\zeta_k)
    \,\dd u\,\dd\zeta_{-ij},
    \qquad
    \xi\in\mathcal T_{ij}.
\]
Then
\[
\begin{aligned}
    2\lambda(\mathcal T)
    &=
    2\sum_{i<j}
    \int_{\mathcal H}
        \bar f_i(u)\bar f_j(u)
        \prod_{k\neq i,j}\bar F_k(u)
    \,\dd u\\
    &=
    \sum_{i\neq j}
    \int_{\mathcal H}
        \bar f_i(u)\bar f_j(u)
        \prod_{k\neq i,j}\bar F_k(u)
    \,\dd u
    =
    \kappa.
\end{aligned}
\]

Since $\lambda$ is a finite Borel measure, it is inner regular. Hence there
exists a compact set $\mathcal C_\rho\subset\mathcal T$ such that
\begin{equation}
\label{eq:Xi-rho-mass-transfer-lower}
    2\lambda(\mathcal C_\rho)
    \ge
    (1-\rho)\kappa.
\end{equation}
For each pair $i<j$, let
    $\mathcal C_{\rho,ij}
    :=
    \mathcal C_\rho\cap\mathcal T_{ij}$,
so that
$\mathcal C_\rho=\biguplus_{i<j}\mathcal C_{\rho,ij}$.

Because $\mathcal C_\rho$ is a compact subset of the open set
$\mathcal T$, there exists $\ell_\rho>0$ such that, for every
$\xi=(i,j,u,\zeta_{-ij})\in\mathcal C_{\rho,ij}$,
\begin{equation}
\label{eq:compact-context-margin}
    \zeta_k
    \le
    u-2\ell_\rho
    \quad(k\neq i,j),
    \qquad
    [u-2\ell_\rho,u+2\ell_\rho]
    \subset\mathcal H.
\end{equation}
This will ensure that sufficiently small local perturbations remain in
$\mathcal H$ and preserve $\{i,j\}$ as the unique top-two pair.

Define a finite measure $\Lambda_\rho$ on $\mathcal C_\rho$ as
\begin{equation}
\label{eq:Lambda-rho-transfer-lower}
    \Lambda_\rho(\dd\xi)
    :=
    (1-\rho)
    \bar f_i(u)\bar f_j(u)
    \prod_{k\neq i,j}\bar f_k(\zeta_k)
    \,\dd u\,\dd\zeta_{-ij},
    \qquad
    \xi\in\mathcal C_{\rho,ij}.
\end{equation}
Let
\begin{equation}
\label{eq:c-rho-transfer-lower}
    c_\rho
    :=
    2\Lambda_\rho(\mathcal C_\rho).
\end{equation}
By \eqref{eq:Xi-rho-mass-transfer-lower},
\begin{equation}
\label{eq:c-rho-lower-transfer-lower}
    c_\rho
    =
    2(1-\rho)\lambda(\mathcal C_\rho)
    \ge
    (1-\rho)^2\kappa.
\end{equation}

\begin{lemma}[Uniform Density Lower Bound]
\label{lem:uniform-density-transfer-lower}
There exists $\delta_\rho>0$ such that, for every $i<j$,
every $\xi=(i,j,u,\zeta_{-ij})\in\mathcal C_{\rho,ij}$,
every $h\in\{+1,-1\}$, and every $\delta\in[0,\delta_\rho]$, the coordinate
map in \eqref{eq:local-chart-transfer-lower} is well-defined,
$J_{ijh}(u,\delta)>0$, and
\begin{equation}
\label{eq:local-density-domination-transfer-lower}
    (1-\rho)\bar f_i(u)\bar f_j(u)
    \le
    \bar f_i\bigl(\eta_i(\xi,\delta,h)\bigr)
    \bar f_j\bigl(\eta_j(\xi,\delta,h)\bigr)
    J_{ijh}(u,\delta).
\end{equation}
Moreover, arms $i$ and $j$ are the unique top-two arms since
    $\min\{
        \eta_i(\xi,\delta,h),
        \eta_j(\xi,\delta,h)
    \}
    >
    \eta_k(\xi,\delta,h)$ for $k\neq i,j$.
\end{lemma}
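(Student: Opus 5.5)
The plan is to mirror the proof of Lemma~\ref{lem:uniform-density}, with compactness of $\mathcal C_\rho$ and continuity supplying all uniformity. Let $U$ be the compact set of midpoints $u$ occurring in $\mathcal C_\rho$ (the projection of $\mathcal C_\rho$, which is a finite union of compacts). By \eqref{eq:compact-context-margin}, the interval $U_2:=\{u':\operatorname{dist}(u',U)\le 2\ell_\rho\}$ is a compact subset of $\mathcal H$. Since $\beta$ is a $C^1$ diffeomorphism of $\mathcal H$ onto $\Theta$ (by Assumption~\ref{ass:regular-expfam}, with $\beta'=1/\sqrt{A''\circ\beta}>0$), the set $\beta(U_2)$ is a compact subset of the open set $\Theta$. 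Moreover, $A''\circ\beta$ is continuous and bounded between positive constants $0<a_-\le a_+<\infty$ on $U_2$.

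\textbf{Well-definedness and localization.} The perturbation $|\theta_{i}(u,\delta,h)-\beta(u)|=\delta/(2\sqrt{A''(\beta(u))})\le \delta/(2\sqrt{a_-})$ is uniformly small. Because $\beta^{-1}$ is uniformly continuous on a compact neighborhood of $\beta(U_2)$ inside $\Theta$, there is $\delta_1>0$ such that for all $\delta\le\delta_1$ both $\theta_i$ and $\theta_j$ lie in $\Theta$. For the same $\delta$, we also get $|\eta_i-u|,|\eta_j-u|<\ell_\rho$. The mean value theorem in fact gives the Lipschitz bound $|\eta_i-u|\le \sup(\sqrt{A''})\,\delta/(2\sqrt{a_-})$ with the sup taken over a compact set, so this bound holds uniformly over $u\in U$. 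Consequently $\eta_i,\eta_j\in[u-\ell_\rho,u+\ell_\rho]\subset\mathcal H$. Combined with $\zeta_k\le u-2\ell_\rho$, this gives $\min\{\eta_i,\eta_j\}\ge u-\ell_\rho>u-2\ell_\rho\ge\zeta_k$, which is the top-two claim.

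\textbf{Jacobian.} Here $\eta_{i}=\beta^{-1}(\beta(u)\pm h\delta g(u)/2)$ with $g:=(A''\circ\beta)^{-1/2}$, a $C^1$ function since $A\in C^4$ and $\beta\in C^1$ (indeed $\beta$ is $C^3$). So the partial derivatives $\partial_u\eta_{i/j}$ and $\partial_\delta\eta_{i/j}$ are continuous in $(u,\delta)$ on the compact set $U\times[0,\delta_1]$, hence $J_{ijh}(u,\delta)$ is jointly continuous there. We showed $J_{ijh}(u,0)=1$. By uniform continuity on the compact set, there is $\delta_2$ with $J_{ijh}(u,\delta)\ge (1-\rho)^{1/2}>0$ for all $u\in U$ and $\delta\le\delta_2$, for each of the two signs $h$.

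\textbf{Density ratio.} The densities $\bar f_i$ and $\bar f_j$ are continuous and strictly positive on $\mathcal H$ (Assumption~\ref{ass:regular-info-prior}). They are therefore bounded below by some $m>0$ and uniformly continuous on the compact set $U_2$. Since $|\eta_{i}-u|\to0$ uniformly as $\delta\to0$, we can choose $\delta_3$ such that $\bar f_i(\eta_i)\bar f_j(\eta_j)\ge(1-\rho)^{1/2}\bar f_i(u)\bar f_j(u)$. This follows from $|\bar f(\eta)-\bar f(u)|\le \epsilon m\le \epsilon\bar f(u)$ with $\epsilon$ chosen so that $(1-\epsilon)^2\ge(1-\rho)^{1/2}$. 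Taking $\delta_\rho:=\min\{\delta_1,\delta_2,\delta_3\}$ over the finitely many pairs and both signs and multiplying the two bounds yields \eqref{eq:local-density-domination-transfer-lower}.

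The only step needing care is the uniform continuity of $J$: one must verify that differentiability in $u$ holds with continuous partials. This uses $\beta\in C^2$, which follows from $A,\theta\in C^4$ and $s'=\sqrt{I}\in C^1$.
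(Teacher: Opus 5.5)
Your proof is correct and takes essentially the same route as the paper: compactness of $\mathcal C_\rho$, continuity of the chart, of $J_{ijh}$ and of the densities in $(u,\delta)$, together with $\eta_i=\eta_j=u$ and $J_{ijh}(u,0)=1$ at $\delta=0$, yield a uniform $\delta_\rho$. The only differences are presentational. The paper bounds the single ratio $R_{ijh}(\xi,\delta)\ge 1-\rho$ in one step, whereas you split it into a density factor and a Jacobian factor, each at least $(1-\rho)^{1/2}$. You also spell out regularity details the paper only asserts: the $\Theta$-margin that guarantees well-definedness, the mean-value localization $|\eta_i-u|<\ell_\rho$, and the continuity of the partial derivatives entering $J_{ijh}$.
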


\begin{proof}
For $\xi=(i,j,u,\zeta_{-ij})\in\mathcal C_{\rho,ij}$, define
\[
    R_{ijh}(\xi,\delta)
    :=
    \frac{
        \bar f_i\bigl(\eta_i(\xi,\delta,h)\bigr)
        \bar f_j\bigl(\eta_j(\xi,\delta,h)\bigr)
        J_{ijh}(u,\delta)
    }{
        \bar f_i(u)\bar f_j(u)
    }.
\]
Since $\mathcal C_\rho$ is
compact and the prior densities are continuous and strictly positive,
the denominator is uniformly bounded away from zero on
$\mathcal C_\rho$.

At $\delta=0$, we have
$\eta_i(\xi,0,h)=\eta_j(\xi,0,h)=u$ and
$J_{ijh}(u,0)=1$. Hence $R_{ijh}(\xi,0)=1$.

The coordinate maps, $J_{ijh}$, and $R_{ijh}$ are continuous in a
neighborhood of
$\mathcal C_\rho\times\{0\}\times\{+1,-1\}$. Since
$\mathcal C_\rho$ is compact and there are finitely many pairs and signs,
there exists $\delta_\rho>0$ such that, uniformly over all admissible
$\xi$, $h$, and $\delta\in[0,\delta_\rho]$,
\[
    \eta_i(\xi,\delta,h),
    \eta_j(\xi,\delta,h)
    \in
    (u-\ell_\rho,u+\ell_\rho),
    \qquad
    J_{ijh}(u,\delta)>0,
\]
and $R_{ijh}(\xi,\delta)\ge1-\rho$. The latter inequality is
\eqref{eq:local-density-domination-transfer-lower}.

Finally, by \eqref{eq:compact-context-margin},
\[
\begin{aligned}
    \min\{
        \eta_i(\xi,\delta,h),
        \eta_j(\xi,\delta,h)
    \}
    &>
    u-\ell_\rho\\
    &>
    u-2\ell_\rho\\
    &\ge
    \zeta_k
    =
    \eta_k(\xi,\delta,h),
    \qquad k\neq i,j.
\end{aligned}
\]
Thus $i$ and $j$ are the unique top-two arms. 
\end{proof}

\begin{definition}[Flat Local Subprior]
\label{def:flat-local-subprior-transfer-lower}
For every $0<\bar\delta\le\delta_\rho$, define the finite measure
$Q_{\rho,\bar\delta}$ on $(\mathcal I)^K$ as
\begin{equation}
\label{eq:Q-rho-definition-transfer-lower}
Q_{\rho,\bar\delta}(E)
:=
\sum_{i<j}\sum_{h\in\{+1,-1\}}
\int_{\mathcal C_{\rho,ij}}
\int_0^{\bar\delta}
\mathbbm 1_E\bigl(\mu(\xi,\delta,h)\bigr)
\,\dd\delta\,\Lambda_\rho(\dd\xi),
\qquad
E\in\mathcal B((\mathcal I)^K).
\end{equation}
\end{definition}

\begin{lemma}[Domination]
\label{lem:subprior-domination-transfer-lower}
For every $0<\bar\delta\le\delta_\rho$, the measure
$Q_{\rho,\bar\delta}$ is a submeasure of the original prior $P$; that is,
    $Q_{\rho,\bar\delta}(E)
    \le
    P(E)$
    for every 
    $E\in\mathcal B((\mathcal I)^K)$.
\end{lemma}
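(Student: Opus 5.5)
We follow the proof of Lemma~\ref{lem:subprior-domination}, replacing the Gaussian Jacobian identity by the Jacobian bound in Lemma~\ref{lem:uniform-density-transfer-lower}. Since $s$ is a $C^1$ strictly increasing bijection from $\mathcal I$ onto $\mathcal H$, applying $s$ coordinatewise preserves Borel sets. It therefore suffices to prove the domination in information coordinates. Let $\bar P$ and $\bar Q$ be the pushforwards of $P$ and $Q_{\rho,\bar\delta}$ under $\mu\mapsto(s(\mu_k))_k$. Then $\bar P$ has Lebesgue density $\bar p(\eta)=\prod_k\bar f_k(\eta_k)$ by Assumption~\ref{ass:regular-info-prior}. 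We must show $\bar Q(E')\le\bar P(E')$ for every Borel $E'\subseteq\mathcal H^K$.

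\textbf{Step 1 (density of each piece).} Fix $i<j$ and $h$, and consider the map $\Psi_{ijh}:(u,\delta,\zeta_{-ij})\mapsto\eta(\xi,\delta,h)$ on $\mathcal C_{\rho,ij}\times(0,\bar\delta]$. We already know it is injective, since $(u,\delta,\zeta)$ can be recovered explicitly. It is $C^1$ because $\beta$, $A''$ and $\beta^{-1}$ are $C^1$ (indeed $C^3$) under Assumption~\ref{ass:regular-expfam}. By Lemma~\ref{lem:uniform-density-transfer-lower}, its Jacobian $J_{ijh}>0$ there. The change-of-variables formula for injective $C^1$ maps with nonvanishing Jacobian then shows that the $(i,j,h)$ piece of $\bar Q$ has Lebesgue density
\[
q_{ijh}(\eta)=\frac{(1-\rho)\bar f_i(u)\bar f_j(u)\prod_{k\ne i,j}\bar f_k(\zeta_k)}{J_{ijh}(u,\delta)}\,\mathbbm 1\{\eta\in\mathcal S_{ijh}\},
\]
where $\mathcal S_{ijh}$ is the image of $\Psi_{ijh}$ and $(u,\delta,\zeta)=\Psi_{ijh}^{-1}(\eta)$. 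The set $\{\delta=0\}$ has Lebesgue measure zero in the domain, so it can be dropped.

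\textbf{Step 2 (disjointness of images).} The sets $\mathcal S_{ijh}$ are pairwise disjoint. By the last part of Lemma~\ref{lem:uniform-density-transfer-lower}, on $\mathcal S_{ijh}$ the pair $\{i,j\}$ is the unique top-two pair, so this pair is determined by $\eta$. For $\delta>0$ the sign $h$ is determined by $\operatorname{sign}(\eta_i-\eta_j)$, since $\beta^{-1}$ is strictly increasing and $\theta_i-\theta_j=h\delta/\sqrt{A''}$. Consequently $\bar Q$ has density $q=\sum_{i<j,h}q_{ijh}$, with at most one nonzero term at each point.

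\textbf{Step 3 (pointwise comparison).} On $\mathcal S_{ijh}$, inequality \eqref{eq:local-density-domination-transfer-lower} with $\delta\le\bar\delta\le\delta_\rho$ gives
\[
(1-\rho)\bar f_i(u)\bar f_j(u)\le\bar f_i(\eta_i)\bar f_j(\eta_j)J_{ijh}(u,\delta).
\]
Dividing by $J_{ijh}>0$ and multiplying by $\prod_{k\ne i,j}\bar f_k(\zeta_k)=\prod_{k\ne i,j}\bar f_k(\eta_k)$ yields $q(\eta)\le\bar p(\eta)$ a.e. on $\bigcup\mathcal S_{ijh}$. Off this union $q=0\le\bar p$. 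Integrating over $E'$ gives $\bar Q(E')\le\bar P(E')$, and pulling back through $s^{-1}$ gives $Q_{\rho,\bar\delta}(E)\le P(E)$.

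The main obstacle is Step 1. Unlike the Gaussian case, the Jacobian is not identically $1$, so the change of variables must be justified with care. In particular we must check that the injective $C^1$ map with positive Jacobian on a set containing the compact-times-interval domain yields the stated density. To handle this, we would extend $\Psi_{ijh}$ to an open neighborhood of $\mathcal C_{\rho,ij}\times[0,\delta_\rho]$ on which the continuity argument of Lemma~\ref{lem:uniform-density-transfer-lower} still gives $J_{ijh}>0$. We would then apply the standard area formula (Lebesgue change of variables) to the restriction of this extension, which is legitimate because the extension is injective on the relevant domain.
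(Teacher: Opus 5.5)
Your proposal is correct and follows essentially the same route as the paper. Both arguments push forward to information coordinates, obtain the density $(1-\rho)\bar f_i(u)\bar f_j(u)\prod_{k\neq i,j}\bar f_k(\zeta_k)/J_{ijh}(u,\delta)$ for each $(i,j,h)$ piece, use the unique top-two pair and the sign of $\eta_i-\eta_j$ to make the pieces disjoint, compare pointwise with $\bar p$ via the uniform density bound, and pull back through $s$. Your extra step of extending $\Psi_{ijh}$ to an open neighborhood where it stays injective with $J_{ijh}>0$ supplies the justification of the change of variables that the paper states without detail.
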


\begin{proof}

Let
    $s(\mu)
    := \bigl(s(\mu_1),\ldots,s(\mu_K)\bigr)$. Define the measures in the information coordinate as
\[
    \bar P(B)
    :=
    P\bigl(s^{-1}(B)\bigr),
    \qquad
    \bar Q_{\rho,\bar\delta}(B)
    :=
    Q_{\rho,\bar\delta}\bigl(s^{-1}(B)\bigr).
\]
Note $\bar P$ has Lebesgue density
    $\bar p(\eta)
    =
    \prod_{k=1}^K\bar f_k(\eta_k)$.

For each pair $i<j$, define
\[
    \mathcal S_{i,j}^{\bar\delta}
    :=
    \left\{
        \eta(\xi,\delta,h):
        \xi\in\mathcal C_{\rho,ij},\
        0\le\delta\le\bar\delta,\
        h\in\{+1,-1\}
    \right\},
\]
and set
$\mathcal S_\rho^{\bar\delta}:=
\bigcup_{i<j}\mathcal S_{i,j}^{\bar\delta}$.

By Lemma~\ref{lem:uniform-density-transfer-lower}, arms $i$ and $j$ are
the unique strict top-two arms on $\mathcal S_{i,j}^{\bar\delta}$.
Consequently, the sets $\mathcal S_{i,j}^{\bar\delta}$ are pairwise
disjoint. Moreover, when $\delta>0$, the sign is uniquely determined by
$h=\operatorname{sign}(\eta_i-\eta_j)$. The two sign representations
overlap only when $\delta=0$, which is a
$Q_{\rho,\bar\delta}$-null set.

For any Borel set $B\subseteq\mathcal H^K$, substituting the definition
of $\Lambda_\rho$ in \eqref{eq:Lambda-rho-transfer-lower} into
\eqref{eq:Q-rho-definition-transfer-lower} gives
\[
\begin{aligned}
    \bar Q_{\rho,\bar\delta}(B)
    =
    \sum_{i<j}\sum_{h\in\{+1,-1\}}
    \int_{\mathcal C_{\rho,ij}}
    \int_0^{\bar\delta}
    \mathbbm 1_B\bigl(\eta(\xi,\delta,h)\bigr)
    (1-\rho)\bar f_i(u)\bar f_j(u)
    \prod_{k\neq i,j}\bar f_k(\zeta_k)
    \,\dd\delta\,\dd u\,\dd\zeta_{-ij}.
\end{aligned}
\]

For each fixed pair and sign, apply change-of-variables with 
    $(u,\delta,\zeta_{-ij})
    \longmapsto
    \eta(\xi,\delta,h)$.
The absolute Jacobian determinant is $J_{ijh}(u,\delta)$. It follows that
$\bar Q_{\rho,\bar\delta}$ has Lebesgue density
$\bar q_{\rho,\bar\delta}$ satisfying
\[
    \bar q_{\rho,\bar\delta}
    \bigl(\eta(\xi,\delta,h)\bigr)
    =
    \frac{
        (1-\rho)\bar f_i(u)\bar f_j(u)
        \displaystyle\prod_{k\neq i,j}\bar f_k(\zeta_k)
    }{
        J_{ijh}(u,\delta)
    }
\]
for Lebesgue-almost every
$\eta(\xi,\delta,h)\in\mathcal S_{i,j}^{\bar\delta}$.

Furthermore, the definition
\eqref{eq:Q-rho-definition-transfer-lower} implies
    $\bar Q_{\rho,\bar\delta}
    \bigl(
        \mathcal H^K\setminus
        \mathcal S_\rho^{\bar\delta}
    \bigr)
    =
    0$.
Hence
$\bar q_{\rho,\bar\delta}(\eta)=0$ for Lebesgue-almost every
$\eta\notin\mathcal S_\rho^{\bar\delta}$.

For
$\eta=\eta(\xi,\delta,h)\in\mathcal S_{i,j}^{\bar\delta}$,
Lemma~\ref{lem:uniform-density-transfer-lower} gives
\[
    (1-\rho)\bar f_i(u)\bar f_j(u)
    \le
    \bar f_i(\eta_i)\bar f_j(\eta_j)
    J_{ijh}(u,\delta).
\]
Since $J_{ijh}(u,\delta)>0$ and $\eta_k=\zeta_k$ for $k\neq i,j$,
\[
\begin{aligned}
    \bar q_{\rho,\bar\delta}(\eta)
    \le
    \bar f_i(\eta_i)\bar f_j(\eta_j)
    \prod_{k\neq i,j}\bar f_k(\eta_k)
    =
    \bar p(\eta).
\end{aligned}
\]
The same inequality holds outside $\mathcal S_\rho^{\bar\delta}$ because
$\bar q_{\rho,\bar\delta}=0$ there. Thus
$\bar q_{\rho,\bar\delta}\le\bar p$ Lebesgue-almost everywhere on
$\mathcal H^K$.

Consequently, for every Borel set $B\subseteq\mathcal H^K$,
\[
    \bar Q_{\rho,\bar\delta}(B)
    =
    \int_B\bar q_{\rho,\bar\delta}(\eta)\,\dd\eta
    \le
    \int_B\bar p(\eta)\,\dd\eta
    =
    \bar P(B).
\]
Finally, for every Borel set
$E\subseteq(\mathcal I)^K$,
\[
\begin{aligned}
    Q_{\rho,\bar\delta}(E)
    =
    \bar Q_{\rho,\bar\delta}\bigl(s(E)\bigr)
    \le
    \bar P\bigl(s(E)\bigr)
    =
    P(E).
\end{aligned}
\]
Hence $Q_{\rho,\bar\delta}\le P$. 
\end{proof}

\begin{lemma}[Properties of $Q_{\rho,\bar\delta}$]
\label{lem:q-rho-properties-transfer-lower}
Except on the $Q_{\rho,\bar\delta}$-null tie set $\{\Delta=0\}$, let
$\Xi$, $\Delta$, and $H$ denote the unique background oracle, local gap coordinate, and sign associated with the representation
$\mu=\mu(\Xi,\Delta,H)$. Then
\begin{equation}
\label{eq:Q-coordinate-factorization-transfer-lower}
    Q_{\rho,\bar\delta}
    \left(
        \Xi\in\dd\xi,\,
        \Delta\in\dd\delta,\,
        H=h
    \right)
    =
    \Lambda_\rho(\dd\xi)\,\dd\delta,
    \qquad
    h\in\{+1,-1\}.
\end{equation}
Consequently,
\[
    Q_{\rho,\bar\delta}
    \bigl(
        H=+1
        \mid
        \Xi=\xi,\Delta=\delta
    \bigr)
    =
    Q_{\rho,\bar\delta}
    \bigl(
        H=-1
        \mid
        \Xi=\xi,\Delta=\delta
    \bigr)
    =
    \frac12,
\]
and
\[
    Q_{\rho,\bar\delta}(\Delta\in\dd\delta)
    =
    c_\rho\,\dd\delta,
    \qquad
    0\le\delta\le\bar\delta.
\]
In particular,
$Q_{\rho,\bar\delta}((\mathcal I)^K)=c_\rho\bar\delta$.
\end{lemma}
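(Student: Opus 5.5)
The argument mirrors that of Lemma~\ref{lem:q_rho_properties}: the statement is a direct consequence of Definition~\ref{def:flat-local-subprior-transfer-lower}, once we know that the chart $(\xi,\delta,h)\mapsto\mu(\xi,\delta,h)$ is injective off the tie set. The plan is to establish well-definedness of $(\Xi,\Delta,H)$, then read off the factorization, then derive the two marginal properties by integration.

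First, uniqueness of the representation. Suppose $\mu=\mu(\xi,\delta,h)$ lies in the support of $Q_{\rho,\bar\delta}$ with $\delta>0$.
\begin{enumerate}
\item By Lemma~\ref{lem:uniform-density-transfer-lower}, $\{i,j\}$ is the unique strict top-two pair of $\eta=s(\mu)$, so the pair (and hence the component $\mathcal C_{\rho,ij}$) is determined by $\mu$.
\item The sign is determined by $h=\operatorname{sign}(\eta_i-\eta_j)$, since $s$ and $\beta$ are strictly increasing.
\item The remaining coordinates are recovered from the explicit inversion formulas given after Definition~\ref{def:local-coordinate-transfer-lower}: $u=\beta^{-1}((\theta_i+\theta_j)/2)$, $\delta=h\sqrt{A''(\beta(u))}(\theta_i-\theta_j)$, and $\zeta_k=\eta_k$.
\end{enumerate}
Hence $(\Xi,\Delta,H)$ is a measurable function of $\mu$ on the support minus $\{\Delta=0\}$. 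The inverse is continuous in $(\theta_i,\theta_j,\eta_{-ij})$, which gives measurability. The set $\{\Delta=0\}$ is $Q_{\rho,\bar\delta}$-null because $\{0\}$ has Lebesgue measure zero in the $\dd\delta$ integral.

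Next, the factorization. For Borel sets $B\subseteq\mathcal C_\rho$ and $D\subseteq(0,\bar\delta]$, and a sign $h$, take $E=\{\mu:\Xi(\mu)\in B,\Delta(\mu)\in D,H(\mu)=h\}$. Plug $E$ into \eqref{eq:Q-rho-definition-transfer-lower}. By injectivity, $\mathbbm 1_E(\mu(\xi',\delta',h'))=\mathbbm 1_B(\xi')\mathbbm 1_D(\delta')\mathbbm 1\{h'=h\}$ for $\delta'>0$. The double sum therefore collapses to $\Lambda_\rho(B)\,\mathrm{Leb}(D)$.

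Rectangles of this form generate the product $\sigma$-field on $\mathcal C_\rho\times(0,\bar\delta]\times\{\pm1\}$. Both sides are finite measures, so a $\pi$-$\lambda$ argument extends the identity and yields \eqref{eq:Q-coordinate-factorization-transfer-lower}.

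The consequences then follow by integration.
\begin{itemize}
\item Since the density of the factorization does not depend on $h$, the conditional mass of each sign given $(\xi,\delta)$ is $1/2$.
\item Summing over $h$ and integrating out $\xi$ gives $Q_{\rho,\bar\delta}(\Delta\in\dd\delta)=2\Lambda_\rho(\mathcal C_\rho)\,\dd\delta=c_\rho\,\dd\delta$ on $[0,\bar\delta]$, by \eqref{eq:c-rho-transfer-lower}.
\item Integrating over $\delta$ then gives the total mass $c_\rho\bar\delta$.
\end{itemize}

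Unlike the domination lemma, no Jacobian enters here: $Q_{\rho,\bar\delta}$ is defined as a pushforward of $\Lambda_\rho\otimes\dd\delta\otimes$ counting measure, so the factorization holds by construction. The only point needing care is the injectivity and measurability of the inverse chart, which is the step I expect to be the main, though mild, obstacle; everything else is bookkeeping.
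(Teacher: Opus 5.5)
Your proposal is correct and follows the paper's proof essentially step for step. Both arguments establish uniqueness of $(\Xi,\Delta,H)$ off the null set $\{\Delta=0\}$ using the strict top-two property of Lemma~\ref{lem:uniform-density-transfer-lower}, the sign of $\eta_i-\eta_j$, and the explicit inverse chart, then evaluate the defining integral on product sets to obtain $\Lambda_\rho(C)\,\mathrm{Leb}(D)$ and integrate out $\xi$ and $h$; the only difference is that you spell out the measurability of the inverse chart and the $\pi$-$\lambda$ extension from rectangles, which the paper leaves implicit.
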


\begin{proof}
The uniqueness of the pair follows from the strict top-two property in
Lemma~\ref{lem:uniform-density-transfer-lower}. When $\delta>0$, the sign
is determined by $h=\operatorname{sign}(\eta_i-\eta_j)$, and the center
and remaining coordinates are recovered from the inverse transformation
given above. The two sign representations coincide only when $\delta=0$,
which is a $Q_{\rho,\bar\delta}$-null set. Define the coordinates
arbitrarily on this null set.

For Borel sets $C\subseteq\mathcal C_\rho$ and
$D\subseteq[0,\bar\delta]$, the definition
\eqref{eq:Q-rho-definition-transfer-lower} gives
\[
\begin{aligned}
    Q_{\rho,\bar\delta}
    \bigl(
        \Xi\in C,\,
        \Delta\in D,\,
        H=h
    \bigr)
    =
    \int_C\int_D
    \dd\delta\,\Lambda_\rho(\dd\xi)
    =
    \Lambda_\rho(C)\operatorname{Leb}(D).
\end{aligned}
\]
This proves
\eqref{eq:Q-coordinate-factorization-transfer-lower}. Since the two signs
have the same mass conditional on $(\Xi,\Delta)$, integrating over $\xi$ and summing over the
two signs gives
\[
    Q_{\rho,\bar\delta}(\Delta\in\dd\delta)
    =
    2\Lambda_\rho(\mathcal C_\rho)\,\dd\delta
    =
    c_\rho\,\dd\delta.
\]
\end{proof}

\begin{lemma}[Probability Domination Under $Q_{\rho,\bar\delta}$]
\label{lem:risk-domination-transfer-lower}
For any policy $\pi$, define the abstention mass
\[
    \mathcal A_T^{Q_{\rho,\bar\delta}}(\pi)
    :=
    \int
        \mathbb P_\mu^\pi(\widehat a_T=?)
    \,Q_{\rho,\bar\delta}(\dd\mu)
\]
and the undetected error mass
\[
    \mathcal E_T^{Q_{\rho,\bar\delta}}(\pi)
    :=
    \int
        \mathbb P_\mu^\pi
        \bigl(
            \widehat a_T
            \notin
            \{a^\star(\mu),?\}
        \bigr)
    \,Q_{\rho,\bar\delta}(\dd\mu).
\]
Then
\begin{equation}
\label{eq:risk-domination-transfer-lower}
    \mathcal A_T^{Q_{\rho,\bar\delta}}(\pi)
    \le
    \mathcal A_T(\pi),
    \qquad
    \mathcal E_T^{Q_{\rho,\bar\delta}}(\pi)
    \le
    \mathcal E_T(\pi).
\end{equation}
In particular, any policy satisfying
$\mathcal A_T(\pi)\le\alpha$ also satisfies
$\mathcal A_T^{Q_{\rho,\bar\delta}}(\pi)\le\alpha$.
\end{lemma}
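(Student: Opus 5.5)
The plan is to obtain both inequalities in \eqref{eq:risk-domination-transfer-lower} directly from Lemma~\ref{lem:subprior-domination-transfer-lower}, exactly as Lemma~\ref{lem:risk-domination} was obtained from Lemma~\ref{lem:subprior-domination} in the Gaussian case. The two prior-averaged quantities $\mathcal A_T(\pi)$ and $\mathcal E_T(\pi)$ are integrals of fixed, nonnegative, bounded measurable functions of $\mu$ against $P$. The corresponding masses are the same integrals against $Q_{\rho,\bar\delta}$. Since $Q_{\rho,\bar\delta}\le P$ setwise, the integral of any nonnegative measurable function against $Q_{\rho,\bar\delta}$ is at most its integral against $P$.

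First I will record measurability. The map $\mu\mapsto\mathbb P_\mu^\pi(\widehat a_T=?)$ takes values in $[0,1]$ and is Borel on $(\mathcal I)^K$. Indeed, for a fixed policy the law of the history is a finite composition of the sampling kernels with the reward kernels $P_{\mu_i}$, whose densities $h(w)\exp\{\theta(\mu)t(w)-A(\theta(\mu))\}$ are jointly measurable. Tonelli's theorem then gives measurability in $\mu$. The same reasoning applies to $g(\mu):=\mathbb P_\mu^\pi(\widehat a_T\notin\{a^\star(\mu),?\})$. Here $a^\star(\mu)$ is a Borel function of $\mu$, defined arbitrarily on the tie set, which is null under $P$ and hence also under $Q_{\rho,\bar\delta}$. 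I can write $g(\mu)=\sum_k\mathbbm 1\{a^\star(\mu)\neq k\}\mathbb P_\mu^\pi(\widehat a_T=k)$, a finite sum of products of measurable functions.

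Next I will prove that $Q\le P$ setwise implies $\int f\,\dd Q\le\int f\,\dd P$ for every nonnegative measurable $f$. For an indicator $f=\mathbbm 1_E$ this is exactly Lemma~\ref{lem:subprior-domination-transfer-lower}. Linearity then gives the inequality for nonnegative simple functions. Monotone convergence along an increasing sequence of simple functions extends it to all nonnegative measurable $f$. Applying this with $f=\mathbb P_\mu^\pi(\widehat a_T=?)$ and with $f=g$ yields both inequalities. Combined with $\mathcal A_T(\pi)\le\alpha$, it also gives the final claim $\mathcal A_T^{Q_{\rho,\bar\delta}}(\pi)\le\alpha$.

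There is no real obstacle here. The only point requiring care is the measurability of $\mu\mapsto\mathbb P_\mu^\pi(\cdot)$ for adaptive, possibly randomized policies. I will handle it via the kernel-composition argument above, which is the same implicit step used in the Gaussian Lemma~\ref{lem:risk-domination}.
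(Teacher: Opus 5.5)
Your proposal is correct and follows the paper's argument. The paper's proof is the single line ``follows directly from $Q_{\rho,\bar\delta}\le P$'' (Lemma~\ref{lem:subprior-domination-transfer-lower}); your steps (measurability of $\mu\mapsto\mathbb P_\mu^\pi(\cdot)$, and extending setwise domination to nonnegative measurable integrands via simple functions and monotone convergence) spell out details the paper leaves implicit.
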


\begin{proof}
Follows directly from $Q_{\rho,\bar\delta}\le P$. 
\end{proof}
\subsubsection{Step 2. Oracle Reduction}

Before sampling begins, we reveal the \emph{background oracle} $\Xi=(I,J,U,Z_{-IJ})$, which depends on the random means $\mu$, to the learner. Conditional on
$\Xi=\xi=(i,j,u,\zeta_{-ij})$, the learner knows the identities of the
local top-two arms, the local center $u$, and the information-coordinate
values $\zeta_k$ of all background arms. Equivalently, the learner knows
that each background arm $k\neq i,j$ has quality parameter
$s^{-1}(\zeta_k)$ and reward distribution $P_{s^{-1}(\zeta_k)}$.

The only unknown local coordinates under $Q_{\rho,\bar\delta}$ are
\[
    \Delta\in[0,\bar\delta],
    \qquad
    H\in\{+1,-1\}.
\]
Here $H=+1$ means that arm $i$ is better, whereas $H=-1$ means that arm
$j$ is better.

For ease of notation, denote the probability law of the experiment conditional on $\Xi$, $H$ and $\Delta$
\[
    \mathbb P_{h,\delta}^{\xi,\pi}
    :=
    \mathbb P_{\mu(\xi,\delta,h)}^\pi,
    \qquad
    h\in\{+1,-1\},
    \quad
    \delta\in[0,\bar\delta].
\]

Since any policy in the original experiment can be implemented in the oracle-revealed setting by simply ignoring the side information $\Xi$, any lower bound derived for this oracle-revealed experiment provides a lower bound for the original problem. Furthermore, since the background means are known, querying any arm other than $i$ or $j$ provides no information about $\Delta$ and $H$. We show that we can restrict our analysis to strategies that only sample the top two arms and output the decision of the sign; we call these {\em local pair strategies}.

\begin{lemma}[Sufficiency of Local Pair Strategies]
\label{lem:oracle-reduction-transfer-lower}
Fix a realization of the background oracle $\Xi=\xi=(i,j,u,\zeta_{-ij}) \in \mathcal C_\rho$. For any adaptive strategy $\pi$, there exists a modified strategy $\pi'$ that samples only arms $i$ and $j$, restricts its terminal recommendation to $\hat{a}'_T \in \{+,-,?\}$, and satisfies, for every $\delta\in(0,\bar\delta]$ and $h\in\{+1,-1\}$,
\begin{equation}
    \mathbb{P}_{h,\delta}^{\xi,\pi'}(\hat{a}'_T = ?) = \mathbb{P}_{h,\delta}^{\xi,\pi}(\hat{a}_T = ?) , \quad\mbox{and}\quad
\mathbb{P}_{h,\delta}^{\xi,\pi'}(\hat{a}'_T = -h) \le \mathbb{P}_{h,\delta}^{\xi,\pi}(\hat{a}_T \notin \{a^\star(h), ?\}), \label{eqn:simulate2}
\end{equation}
where $a^\star(+) = i$ and $a^\star(-) = j$. This says that the abstention probability of $\pi'$ is unchanged from that of $\pi$ and the undetected error probability of $\pi'$ is less than or equal to that of $\pi$.
\end{lemma}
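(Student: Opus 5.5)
We mirror the Gaussian proof of Lemma~\ref{lem:sufficiency-local-pair-strategies}. The construction uses only that, given $\Xi=\xi$, the reward law of every background arm is known and does not depend on $(\delta,h)$. Fix $\xi=(i,j,u,\zeta_{-ij})\in\mathcal C_\rho$. We build $\pi'$ by running $\pi$ internally, with any external randomization of $\pi$ reproduced by $\pi'$'s own randomization. At each round, when $\pi$ selects an arm $k\in\{i,j\}$, $\pi'$ pulls arm $k$ and passes the realized reward to $\pi$. When $\pi$ selects $k\notin\{i,j\}$, $\pi'$ pulls arm $i$ (any fixed arm among $i,j$ would do), discards that reward, and passes to $\pi$ an independent draw from $P_{s^{-1}(\zeta_k)}$, generated with its own randomness. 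Because $\eta_k(\xi,\delta,h)=\zeta_k$ for all $\delta,h$, this simulated draw has exactly the law of a true pull of arm $k$ under $\mathbb P^{\xi,\pi}_{h,\delta}$.

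The key step is to show that the internal history seen by $\pi$ under $\pi'$ has the same law as the true history of $\pi$ under $\mathbb P^{\xi,\pi}_{h,\delta}$, for every $(h,\delta)$. We argue by induction on $t$. Conditional on the first $t-1$ internal actions and observations, the action kernel of $\pi$ is the same in both experiments. The next observation is then distributed as $P_{\mu_{A_t}(\xi,\delta,h)}$ in both, independently of the past given $\mu$. For $A_t\in\{i,j\}$ this holds because the reward is genuine. For $A_t\notin\{i,j\}$ it holds because the simulated draw is independent with the correct law; the discarded real reward from arm $i$ is independent of everything $\pi$ sees and does not affect the joint law. The tower property then gives equality of the finite-dimensional joint laws, so the terminal output $\hat a_T\in[K]\cup\{?\}$ of the internal copy of $\pi$ has the same distribution under $\pi'$ as under $\pi$.

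Next we define $\hat a'_T$ through the map $i\mapsto +$, $j\mapsto -$, $?\mapsto ?$, and $k\mapsto +$ for every other $k$. The abstention probabilities then coincide, which is the first part of \eqref{eqn:simulate2}. For the error, Lemma~\ref{lem:uniform-density-transfer-lower} guarantees that for $\delta\in(0,\bar\delta]\subseteq(0,\delta_\rho]$, arms $i,j$ are the unique top two and $a^\star=i$ iff $h=+1$; since $s$ is increasing, the ordering in $\eta$ matches the ordering in $\mu$. We split on $h$:
\begin{itemize}
\item If $h=-1$, the event $\{\hat a'_T=+\}$ equals $\{\hat a_T\in[K]\setminus\{j\}\}=\{\hat a_T\notin\{a^\star,?\}\}$, so the error probabilities are equal.
\item If $h=+1$, the event $\{\hat a'_T=-\}$ equals $\{\hat a_T=j\}\subseteq\{\hat a_T\notin\{a^\star,?\}\}$, so the inequality holds.
\end{itemize}
This gives \eqref{eqn:simulate2}.

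The only delicate point is the measure-theoretic justification that the simulated history has the same law despite adaptive sampling. We handle it exactly as in Lemma~\ref{lem:local-transformed-iid}, by conditioning step by step on $\mathcal F_{t-1}$ and using the product form of the history density, with the policy kernels identical in both experiments.
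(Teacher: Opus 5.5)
Your proposal is correct and follows the paper's proof essentially step for step. Like the paper, you run $\pi$ internally, replace each pull of a background arm $k\notin\{i,j\}$ by a discarded pull of arm $i$ plus a simulated draw from $P_{s^{-1}(\zeta_k)}$, argue by induction that the internal history has the original law, and map outputs via $i\mapsto +$, $j\mapsto -$, $?\mapsto ?$, $k\mapsto +$. Your explicit case split on $h$ (equality of the error events when $h=-1$, and the inclusion $\{\hat a_T=j\}\subseteq\{\hat a_T\notin\{i,?\}\}$ when $h=+1$) is simply a more detailed version of the paper's remark that this map can only preserve an error or turn it into a correct sign decision.
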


\begin{proof}
Fix $\xi=(i,j,u,\zeta_{-ij})\in\mathcal C_\rho$. We construct the
modified strategy $\pi'$ that satisfies \eqref{eqn:simulate2} by internally simulating $\pi$.

Whenever the internal copy of $\pi$ samples arm $k\in\{i,j\}$, the strategy
$\pi'$ samples the same arm and feeds the observed reward to the internal
copy. Whenever the internal copy samples an arm $k\notin\{i,j\}$, the
strategy $\pi'$ instead samples arm $i$, discards the resulting reward, and
feeds the internal copy an independent simulated reward drawn from $P_{s^{-1}(\zeta_k)}$.
Conditional on $\Xi=\xi$, the quality parameter of every background arm
$k\notin\{i,j\}$ is known to be $s^{-1}(\zeta_k)$, and its reward
distribution is independent of $(H,\Delta)$. Therefore, for every fixed
$(h,\delta)$, the simulated reward has the same distribution as the reward
that $\pi$ would have observed by sampling arm $k$ in the original
experiment.

It follows inductively over the sampling rounds that the internal history
observed by $\pi$ under $\pi'$ has the same distribution as the history
observed by $\pi$ under
$\mathbb P_{h,\delta}^{\xi,\pi}$. Consequently, the terminal output of the
internal copy has the same distribution as $\hat a_T$ under the original
strategy.

At time $T$, the strategy $\pi'$ maps the terminal output of the internal
copy according to
\[
    i\mapsto +,\qquad
    j\mapsto -,\qquad
    ?\mapsto ?,
\]
and maps every recommendation $k\notin\{i,j\}$ arbitrarily to one of the
two signs, say $+$. This mapping preserves the abstention event, and hence
\[
    \mathbb P_{h,\delta}^{\xi,\pi'}(\hat a'_T=?)
    =
    \mathbb P_{h,\delta}^{\xi,\pi}(\hat a_T=?).
\]

Moreover, on the support of $Q_{\rho,\bar\delta}$, arms $i$ and $j$ are
the unique top-two arms, with $a^\star(+)=i$ and $a^\star(-)=j$. Therefore,
any original recommendation $k\notin\{i,j\}$ is already an undetected
error. Mapping such a recommendation to a sign either preserves that error
or converts it into a correct sign decision. Since the recommendations
$i$ and $j$ are mapped to their corresponding signs, the mapping cannot
turn a correct original recommendation into an incorrect sign decision.
Thus
    $\left\{\hat a'_T=-h\right\}
    \subseteq
    \left\{
        \hat a_T\notin\{a^\star(h),?\}
    \right\}$
under the coupled construction, and consequently
\[
    \mathbb P_{h,\delta}^{\xi,\pi'}(\hat a'_T=-h)
    \le
    \mathbb P_{h,\delta}^{\xi,\pi}
    \bigl(\hat a_T\notin\{a^\star(h),?\}\bigr).
\]
This proves \eqref{eqn:simulate2}. 
\end{proof}


\subsubsection{Step 3. Log-Likelihood Ratio and MLE}

Fix $D>0$, to be chosen later, and set
\[
    \bar\delta_\alpha:=D\alpha,
    \qquad
    L_T:=\frac{\bar\delta_\alpha\sqrt T}{2}
    =
    \frac{D\alpha\sqrt T}{2}.
\]
For all sufficiently small $\alpha$, we have
$\bar\delta_\alpha\le\delta_\rho$.

Fix a realization of the background oracle
$\Xi=\xi=(i,j,u,\zeta_{-ij})\in\mathcal C_\rho$ and a local-pair strategy
$\pi$. Recall that
\[
    \mathbb P_{h,\delta}^{\xi,\pi}
    :=
    \mathbb P_{\mu(\xi,\delta,h)}^\pi,
    \qquad
    h\in\{+1,-1\},
    \quad
    \delta\in[0,\bar\delta_\alpha].
\]
At $\delta=0$, the two signs induce the same parameter vector. We denote
their common law by
    $\mathbb P_0^{\xi,\pi}
    :=
    \mathbb P_{+1,0}^{\xi,\pi}
    =
    \mathbb P_{-1,0}^{\xi,\pi}$.
When the strategy is clear, we suppress the superscript $\pi$.

Consider the experiment where only two arms $i$ or $j$ can be sampled, let
    $H_t^\xi
    :=
    (\xi,A_1,X_1,\ldots,A_t,X_t)$
denote the \emph{oracle history} up to time $t$.
Let $(\mathcal H_T^\xi,\mathscr H_T^\xi)$ denote the measurable space of oracle histories for this two-arm experiment. We write
\[
    \omega
    =
    (\xi,a_1,x_1,\ldots,a_T,x_T)
    \in\mathcal H_T^\xi
\]
for a realization, and let
    $\omega_{t-1}
    :=
    (\xi,a_1,x_1,\ldots,a_{t-1},x_{t-1})$
denote its restriction to the first $t-1$ rounds.

Write
\[
    \theta_0:=\beta(u),
    \qquad
    a_0:=A''(\theta_0).
\]
For $h\in\{+1,-1\}$ and
$\delta\in[0,\bar\delta_\alpha]$, define the \emph{signed rescaled local gap}
    $$r:=h\frac{\delta\sqrt T}{2}
    \in[-L_T,L_T].$$
The natural parameters of the two local arms are then
\[
    \theta_i(r)
    =
    \theta_0+\frac{r}{\sqrt{Ta_0}},
    \qquad
    \theta_j(r)
    =
    \theta_0-\frac{r}{\sqrt{Ta_0}}.
\]

For a realized history $\omega\in\mathcal H_T^\xi$, define
\[
    S_t(\omega)
    :=
    \begin{cases}
        +1, & a_t=i,\\
        -1, & a_t=j.
    \end{cases}
\]
Thus, under the signed rescaled gap $r$, the natural parameter of the arm
sampled at time $t$ is
    $\theta_0+\frac{S_t(\omega)r}{\sqrt{Ta_0}}$.

For $h\in\{+1,-1\}$ and
$\delta\in[0,\bar\delta_\alpha]$, define the log-likelihood ratio of the oracle history relative to the tie law:
\begin{equation} \label{def_loglikelihood}
    \ell_{T,\xi}
    \left(
        r; \omega
    \right)=
    \ell_{T,\xi}
    \left(
        h\frac{\delta\sqrt T}{2};
        \omega
    \right)
    :=
    \log
    \frac{
        \dd\mathbb P_{h,\delta}^{\xi}
    }{
        \dd\mathbb P_0^\xi
    }(\omega).
\end{equation}
This definition is unambiguous at $\delta=0$ because
$\mathbb P_{+1,0}^{\xi}=\mathbb P_{-1,0}^{\xi}$.

\begin{lemma}[Local Likelihood Factorization]
\label{lem:local-likelihood-sufficient-transfer-lower}
For every $h\in\{+1,-1\}$ and
$\delta\in[0,\bar\delta_\alpha]$, let
$r=h\delta\sqrt T/2$. Then
$\mathbb P_{h,\delta}^{\xi}\ll\mathbb P_0^\xi$, and for
$\mathbb P_0^\xi$-almost every
$\omega\in\mathcal H_T^\xi$,
\begin{equation}
\label{eq:exact-local-likelihood-transfer-lower}
\begin{aligned}
    \ell_{T,\xi}(r;\omega)
    =
    \sum_{t=1}^T
    \left[
        \frac{S_t(\omega)r}{\sqrt{Ta_0}}t(x_t)
        -
        A\left(
            \theta_0+
            \frac{S_t(\omega)r}{\sqrt{Ta_0}}
        \right)
        +
        A(\theta_0)
    \right].
\end{aligned}
\end{equation}
\end{lemma}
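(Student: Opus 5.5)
The plan is to write the density of the oracle history explicitly under $\mathbb P_{h,\delta}^{\xi}$ and under the tie law $\mathbb P_0^\xi$, and then divide. The argument mirrors Lemma~\ref{lem:local-lr-W}; only the Gaussian density is replaced by the exponential-family density of Assumption~\ref{ass:regular-expfam}.

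\textbf{Step 1: the history density.} Fix $\xi\in\mathcal C_\rho$ and a local-pair strategy $\pi$ with sampling kernels $\pi_t(a_t\mid\omega_{t-1})$. Under the signed rescaled gap $r$, the arm sampled at time $t$ has natural parameter $\theta_0+S_t(\omega)r/\sqrt{Ta_0}$. This is immediate from the definition of $\theta_i(r),\theta_j(r)$ and the chart in Definition~\ref{def:local-coordinate-transfer-lower}. Indeed, with $\delta=2|r|/\sqrt T$ and $h=\operatorname{sign}(r)$, we have $\theta_i=\beta(u)+h\delta/(2\sqrt{A''(\beta(u))})=\theta_0+r/\sqrt{Ta_0}$, and symmetrically for arm $j$. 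Lemma~\ref{lem:uniform-density-transfer-lower} guarantees that these parameters lie in $\Theta$.

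Conditionally on $\omega_{t-1}$ and $A_t=a_t$, the reward $X_t$ has density $h(x_t)\exp\{\theta t(x_t)-A(\theta)\}$ with respect to $\lambda$, at this value of $\theta$. The joint law of $\omega$ therefore has a density with respect to the product of counting measure on actions and $\lambda^{\otimes T}$ on rewards. This density is the product over $t$ of $\pi_t(a_t\mid\omega_{t-1})$ times the corresponding exponential-family factor. The argument is the chain rule for adaptively collected data, exactly as in the proof of Lemma~\ref{lem:local-lr-W}.

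\textbf{Step 2: the ratio.} The tie law corresponds to $r=0$, so every factor there has natural parameter $\theta_0$. Three simplifications occur when dividing the two densities:
\begin{itemize}
\item The sampling kernels are the same functions of the realized history under both laws, so they cancel.
\item The base densities $h(x_t)$ cancel.
\item What remains is $\prod_t\exp\{(S_tr/\sqrt{Ta_0})\,t(x_t)-A(\theta_0+S_tr/\sqrt{Ta_0})+A(\theta_0)\}$.
\end{itemize}
Taking logarithms gives \eqref{eq:exact-local-likelihood-transfer-lower}.

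\textbf{Step 3: absolute continuity.} The ratio just computed is strictly positive and finite wherever the $\mathbb P_0^\xi$ density is positive. Moreover, each exponential-family density is positive $\lambda$-a.e. on the common support $\{h>0\}$, which does not depend on $\theta$. Hence every set that is $\mathbb P_0^\xi$-null is also $\mathbb P_{h,\delta}^{\xi}$-null, which gives $\mathbb P_{h,\delta}^\xi\ll\mathbb P_0^\xi$. The ratio is then a version of the Radon--Nikodym derivative, and the identity holds $\mathbb P_0^\xi$-a.s.

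\textbf{Main obstacle.} There is no deep step here. The only care needed is measure-theoretic bookkeeping:
\begin{itemize}
\item Fix the dominating measure on the history space.
\item Handle randomized sampling kernels, if any, by absorbing the external randomization into the history.
\item Verify that the natural parameters stay in the open set $\Theta$ for all $|r|\le L_T$, which holds because $\bar\delta_\alpha\le\delta_\rho$.
\end{itemize}
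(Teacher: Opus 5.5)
Your proposal is correct and follows essentially the same route as the paper. The paper likewise writes the history density as the product of the sampling kernels and the exponential-family factors at natural parameter $\theta_0+S_t(\omega)r/\sqrt{Ta_0}$, cancels the kernels and carrier densities against the tie law, and takes logarithms. Your explicit check of the chart identity $\theta_i=\theta_0+r/\sqrt{Ta_0}$, and your absolute-continuity argument via the $\theta$-independent support $\{h>0\}$, are details the paper leaves implicit; they do not change the argument.
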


\begin{proof}
Let $\pi_t(a_t\mid\xi,\omega_{t-1})$ denote the sampling kernel used by
the strategy at time $t$, evaluated at the realized history
$\omega_{t-1}$. This kernel is fixed by the strategy and is the same
measurable function of the realized past under every value of
$(h,\delta)$.

For a natural parameter $\theta\in\Theta$, write
\[
    p_\theta(x)
    :=
    h(x)\exp\{\theta t(x)-A(\theta)\}.
\] 
Under $\mathbb P_{h,\delta}^{\xi}$, where
$r=h\delta\sqrt T/2$, the density of the realized history $\omega$ is
\[
\begin{aligned}
    p_{h,\delta}^\xi(\omega)
    =
    \prod_{t=1}^T
        \pi_t(a_t\mid\xi,\omega_{t-1})
    \prod_{t=1}^T
        p_{\theta_0+S_t(\omega)r/\sqrt{Ta_0}}(x_t).
\end{aligned}
\]
Under the \emph{tie law} (where $\delta=0$),
\[
    p_0^\xi(\omega)
    =
    \prod_{t=1}^T
        \pi_t(a_t\mid\xi,\omega_{t-1})
    \prod_{t=1}^T
        p_{\theta_0}(x_t).
\]
The sampling kernels $\pi_t$ and carrier densities $h$ cancel in the likelihood ratio.
Therefore,
\[
\begin{aligned}
    \log
    \frac{
        p_{h,\delta}^\xi(\omega)
    }{
        p_0^\xi(\omega)
    }
    =
    \sum_{t=1}^T
    \left[
        \frac{S_t(\omega)r}{\sqrt{Ta_0}}t(x_t)
        -
        A\left(
            \theta_0+
            \frac{S_t(\omega)r}{\sqrt{Ta_0}}
        \right)
        +
        A(\theta_0)
    \right],
\end{aligned}
\]
which proves
\eqref{eq:exact-local-likelihood-transfer-lower}.

\end{proof}

We use the likelihood function to define a scalar statistic that orders full histories. For $\omega\in\mathcal H_T^\xi$, let
\begin{equation}
\label{eq:local-mle-transfer-lower}
    V_{T,\xi}(\omega)
    :=
    \argmax_{|r|\le L_T}
        \ell_{T,\xi}(r;\omega)
\end{equation}
be the bounded local maximum-likelihood estimator of the signed rescaled gap $r=h\frac{\delta\sqrt T}{2}$.
Differentiating \eqref{eq:exact-local-likelihood-transfer-lower} gives
\[
\begin{aligned}
    \ell_{T,\xi}'(r;\omega)
    =
    \frac{1}{\sqrt{Ta_0}}
    \sum_{t=1}^T
        S_t(\omega)
        \left[
            t(x_t)
            -
            A'\left(
                \theta_0+
                \frac{S_t(\omega)r}{\sqrt{Ta_0}}
            \right)
        \right],
\end{aligned}
\]
and
\begin{equation}
\label{eq:exact-curvature-transfer-lower}
    -\ell_{T,\xi}''(r;\omega)
    =
    \frac{1}{Ta_0}
    \sum_{t=1}^T
        A''\left(
            \theta_0+
            \frac{S_t(\omega)r}{\sqrt{Ta_0}}
        \right)
    >
    0.
\end{equation}
Hence $\ell_{T,\xi}(\cdot;\omega)$ is strictly concave on
$[-L_T,L_T]$, and its maximizer is unique. Since
$\ell_{T,\xi}(r;\omega)$ is measurable in $\omega$ and continuous in $r$,
the map $\omega\mapsto V_{T,\xi}(\omega)$ is measurable.

\subsubsection{Step 4. The Resulting Bayesian Flat Sign Experiment}

Recall that we defined
\[
    G:=\frac{\Delta\sqrt T}{2},
    \qquad
    L_T:=\frac{\bar\delta_\alpha\sqrt T}{2}.
\]
By the coordinate factorization of $Q_{\rho,\bar\delta_\alpha}$ and the
change of variables
\[
    g=\frac{\delta\sqrt T}{2},
    \qquad
    \delta=\frac{2g}{\sqrt T},
    \qquad
    \dd\delta=\frac{2}{\sqrt T}\,\dd g,
\]
we have
\[
    Q_{\rho,\bar\delta_\alpha}
    \left(
        \Xi\in\dd\xi,\,
        G\in\dd g,\,
        H=h
    \right)
    =
    \frac{2}{\sqrt T}
    \Lambda_\rho(\dd\xi)\,\dd g,
    \qquad
    h\in\{+1,-1\},
\]
for $g\in[0,L_T]$.

Fix $\xi\in\mathcal C_\rho$. Recall $(\mathcal H_T^\xi,\mathscr H_T^\xi)$ is the oracle history space, and 
$\omega\in\mathcal H_T^\xi$ denotes a realization. For every
$E\in\mathscr H_T^\xi$, define the \emph{unnormalized marginal measures of the history}, corresponding to two values of the sign $h$,
\begin{equation}
\label{eq:sign-mixture-measures-transfer-lower}
\begin{aligned}
    \mathbb A_{T,\xi}(E)
    :=
    \int_0^{L_T}
        \mathbb P_{+1,\,2g/\sqrt T}^{\xi}(E)
    \,\dd g,\quad
    \mathbb B_{T,\xi}(E)
    :=
    \int_0^{L_T}
        \mathbb P_{-1,\,2g/\sqrt T}^{\xi}(E)
    \,\dd g.
\end{aligned}
\end{equation}
Let
\[
    \mathbb M_{T,\xi}
    :=
    \mathbb A_{T,\xi}
    +
    \mathbb B_{T,\xi}.
\]
In particular,
\[
    \mathbb A_{T,\xi}(\mathcal H_T^\xi)
    =
    \mathbb B_{T,\xi}(\mathcal H_T^\xi)
    =
    L_T,
    \qquad
    \mathbb M_{T,\xi}(\mathcal H_T^\xi)
    =
    2L_T.
\]
and define their \emph{overlap measure}
    $\mathbb N_{T,\xi}
    :=
    \mathbb A_{T,\xi}\wedge\mathbb B_{T,\xi}$
by
\[
    \frac{\dd\mathbb N_{T,\xi}}
         {\dd\mathbb M_{T,\xi}}
    :=
    \min\left\{
        \frac{\dd\mathbb A_{T,\xi}}
             {\dd\mathbb M_{T,\xi}},
        \frac{\dd\mathbb B_{T,\xi}}
             {\dd\mathbb M_{T,\xi}}
    \right\}.
\]

Recall that, from \eqref{def_loglikelihood}, for $g\in[0,L_T]$,
\[
    \frac{
        \dd\mathbb P_{+1,\,2g/\sqrt T}^{\xi}
    }{
        \dd\mathbb P_0^\xi
    }(\omega)
    =
    \exp\{\ell_{T,\xi}(g;\omega)\},
\]
whereas
\[
    \frac{
        \dd\mathbb P_{-1,\,2g/\sqrt T}^{\xi}
    }{
        \dd\mathbb P_0^\xi
    }(\omega)
    =
    \exp\{\ell_{T,\xi}(-g;\omega)\}.
\]
Tonelli's theorem therefore gives
\begin{equation}
\label{eq:mixture-likelihood-densities-transfer-lower}
\begin{aligned}
    \frac{\dd\mathbb A_{T,\xi}}
         {\dd\mathbb P_0^\xi}(\omega)
    &=
    \int_0^{L_T}
        \exp\{\ell_{T,\xi}(g;\omega)\}
    \,\dd g
    =\int_{0}^{L_T}
        \exp\{\ell_{T,\xi}(r;\omega)\}
    \,\dd r\\
    \frac{\dd\mathbb B_{T,\xi}}
         {\dd\mathbb P_0^\xi}(\omega)
    &=
    \int_0^{L_T}
        \exp\{\ell_{T,\xi}(-g;\omega)\}
    \,\dd g
    =
    \int_{-L_T}^{0}
        \exp\{\ell_{T,\xi}(r;\omega)\}
    \,\dd r.
\end{aligned}
\end{equation}
Consequently,
\[
    \frac{\dd\mathbb M_{T,\xi}}
         {\dd\mathbb P_0^\xi}(\omega)
    =
    \int_{-L_T}^{L_T}
        \exp\{\ell_{T,\xi}(r;\omega)\}
    \,\dd r,
\]
and
\[
    \frac{\dd\mathbb N_{T,\xi}}
         {\dd\mathbb P_0^\xi}(\omega)
    =
    \min\left\{
        \int_0^{L_T}
            e^{\ell_{T,\xi}(r;\omega)}
        \,\dd r,\,
        \int_{-L_T}^{0}
            e^{\ell_{T,\xi}(r;\omega)}
        \,\dd r
    \right\}.
\]
In particular, wherever the denominator is positive,
\begin{equation}
\label{eq:overlap-ratio-likelihood-transfer-lower}
\begin{aligned}
    \frac{\dd\mathbb N_{T,\xi}}
         {\dd\mathbb M_{T,\xi}}(\omega)
    =
    \frac{
        \min\left\{
            \displaystyle
            \int_0^{L_T}
                e^{\ell_{T,\xi}(r;\omega)}
            \,\dd r,\,
            \displaystyle
            \int_{-L_T}^{0}
                e^{\ell_{T,\xi}(r;\omega)}
            \,\dd r
        \right\}
    }{
        \displaystyle
        \int_{-L_T}^{L_T}
            e^{\ell_{T,\xi}(r;\omega)}
        \,\dd r
    }.
\end{aligned}
\end{equation}

Let
    $\psi^\xi
    =
    \bigl(
        \psi_+^\xi,\psi_-^\xi,\psi_?^\xi
    \bigr)$
denote the randomized terminal rule of an oracle local-pair strategy. Thus,
for every $\omega\in\mathcal H_T^\xi$,
\[
    \psi_+^\xi(\omega)
    +
    \psi_-^\xi(\omega)
    +
    \psi_?^\xi(\omega)
    =
    1,
\]
where $\psi_+^\xi(\omega)$, $\psi_-^\xi(\omega)$, and
$\psi_?^\xi(\omega)$ are the probabilities of recommending $+$,
recommending $-$, and abstaining after observing the history $\omega$.

\begin{lemma}[Risk Representation in the Flat Sign Experiment]
\label{lem:flat-sign-risk-representation-transfer-lower}
For every terminal rule $\psi$, its abstention mass
under $Q_{\rho,\bar\delta_\alpha}$ is
\begin{equation}
\label{eq:flat-sign-abstention-transfer-lower}
    \mathcal A_T^{Q_{\rho,\bar\delta_\alpha}}(\psi)
    =
    \frac{2}{\sqrt T}
    \int_{\mathcal C_\rho}
    \int_{\mathcal H_T^\xi}
        \psi_?^\xi(\omega)
    \,\mathbb M_{T,\xi}(\dd\omega)\,
    \Lambda_\rho(\dd\xi).
\end{equation}
Its undetected error mass is
\begin{equation}
\label{eq:flat-sign-error-transfer-lower}
\begin{aligned}
    \mathcal E_T^{Q_{\rho,\bar\delta_\alpha}}(\psi)
    =
    \frac{2}{\sqrt T}
    \int_{\mathcal C_\rho}
    \bigg[
        \int_{\mathcal H_T^\xi}
            \psi_-^\xi(\omega)
        \,\mathbb A_{T,\xi}(\dd\omega)
        +
        \int_{\mathcal H_T^\xi}
            \psi_+^\xi(\omega)
        \,\mathbb B_{T,\xi}(\dd\omega)
    \bigg]
    \Lambda_\rho(\dd\xi).
\end{aligned}
\end{equation}
Moreover,
\begin{equation}
\label{eq:flat-sign-overlap-lower-transfer-lower}
    \mathcal E_T^{Q_{\rho,\bar\delta_\alpha}}(\psi)
    \ge
    \frac{2}{\sqrt T}
    \int_{\mathcal C_\rho}
    \int_{\mathcal H_T^\xi}
        \bigl(1-\psi_?^\xi(\omega)\bigr)
    \,\mathbb N_{T,\xi}(\dd\omega)\,
    \Lambda_\rho(\dd\xi).
\end{equation}
\end{lemma}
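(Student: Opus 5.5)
The argument mirrors the Gaussian computation in Section~\ref{subsec:flat-sign-experiment}, with the scalar statistic $V$ replaced by the full oracle history $\omega$. The steps are the abstention mass, then the error mass, then a pointwise lower bound via the overlap density.

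\textbf{Abstention mass.} Start from the definition of $\mathcal A_T^{Q_{\rho,\bar\delta_\alpha}}(\psi)$ in Lemma~\ref{lem:risk-domination-transfer-lower}. Insert the coordinate factorization \eqref{eq:Q-coordinate-factorization-transfer-lower} of Lemma~\ref{lem:q-rho-properties-transfer-lower}, written in the variables $(\Xi,G,H)$ with density $\frac{2}{\sqrt T}\Lambda_\rho(\dd\xi)\,\dd g$. This gives
\[
\mathcal A_T^{Q_{\rho,\bar\delta_\alpha}}(\psi)=\frac{2}{\sqrt T}\int_{\mathcal C_\rho}\sum_{h\in\{\pm1\}}\int_0^{L_T}\mathbb E^{\xi}_{h,2g/\sqrt T}\bigl[\psi^\xi_?(\omega)\bigr]\,\dd g\,\Lambda_\rho(\dd\xi).
\]
By the oracle reduction (Lemma~\ref{lem:oracle-reduction-transfer-lower}), the local-pair strategy's law given $(\xi,h,\delta)$ is $\mathbb P^{\xi}_{h,\delta}$, so this representation is legitimate. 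The map $g\mapsto \mathbb E^\xi_{h,2g/\sqrt T}[\psi_?^\xi]$ is measurable: it equals $\int \psi^\xi_? e^{\ell_{T,\xi}}\,\dd\mathbb P_0^\xi$, which is jointly measurable by Lemma~\ref{lem:local-likelihood-sufficient-transfer-lower}. We may therefore apply Tonelli to swap $\int\dd g$ with the expectation. By the definition \eqref{eq:sign-mixture-measures-transfer-lower}, the $h=+1$ term integrates against $\mathbb A_{T,\xi}$ and the $h=-1$ term against $\mathbb B_{T,\xi}$. Their sum is $\mathbb M_{T,\xi}$, which yields \eqref{eq:flat-sign-abstention-transfer-lower}.

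\textbf{Error mass.} The same decomposition applies, now to the undetected error. On the support of $Q_{\rho,\bar\delta_\alpha}$, arms $i,j$ are the unique top two by Lemma~\ref{lem:uniform-density-transfer-lower}. Hence with sign $h=+1$ an undetected error is the output $-$, and with $h=-1$ it is the output $+$. This turns $\sum_h$ into the $\psi_-^\xi$ term integrated against $\mathbb A_{T,\xi}$ plus the $\psi_+^\xi$ term integrated against $\mathbb B_{T,\xi}$, which is \eqref{eq:flat-sign-error-transfer-lower}. The tie set $\{\Delta=0\}$ is $Q$-null, so the ambiguity there is harmless.

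\textbf{Overlap bound.} Since $\mathbb A_{T,\xi},\mathbb B_{T,\xi}\ll\mathbb M_{T,\xi}$ (both are dominated by their sum), rewrite both integrals against $\mathbb M_{T,\xi}$ with densities $a=\dd\mathbb A_{T,\xi}/\dd\mathbb M_{T,\xi}$ and $b=\dd\mathbb B_{T,\xi}/\dd\mathbb M_{T,\xi}$. Pointwise, $\psi_-a+\psi_+b\ge(\psi_-+\psi_+)\min\{a,b\}=(1-\psi_?)\min\{a,b\}$, exactly as in Lemma~\ref{lem:flat-pointwise-sign}. By definition $\min\{a,b\}$ is the density of $\mathbb N_{T,\xi}$ with respect to $\mathbb M_{T,\xi}$. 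Integrating over $\omega$ and then over $\xi$ gives \eqref{eq:flat-sign-overlap-lower-transfer-lower}.

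\textbf{Main obstacle.} The only delicate step is measure-theoretic. We need joint measurability in $(\xi,g,\omega)$ to justify Tonelli and the outer $\Lambda_\rho$ integration, along with measurability of $\xi\mapsto\psi^\xi$. I would handle this via the explicit likelihood formula \eqref{eq:exact-local-likelihood-transfer-lower}, which is continuous in $(u,r)$, together with the assumption that the oracle strategy's kernels are jointly measurable in $\xi$.
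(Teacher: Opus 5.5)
Your proposal is correct and follows essentially the same route as the paper. The coordinate factorization of $Q_{\rho,\bar\delta_\alpha}$ with the substitution $g=\delta\sqrt T/2$ identifies the two sign contributions with $\mathbb A_{T,\xi}$ and $\mathbb B_{T,\xi}$, which sum to $\mathbb M_{T,\xi}$; the overlap bound then comes from the same pointwise inequality $\psi_-^\xi a_{T,\xi}+\psi_+^\xi b_{T,\xi}\ge(1-\psi_?^\xi)\min\{a_{T,\xi},b_{T,\xi}\}$, with densities taken relative to $\mathbb M_{T,\xi}$. Your explicit attention to joint measurability for the Tonelli swap is extra care that the paper leaves implicit.
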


\begin{proof}
For fixed $(\xi,\delta,h)$, the conditional abstention probability induced
by $\psi^\xi$ is
\begin{equation} \label{integral_of_abs}
    \mathbb P_{h,\delta}^{\xi}(\hat a_T=?)
    =
    \int_{\mathcal H_T^\xi}
        \psi_?^\xi(\omega)
    \,\mathbb P_{h,\delta}^{\xi}(\dd\omega).
\end{equation}\
Hence we can write
\[
\begin{aligned}
    \mathcal A_T^{Q_{\rho,\bar\delta_\alpha}}(\psi)
    &= \int_{(\mathcal I)^K} \mathbb P_\mu(\hat a_T =?) Q_{\rho, \bar\delta_\alpha}(\dd \mu) \\
    &\stackrel{(a)}{=}
    \sum_{h\in\{+1,-1\}}
    \int_{\mathcal C_\rho}
    \int_0^{\bar\delta_\alpha}
        \mathbb P_{h,\delta}^{\xi}
        (\hat a_T=?)
    \,\dd\delta\,
    \Lambda_\rho(\dd\xi)\\
    &\stackrel{(b)}{=}
    \sum_{h\in\{+1,-1\}}
    \int_{\mathcal C_\rho}
    \int_0^{\bar\delta_\alpha}
    \int_{\mathcal H_T^\xi}
        \psi_?^\xi(\omega)
    \,\mathbb P_{h,\delta}^{\xi}(\dd\omega)
    \,\dd\delta\,
    \Lambda_\rho(\dd\xi).\\
    &\stackrel{(c)}{=}
    \frac{2}{\sqrt T}
    \int_{\mathcal C_\rho}
    \int_0^{L_T}
    \int_{\mathcal H_T^\xi}
        \psi_?^\xi(\omega)
    \,\mathbb P_{+1,\,2g/\sqrt T}^{\xi}(\dd\omega)
    \,\dd g\,
    \Lambda_\rho(\dd\xi)\\
    &\quad+
    \frac{2}{\sqrt T}
    \int_{\mathcal C_\rho}
    \int_0^{L_T}
    \int_{\mathcal H_T^\xi}
        \psi_?^\xi(\omega)
    \,\mathbb P_{-1,\,2g/\sqrt T}^{\xi}(\dd\omega)
    \,\dd g\,
    \Lambda_\rho(\dd\xi).
\end{aligned}
\]
where $(a)$ is due to the coordinate factorization of $Q_{\rho, \bar\delta_\alpha}$~\eqref{eq:Q-coordinate-factorization-transfer-lower}, $(b)$ is due to~\eqref{integral_of_abs} and $(c)$ is due to the change of variable with $g=\delta\sqrt T/2$ separately for the two signs. Hence by the definitions of $\mathbb M_{T,\xi}$, this is \eqref{eq:flat-sign-abstention-transfer-lower}.

An undetected sign error occurs when the rule recommends $-$ under
$H=+1$, or recommends $+$ under $H=-1$. Repeating similar arguments as for~\eqref{eq:flat-sign-abstention-transfer-lower} with the two error events gives \eqref{eq:flat-sign-error-transfer-lower}. Note the tie case $\delta=0$ does not
affect the calculation because it is null with respect to the
$\delta$-integration.

To prove the lower bound, write
    $a_{T,\xi}
    :=
    \frac{\dd\mathbb A_{T,\xi}}
         {\dd\mathbb M_{T,\xi}}$,
    and
    $b_{T,\xi}
    :=
    \frac{\dd\mathbb B_{T,\xi}}
         {\dd\mathbb M_{T,\xi}}$.
Since
$\mathbb M_{T,\xi}=\mathbb A_{T,\xi}+\mathbb B_{T,\xi}$,
\[
    a_{T,\xi}(\omega)+b_{T,\xi}(\omega)
    =
    1
    \qquad
    \mathbb M_{T,\xi}\text{-a.e.}
\]
For $\mathbb M_{T,\xi}$-almost every $\omega$,
\[
\begin{aligned}
    \psi_-^\xi(\omega)a_{T,\xi}(\omega)
    +
    \psi_+^\xi(\omega)b_{T,\xi}(\omega)
    &\ge
    \bigl(
        \psi_-^\xi(\omega)
        +
        \psi_+^\xi(\omega)
    \bigr)
    \min\{
        a_{T,\xi}(\omega),
        b_{T,\xi}(\omega)
    \}\\
    &=
    \bigl(
        1-\psi_?^\xi(\omega)
    \bigr)
    \frac{\dd\mathbb N_{T,\xi}}
         {\dd\mathbb M_{T,\xi}}(\omega).
\end{aligned}
\]
Integrating with respect to $\mathbb M_{T,\xi}$, and then with respect to
$\Lambda_\rho$, proves
\eqref{eq:flat-sign-overlap-lower-transfer-lower}. 
\end{proof}

In particular, if
\[
    \mathcal A_T^{Q_{\rho,\bar\delta_\alpha}}(\psi)
    \le
    \alpha,
\]
then \eqref{eq:flat-sign-abstention-transfer-lower} implies
\begin{equation}
\label{eq:flat-sign-abstention-budget-transfer-lower}
    \int_{\mathcal C_\rho}
    \int_{\mathcal H_T^\xi}
        \psi_?^\xi(\omega)
    \,\mathbb M_{T,\xi}(\dd\omega)\,
    \Lambda_\rho(\dd\xi)
    \le
    \frac{\alpha\sqrt T}{2}.
\end{equation}
Thus the remaining problem is to study, as functions of the full history
$\omega$, the total unnormalized marginal measure of the history $\mathbb M_{T,\xi}$ and the overlap ratio
    $\frac{\dd\mathbb N_{T,\xi}}
         {\dd\mathbb M_{T,\xi}}(\omega)$.
Next, we compare this overlap ratio with
$\Phi(-|V_{T,\xi}(\omega)|)$ and use the local MLE
$V_{T,\xi}(\omega)$ in the bathtub argument.

\subsubsection{Step 5. Solving the Bayesian Flat Sign Experiment}

Intuitively, the measures $\mathbb M_{T,\xi}$ and $\mathbb N_{T,\xi}$ determine,
respectively, the available abstention mass and the unavoidable undetected error mass. We now compare these measures with their Gaussian analogues through the
local MLE $V_{T,\xi}$. We use the abbreviations
\[
    S_t:=S_t(H_T^\xi),
    \qquad
    \ell_{T,\xi}(r)
    :=
    \ell_{T,\xi}(r;H_T^\xi),
    \qquad
    V_{T,\xi}
    :=
    V_{T,\xi}(H_T^\xi).
\]

For $r\in[-L_T,L_T]$, define
\[
    \mathrm{sgn}(r)
    :=
    \begin{cases}
        +1, & r\ge0,\\
        -1, & r<0,
    \end{cases}
    \qquad
    \delta_T(r)
    :=
    \frac{2|r|}{\sqrt T}.
\]
At $r=0$, the choice of $\mathrm{sgn}(r)$ is immaterial because both signs induce the
same tie law.

\begin{lemma}[Uniform Curvature and Local-MLE Tightness]
\label{lem:curvature-tightness-transfer-lower}
Fix $\rho\in(0,1)$ and $D<\infty$. There exist constants
$\alpha_0>0$ and $C,C_1<\infty$ such that, for every
$0<\alpha\le\alpha_0$, the following statements hold uniformly over
$\xi\in\mathcal C_\rho$ and all local pair strategies.

First, for every history realization $\omega$,
\begin{equation}
\label{eq:curvature-bound-transfer-lower}
    1-C\alpha
    \le
    -\ell_{T,\xi}''(r;\omega)
    \le
    1+C\alpha,
    \qquad
    |r|\le L_T.
\end{equation}
In particular, $\ell_{T,\xi}(\cdot;\omega)$ is strictly concave on
$[-L_T,L_T]$, and the local MLE $V_{T,\xi}(\omega)$ is unique.

Second, for every $R>0$ and every $T$ such that $R\le L_T/2$,
\begin{equation}
\label{eq:local-mle-tightness-transfer-lower}
    \sup_{\xi\in\mathcal C_\rho}
    \sup_{|r|\le L_T/2}
    \mathbb P_{\mathrm{sgn}(r),\,\delta_T(r)}^\xi
    \left(
        |V_{T,\xi}-r|>R
    \right)
    \le
    \frac{C_1}{R^2}.
\end{equation}
\end{lemma}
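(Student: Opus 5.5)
The plan has two parts: a deterministic bound on the curvature of the log-likelihood, and a second-moment bound on the local MLE. For the first part, I will use the exact formula \eqref{eq:exact-curvature-transfer-lower}:
\[
-\ell_{T,\xi}''(r;\omega)=\frac1T\sum_{t=1}^T \frac{A''(\theta_0+S_t r/\sqrt{Ta_0})}{A''(\theta_0)} .
\]
For $|r|\le L_T=D\alpha\sqrt T/2$, every argument satisfies
\[
\bigl|S_t r/\sqrt{Ta_0}\bigr|\le \frac{D\alpha}{2\sqrt{a_0}} .
\]
By compactness of $\mathcal C_\rho$, the centers $\theta_0=\beta(u)$ range over a compact subset $K_0\subset\Theta$, since $\beta$ is continuous and the image of compacts is compact by Assumption~\ref{ass:regular-expfam}. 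On $K_0$, the quantity $a_0=A''(\theta_0)$ is bounded above and below by positive constants.

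Choose $\alpha_0$ so that the $D\alpha_0/(2\sqrt{\min a_0})$-neighbourhood $K_1$ of $K_0$ is a compact subset of $\Theta$. Since $A\in C^4$, the function $A''$ is Lipschitz on $K_1$ with some constant $L''$. Each summand then differs from $1$ by at most $L''D\alpha/(2 a_0^{3/2})$, and this gives \eqref{eq:curvature-bound-transfer-lower} with a constant $C$ that is uniform in $\xi$, $\omega$ and the strategy. Strict concavity and uniqueness of the MLE follow immediately.

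For the tightness bound \eqref{eq:local-mle-tightness-transfer-lower}, I will use the standard deterministic comparison for strongly concave functions. Fix a true $r$ with $|r|\le L_T/2$. If $|V-r|>R$, then, because $V$ maximizes over $[-L_T,L_T]$ and $r$ is interior, concavity together with the lower curvature bound $1-C\alpha\ge 1/2$ (after shrinking $\alpha_0$) gives
\[
|\ell'_{T,\xi}(r)|\ge (1-C\alpha)\min\{|V-r|,\,L_T/2\}\ge R/2 .
\]
This holds because the derivative at $r$ must have the sign pointing toward $V$ and magnitude at least the curvature times the distance. The restriction $R\le L_T/2$ handles the case where the maximizer sits on the boundary.

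It therefore suffices to bound $\mathbb E[\ell'_{T,\xi}(r)^2]$ under $\mathbb P^\xi_{\mathrm{sgn}(r),\delta_T(r)}$. At the true parameter, the score is
\[
\ell'_{T,\xi}(r)=\frac{1}{\sqrt{Ta_0}}\sum_t S_t\bigl[t(x_t)-A'(\theta_t)\bigr],
\]
where $\theta_t=\theta_0+S_t r/\sqrt{Ta_0}$ is exactly the natural parameter of the arm pulled at time $t$. I need to check that the sign convention matches: under $h=\mathrm{sgn}(r)$ and $\delta=\delta_T(r)$, arm $i$ has parameter $\theta_0+r/\sqrt{Ta_0}$, so it does. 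Each increment $S_t[t(x_t)-A'(\theta_t)]$ is therefore a martingale difference with conditional variance $A''(\theta_t)$, regardless of the adaptive choice of $S_t$.

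By orthogonality of the martingale increments,
\[
\mathbb E[\ell'^2]=\frac{1}{Ta_0}\,\mathbb E\sum_t A''(\theta_t)\le 1+C\alpha\le 2 .
\]
Chebyshev's inequality then gives
\[
\mathbb P\bigl(|V-r|>R\bigr)\le \mathbb P\bigl(|\ell'(r)|\ge R/2\bigr)\le \frac{8}{R^2},
\]
so one can take $C_1=8$.

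The step I expect to need the most care is the uniformity bookkeeping. The neighbourhood $K_1$ must stay inside $\Theta$ uniformly over $\mathcal C_\rho$. The lower curvature bound must be applied only between $r$ and $V$, both of which lie in $[-L_T,L_T]$, so that the Taylor and concavity comparison is valid. The argument also needs the fact that the score has finite variance under an adaptive strategy; this holds because $t(x_t)$ has all moments on compact subsets of the natural parameter space. None of these constants depends on the strategy, on $\xi$, or on $T$, which is exactly what the statement requires.
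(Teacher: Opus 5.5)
Your proposal is correct and follows essentially the same route as the paper: a Lipschitz bound on $A''$ over a compact neighbourhood of the centres $\beta(u)$ gives the curvature estimate. The tightness bound then comes from the martingale second moment of the score at the true signed gap, the strong-concavity link between $|V_{T,\xi}-r|$ and $|\ell'_{T,\xi}(r)|$, and Chebyshev's inequality. The only cosmetic differences are that the paper compares $\ell'_{T,\xi}(r\pm R)$ with $\ell'_{T,\xi}(r)$ instead of using the first-order condition at $V_{T,\xi}$, and that you obtain the explicit constant $C_1=8$ by requiring $C\alpha_0\le 1/2$.
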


\begin{proof}
Fix $\xi=(i,j,u,\zeta_{-ij})\in\mathcal C_\rho$, and recall that
    $\theta_0=\beta(u)$
    and
    $a_0=A''(\theta_0)$.
By \eqref{eq:exact-curvature-transfer-lower},
\[
    -\ell_{T,\xi}''(r;\omega)
    =
    \frac{1}{Ta_0}
    \sum_{t=1}^T
        A''\left(
            \theta_0+
            \frac{S_t(\omega)r}{\sqrt{Ta_0}}
        \right).
\]

Because $\mathcal C_\rho$ is compact, the centers $\theta_0=\beta(u)$
range over a compact subset of $\Theta$, and $\inf_{\xi\in\mathcal C_\rho}a_0>0$.
Moreover, for $|r|\le L_T$,
    $\left|
        \frac{r}{\sqrt{Ta_0}}
    \right|
    \le
    C_D\alpha$
uniformly over $\xi\in\mathcal C_\rho$. Thus, for all sufficiently small
$\alpha$, every perturbed natural parameter
    $\theta_0+\frac{S_t(\omega)r}{\sqrt{Ta_0}}$
belongs to a common compact subset of $\Theta$. Since $A''$ is locally
Lipschitz on this compact set, there exists $C<\infty$ such that
\[
    \left|
        \frac{
            A''\left(
                \theta_0+
                \frac{S_t(\omega)r}{\sqrt{Ta_0}}
            \right)
        }{
            A''(\theta_0)
        }
        -1
    \right|
    \le
    C\alpha.
\]
Averaging over $t$ proves
\eqref{eq:curvature-bound-transfer-lower}. By decreasing $\alpha_0$ if
necessary, we may assume $C\alpha_0<1$, so the likelihood is strictly
concave.

We next prove \eqref{eq:local-mle-tightness-transfer-lower}. Fix
$r\in[-L_T/2,L_T/2]$ and consider the experiment under
$\mathbb P_{\mathrm{sgn}(r),\,\delta_T(r)}^\xi$. Define the score at the true signed
parameter by
\[
    U_{T,\xi}(r)
    :=
    \ell_{T,\xi}'(r)
    =
    \sum_{t=1}^T D_t(r),
\]
where
\[
    D_t(r)
    :=
    \frac{S_t}{\sqrt{Ta_0}}
    \left[
        t(X_t)
        -
        A'\left(
            \theta_0+
            \frac{S_t r}{\sqrt{Ta_0}}
        \right)
    \right].
\]
Let
    $\mathcal G_t^\xi
    :=
    \sigma\bigl(
        \xi,A_1,X_1,\ldots,A_{t-1},X_{t-1},A_t
    \bigr)$
denote the information available after choosing $A_t$ but before observing
$X_t$. Conditional on $\mathcal G_t^\xi$, the reward has natural parameter
\[
    \theta_0+\frac{S_t r}{\sqrt{Ta_0}}.
\]
Therefore,
\[
    \mathbb E_{\mathrm{sgn}(r),\,\delta_T(r)}^\xi
    \left[
        D_t(r)\mid\mathcal G_t^\xi
    \right]
    =
    0
\]
and
\[
    \mathbb E_{\mathrm{sgn}(r),\,\delta_T(r)}^\xi
    \left[
        D_t(r)^2\mid\mathcal G_t^\xi
    \right]
    =
    \frac{
        A''\left(
            \theta_0+
            \frac{S_t r}{\sqrt{Ta_0}}
        \right)
    }{
        Ta_0
    }.
\]
Moreover, for $s<t$, the variable $D_s(r)$ is
$\mathcal G_t^\xi$-measurable. Hence
\[
\begin{aligned}
    \mathbb E_{\mathrm{sgn}(r),\,\delta_T(r)}^\xi
    \left[
        D_s(r)D_t(r)
    \right]
    &=
    \mathbb E_{\mathrm{sgn}(r),\,\delta_T(r)}^\xi
    \left[
        D_s(r)
        \mathbb E_{\mathrm{sgn}(r),\,\delta_T(r)}^\xi
        \left[
            D_t(r)\mid\mathcal G_t^\xi
        \right]
    \right]\\
    &=0.
\end{aligned}
\]
Thus the martingale increments are orthogonal in $L^2$, and therefore
\[
\begin{aligned}
    \mathbb E_{\mathrm{sgn}(r),\,\delta_T(r)}^\xi
    \left[
        U_{T,\xi}(r)^2
    \right]
    &=
    \sum_{t=1}^T
    \mathbb E_{\mathrm{sgn}(r),\,\delta_T(r)}^\xi
    \left[
        D_t(r)^2
    \right]\\
    &=
    \mathbb E_{\mathrm{sgn}(r),\,\delta_T(r)}^\xi
    \left[
        \sum_{t=1}^T
        \mathbb E_{\mathrm{sgn}(r),\,\delta_T(r)}^\xi
        \left[
            D_t(r)^2\mid\mathcal G_t^\xi
        \right]
    \right]\\
    &=
    \mathbb E_{\mathrm{sgn}(r),\,\delta_T(r)}^\xi
    \left[
        \frac{1}{Ta_0}
        \sum_{t=1}^T
        A''\left(
            \theta_0+
            \frac{S_t r}{\sqrt{Ta_0}}
        \right)
    \right]\\
    &\stackrel{(a)}{=} 
    \mathbb E_{\mathrm{sgn}(r),\,\delta_T(r)}^\xi
    \left[
        -\ell_{T,\xi}''(r)
    \right]\\
    &\stackrel{(b)}{\le} 
    1+C\alpha,
\end{aligned}
\]
where $(a)$ follows from
\eqref{eq:exact-curvature-transfer-lower}, and $(b)$ follows from \eqref{eq:curvature-bound-transfer-lower}.

Let $R\le L_T/2$. If $V_{T,\xi}\ge r+R$, then concavity implies
$\ell_{T,\xi}'(r+R)\ge0$. Hence,
\[
\begin{aligned}
    0
    &\le
    \ell_{T,\xi}'(r+R)\\
    &=
    \ell_{T,\xi}'(r)
    +
    \int_r^{r+R}
        \ell_{T,\xi}''(v)
    \,\dd v\\
    &\le
    U_{T,\xi}(r)
    -
    (1-C\alpha)R,
\end{aligned}
\]
so that
\[
    U_{T,\xi}(r)
    \ge
    (1-C\alpha)R.
\]
Similarly,
\[
    V_{T,\xi}\le r-R
    \quad\Longrightarrow\quad
    U_{T,\xi}(r)
    \le
    -(1-C\alpha)R.
\]
Consequently,
\[
    |V_{T,\xi}-r|>R
    \quad\Longrightarrow\quad
    |U_{T,\xi}(r)|
    \ge
    (1-C\alpha)R.
\]
Chebyshev's inequality gives
\[
\begin{aligned}
    \mathbb P_{\mathrm{sgn}(r),\,\delta_T(r)}^\xi
    \left(
        |V_{T,\xi}-r|>R
    \right)
    &\le
    \frac{
        1+C\alpha
    }{
        (1-C\alpha)^2R^2
    }.
\end{aligned}
\]
Since $\alpha\le\alpha_0$, the right-hand side is at most $C_1/R^2$
for a constant $C_1<\infty$ independent of $\xi$, $T$, $r$, and the
sampling strategy. 
\end{proof}

\begin{lemma}[Uniform Local Likelihood Curve Comparison]
\label{lem:local-likelihood-curve-transfer-lower}
Fix $\rho\in(0,1)$, $D<\infty$, and
$\varepsilon\in(0,1/10)$. Let
    $a_T
    :=
    \frac{\alpha\sqrt T}
         {2(1-3\varepsilon)c_\rho}$,
and assume
    $D>
    \frac{4}{(1-3\varepsilon)c_\rho}$,
so that
    $L_T
    =
    \frac{D\alpha\sqrt T}{2}
    >
    4a_T$.

For all sufficiently small $\alpha$, and then all sufficiently large $T$,
the following statements hold uniformly over
$\xi\in\mathcal C_\rho$ and all nonanticipating local-pair sampling
strategies.

First,
\begin{equation}
\label{eq:local-mass-transfer-lower}
    \mathbb M_{T,\xi}
    \left(
        |V_{T,\xi}|\le a_T
    \right)
    \ge
    2(1-\varepsilon)a_T.
\end{equation}
Second,
\begin{equation}
\label{eq:local-weight-transfer-lower}
\begin{aligned}
    \int_{\{a_T<|V_{T,\xi}|\le L_T/2\}}
        \Phi\bigl(-|V_{T,\xi}(\omega)|\bigr)
    \,\mathbb M_{T,\xi}(\dd\omega)
    \ge
    \exp\{-o_T(\alpha^2T)\}
    2(1-\varepsilon)
    \int_{a_T}^{L_T/2}
        \Phi(-u)
    \,\dd u.
\end{aligned}
\end{equation}
Third, on the central history set
$\{\omega : |V_{T,\xi}(\omega)|\le L_T/2\}$,
\begin{equation}
\label{eq:local-overlap-transfer-lower}
\begin{aligned}
    \frac{\dd\mathbb N_{T,\xi}}
         {\dd\mathbb M_{T,\xi}}(\omega)
    \ge
    (1-\varepsilon)
    \exp\{
        -\varepsilon\alpha^2T
        -o_T(\alpha^2T)
    \}
    \Phi\bigl(-|V_{T,\xi}(\omega)|\bigr),
    \qquad
    \mathbb M_{T,\xi}\text{-a.e.}
\end{aligned}
\end{equation}
The $o_T(\alpha^2T)$ terms are uniform over
$\xi\in\mathcal C_\rho$ and all nonanticipating local-pair sampling
strategies.
\end{lemma}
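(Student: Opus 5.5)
The plan is to compare the log-likelihood curve $r\mapsto\ell_{T,\xi}(r;\omega)$ with the Gaussian parabola $\ell_{T,\xi}(V)-\tfrac12(r-V)^2$, where $V=V_{T,\xi}(\omega)$. The tool is the two-sided curvature bound \eqref{eq:curvature-bound-transfer-lower} of Lemma~\ref{lem:curvature-tightness-transfer-lower}. For $V$ interior to $[-L_T,L_T]$ we have $\ell'(V)=0$, and Taylor's theorem then gives
\[
\ell(V)-\tfrac{1+C\alpha}{2}(r-V)^2\le \ell(r)\le \ell(V)-\tfrac{1-C\alpha}{2}(r-V)^2,\qquad |r|\le L_T .
\]
We treat the three claims in the order third, first, second.

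\emph{Third claim.} Plug the sandwich into \eqref{eq:overlap-ratio-likelihood-transfer-lower}. The factor $e^{\ell(V)}$ cancels. Take $V\ge0$ with $|V|\le L_T/2$, and change variables $r=V+s/\sqrt{1\pm C\alpha}$. The numerator integral over $[-L_T,0]$ is then at least $(1+C\alpha)^{-1/2}\bigl[\Phi(-\sqrt{1+C\alpha}\,V)-\Phi(-\sqrt{1+C\alpha}(V+L_T))\bigr]$. The denominator is at most $\sqrt{2\pi/(1-C\alpha)}$. This gives a ratio of at least $(1-O(\alpha))\Phi(-\sqrt{1+C\alpha}\,|V|)$, up to the truncation term. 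The truncation term is negligible by the same argument as in Lemma~\ref{lem:flat-central-approx}, since $L_T\to\infty$. The remaining task is to compare $\Phi(-\sqrt{1+C\alpha}\,|V|)$ with $\Phi(-|V|)$. By Mills ratio their log-ratio is at least $-C\alpha V^2/2-O(\log)$. Since $|V|\le L_T/2=D\alpha\sqrt T/4$, this is $-O(\alpha^3 T)$, which is at most $\varepsilon\alpha^2T$ once $\alpha$ is small relative to $\varepsilon$ and $D$. The prefactor $1-O(\alpha)$ is at least $1-\varepsilon$. Negative $V$ follows by symmetry.

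\emph{First claim.} By definition,
\[
\mathbb M_{T,\xi}(|V|\le a_T)=\int_{-L_T}^{L_T}\mathbb P^\xi_{\mathrm{sgn}(r),\delta_T(r)}(|V|\le a_T)\,\dd r .
\]
Restrict the integral to $|r|\le a_T-R$ and apply the tightness bound \eqref{eq:local-mle-tightness-transfer-lower}. This gives at least $2(a_T-R)(1-C_1/R^2)$. Choose $R=R_\varepsilon$ fixed and large, so that $C_1/R^2\le\varepsilon/2$. Since $a_T\to\infty$ as $T\to\infty$, the bound is at least $2(1-\varepsilon)a_T$ for large $T$.

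\emph{Second claim.} Partition $[a_T,L_T/2]$ into unit-length cells $I_m$, and do the same on the negative side. On each cell, bound $\mathbb M(V\in I_m)$ from below using $r$ in the shrunken cell, as in the first claim. The difficulty is that tightness only localizes $V$ within distance $R$, so $V$ may leave the cell. To handle this, use cells of length $\ell\gg R$, set $\ell=\ell(\varepsilon)$, and apply the bound
\[
\int \Phi(-|V|)\,\dd\mathbb M\ge\sum_m \Phi(-\sup_{I_m}|u|)\,\mathbb M(V\in I_m).
\]
This yields $(1-\varepsilon)\sum_m \ell\,\Phi(-(|u_m|+\ell))$. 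For $u\le L_T/2$ we have $\Phi(-(u+\ell))\ge e^{-\ell u-O(\ell^2)}\Phi(-u)$, and $\ell u\le \ell D\alpha\sqrt T=o_T(\alpha^2T)$. The sum therefore dominates $e^{-o_T(\alpha^2T)}2(1-\varepsilon)\int_{a_T}^{L_T/2}\Phi(-u)\,\dd u$. The main obstacle is this step: the bookkeeping must give a loss that is subexponential on the $\alpha^2T$ scale while remaining uniform over strategies. That uniformity comes from the fact that \eqref{eq:local-mle-tightness-transfer-lower} is strategy-free. A secondary difficulty is the boundary case of the constrained MLE, $V=\pm L_T$, where $\ell'(V)\ne0$. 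This case lies outside the central set $|V|\le L_T/2$, so it does not arise.
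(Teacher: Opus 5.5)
Your proposal is correct and follows the paper's proof closely. The overlap ratio is handled the same way: the quadratic sandwich from \eqref{eq:curvature-bound-transfer-lower}, followed by a Mills-ratio comparison that costs $O(\alpha^3T)\le\varepsilon\alpha^2T$. The two mass bounds likewise come from the strategy-free tightness bound \eqref{eq:local-mle-tightness-transfer-lower} integrated over $r$.

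The one difference is cosmetic and occurs in the second claim. There the paper skips your cell partition and integrates directly over $r\in[a_T+2R,L_T/2-R]$. It uses $\Phi(-|V_{T,\xi}|)\ge\Phi(-(|r|+R))$ on $\{|V_{T,\xi}-r|\le R\}$, which gives $\int_{a_T+3R}^{L_T/2}\Phi(-u)\,\dd u$ with a loss of $e^{-3Ra_T}$. Your version instead loses $e^{-O(\ell\alpha\sqrt T)}$, and both losses are $e^{-o_T(\alpha^2T)}$.
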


\begin{proof}
Recall we have established that, for every measurable (history event) $E \in \mathscr H^\xi_T$,
\begin{equation}
\label{eq:M-signed-mixture-transfer-lower}
\begin{aligned}
    \mathbb M_{T,\xi}(E)
    &=
    \int_0^{L_T}
        \mathbb P_{+1,\,2g/\sqrt T}^{\xi}(E)
    \,\dd g
    +
    \int_0^{L_T}
        \mathbb P_{-1,\,2g/\sqrt T}^{\xi}(E)
    \,\dd g\\
    &=
    \int_{-L_T}^{L_T}
        \mathbb P_{\mathrm{sgn}(r),\,\delta_T(r)}^\xi(E)
    \,\dd r.
\end{aligned}
\end{equation}

Choose $R>0$ sufficiently large that
    $\frac{C_1}{R^2}
    \le
    \frac{\varepsilon}{4}$,
where $C_1$ is the constant in
Lemma~\ref{lem:curvature-tightness-transfer-lower}. For all sufficiently
large $T$, we have $R<a_T$ and $R<L_T/2$. Also recall $L_T>4a_T$, hence $[-a_T+R,a_T-R] \subset [-L_T,L_T]$. And on this interval $|r|\le a_T-R$, hence $\{ |V_{T,\xi}-r|\le R \} \subset \{ |V_{T,\xi}|\le a_T \}$. Therefore,
\[
\begin{aligned}
    \mathbb M_{T,\xi}
    \left(
        |V_{T,\xi}|\le a_T
    \right)
    &=
    \int_{-L_T}^{L_T}
        \mathbb P_{\mathrm{sgn}(r),\,\delta_T(r)}^\xi
        \left(
            |V_{T,\xi}|\le a_T
        \right)
    \,\dd r\\
    &\ge
    \int_{-a_T+R}^{a_T-R}
        \mathbb P_{\mathrm{sgn}(r),\,\delta_T(r)}^\xi
        \left(
            |V_{T,\xi}-r|\le R
        \right)
    \,\dd r\\
    &\ge
    (2a_T-2R)
    \left(
        1-\frac{\varepsilon}{4}
    \right).
\end{aligned}
\]
Since $a_T\to\infty$ for every fixed $\alpha>0$, the final expression is
at least $2(1-\varepsilon)a_T$ for all sufficiently large $T$. This proves
\eqref{eq:local-mass-transfer-lower}.

For the second (weighted) bound, suppose first that $|V_{T,\xi}-r|\le R$ and $r\in
    \left[
        a_T+2R,\,
        \frac{L_T}{2}-R
    \right]$.
Then
    $a_T<V_{T,\xi}\le\frac{L_T}{2}$,
and hence
    $\Phi(-|V_{T,\xi}|)
    \ge
    \Phi(-(r+R))$.
Similarly, if $|V_{T,\xi}-r|\le R$ and $r\in
    \left[
        -\frac{L_T}{2}+R,\,
        -a_T-2R
    \right]$,
then
    $a_T<|V_{T,\xi}|\le\frac{L_T}{2}$,
and
    $\Phi(-|V_{T,\xi}|)
    \ge
    \Phi(-(|r|+R))$.
Using \eqref{eq:M-signed-mixture-transfer-lower}, we have
\[
\begin{aligned}
    &\int_{\{a_T<|V_{T,\xi}|\le L_T/2\}}
        \Phi\bigl(-|V_{T,\xi}(\omega)|\bigr)
    \,\mathbb M_{T,\xi}(\dd\omega)\\
    &=
    \int_{\mathcal H^\xi_T}
    \mathbbm 1
        \left\{
            a_T<|V_{T,\xi}(\omega)|
            \le
            \frac{L_T}{2}
        \right\}
        \Phi\bigl(-|V_{T,\xi}(\omega)|\bigr)
    \int_{-L_T}^{L_T}
    \dd r \,\mathbb P_{\mathrm{sgn}(r),\,\delta_T(r)}^\xi(\dd\omega)
    \\
    &=
    \int_{-L_T}^{L_T}
    \int_{\mathcal H^\xi_T}
        \mathbbm 1
        \left\{
            a_T<|V_{T,\xi}(\omega)|
            \le
            \frac{L_T}{2}
        \right\}
        \Phi\bigl(-|V_{T,\xi}(\omega)|\bigr)
    \,\mathbb P_{\mathrm{sgn}(r),\,\delta_T(r)}^\xi(\dd\omega)
    \,\dd r\\
    &\ge
    \int_{a_T+2R}^{L_T/2-R}
    \int_{\mathcal H^\xi_T}
        \mathbbm 1
        \left\{
            |V_{T,\xi}(\omega)-r|\le R
        \right\}
        \Phi(-(r+R))
    \,\mathbb P_{\mathrm{sgn}(r),\,\delta_T(r)}^\xi(\dd\omega)
    \,\dd r\\
    &\quad+
    \int_{-L_T/2+R}^{-a_T-2R}
    \int_{\mathcal H^\xi_T}
        \mathbbm 1
        \left\{
            |V_{T,\xi}(\omega)-r|\le R
        \right\}
        \Phi(-(|r|+R))
    \,\mathbb P_{\mathrm{sgn}(r),\,\delta_T(r)}^\xi(\dd\omega)
    \,\dd r\\
    &=
    \int_{a_T+2R}^{L_T/2-R}
        \Phi(-(r+R))
        \mathbb P_{\mathrm{sgn}(r),\,\delta_T(r)}^\xi
        \left(
            |V_{T,\xi}-r|\le R
        \right)
    \,\dd r\\
    &\quad+
    \int_{-L_T/2+R}^{-a_T-2R}
        \Phi(-(|r|+R))
        \mathbb P_{\mathrm{sgn}(r),\,\delta_T(r)}^\xi
        \left(
            |V_{T,\xi}-r|\le R
        \right)
    \,\dd r.
\end{aligned}
\]

By \eqref{eq:local-mle-tightness-transfer-lower},
\[
    \mathbb P_{\mathrm{sgn}(r),\,\delta_T(r)}^\xi
    \left(
        |V_{T,\xi}-r|\le R
    \right)
    \ge
    1-\frac{\varepsilon}{4}
\]
throughout both integration intervals. Then, with
the change of variables \(u=-r\), we get
\[
\begin{aligned}
    \int_{\{a_T<|V_{T,\xi}|\le L_T/2\}}
        \Phi\bigl(-|V_{T,\xi}(\omega)|\bigr)
    \,\mathbb M_{T,\xi}(\dd\omega)
    &\ge
    2\left(1-\frac{\varepsilon}{4}\right)
    \int_{a_T+2R}^{L_T/2-R}
        \Phi(-(r+R))
    \,\dd r\\
    &=
    2\left(1-\frac{\varepsilon}{4}\right)
    \int_{a_T+3R}^{L_T/2}
        \Phi(-u)
    \,\dd u.
\end{aligned}
\]

Since \(R\) is fixed, \(a_T\asymp\alpha\sqrt T\), and
\(L_T/2>2a_T\), Gaussian tail asymptotics ($\Phi(-x)=\frac{\phi(x)}{x}(1+O(x^{-2)})$ and $\int^\infty_x \Phi(-u) \dd u=\phi(x)-x\Phi(-x)$) imply
\[
    \log
    \frac{
        \displaystyle
        \int_{a_T+3R}^{L_T/2}
            \Phi(-u)\,\dd u
    }{
        \displaystyle
        \int_{a_T}^{L_T/2}
            \Phi(-u)\,\dd u
    }
    =
    -3Ra_T+O(1+\log a_T)
    =
    -o_T(\alpha^2T)
\]
for every fixed \(\alpha>0\). Therefore,
\[
\begin{aligned}
    &\int_{\{a_T<|V_{T,\xi}|\le L_T/2\}}
        \Phi\bigl(-|V_{T,\xi}(\omega)|\bigr)
    \,\mathbb M_{T,\xi}(\dd\omega)
    \ge
    \exp\{-o_T(\alpha^2T)\}
    2(1-\varepsilon)
    \int_{a_T}^{L_T/2}
        \Phi(-u)
    \,\dd u,
\end{aligned}
\]
which proves \eqref{eq:local-weight-transfer-lower}.

It remains to prove the third overlap bound. Fix a history realization $\omega$, and for ease of notation define $v:=V_{T,\xi}(\omega)$. Assume $|v|\le L_T/2$.
Since $v$ is an interior maximizer,
\[
    \ell_{T,\xi}'(v;\omega)=0.
\]
By \eqref{eq:curvature-bound-transfer-lower} in
Lemma~\ref{lem:curvature-tightness-transfer-lower}, for every
$r\in[-L_T,L_T]$,
\begin{equation}
\label{eq:quadratic-likelihood-comparison-transfer-lower}
\begin{aligned}
    \ell_{T,\xi}(v;\omega)
    -
    \frac{1+C\alpha}{2}(r-v)^2
    &\le
    \ell_{T,\xi}(r;\omega)\\
    &\le
    \ell_{T,\xi}(v;\omega)
    -
    \frac{1-C\alpha}{2}(r-v)^2.
\end{aligned}
\end{equation}
Recall from \eqref{eq:overlap-ratio-likelihood-transfer-lower} that
\[
\begin{aligned}
    \frac{\dd\mathbb N_{T,\xi}}
         {\dd\mathbb M_{T,\xi}}(\omega)
    =
    \frac{
        \min\left\{
            \displaystyle
            \int_0^{L_T}
                e^{\ell_{T,\xi}(r;\omega)}
            \,\dd r,\,
            \displaystyle
            \int_{-L_T}^{0}
                e^{\ell_{T,\xi}(r;\omega)}
            \,\dd r
        \right\}
    }{
        \displaystyle
        \int_{-L_T}^{L_T}
            e^{\ell_{T,\xi}(r;\omega)}
        \,\dd r
    }.
\end{aligned}
\]

Suppose first that $v\ge0$. By the lower bound in
\eqref{eq:quadratic-likelihood-comparison-transfer-lower},
\[
\begin{aligned}
    &\min\left\{
        \int_0^{L_T}
            e^{\ell_{T,\xi}(r;\omega)}
        \,\dd r,\,
        \int_{-L_T}^{0}
            e^{\ell_{T,\xi}(r;\omega)}
        \,\dd r
    \right\} \\
    &\quad\ge e^{\ell_{T,\xi}(v;\omega)}\min\left\{
        \int_0^{L_T}
            e^{-(1+C\alpha)(r-v)^2/2}
        \,\dd r,\,
        \int_{-L_T}^{0}
            e^{-(1+C\alpha)(r-v)^2/2}
        \,\dd r
    \right\} \\
    &\quad\ge
    e^{\ell_{T,\xi}(v;\omega)}
    \int_{-L_T}^{0}
        \exp\left\{
            -\frac{1+C\alpha}{2}(r-v)^2
        \right\}
    \,\dd r.
\end{aligned}
\]
By the upper bound in
\eqref{eq:quadratic-likelihood-comparison-transfer-lower},
\[
\begin{aligned}
    \int_{-L_T}^{L_T}
        e^{\ell_{T,\xi}(r;\omega)}
    \,\dd r
    &\le
    e^{\ell_{T,\xi}(v;\omega)}
    \int_{-\infty}^{\infty}
        \exp\left\{
            -\frac{1-C\alpha}{2}(r-v)^2
        \right\}
    \,\dd r\\
    &=
    e^{\ell_{T,\xi}(v;\omega)}
    \sqrt{\frac{2\pi}{1-C\alpha}}.
\end{aligned}
\]
Thus
\[
\begin{aligned}
    \frac{\dd\mathbb N_{T,\xi}}
         {\dd\mathbb M_{T,\xi}}(\omega)
    &\ge
    \sqrt{
        \frac{1-C\alpha}{1+C\alpha}
    }
    \left[
        \Phi\left(
            -\sqrt{1+C\alpha}\,v
        \right)
        -
        \Phi\left(
            -\sqrt{1+C\alpha}\,(L_T+v)
        \right)
    \right].
\end{aligned}
\]
The case $v\le0$ is symmetric and yields the same bound with $|v|$ in
place of $v$.

Since $\Phi(-x)$ is decreasing, uniformly over $|v|\le L_T/2$, 
\[
    \frac{
        \Phi\left(
            -\sqrt{1+C\alpha}(L_T+|v|)
        \right)
    }{
        \Phi\left(
            -\sqrt{1+C\alpha}|v|
        \right)
    }
    \leq \frac{\Phi\left(-\sqrt{1+C\alpha} L_T\right)}{ \Phi\left(-\sqrt{1+C\alpha}L_T/2\right)}
    =
    o_T(1).
\]
Moreover, note $\frac{\dd}{\dd t}  \log \Phi(-t) = -\frac{\phi(t)}{\Phi(-t)}$. By Mills ratio bound $\Phi(-t)\geq \frac{t}{1+t^2}\phi(t)$, we have $$
\begin{aligned}
\log \frac{\Phi\left(-\sqrt{1+C\alpha}\,v\right)}{\Phi\left(-v\right)} 
&= -\int^{\sqrt{1+C\alpha}\,v}_v \frac{\phi(t)}{\Phi(-t)} \dd t \\
&\geq -\int^{\sqrt{1+C\alpha}\,v}_v (t+\frac{1}{t}) \dd t \\
&= -\frac{1+C\alpha-1}{2}v^2 - \log\left(\sqrt{1+C\alpha}\right)
\end{aligned}
$$
And note $(1+C\alpha)^{-1/2}\geq 1-\frac{C}{2}\alpha$. Hence there exists $C_2=C/2<\infty$ such that, for all sufficiently small
$\alpha$ and all $v>0$,
\[
    \Phi\left(
        -\sqrt{1+C\alpha}\,v
    \right)
    \ge
    (1-C_2\alpha)
    \exp\{-C_2\alpha v^2\}
    \Phi(-v).
\]

Finally, uniformly over $|v|\le L_T/2$,
\[
\begin{aligned}
    \frac{\dd\mathbb N_{T,\xi}}
         {\dd\mathbb M_{T,\xi}}(\omega)
    &\ge
    \sqrt{
        \frac{1-C\alpha}{1+C\alpha}
    }
    \left[
        \Phi\left(
            -\sqrt{1+C\alpha}\,|v|
        \right)
        -
        \Phi\left(
            -\sqrt{1+C\alpha}\,(L_T+|v|)
        \right)
    \right] \\
    &= \sqrt{
        \frac{1-C\alpha}{1+C\alpha}
    } \Phi\left(
            -\sqrt{1+C\alpha}\,|v|
        \right) \left[ 1 - \frac{\Phi\left(
            -\sqrt{1+C\alpha}\,(L_T+|v|)
        \right)}{\Phi\left(
            -\sqrt{1+C\alpha}\,|v|
        \right)}\right] \\
    &\geq \left( 1- o_T(1) \right) 
    \sqrt{
        \frac{1-C\alpha}{1+C\alpha}
    }
    (1-C_2\alpha)
    \exp\{-C_2\alpha v^2\}
    \Phi(-|v|).
\end{aligned}
\]
Moreover, for all sufficiently small $\alpha$, there exists
$C_3<\infty$ such that
    $\sqrt{\frac{1-C\alpha}{1+C\alpha}}
    \ge
    1-C_3\alpha$.
Consequently,
\[
\begin{aligned}
    \frac{\dd\mathbb N_{T,\xi}}
         {\dd\mathbb M_{T,\xi}}(\omega)
    &\ge
    (1-o_T(1))
    (1-C_2\alpha)(1-C_3\alpha)
    \exp\left\{
        -C_2\alpha v^2
    \right\}
    \Phi\left(
        -|v|
    \right).
\end{aligned}
\]
Since
    $|v|
    \le
    \frac{L_T}{2}
    =
    \frac{D\alpha\sqrt T}{4}$,
there exists $C_4<\infty$ such that
    $C_2\alpha v^2
    \le
    C_4D^2\alpha^3T$.
Choose $\alpha$ sufficiently small such that
    $C_4D^2\alpha
    \le
    \varepsilon$.
Then
\[
    \exp\{
        -C_2\alpha v^2
    \}
    \ge
    \exp\left\{
        -\varepsilon\alpha^2T
    \right\}.
\]
Shrinking $\alpha$ further and then taking $T$ sufficiently large to obtain $(1-o_T(1))
    (1-C_2\alpha)(1-C_3\alpha)>(1-\varepsilon)$. This proves
\eqref{eq:local-overlap-transfer-lower}. 
\end{proof}


\begin{lemma}[Bathtub Principle under an Arbitrary Measure]
\label{lem:bathtub-general-measure-transfer-lower}
Let $\mathfrak m$ be a finite measure, let $V$ be a measurable real-valued
statistic, and let $w:\mathbb R\to[0,\infty)$ be even and nonincreasing in
$|v|$. Fix $0<a<B$, and let $\varphi$ be measurable with
$0\le\varphi\le1$. If
\[
    \int\varphi\,\dd\mathfrak m
    \le
    b
    \qquad\text{and}\qquad
    \mathfrak m(|V|\le a)
    \ge
    b,
\]
then
\begin{equation}
\label{eq:bathtub-general-measure-transfer-lower}
    \int_{\{|V|\le B\}}
        (1-\varphi)w(V)
    \,\dd\mathfrak m
    \ge
    \int_{\{a<|V|\le B\}}
        w(V)
    \,\dd\mathfrak m.
\end{equation}
\end{lemma}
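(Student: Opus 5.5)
This follows the pattern of Lemma~\ref{lem:flat-rearrangement}: compare $\varphi$ with the indicator of the central set. Set $C:=\{|V|\le a\}$ and $\Omega_B:=\{|V|\le B\}$. Since $a<B$, we have $C\subseteq\Omega_B$. Monotonicity of $w$ in $|v|$ gives $w(V)\ge w(a)$ on $C$ and $w(V)\le w(a)$ on $\Omega_B\setminus C$.

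\textbf{Step 1 (pointwise inequality).} On $\Omega_B$ we have
\[
(\mathbbm 1_C-\varphi)\bigl(w(V)-w(a)\bigr)\ge 0 .
\]
On $C$ both factors are nonnegative because $\varphi\le1$. On $\Omega_B\setminus C$ both factors are nonpositive.

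\textbf{Step 2 (integrate and use the budget).} Integrate over $\Omega_B$ with respect to $\mathfrak m$. All integrals are finite because $\mathfrak m$ is finite and $w$ is bounded by $w(0)$. This gives
\[
\int_{\Omega_B}\varphi\, w(V)\,\dd\mathfrak m
\le \int_C w(V)\,\dd\mathfrak m - w(a)\Bigl(\mathfrak m(C)-\int_{\Omega_B}\varphi\,\dd\mathfrak m\Bigr).
\]
The bracket is nonnegative: $\int_{\Omega_B}\varphi\,\dd\mathfrak m\le\int\varphi\,\dd\mathfrak m\le b\le\mathfrak m(C)$. The first inequality holds because $\varphi\ge0$, and the last is the hypothesis. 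Since $w(a)\ge0$, we conclude
\[
\int_{\Omega_B}\varphi\, w(V)\,\dd\mathfrak m\le\int_C w(V)\,\dd\mathfrak m .
\]

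\textbf{Step 3 (complement).} Subtract this from $\int_{\Omega_B}w(V)\,\dd\mathfrak m$. This yields
\[
\int_{\Omega_B}(1-\varphi)w(V)\,\dd\mathfrak m\ \ge\ \int_{\Omega_B\setminus C}w(V)\,\dd\mathfrak m,
\]
and the right-hand side is exactly \eqref{eq:bathtub-general-measure-transfer-lower}.

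The only delicate point is Step~2. The budget constraint is on the whole space, while the integral being bounded is restricted to $\Omega_B$, and restricting can only shrink $\int\varphi\,\dd\mathfrak m$ because $\varphi\ge0$. Also, $\mathfrak m$ may have atoms at $|V|=a$, so the central set must be taken closed, $\{|V|\le a\}$, to match the hypothesis. With these points settled, the argument is routine.
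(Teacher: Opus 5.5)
Your proof is correct and is essentially the paper's argument: both use $w(a)$ as the pivot to show
$\int_{\{|V|\le B\}}\varphi\,w(V)\,\dd\mathfrak m\le\int_{\{|V|\le a\}}w(V)\,\dd\mathfrak m$
via the budget chain $\int\varphi\,\dd\mathfrak m\le b\le\mathfrak m(|V|\le a)$, and then pass to the complement $\{a<|V|\le B\}$. The only difference is cosmetic. The paper splits $\int\varphi\,w(V)$ over $\{|V|\le a\}$ and $\{a<|V|\le B\}$ and chains $\int_{\{a<|V|\le B\}}\varphi\le\int_{\{|V|\le a\}}(1-\varphi)$ with $w(a)\int_{\{|V|\le a\}}(1-\varphi)\le\int_{\{|V|\le a\}}(1-\varphi)w(V)$. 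You instead integrate the pointwise product inequality, exactly as in the paper's Lemma~\ref{lem:flat-rearrangement}.
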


\begin{proof}
Let
\[
    C:=\{|V|\le a\},
    \qquad
    D:=\{|V|\le B\}.
\]
Since $w(V)\le w(a)$ on $D\setminus C$,
\[
\begin{aligned}
    \int_D\varphi w(V)\,\dd\mathfrak m
    &\le
    \int_C\varphi w(V)\,\dd\mathfrak m
    +
    w(a)
    \int_{D\setminus C}\varphi\,\dd\mathfrak m.
\end{aligned}
\]
The budget and central-mass assumptions imply
\[
\begin{aligned}
    \int_{D\setminus C}\varphi\,\dd\mathfrak m
    &\le
    \int_{C^c}\varphi\,\dd\mathfrak m\\
    &\le
    \mathfrak m(C)
    -
    \int_C\varphi\,\dd\mathfrak m\\
    &=
    \int_C(1-\varphi)\,\dd\mathfrak m.
\end{aligned}
\]
Since $w(V)\ge w(a)$ on $C$,
\[
    w(a)
    \int_C(1-\varphi)\,\dd\mathfrak m
    \le
    \int_C
        (1-\varphi)w(V)
    \,\dd\mathfrak m.
\]
Therefore,
\[
\begin{aligned}
    \int_D\varphi w(V)\,\dd\mathfrak m
    &\le
    \int_C\varphi w(V)\,\dd\mathfrak m
    +
    \int_C(1-\varphi)w(V)\,\dd\mathfrak m\\
    &=
    \int_Cw(V)\,\dd\mathfrak m.
\end{aligned}
\]
Consequently,
\[
\begin{aligned}
    \int_D(1-\varphi)w(V)\,\dd\mathfrak m
    &=
    \int_Dw(V)\,\dd\mathfrak m
    -
    \int_D\varphi w(V)\,\dd\mathfrak m\\
    &\ge
    \int_Dw(V)\,\dd\mathfrak m
    -
    \int_Cw(V)\,\dd\mathfrak m\\
    &=
    \int_{D\setminus C}
        w(V)
    \,\dd\mathfrak m,
\end{aligned}
\]
which proves
\eqref{eq:bathtub-general-measure-transfer-lower}. 
\end{proof}

\begin{lemma}[Flat Local Sign Lower Bound]
\label{lem:flat-local-sign-transfer-lower}
Fix $\rho\in(0,1)$ and $\varepsilon\in(0,1/10)$, and choose
    $D>
    \frac{4}{(1-3\varepsilon)c_\rho}$.
Set
\[
    \bar\delta_\alpha:=D\alpha,
    \qquad
    L_T:=\frac{D\alpha\sqrt T}{2},
    \qquad
    a_T
    :=
    \frac{\alpha\sqrt T}
         {2(1-3\varepsilon)c_\rho}.
\]
For all sufficiently small $\alpha$, and then all sufficiently large $T$,
every oracle local-pair terminal rule $\psi$ satisfying
\[
    \mathcal A_T^{Q_{\rho,\bar\delta_\alpha}}(\psi)
    \le
    \alpha
\]
must satisfy
\begin{equation}
\label{eq:flat-sign-bound-transfer-lower}
\begin{aligned}
    \mathcal E_T^{Q_{\rho,\bar\delta_\alpha}}(\psi)
    &\ge
    \exp\{
        -\varepsilon\alpha^2T
        -o_T(\alpha^2T)
    \}
    \frac{
        2(1-\varepsilon)^2c_\rho
    }{
        \sqrt T
    }
    \int_{a_T}^{\infty}
        \Phi(-u)
    \,\dd u.
\end{aligned}
\end{equation}
\end{lemma}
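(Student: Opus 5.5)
The argument mirrors the Gaussian proof of Lemma~\ref{lem:flat-sign-lower-detailed}, with the scalar statistic $V$ replaced by the local MLE $V_{T,\xi}$ and the explicit Gaussian densities replaced by the three comparison bounds of Lemma~\ref{lem:local-likelihood-curve-transfer-lower}. The bathtub step will be applied through Lemma~\ref{lem:bathtub-general-measure-transfer-lower}. The plan has four steps.

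\textbf{Step 1 (abstention budget).} Fix $\alpha$ small and $T$ large so that Lemma~\ref{lem:local-likelihood-curve-transfer-lower} applies uniformly. From \eqref{eq:flat-sign-abstention-budget-transfer-lower}, the abstention constraint gives
\[
\int_{\mathcal C_\rho}\int\psi_?^\xi\,\dd\mathbb M_{T,\xi}\,\Lambda_\rho(\dd\xi)\le \frac{\alpha\sqrt T}{2}.
\]
I will work on the product measure $\mathfrak m(\dd\xi,\dd\omega):=\mathbb M_{T,\xi}(\dd\omega)\Lambda_\rho(\dd\xi)$ on the disjoint union of the history spaces, with statistic $V(\xi,\omega):=V_{T,\xi}(\omega)$. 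This lets a single bathtub argument handle the pooled budget, since $\psi^\xi$ may redistribute abstention across different $\xi$.

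For the central mass, \eqref{eq:local-mass-transfer-lower} yields
\[
\mathfrak m(|V|\le a_T)\ge 2(1-\varepsilon)a_T\Lambda_\rho(\mathcal C_\rho)=(1-\varepsilon)a_Tc_\rho=\frac{(1-\varepsilon)\alpha\sqrt T}{2(1-3\varepsilon)}\ge \frac{\alpha\sqrt T}{2}.
\]
Hence the hypotheses of Lemma~\ref{lem:bathtub-general-measure-transfer-lower} hold with $b=\alpha\sqrt T/2$, $a=a_T$, $B=L_T/2$, $\varphi=\psi_?$ and $w(v)=\Phi(-|v|)$. Note $a_T<L_T/2$ because $L_T>4a_T$.

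\textbf{Step 2 (error lower bound).} Start from \eqref{eq:flat-sign-overlap-lower-transfer-lower} and discard all histories except those with $|V_{T,\xi}|\le L_T/2$. On that set, apply \eqref{eq:local-overlap-transfer-lower} to replace $\dd\mathbb N_{T,\xi}$ by
\[
(1-\varepsilon)e^{-\varepsilon\alpha^2T-o_T(\alpha^2T)}\,\Phi(-|V|)\,\dd\mathbb M_{T,\xi}.
\]
Uniformity over $\xi\in\mathcal C_\rho$ and over strategies lets this constant factor out of the $\xi$-integral. This gives
\[
\mathcal E_T^{Q}(\psi)\ge\frac{2}{\sqrt T}(1-\varepsilon)e^{-\varepsilon\alpha^2T-o_T(\alpha^2T)}\int_{\{|V|\le L_T/2\}}(1-\psi_?)\Phi(-|V|)\,\dd\mathfrak m .
\]

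\textbf{Step 3 (bathtub and weighted mass).} The bathtub lemma bounds the last integral below by
\[
\int_{\{a_T<|V|\le L_T/2\}}\Phi(-|V|)\,\dd\mathfrak m .
\]
Integrating \eqref{eq:local-weight-transfer-lower} against $\Lambda_\rho$ bounds this by
\[
e^{-o_T(\alpha^2T)}\,2(1-\varepsilon)\,\Lambda_\rho(\mathcal C_\rho)\int_{a_T}^{L_T/2}\Phi(-u)\,\dd u=e^{-o_T(\alpha^2T)}(1-\varepsilon)c_\rho\int_{a_T}^{L_T/2}\Phi(-u)\,\dd u .
\]
Combining with Step 2 gives the prefactor $2(1-\varepsilon)^2c_\rho/\sqrt T$.

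\textbf{Step 4 (main obstacle: truncation and uniformity).} The truncated integral must be replaced by the full tail $\int_{a_T}^\infty\Phi(-u)\,\dd u$. Since $L_T/2>2a_T$, Gaussian tail asymptotics give
\[
\int_{L_T/2}^\infty\Phi(-u)\,\dd u\Big/\int_{a_T}^\infty\Phi(-u)\,\dd u\le e^{-c a_T^2}\to0 .
\]
The difference therefore costs only a factor $1-o_T(1)$, which is absorbed into $e^{-o_T(\alpha^2T)}$.

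The main care point is that every error term is uniform in $\xi$ and in the sampling strategy. This is what justifies pulling the factors out of the $\Lambda_\rho$-integral and applying the bathtub on the pooled measure $\mathfrak m$. The uniformity is guaranteed by Lemma~\ref{lem:local-likelihood-curve-transfer-lower}. Because its "small $\alpha$, then large $T$" quantifier order is the same one used here, no further uniformity argument is needed.
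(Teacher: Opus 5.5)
Your proposal is correct and follows the paper's proof essentially step for step. You use the same pooled measure $\mathfrak m_T=\mathbb M_{T,\xi}\otimes\Lambda_\rho$ with statistic $V_{T,\xi}$ and the same central-mass check to invoke Lemma~\ref{lem:bathtub-general-measure-transfer-lower}, followed by the overlap and weighted-mass bounds of Lemma~\ref{lem:local-likelihood-curve-transfer-lower} and the same Gaussian-tail argument (via $L_T/2>2a_T$) to replace $\int_{a_T}^{L_T/2}\Phi(-u)\,\dd u$ by the full tail.
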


\begin{proof}
Define a finite measure $\mathfrak m_T$ by
\[
\begin{aligned}
    \int F(\xi,\omega)\,
    \mathfrak m_T(\dd\xi,\dd\omega)
    &:=
    \int_{\mathcal C_\rho}
    \int_{\mathcal H_T^\xi}
        F(\xi,\omega)
    \,\mathbb M_{T,\xi}(\dd\omega)\,
    \Lambda_\rho(\dd\xi)
\end{aligned}
\]
for every nonnegative measurable function $F$. Let
\[
    V(\xi,\omega)
    :=
    V_{T,\xi}(\omega),
    \qquad
    \varphi(\xi,\omega)
    :=
    \psi_?^\xi(\omega).
\]

By the abstention representation
\eqref{eq:flat-sign-abstention-transfer-lower}, the constraint
$\mathcal A_T^{Q_{\rho,\bar\delta_\alpha}}(\psi)\le\alpha$ implies
\[
    \int
        \varphi
    \,\dd\mathfrak m_T
    \le
    \frac{\alpha\sqrt T}{2}.
\]
On the other hand, by
\eqref{eq:local-mass-transfer-lower} and
$\Lambda_\rho(\mathcal C_\rho)=c_\rho/2$,
\[
\begin{aligned}
    \mathfrak m_T(|V|\le a_T)
    &=
    \int_{\mathcal C_\rho}
        \mathbb M_{T,\xi}
        \left(
            |V_{T,\xi}|\le a_T
        \right)
    \,\Lambda_\rho(\dd\xi)\\
    &\ge
    2(1-\varepsilon)a_T
    \Lambda_\rho(\mathcal C_\rho)\\
    &=
    (1-\varepsilon)c_\rho a_T\\
    &=
    \frac{1-\varepsilon}{1-3\varepsilon}
    \frac{\alpha\sqrt T}{2}\\
    &>
    \frac{\alpha\sqrt T}{2}.
\end{aligned}
\]
Intuitively, this means the central set $\{|V|\le a_T\}$ has enough
$\mathfrak m_T$-mass to contain the entire abstention budget.

By the overlap lower bound
\eqref{eq:flat-sign-overlap-lower-transfer-lower},
\[
\begin{aligned}
    \mathcal E_T^{Q_{\rho,\bar\delta_\alpha}}(\psi)
    &\ge
    \frac{2}{\sqrt T}
    \int_{\mathcal C_\rho}
    \int_{\mathcal H_T^\xi}
        \bigl(1-\psi_?^\xi(\omega)\bigr)
    \,\mathbb N_{T,\xi}(\dd\omega)\,
    \Lambda_\rho(\dd\xi).
\end{aligned}
\]
Restricting the integral to
$\{|V_{T,\xi}|\le L_T/2\}$ and applying
\eqref{eq:local-overlap-transfer-lower} yields
\[
\begin{aligned}
    \mathcal E_T^{Q_{\rho,\bar\delta_\alpha}}(\psi)
    &\ge
    \frac{2}{\sqrt T}
    \int_{\mathcal C_\rho}
    \int_{\{|V_{T,\xi}|\le L_T/2\}}
        \bigl(1-\psi_?^\xi(\omega)\bigr)
    \,\mathbb N_{T,\xi}(\dd\omega)\,
    \Lambda_\rho(\dd\xi) \\
    &= \frac{2}{\sqrt T}
    \int_{\mathcal C_\rho}
    \int_{\{|V_{T,\xi}|\le L_T/2\}}
        \bigl(1-\psi_?^\xi(\omega)\bigr)
    \,\frac{\dd\mathbb N_{T,\xi}}{\dd\mathbb M_{T,\xi}}\mathbb M_{T,\xi}(\dd\omega)\,
    \Lambda_\rho(\dd\xi) \\
    &\ge
    \frac{2(1-\varepsilon)}{\sqrt T}
    \exp\{
        -\varepsilon\alpha^2T
        -o_T(\alpha^2T)
    \}
    \int_{\{|V|\le L_T/2\}}
        (1-\varphi)
        \Phi(-|V|)
    \,\dd\mathfrak m_T.
\end{aligned}
\]

Apply
Lemma~\ref{lem:bathtub-general-measure-transfer-lower} with
\[
    w(v):=\Phi(-|v|),
    \qquad
    a=a_T,
    \qquad
    B=L_T/2,
    \qquad
    b=\frac{\alpha\sqrt T}{2}.
\]
The budget and central-mass conditions were verified above. Therefore,
\[
\begin{aligned}
    \int_{\{|V|\le L_T/2\}}
        (1-\varphi)
        \Phi(-|V|)
    \,\dd\mathfrak m_T
    &\ge
    \int_{\{a_T<|V|\le L_T/2\}}
        \Phi(-|V|)
    \,\dd\mathfrak m_T\\
    &=
    \int_{\mathcal C_\rho}
    \int_{\{a_T<|V_{T,\xi}|\le L_T/2\}}
        \Phi\bigl(-|V_{T,\xi}(\omega)|\bigr)
    \,\mathbb M_{T,\xi}(\dd\omega)\,
    \Lambda_\rho(\dd\xi) \\
    &\stackrel{(a)}{\geq}
    \exp\{-o_T(\alpha^2T)\}
    2(1-\varepsilon)
    \Lambda_\rho(\mathcal C_\rho)
    \int_{a_T}^{L_T/2}
        \Phi(-u)
    \,\dd u\\
    &=
    \exp\{-o_T(\alpha^2T)\}
    (1-\varepsilon)c_\rho
    \int_{a_T}^{L_T/2}
        \Phi(-u)
    \,\dd u.
\end{aligned}
\]
where $(a)$ is by \eqref{eq:local-weight-transfer-lower} and
$\Lambda_\rho(\mathcal C_\rho)=c_\rho/2$.
Consequently,
\[
\begin{aligned}
    \mathcal E_T^{Q_{\rho,\bar\delta_\alpha}}(\psi)
    &\ge
    \exp\{
        -\varepsilon\alpha^2T
        -o_T(\alpha^2T)
    \}
    \frac{
        2(1-\varepsilon)^2c_\rho
    }{
        \sqrt T
    }
    \int_{a_T}^{L_T/2}
        \Phi(-u)
    \,\dd u.
\end{aligned}
\]

Finally, the choice of $D$ implies
    $\frac{L_T}{2}>2a_T$.
Since $a_T\to\infty$ for every fixed $\alpha>0$, Gaussian tail asymptotics
give
\[
    \int_{a_T}^{L_T/2}
        \Phi(-u)
    \,\dd u
    =
    (1-o_T(1))
    \int_{a_T}^{\infty}
        \Phi(-u)
    \,\dd u.
\]
Absorbing the factor $1-o_T(1)$ into
$\exp\{-o_T(\alpha^2T)\}$ proves
\eqref{eq:flat-sign-bound-transfer-lower}. 
\end{proof}

\subsubsection{Step 6. Conclusion}

Let \(\pi\) be any original adaptive policy satisfying
\[
    \mathcal A_T(\pi)\le\alpha.
\]
Fix \(\rho\in(0,1)\) and \(\varepsilon\in(0,1/10)\).  Choose $D>\frac{4}{(1-3\varepsilon)c_\rho}$
and set $\bar\delta_\alpha:=D\alpha$.

For all sufficiently small \(\alpha\), $\bar\delta_\alpha\le\delta_\rho$,
so \(Q_{\rho,\bar\delta_\alpha}\) is well-defined.

By \eqref{eq:risk-domination-transfer-lower},
\[
    \mathcal A_T^{Q_{\rho,\bar\delta_\alpha}}(\pi)
    \le
    \mathcal A_T(\pi)
    \le
    \alpha,
\]
and
\[
    \mathcal E_T(\pi)
    \ge
    \mathcal E_T^{Q_{\rho,\bar\delta_\alpha}}(\pi).
\]
After revealing the background oracle and applying
Lemma~\ref{lem:oracle-reduction-transfer-lower}, the original policy induces an
oracle local-pair sign rule whose abstention probability is at most \(\alpha\), and
whose sign-error probability is no larger than the original undetected-error probability.
Therefore Lemma~\ref{lem:flat-local-sign-transfer-lower} implies
\[
    \mathcal E_T(\pi)
    \ge
    \exp\{-\varepsilon\alpha^2T-o_T(\alpha^2T)\}
    \frac{2(1-\varepsilon)^2c_\rho}{\sqrt T}
    \int_{a_T}^{\infty}\Phi(-u)\,\dd u,
\]
where $a_T
    =
    \frac{\alpha\sqrt T}{2(1-3\varepsilon)c_\rho}$.

For fixed \(\alpha>0\), \(a_T\to\infty\).  The identity
\[
    \int_a^\infty \Phi(-u)\,\dd u
    =
    \phi(a)-a\Phi(-a)
\]
and the standard Gaussian tail expansion imply
\[
    \log
    \left(
        \int_a^\infty \Phi(-u)\,\dd u
    \right)
    =
    -\frac{a^2}{2}+O(\log a),
    \qquad a\to\infty.
\]
Thus, for every feasible policy \(\pi\),
\[
    \liminf_{T\to\infty}
    \frac{1}{\alpha^2T}
    \log \mathcal E_T(\pi)
    \ge
    -\varepsilon
    -
    \frac{1}{8(1-3\varepsilon)^2c_\rho^2}.
\]
The bound is uniform over all policies satisfying
\(\mathcal A_T(\pi)\le\alpha\).  Taking the infimum over feasible policies gives
\[
    \liminf_{T\to\infty}
    \frac{1}{\alpha^2T}
    \log \mathfrak E_T(\alpha)
    \ge
    -\varepsilon
    -
    \frac{1}{8(1-3\varepsilon)^2c_\rho^2}.
\]
Now take \(\alpha\downarrow0\).  Since \(\rho\) and \(\varepsilon\) are fixed,
\[
    \liminf_{\alpha\downarrow0}
    \liminf_{T\to\infty}
    \frac{1}{\alpha^2T}
    \log \mathfrak E_T(\alpha)
    \ge
    -\varepsilon
    -
    \frac{1}{8(1-3\varepsilon)^2c_\rho^2}.
\]
Using \eqref{eq:c-rho-lower-transfer-lower},
\[
    c_\rho\ge(1-\rho)^2\kappa,
\]
so
\[
    \liminf_{\alpha\downarrow0}
    \liminf_{T\to\infty}
    \frac{1}{\alpha^2T}
    \log \mathfrak E_T(\alpha)
    \ge
    -\varepsilon
    -
    \frac{
        1
    }{
        8(1-3\varepsilon)^2(1-\rho)^4\kappa^2
    }.
\]
Finally let \(\varepsilon\downarrow0\) and then \(\rho\downarrow0\).  This gives
\[
    \liminf_{\alpha\downarrow0}
    \liminf_{T\to\infty}
    \frac{1}{\alpha^2T}
    \log \mathfrak E_T(\alpha)
    \ge
    -\frac{1}{8\kappa^2}.
\]
This proves the lower-bound part of Theorem~\ref{thm:general-transfer}.

\subsection{Algorithm and Upper Bound}
\label{subsec:simple-transfer-upper}

Next we prove the matching upper bound.

Recall Assumption~\ref{ass:quadratic-kl} assumes that there exists \(c_{\rm KL}>0\) such that, for all \(\mu,z\in\mathcal I\),
$$
    D(\mu,z)
    \ge
    c_{\rm KL}\{s(\mu)-s(z)\}^2 .
$$
Take $\widetilde{c}:=\min\{c_{\rm KL},1/4\}$.
Then
\begin{equation}
\label{eq:ckl-under-simple-upper}
    D(\mu,z)
    \ge
    \widetilde{c}\,\{s(\mu)-s(z)\}^2,
    \qquad \mu,z\in\mathcal I .
\end{equation}

\subsubsection{The Algorithm}
We now define the information-coordinate version of PGWS,
\textsc{IC-PGWS}.  As in the Gaussian algorithm in the main paper, the policy
has two components.  During the sampling phase, it forms empirical centers in
the information coordinate and allocates samples according to information-gap
weights.  At the terminal time, it recommends the empirical information-coordinate leader unless the terminal certificate is below a calibrated
abstention threshold.

For each arm \(i\in[K]\), define
\[
    N_i(t)
    :=
    \sum_{r=1}^{t}\mathbbm{1}\{A_r=i\},
    \qquad
    S_i(t)
    :=
    \sum_{r\le t:A_r=i}t(X_r).
\]
The sampling rule uses an empirical information-coordinate center.  If
\(N_i(t)>0\), define
\[
    \widehat{t}_i(t)
    :=
    \frac{S_i(t)}{N_i(t)} .
\]
If \(N_i(t)>0\) and
\(\widehat{t}_i(t)\in A'(\Theta)\), set
\begin{equation}
\label{eq:icpgws-empirical-center}
    \widehat\theta_i(t)
    :=
    (A')^{-1}\!\left(\widehat{t}_i(t)\right),
    \qquad
    \widehat\eta_i(t)
    :=
    \beta^{-1}\!\left(\widehat\theta_i(t)\right),
    \qquad
    \widehat\mu_i(t)
    :=
    s^{-1}\!\left(\widehat\eta_i(t)\right).
\end{equation}
If \(N_i(t)=0\), or if
\(\widehat{t}_i(t)\notin A'(\Theta)\), use the deterministic fallback
\begin{equation}
\label{eq:icpgws-empirical-center-fallback}
    \widehat\eta_i(t):=\eta_\circ:=s(\mu_\circ),
    \qquad
    \widehat\mu_i(t):=\mu_\circ .
\end{equation}
The fallback is only a finite-time safeguard.  For example, in Bernoulli rewards,
the empirical natural statistic may equal \(0\) or \(1\) after only failures
or only successes.  Since the true parameter lies in the interior and forced
exploration gives \(N_i(t)\to\infty\), the strong law implies
\(\widehat{t}_i(t)\to A'(\theta(\mu_i))\in A'(\Theta)\), so the fallback
is used only finitely many times almost surely for each arm.

At time \(t\), define the empirical information-coordinate leader
\begin{equation}
\label{eq:icpgws-leader}
    \widehat b_t
    :=
    \operatorname*{arg\,max}_{i\in[K]}\widehat\eta_i(t),
\end{equation}
with deterministic tie-breaking.  For \(j\neq\widehat b_t\), define the empirical
information-coordinate gap
\[
    \widehat\Delta_{j,s}(t)
    :=
    \widehat\eta_{\widehat b_t}(t)-\widehat\eta_j(t),
\]
and define the empirical top-two information gap
\[
    \widehat\Gamma_{s,t}
    :=
    \min_{j\neq\widehat b_t}\widehat\Delta_{j,s}(t).
\]
When \(\widehat\Gamma_{s,t}>0\), set
\begin{equation}
\label{eq:icpgws-weights}
    w_{\widehat b_t}(t):=1,
    \qquad
    w_j(t)
    :=
    \min\left\{
        1,\,
        \frac{\widehat\Gamma_{s,t}^{\,2}}
        {4\widetilde c\,\widehat\Delta_{j,s}(t)^2}
    \right\},
    \qquad j\neq\widehat b_t .
\end{equation}
The sampling probabilities are
\begin{equation}
\label{eq:icpgws-sampling-prob}
    p_i(t)
    :=
    \begin{cases}
    \displaystyle
    \frac{w_i(t)}{\sum_{\ell=1}^K w_\ell(t)},
    & \widehat\Gamma_{s,t}>0,\\[10pt]
    \displaystyle
    \frac1K,
    & \widehat\Gamma_{s,t}=0 .
    \end{cases}
\end{equation}
The constant \(\widetilde c\le 1/4\) is the one in
\eqref{eq:ckl-under-simple-upper}.  The cap at \(1/4\) ensures that the closest empirical challenger receives the same sampling weight ($1$) as the leader.

At the terminal time, set
\begin{equation}
\label{eq:icpgws-terminal-leader}
    \widehat b_T
    :=
    \operatorname*{arg\,max}_{i\in[K]}\widehat\eta_i(T),
\end{equation}
again with deterministic tie-breaking. 
Given the terminal history, for ease of notation let
\[
    \varPi_T(\cdot):=\mathbb P(\cdot\mid\mathcal F_T)
\]
denote the posterior distribution induced by the prior and the observed data.
The \emph{terminal certificate} is
\begin{equation}
\label{eq:icpgws-terminal-statistic}
    R_T
    :=
    -\log
    \max_{j\neq\widehat b_T}
    \varPi_T(\mu_j\ge \mu_{\widehat b_T})
    =
    -\log
    \max_{j\neq\widehat b_T}
    \varPi_T(\eta_j\ge \eta_{\widehat b_T}),
\end{equation}
where the equality follows because \(s\) is strictly increasing. Thus \(R_T\) is large when the final leader is
well separated from all challengers under the posterior, and small when at least
one challenger remains plausible.

To calibrate abstention, let \(r_{T,\alpha}\) be the lower \(\alpha\)-quantile of
\(R_T\) under the joint Bayesian law induced by the sampling rule:
\begin{equation}
\label{eq:icpgws-terminal-threshold}
    r_{T,\alpha}
    :=
    \inf\left\{
        r\ge0:
        \mathbb P(R_T\le r)\ge\alpha
    \right\}.
\end{equation}
All probabilities in \eqref{eq:icpgws-terminal-threshold} are taken before the
final terminal randomization.  If the distribution of \(R_T\) has an atom at the
threshold, define
\begin{equation}
\label{eq:icpgws-boundary-randomization}
    \theta_{T,\alpha}
    :=
    \begin{cases}
    \displaystyle
    \frac{
        \alpha-\mathbb P(R_T<r_{T,\alpha})
    }{
        \mathbb P(R_T=r_{T,\alpha})
    },
    & \mathbb P(R_T=r_{T,\alpha})>0,\\[12pt]
    0,
    & \mathbb P(R_T=r_{T,\alpha})=0 .
    \end{cases}
\end{equation}
The terminal decision is
\begin{equation}
\label{eq:icpgws-terminal-rule-simple-upper}
    \widehat a_T
    =
    \begin{cases}
        ?, & R_T<r_{T,\alpha},\\[3pt]
        ?, & R_T=r_{T,\alpha}
             \text{ with probability } \theta_{T,\alpha},\\[3pt]
        \widehat b_T, & \text{otherwise}.
    \end{cases}
\end{equation}

\begin{algorithm}[t]
\caption{Information-Coordinate Posterior Gap Weighted Sampling with Abstention
\textsc{IC-PGWS}$(\alpha)$}
\label{alg:icpgws-simple-upper}
\begin{algorithmic}[1]
\Require Budget \(T\), abstention level \(\alpha\), functions
\(s\), \(\beta\), \(A\), natural statistic \(t\), prior densities
\(\{\bar f_i\}_{i=1}^K\), and a constant
\(\widetilde c\in(0,1/4]\) satisfying \eqref{eq:ckl-under-simple-upper}.
\State Initialize \(N_i(0)=0\) and \(S_i(0)=0\) for all \(i\in[K]\).
\For{\(t=0,1,\ldots,T-1\)}
    \If{there exists \(i\in[K]\) such that \(N_i(t)<\sqrt{t+1}\)}
        \State Sample one such arm, using deterministic tie-breaking.
    \Else
        \State Compute the empirical centers
        \(\{\widehat\eta_i(t),\widehat\mu_i(t)\}_{i=1}^K\) using
        \eqref{eq:icpgws-empirical-center}--\eqref{eq:icpgws-empirical-center-fallback}.
        \State Compute the leader \(\widehat b_t\), the empirical gaps
        \(\{\widehat\Delta_{j,s}(t)\}_{j\neq\widehat b_t}\), and
        \(\widehat\Gamma_{s,t}\) using \eqref{eq:icpgws-leader}.
        \State Compute the sampling probabilities
        \(\{p_i(t)\}_{i=1}^K\) using
        \eqref{eq:icpgws-weights}--\eqref{eq:icpgws-sampling-prob}.
        \State Sample arm \(i\) with probability \(p_i(t)\).
    \EndIf
    \State Observe \(X_{t+1}\), and update \(N_i(t+1)\) and \(S_i(t+1)\).
\EndFor
\State Compute \(\widehat b_T\) using \eqref{eq:icpgws-terminal-leader}.
\State Compute the terminal certificate \(R_T\) in
\eqref{eq:icpgws-terminal-statistic}.
\State Output \(\widehat a_T\) according to the abstention rule
\eqref{eq:icpgws-terminal-rule-simple-upper}.
\end{algorithmic}
\end{algorithm}


\subsubsection{Upper Bound Theorem}

\begin{theorem}[Upper Bound for \textsc{IC-PGWS}]
\label{thm:simple-transfer-upper}
Under Assumptions~\ref{ass:regular-expfam},
\ref{ass:regular-info-prior}, and the global quadratic KL assumption~\ref{ass:quadratic-kl}, $\textsc{IC-PGWS}(\alpha)$ with terminal rule
\eqref{eq:icpgws-terminal-rule-simple-upper} satisfies, for every \(T\) and every
\(\alpha\in(0,1)\),
\[
    \mathcal A_T(\textsc{IC-PGWS}(\alpha))=\alpha .
\]
Moreover,
\[
    \limsup_{\alpha\downarrow0}
    \limsup_{T\to\infty}
    \frac{1}{\alpha^2T}
    \log \mathcal E_T(\textsc{IC-PGWS}(\alpha))
    \le
    -\frac{1}{8\kappa^2}.
\]
\end{theorem}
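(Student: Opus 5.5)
The plan mirrors the Gaussian proof of Theorem~\ref{thm:pgws-upper}. Exact calibration $\mathcal A_T=\alpha$ follows from the boundary randomization \eqref{eq:icpgws-boundary-randomization}, exactly as before. For the error, the certificate gives it directly. Conditional on $\mathcal F_T$, on the non-abstention event we have
\[
\varPi_T(a^\star\neq\widehat b_T)\le\sum_{j\neq\widehat b_T}\varPi_T(\eta_j\ge\eta_{\widehat b_T})\le (K-1)e^{-R_T}\le (K-1)e^{-r_{T,\alpha}}.
\]
Taking expectations then yields $\mathcal E_T\le (K-1)e^{-r_{T,\alpha}}$, the analogue of Lemma~\ref{lem:pgws-certification}, with no Gaussian computation needed. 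It remains to prove the analogue of Lemma~\ref{lem:pgws-threshold-lower}:
\[
r_{T,\alpha}\ge(1-\varepsilon)\frac{\alpha^2T}{8\kappa^2}
\]
for small $\alpha$ and large $T$.

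\textbf{Step 1: almost-sure limits.} Forced exploration plus the SLLN give $\widehat\eta_i(t)\to\eta_i$ almost surely, with the fallback used only finitely often. The martingale argument of Lemma~\ref{lem:pgws-sampling-proportions} then gives $N_i(T)/T\to p_i^\dagger$. Here the limiting weights are $w_{b^\star}^\dagger=1$ and $w_j^\dagger=\min\{1,\Gamma_s^2/(4\widetilde c\Delta_{j,s}^2)\}$, where $\Gamma_s$ and $\Delta_{j,s}$ are the gaps in the $\eta$-coordinate. The second-best arm gets weight $1$ because $\widetilde c\le 1/4$.

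\textbf{Step 2: posterior large deviations.} I will show $\liminf_T R_T/T\ge C^\dagger(\mu)$ almost surely, where
\[
C^\dagger(\mu):=\min_{j\neq b^\star}\inf_{z}\bigl\{p_{b^\star}^\dagger D(\mu_{b^\star},z)+p_j^\dagger D(\mu_j,z)\bigr\}.
\]
This is the posterior Laplace principle. The posterior probability of $\{\mu_j\ge\mu_{b^\star}\}$ decays at a rate given by the KL projection, because the prior density is continuous, positive and bounded in $\eta$. An upper bound on the posterior probability suffices for this direction, i.e. $-\frac1T\log\varPi_T(\cdot)\ge C^\dagger-o(1)$. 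I will prove it by bounding the likelihood ratio of the posterior-integrand relative to its value at the MLE and using uniform convergence of the empirical log-likelihood on compacts. Bounded priors control the integral, while the quadratic KL Assumption~\ref{ass:quadratic-kl} controls the noncompact tails: it gives a uniform $e^{-cT(\eta-\hat\eta)^2}$ decay away from the MLE.

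\textbf{Step 3: lower bound on the exponent.} Using $D\ge\widetilde c(\Delta s)^2$ would lose constants. Instead I use the local expansion $D(\mu,z)=\tfrac12(s(\mu)-s(z))^2(1+o(1))$ as $z\to\mu$, which is the relevant regime when $\Gamma_s\to0$. For the near-tie pair this gives $C^\dagger\ge\frac{\Gamma_s^2}{4W^\dagger}(1-o(1))$. For far challengers, $\widetilde c$ in Assumption~\ref{ass:quadratic-kl} together with the weight formula gives $p_j^\dagger\,\widetilde c\,\Delta_{j,s}^2/(\ldots)\ge \Gamma_s^2/(8W^\dagger)$-type bounds, so they do not dominate. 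This is exactly why the $4\widetilde c$ factor appears in \eqref{eq:icpgws-weights}. Then Lemma~\ref{lem:triple-near-tie} (whose proof only needs bounded $\bar f_i$) and Lemma~\ref{lem:top-gap-density} (in $\eta$-coordinates, with density $\kappa$) give $\mathbb P(C^\dagger\le x)\le\kappa\sqrt{8x}(1+o(1))$, as in Lemma~\ref{lem:Cdagger-tail}. Only the upper tail inequality is needed.

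\textbf{Step 4: conclusion.} Fatou/Portmanteau applied to $\liminf R_T/T\ge C^\dagger$ gives
\[
\limsup_T\mathbb P(R_T\le xT)\le\mathbb P(C^\dagger\le x)<\alpha
\]
at $x=(1-\varepsilon)\alpha^2/(8\kappa^2)$. This yields the threshold bound, and the theorem follows as in Theorem~\ref{thm:pgws-upper}.

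The main obstacle is Step 2. It requires a rigorous almost-sure posterior tail exponent under adaptive sampling with a non-conjugate prior, uniformly enough to pass the limit inside the probability. The tail control at parameters far from the truth, or near the boundary of $\mathcal I$, is where Assumption~\ref{ass:quadratic-kl} must be used.
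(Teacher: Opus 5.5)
There is a genuine gap, in Step~3. The rest of your architecture is the paper's: calibration, the certificate bound, consistency and allocation convergence, posterior large deviations for $R_T/T$, the lower tail of $C^\dagger$, then Portmanteau. Your one-sided reductions are legitimate. $\liminf_T R_T/T\ge C^\dagger(\mu)$ a.s.\ already gives $\limsup_T\mathbb P(R_T\le xT)\le\mathbb P(C^\dagger\le x)$ by reverse Fatou, and only the upper bound on $\mathbb P(C^\dagger\le x)$ is needed.

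The Gaussian Lemma~\ref{lem:Cdagger-tail} runs on the exact global identity $C^\dagger=\Gamma^2/(4W^\dagger)$. That identity supplies both the a priori bound $\Gamma^2\le 4Kx$ on $\{C^\dagger\le x\}$ and the sharp bound $\Gamma\le(1+\eta)\sqrt{8x}$. Your substitute, $D(\mu,z)=\tfrac12(s(\mu)-s(z))^2(1+o(1))$, is uniform only for $\mu,z$ in compact subsets of $\mathcal I$, and it fails near $\partial\mathcal H$. For Poisson rewards ($s=2\sqrt\mu$ up to a constant), take a top pair with means $(\mu,\mu/4)$. Then $\Gamma_s=\sqrt\mu\to0$, yet $\inf_{z}\{D(\mu,z)+D(\mu/4,z)\}=(\log1.6+\tfrac14\log0.4)\mu\approx0.96\,\Gamma_s^2/4$ for every $\mu$. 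So the claim that the runner-up term is at least $\frac{\Gamma_s^2}{4W^\dagger}(1-o(1))$ fails uniformly, and mirroring the Gaussian proof does not close.

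Two ingredients are missing.
\begin{enumerate}
\item An a priori bound, obtained by applying Assumption~\ref{ass:quadratic-kl} to every challenger, \emph{including the runner-up}. This gives $C^\dagger\ge c(K,\widetilde c)\,\Gamma_s^2$, hence $\{C^\dagger\le x\}\subseteq\{\Gamma_s\le R^\star\sqrt x\}$ (Lemma~\ref{lem:nonlocal-exclusion-simple-upper}).
\item A localization. By inner regularity, choose a compact $\mathcal K_\rho\subset\mathcal T^\uparrow$ with $\lambda(\mathcal K_\rho)\ge\kappa-\rho$. On $\mathcal K_\rho$ the expansion is uniform and $C^\dagger=\frac{\Gamma_s^2}{8}(1+o(1))$. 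The remainder is bounded by $\mathbb P(\Gamma_s\le R^\star\sqrt x,\ \xi^\uparrow(\eta)\notin\mathcal K_\rho)\le\rho R^\star\sqrt x+o(\sqrt x)$ (Lemmas~\ref{lem:compact-localization-simple-upper} and \ref{lem:Cdagger-local-expansion-simple-upper}). Alternatively, apply dominated convergence over the tie coordinates and the rescaled gap $\Gamma_s/\sqrt x\in[0,R^\star]$, using the first ingredient as the domination.
\end{enumerate}

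Your far-challenger estimate must also be sharper than ``$\Gamma_s^2/(8W^\dagger)$-type''. Such a bound would dominate the runner-up's term and only yield $\Gamma_s\le4\sqrt x$ instead of $\sqrt{8x}$. You need $\frac{\Gamma_s^2}{4W^\dagger(1+w_k^\dagger)}$ with $w_k^\dagger=O(\Gamma_s^2/\Gamma_{3,s}^2)\to0$. This again uses the a priori bound to know $\Gamma_s=O(\sqrt x)$.

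Finally, you place the essential use of Assumption~\ref{ass:quadratic-kl} in Step~2, but the paper does not use it for the posterior tails there. The one-sided tail exponents come from monotonicity of the log-likelihood on either side of the MLE, i.e.\ concavity of $\theta\mapsto\theta\,\bar m_n-A(\theta)$. This gives the sharp exponent even at a finite endpoint of $\mathcal H$, where quadratic decay about the MLE alone does not. For the direction you need, the quadratic bound's indispensable role is the a priori bound in Step~3, which your proposal omits.
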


We prove the theorem in several steps.  Throughout this proof, all probabilities
are under the joint Bayesian law induced by
Algorithm~\ref{alg:icpgws-simple-upper}, unless otherwise stated.  Recall that
\[
    \eta_i=s(\mu_i),
    \qquad
    \beta(\eta)=\theta(s^{-1}(\eta)),
    \qquad
    \mathcal H=s(\mathcal I).
\]

\subsubsection{Error Guarantee}

\begin{lemma}[Error Guarantee]
\label{lem:posterior-certification-simple-upper}
For every \(T\) and every \(\alpha\in(0,1)\),
\[
    \mathcal E_T(\textsc{IC-PGWS}(\alpha))
    \le
    (K-1)e^{-r_{T,\alpha}}.
\]
\end{lemma}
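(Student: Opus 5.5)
The plan mirrors the proof of Lemma~\ref{lem:pgws-certification}, with one simplification: $R_T$ is now defined directly through posterior probabilities, so no Gaussian tail bound is needed.

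First, write the undetected error as an expectation of a conditional probability given the terminal history. The terminal decision $\widehat a_T$ depends only on $\mathcal F_T$ and on the external boundary randomization. That randomization is independent of $\mu$ given $\mathcal F_T$. The leader $\widehat b_T$ is $\mathcal F_T$-measurable. Hence
\[
\mathcal E_T=\mathbb E\Bigl[\mathbbm 1\{\widehat a_T\neq ?\}\,\varPi_T\bigl(a^\star\neq\widehat b_T\bigr)\Bigr].
\]
Formally, we condition on $\mathcal F_T$ augmented by the randomization variable, and note that the conditional law of $\mu$ is unchanged by this augmentation.

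Second, bound the posterior probability of misidentification by a union bound over challengers. Since ties among the $\mu_i$ have probability zero (the prior is continuous and independent), $a^\star\neq\widehat b_T$ implies $\mu_j\ge\mu_{\widehat b_T}$ for some $j\neq\widehat b_T$. Therefore
\[
\varPi_T(a^\star\neq\widehat b_T)\le\sum_{j\neq\widehat b_T}\varPi_T(\mu_j\ge\mu_{\widehat b_T})\le (K-1)\max_{j\neq\widehat b_T}\varPi_T(\mu_j\ge\mu_{\widehat b_T})=(K-1)e^{-R_T}.
\]
The last equality is exactly definition \eqref{eq:icpgws-terminal-statistic}. One small check is needed: if the maximum posterior probability is $0$, then $R_T=+\infty$ and the bound is trivially $0$. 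No Gaussian structure is used here, so posterior independence across arms is not even required.

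Third, use the terminal rule \eqref{eq:icpgws-terminal-rule-simple-upper}. On the event $\{\widehat a_T\neq ?\}$ we have $R_T\ge r_{T,\alpha}$, because every history with $R_T<r_{T,\alpha}$ abstains. Hence $e^{-R_T}\le e^{-r_{T,\alpha}}$ on that event. Taking expectations and dropping the indicator gives $\mathcal E_T\le (K-1)e^{-r_{T,\alpha}}$.

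The only delicate point, which I do not expect to be a real obstacle, is the measurability of $R_T$ as a function of the history. I would handle it by defining $\varPi_T$ through a regular conditional distribution of $\mu$ given $\mathcal F_T$; this exists because $\mu$ takes values in a Polish space. It also requires a clean statement that the abstention randomization is conditionally independent of $\mu$ given $\mathcal F_T$.
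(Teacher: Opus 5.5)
Your proposal is correct and follows essentially the same argument as the paper. Both proofs write $\mathcal E_T$ as the expectation of the non-abstention indicator times the posterior misidentification probability, bound that probability by $(K-1)e^{-R_T}$ using a union bound over challengers and the definition of $R_T$, and then use $R_T\ge r_{T,\alpha}$ on the non-abstention event. Your additional remarks on the conditional independence of the boundary randomization and on the measurability of $R_T$ only make explicit points that the paper's proof leaves implicit.
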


\begin{proof}
Condition on the terminal history \(\mathcal F_T\).  If the algorithm recommends
\(\widehat b_T\), then
\[
    \mathbb P(a^\star\neq \widehat b_T\mid\mathcal F_T)
    \le
    \sum_{j\neq \widehat b_T}
    \varPi_T(\mu_j\ge \mu_{\widehat b_T}).
\]
By the definition of \(R_T\),
\[
    \max_{j\neq\widehat b_T}
    \varPi_T(\mu_j\ge \mu_{\widehat b_T})
    =
    e^{-R_T}.
\]
Therefore
\[
    \mathbb P(a^\star\neq \widehat b_T\mid\mathcal F_T)
    \le
    (K-1)e^{-R_T}.
\]
On the event that the algorithm does not abstain, the terminal quantile rule
implies \(R_T\ge r_{T,\alpha}\).  Hence
\[
\begin{aligned}
    \mathcal E_T(\textsc{IC-PGWS}(\alpha))
    =
    \mathbb E\!\left[
        \mathbbm{1}\{\widehat a_T\neq ?\}
        \mathbb P(a^\star\neq \widehat b_T\mid\mathcal F_T)
    \right]  
    \le
    (K-1)e^{-r_{T,\alpha}}. 
\end{aligned}
\]

\end{proof}

\subsubsection{Consistency and Allocation Convergence.}

We next identify the almost-sure limit of \(R_T/T\). We first introduce the
limiting quantities associated with a realized parameter vector. Let
    $b^\star
    :=
    \operatorname*{arg\,max}_{i\in[K]}\eta_i
    =
    \operatorname*{arg\,max}_{i\in[K]}\mu_i$
be the true best arm. For each \(j\neq b^\star\), define the information coordinate
gap
    $\Delta_{j,s}
    :=
    \eta_{b^\star}-\eta_j$,
and define the true top two information gap
    $\Gamma_s
    :=
    \min_{j\neq b^\star}\Delta_{j,s}
    =
    \eta_{(1)}-\eta_{(2)}$,
where \(\eta_{(1)}>\eta_{(2)}>\cdots>\eta_{(K)}\) are the ordered
information-coordinate qualities.

Define the \emph{limiting unnormalized sampling weights} by
\begin{equation}
\label{eq:simple-limiting-weights}
    w_{b^\star}^\dagger(\eta):=1,
    \qquad
    w_j^\dagger(\eta)
    :=
    \min\left\{
        1,\,
        \frac{\Gamma_s^2}
        {4\widetilde c\,\Delta_{j,s}^2}
    \right\},
    \qquad j\neq b^\star.
\end{equation}
Let
\[
    W^\dagger(\eta)
    :=
    \sum_{\ell=1}^K w_\ell^\dagger(\eta),
    \qquad
    p_i^\dagger(\eta)
    :=
    \frac{w_i^\dagger(\eta)}{W^\dagger(\eta)}.
\]
Because \(\widetilde c\le 1/4\), every second-best arm has limiting weight one.
Consequently,
\[
    2\le W^\dagger(\eta)\le K,
    \qquad
    p_i^\dagger(\eta)>0
    \quad\text{for every }i\in[K].
\]

For each challenger \(j\neq b^\star\), define its pairwise error exponent
\[
    C_j^\dagger(\mu)
    :=
    \inf_{z\in[\mu_j,\mu_{b^\star}]}
    \left\{
        p_{b^\star}^\dagger(\eta)D(\mu_{b^\star},z)
        +
        p_j^\dagger(\eta)D(\mu_j,z)
    \right\}.
\]
The worst-case pairwise error exponent is
\begin{equation}
\label{eq:Cdagger-simple-upper}
    C^\dagger(\mu)
    :=
    \min_{j\neq b^\star}C_j^\dagger(\mu).
\end{equation}
Equivalently,
\begin{equation}
\label{eq:Cdagger-unnormalized-simple-upper}
    C^\dagger(\mu)
    =
    \frac{1}{W^\dagger(\eta)}
    \min_{j\neq b^\star}
    \inf_{z\in[\mu_j,\mu_{b^\star}]}
    \left\{
        D(\mu_{b^\star},z)
        +
        w_j^\dagger(\eta)D(\mu_j,z)
    \right\}.
\end{equation}

We now show that the empirical centers converge to the true qualities, the
empirical sampling proportions converge to \(p^\dagger(\eta)\), and the
normalized terminal certificate converges to \(C^\dagger(\mu)\).


\begin{lemma}[Information Coordinate Center Consistency]
\label{lem:center-consistency-simple-upper}
Under Algorithm~\ref{alg:icpgws-simple-upper}, for every arm \(i\),
\[
    \widehat \eta_i(t)\to \eta_i
    \qquad\text{and}\qquad
    \widehat \mu_i(t)\to \mu_i
    \qquad\text{a.s.}
\]
\end{lemma}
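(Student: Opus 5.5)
The proof mirrors Lemma~\ref{lem:pgws-posterior-consistency}, with one extra step to handle the nonlinear map from sufficient statistics to information coordinates. It has three steps.

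\textbf{Step 1: every arm is sampled infinitely often.} The forced-exploration rule in Lines 3--4 of Algorithm~\ref{alg:icpgws-simple-upper} guarantees $N_i(t)\ge \sqrt{t}-1$ eventually for every arm. Hence $N_i(t)\to\infty$ deterministically, on every sample path.

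\textbf{Step 2: the empirical natural statistic converges.} Condition on the realized parameter vector $\mu$. The rewards $Y_{i,1},Y_{i,2},\ldots$ of arm $i$, indexed by pull count, are i.i.d.\ from $P_{\mu_i}$. Adaptivity only decides \emph{when} the next draw is revealed, not its law, so we may couple with a fixed reward table as in the Gaussian proof. Under Assumption~\ref{ass:regular-expfam}, $t(Y)$ has finite mean $A'(\theta(\mu_i))$, because $\theta(\mu_i)$ lies in the open natural parameter space $\Theta$. The strong law of large numbers then gives
\[
\widehat t_i(t)=\frac{1}{N_i(t)}\sum_{s=1}^{N_i(t)}t(Y_{i,s})\to A'(\theta(\mu_i))
\]
almost surely, conditionally on $\mu$. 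Integrating over the prior (Fubini) makes the convergence hold almost surely under the joint Bayesian law.

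\textbf{Step 3: pass to $\widehat\eta_i$ and $\widehat\mu_i$ through the inverse maps.}
\begin{itemize}
\item Since $A''>0$, $A'$ is a continuous strictly increasing bijection from the open interval $\Theta$ onto the open interval $A'(\Theta)$. Its inverse $(A')^{-1}$ is therefore continuous.
\item The limit $A'(\theta(\mu_i))$ is an interior point of $A'(\Theta)$. So $\widehat t_i(t)\in A'(\Theta)$ for all large $t$, and the fallback \eqref{eq:icpgws-empirical-center-fallback} is used only finitely often.
\item Continuity of $(A')^{-1}$ then gives $\widehat\theta_i(t)\to\theta(\mu_i)$.
\item Next, $\beta=\theta\circ s^{-1}$ is a continuous strictly increasing bijection from $\mathcal H$ onto $\Theta$. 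Indeed, $\theta$ maps $\mathcal I$ onto $\Theta$ with $\theta'>0$, and $s$ is a $C^1$ increasing bijection with $s'=\sqrt{I}>0$. Hence $\beta^{-1}$ is continuous, and $\widehat\eta_i(t)=\beta^{-1}(\widehat\theta_i(t))\to\beta^{-1}(\theta(\mu_i))=s(\mu_i)=\eta_i$.
\item Finally, continuity of $s^{-1}$ gives $\widehat\mu_i(t)\to\mu_i$.
\end{itemize}

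The only delicate point is the possibility that $\widehat t_i(t)$ falls outside $A'(\Theta)$, as in the Bernoulli all-failures case. Openness of $A'(\Theta)$ together with the interior limit disposes of it, because it forces $\widehat t_i(t)$ back into $A'(\Theta)$ eventually. The remaining checks are continuity and bijectivity of the inverse maps, which is routine from strict monotonicity on open intervals.
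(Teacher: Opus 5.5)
Your proof is correct and follows essentially the same route as the paper: forced exploration gives $N_i(t)\to\infty$, the strong law gives $\widehat t_i(t)\to A'(\theta(\mu_i))$, openness of $A'(\Theta)$ makes the fallback eventually inactive, and continuity of $(A')^{-1}$, $\beta^{-1}$, and $s^{-1}$ transfers the convergence to $\widehat\eta_i(t)$ and $\widehat\mu_i(t)$. Your handling of the random index $N_i(t)$ through the reward-table coupling, and of the joint Bayesian law through Fubini, is a bit more explicit than the paper's, but the argument is the same.
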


\begin{proof}
The forced-exploration rule implies
\[
    N_i(t)\to\infty
    \qquad\text{a.s. for every }i.
\]
Conditional on \(\mu_i\), the observations from arm \(i\) are i.i.d. from
\(P_{\mu_i}\).  Since the log-partition function is finite on an open natural-parameter
space, the natural statistic has finite mean under \(P_{\mu_i}\), and
\[
    \mathbb E_{\mu_i}[t(X)]
    =
    A'(\theta(\mu_i)).
\]
Therefore, by the strong law of large numbers,
\[
    \widehat{t}_i(t)
    =
    \frac{1}{N_i(t)}
    \sum_{r\le t:A_r=i}t(X_r)
    \to
    A'(\theta(\mu_i))
    \qquad\text{a.s.}
\]
Because \(\theta(\mu_i)\in\Theta\) and \(\Theta\) is open,
\(A'(\theta(\mu_i))\in A'(\Theta)\).  Hence $\widehat{t}_i(t)\in A'(\Theta)$ eventually almost surely.  On this eventual event,
\[
    \widehat\theta_i(t)
    =
    (A')^{-1}\!\left(\widehat{t}_i(t)\right)
    \to
    \theta(\mu_i),
\]
since \(A''>0\) implies that \(A'\) is strictly increasing and continuous.
Thus
\[
    \widehat \eta_i(t)
    =
    \beta^{-1}(\widehat\theta_i(t))
    \to
    \beta^{-1}(\theta(\mu_i))
    =
    s(\mu_i)
    =
    \eta_i.
\]
Continuity of \(s^{-1}\) gives
\[
    \widehat \mu_i(t)=s^{-1}(\widehat \eta_i(t))\to \mu_i. 
\]

\end{proof}

\begin{lemma}[Sampling Proportions]
\label{lem:sampling-proportions-simple-upper}
Under Algorithm~\ref{alg:icpgws-simple-upper},
\[
    \frac{N_i(T)}{T}
    \to
    p_i^\dagger(\eta)
    \qquad\text{a.s. for every }i\in[K].
\]
\end{lemma}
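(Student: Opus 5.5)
The proof follows the four-step argument of Lemma~\ref{lem:pgws-sampling-proportions}; the only new ingredient is the continuity of the information-coordinate sampling map. First, forced exploration is negligible. The forced-exploration rule of Algorithm~\ref{alg:icpgws-simple-upper} is identical to that of \ouralg{}, so the counting argument of Step~1 there applies verbatim. It gives $F_i(T)\le\lceil\sqrt{T+1}\rceil+1$ for each arm and $F(T)/T\to0$ for the total number of forced rounds. Second, let $B_t$ be the indicator that round $t+1$ is non-forced. Then $D_{i,t+1}:=B_t(\mathbbm 1\{A_{t+1}=i\}-p_i(t))$ is a bounded martingale difference sequence with respect to $\mathcal F_t$, because $p_i(t)$ in \eqref{eq:icpgws-sampling-prob} is $\mathcal F_t$-measurable. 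Azuma's inequality together with Borel--Cantelli then yields $\frac1T\sum_{t<T}D_{i,t+1}\to0$ almost surely. Combining these two steps through the decomposition $N_i(T)=F_i(T)+\sum_{t<T}D_{i,t+1}+\sum_{t<T}B_tp_i(t)$ reduces the claim to showing $\frac1T\sum_{t<T}B_tp_i(t)\to p_i^\dagger(\eta)$ almost surely.

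The substantive step, and the main obstacle, is pointwise convergence $p_i(t)\to p_i^\dagger(\eta)$ almost surely. By Lemma~\ref{lem:center-consistency-simple-upper}, $\widehat\eta(t)\to\eta$ almost surely. The fallback \eqref{eq:icpgws-empirical-center-fallback} is also triggered only finitely often, since $\widehat t_i(t)\in A'(\Theta)$ eventually. Under Assumption~\ref{ass:regular-info-prior} the $\eta_i$ are independent with densities, so we may restrict to the probability-one event on which all $\eta_i$ are distinct. On that event $b^\star$ is unique and the second-best arm is unique, so $\Gamma_s>0$.

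I will then argue that on a neighborhood of the true $\eta$ in $\mathcal H^K$ the following hold. The leader $\arg\max_i\widehat\eta_i$ is constant and equal to $b^\star$, so deterministic tie-breaking plays no role. The empirical top-two gap $\widehat\Gamma_{s,t}$ is positive, so the first branch of \eqref{eq:icpgws-sampling-prob} is used. Finally, $\widehat\Gamma_{s,t}=\min_{j\ne b^\star}(\widehat\eta_{b^\star}-\widehat\eta_j)$ and each $\widehat\Delta_{j,s}(t)$ are continuous functions of $\widehat\eta$ with nonzero limits.

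Since $x\mapsto\min\{1,x\}$ is continuous, each weight in \eqref{eq:icpgws-weights} converges to the corresponding weight in \eqref{eq:simple-limiting-weights}. The normalizer satisfies $W(t)\ge1$, because the leader has weight one, and this keeps the normalization continuous. Hence $p_i(t)\to p_i^\dagger(\eta)$ almost surely.

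The one subtlety is to make sure the cap at $1$ and the constant $4\widetilde c$ do not break continuity at the limit. They do not, because the composition of continuous maps evaluated at a point with $\Delta_{j,s}>0$ is continuous; note in particular that any second-best arm has limiting weight exactly one since $\widetilde c\le1/4$. Given pointwise convergence, Ces\`aro averaging gives $\frac1T\sum_{t<T}p_i(t)\to p_i^\dagger(\eta)$. The error from forced rounds is bounded by $F(T)/T\to0$, exactly as in Step~4 of Lemma~\ref{lem:pgws-sampling-proportions}. This closes the argument.
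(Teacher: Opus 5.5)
Your proposal is correct and follows essentially the same route as the paper. Both arguments get pointwise convergence $p_i(t)\to p_i^\dagger(\eta)$ from center consistency together with almost-sure uniqueness of the best and second-best arms, then apply the forced-exploration bound, Azuma with Borel--Cantelli on the martingale part, and Ces\`aro averaging. Your explicit neighborhood and continuity argument (constant leader, $\widehat\Gamma_{s,t}>0$, $W(t)\ge 1$) spells out the step the paper states in a single line.
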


\begin{proof}
By Lemma~\ref{lem:center-consistency-simple-upper},
\[
    \widehat \eta_i(t)\to \eta_i
    \qquad\text{for every }i.
\]
Since the prior densities are continuous, ties have probability zero.  Thus the
best arm and the second-best arm are unique almost surely.  On this
probability-one event, $\widehat b_t=b^\star$
eventually, and
\[
    \widehat\Delta_{j,s}(t)\to\Delta_{j,s},
    \qquad
    \widehat\Gamma_{s,t}\to\Gamma_s.
\]
Therefore
\[
    w_i(t)\to w_i^\dagger(\eta),
    \qquad
    p_i(t)\to p_i^\dagger(\eta)
    \qquad\text{a.s. for every }i.
\]

Now fix an arm $i$. We take four steps:

\textbf{Step 1: Forced exploration is negligible.}
Let $F_i(T)$ be the number of forced-exploration pulls of arm $i$ up to time $T$.
If the $m$-th forced pull of arm $i$ occurs before time $T$, then immediately before that pull,
$N_i(t)\ge m-1$.
But a forced pull of arm $i$ can occur only if
$N_i(t)<\sqrt{t+1}\le \sqrt{T+1}$.
Hence
$m-1<\sqrt{T+1}$,
and therefore
$F_i(T)\le \lceil \sqrt{T+1}\rceil+1$.
Consequently,
\[
\frac{F_i(T)}{T}\to0.
\]
Let $F(T)$ be the total number of forced-exploration rounds up to time $T$. Since
$F(T)=\sum_{\ell=1}^K F_\ell(T)$,
we also have
\[
\frac{F(T)}{T}\le
\frac{K(\lceil\sqrt{T+1}\rceil+1)}{T}
\to0.
\]

\textbf{Step 2: Martingale concentration on non-forced rounds.}
Let $\mathcal F_t$ be the history before round $t+1$, and define
$B_t
:=
\mathbbm{1}\{\text{round }t+1\text{ is non-forced}\}$.
The event $\{B_t=1\}$ is $\mathcal F_t$-measurable. On non-forced rounds, the algorithm samples arm $i$ with conditional probability $p_i(t)$.

Define
$D_{i,t+1}
:=
B_t\Bigl(\mathbbm{1}\{A_{t+1}=i\}-p_i(t)\Bigr)$.
Then
$\mathbb E[D_{i,t+1}\mid \mathcal F_t]=0$,
so $(D_{i,t+1})_{t\ge0}$ is a martingale difference sequence. Also,
$|D_{i,t+1}|\le1$.

Therefore, by Azuma's inequality, for every $\varepsilon>0$,
\[
\mathbb P\left(
\left|
\frac1T\sum_{t=0}^{T-1}D_{i,t+1}
\right|
\ge \varepsilon
\right)
\le
2\exp\left(-\frac{\varepsilon^2T}{2}\right).
\]
The right-hand side is summable in $T$. Hence, by the Borel--Cantelli lemma,
\[
\frac1T\sum_{t=0}^{T-1}D_{i,t+1}\to0
\qquad\text{almost surely}.
\]

\textbf{Step 3: Decomposition of the empirical pull count.}
The total number of pulls of arm $i$ can be decomposed as
\[
N_i(T)
=
F_i(T)
+
\sum_{t=0}^{T-1}
B_t\mathbbm{1}\{A_{t+1}=i\}.
\]
Using the definition of $D_{i,t+1}$, we  have
$B_t\mathbbm{1}\{A_{t+1}=i\}
=
D_{i,t+1}+B_t p_i(t)$.
Thus
\[
\frac{N_i(T)}{T}
=
\frac{F_i(T)}{T}
+
\frac1T\sum_{t=0}^{T-1}D_{i,t+1}
+
\frac1T\sum_{t=0}^{T-1}B_t p_i(t).
\]
By Steps 1 and 2,
\[
\frac{N_i(T)}{T}
-
\frac1T\sum_{t=0}^{T-1}B_t p_i(t)
\to0
\qquad\text{almost surely}.
\]

\textbf{Step 4: Ces\`aro convergence of the target probabilities.}
Since
\[
p_i(t)\to p_i^\dagger(\eta)
\qquad\text{almost surely},
\]
the sequence $\{p_i(t)\}$ is Ces\`aro convergent, i.e., 
\[
\frac1T\sum_{t=0}^{T-1}p_i(t)
\to
p_i^\dagger(\eta)
\qquad\text{almost surely}.
\]

Now write
\[
\frac1T\sum_{t=0}^{T-1}B_t p_i(t)
=
\frac1T\sum_{t=0}^{T-1}p_i(t)
-
\frac1T\sum_{t=0}^{T-1}(1-B_t)p_i(t).
\]
Since $0\le p_i(t)\le1$,
\[
0\le
\frac1T\sum_{t=0}^{T-1}(1-B_t)p_i(t)
\le
\frac1T\sum_{t=0}^{T-1}(1-B_t)
=
\frac{F(T)}{T}
\to0.
\]
Hence
\[
\frac1T\sum_{t=0}^{T-1}B_t p_i(t)
\to
p_i^\dagger(\eta)
\qquad\text{almost surely}.
\]
Combining this with Step 3 gives the desired result.

\end{proof}

\subsubsection{Posterior Ordering Large Deviations}

Let \(X_{i,1},\ldots,X_{i,n}\) denote the first \(n\) rewards observed from
arm \(i\), and let
    $\mathcal G_{i,n}:=\sigma(X_{i,1},\ldots,X_{i,n})$.
Define the \emph{one-arm posterior} \(\varPi_{i,n}\) as the conditional distribution
of \(\eta_i=s(\mu_i)\) given \(\mathcal G_{i,n}\):
\[
    \varPi_{i,n}(B)
    :=
    \mathbb P(\eta_i\in B\mid\mathcal G_{i,n}),
    \qquad B\in\mathcal B(\mathcal H).
\]
Intuitively, the next lemma proves that $$
\varPi_{i,n} (\mu_i'\approx z) = \exp\{ -nD(\mu_i,z) + o(n) \}
$$

\begin{lemma}[One-arm Posterior Estimates for Ordering]
\label{lem:one-arm-laplace-simple-upper}
Fix an arm \(i\), and work on the event (that happens almost surely)
\[
    \frac1n\sum_{r=1}^nt(X_{i,r})
    \to
    A'(\theta(\mu_i)).
\]
Let \(\varPi_{i,n}\) be the one-arm posterior defined above.  Define
\[
    I_i(\eta):=D(\mu_i,s^{-1}(\eta)).
\]
Then the following hold.

First, for every compact interval \(K\Subset\mathcal H\),
\[
    \limsup_{n\to\infty}
    \frac1n\log\varPi_{i,n}(K)
    \le
    -\inf_{\eta\in K} I_i(\eta).
\]
Second, for every nonempty open interval \(G\subseteq\mathcal H\),
\[
    \liminf_{n\to\infty}
    \frac1n\log\varPi_{i,n}(G)
    \ge
    -\inf_{\eta\in G} I_i(\eta).
\]
Third, for every \(a,b\in\mathcal H\) with \(a<\eta_i<b\),
\[
    \limsup_{n\to\infty}
    \frac1n\log\varPi_{i,n}(\eta\le a)
    \le
    -I_i(a),
\]
and
\[
    \limsup_{n\to\infty}
    \frac1n\log\varPi_{i,n}(\eta\ge b)
    \le
    -I_i(b).
\]
\end{lemma}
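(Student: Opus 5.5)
The plan is a Laplace-type argument carried out directly in the natural parameter. By Bayes' rule, the one-arm posterior of $\eta_i$ has density proportional to
\[
  \bar f_i(\eta)\exp\{n[\beta(\eta)\bar t_n - A(\beta(\eta))]\},\qquad \bar t_n:=\frac1n\sum_{r=1}^n t(X_{i,r}).
\]
Subtract the value at the truth, writing $\theta_i:=\theta(\mu_i)$. The exponent then becomes $-n\,\widehat I_n(\eta)$, where
\[
  \widehat I_n(\eta):=\bigl(\theta_i-\beta(\eta)\bigr)\bar t_n - A(\theta_i)+A(\beta(\eta)).
\]
Since $D(\mu_i,z)=(\theta_i-\theta(z))A'(\theta_i)-A(\theta_i)+A(\theta(z))$, we get the exact identity
\[
  \widehat I_n(\eta)-I_i(\eta)=\bigl(\theta_i-\beta(\eta)\bigr)\bigl(\bar t_n-A'(\theta_i)\bigr).
\]
So on the almost-sure event, $\widehat I_n\to I_i$ uniformly on compacts, with an error of order $\epsilon_n|\theta_i-\beta(\eta)|$ globally, where $\epsilon_n:=|\bar t_n-A'(\theta_i)|\to0$. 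The posterior probability of a set $B$ is then
\[
  \varPi_{i,n}(B)=\frac{\int_B \bar f_i\,e^{-n\widehat I_n}}{\int_{\mathcal H}\bar f_i\,e^{-n\widehat I_n}},
\]
and the three claims follow from bounding the numerator and the denominator separately.

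For the denominator, I will give a lower bound that is at least $e^{-o(n)}$. Take a small neighbourhood $[\eta_i-\gamma,\eta_i+\gamma]$. On it, $\bar f_i$ is bounded below by continuity and positivity (Assumption~\ref{ass:regular-info-prior}), and $I_i\le c\gamma^2$ because $I_i$ is smooth with $I_i(\eta_i)=0$. The uniform convergence of $\widehat I_n$ then gives
\[
  \liminf_n\frac1n\log(\text{denominator})\ge -c\gamma^2,
\]
and letting $\gamma\downarrow0$ gives $0$.

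For the numerator in the compact upper bound (first claim), I use that $\bar f_i$ is bounded and that $\widehat I_n\ge I_i-o(1)$ uniformly on $K$. This bounds the numerator by $|K|\,\sup\bar f_i\,e^{-n(\inf_K I_i-o(1))}$. For the open lower bound (second claim), I pick $\eta^*\in G$ within $\epsilon$ of the infimum and lower bound the numerator over a small compact neighbourhood of $\eta^*$ inside $G$, again using continuity. These two steps are routine.

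The main obstacle is the third claim, because the half-lines $\{\eta\le a\}$ and $\{\eta\ge b\}$ are not compact and can reach the boundary of $\mathcal H$. There the uniform approximation fails, and the prior is only bounded, not integrable against an arbitrary weight. I would use convexity in the natural parameter. The map $\theta\mapsto(\theta_i-\theta)\bar t_n-A(\theta_i)+A(\theta)$ is convex, and since $\beta$ is increasing, $\{\eta\le a\}$ corresponds to $\{\theta\le\beta(a)\}$. For $n$ large, $\bar t_n$ lies strictly inside $A'(\Theta)$, so the minimiser $\hat\theta_n$ satisfies $\hat\theta_n>\beta(a)$. By convexity the function is then decreasing on $\{\theta\le\beta(a)\}$, hence
\[
  \widehat I_n(\eta)\ge\widehat I_n(a)\qquad\text{for all }\eta\le a,
\]
and this converges to $I_i(a)$. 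The numerator is therefore at most $e^{-n\widehat I_n(a)}\int_{\eta\le a}\bar f_i\le e^{-n(I_i(a)-o(1))}$, using only that $\bar f_i$ is a probability density (total mass at most $1$), with no boundedness needed at the boundary. The case $\eta\ge b$ is symmetric, and dividing by the denominator bound finishes the third claim. The remaining step to verify carefully is that $\hat\theta_n$ eventually lies strictly between $\beta(a)$ and $\beta(b)$. This follows because $\hat\theta_n=(A')^{-1}(\bar t_n)\to\theta_i$, as in Lemma~\ref{lem:center-consistency-simple-upper}.
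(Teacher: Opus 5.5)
Your route is the same as the paper's: tilt the exponent at the truth (your $-\widehat I_n$ is the paper's $L_n$), get the compact and open bounds from locally uniform convergence $\widehat I_n\to I_i$, and get the half-line bounds from monotonicity of the concave exponent on each side of $\hat\theta_n=(A')^{-1}(\bar t_n)$ together with $\int\bar f_i\le 1$.

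There is a gap in the second claim. A lower bound on $\varPi_{i,n}(G)=\bigl(\int_G\bar f_i e^{-n\widehat I_n}\bigr)/Z_n$ needs an \emph{upper} bound on the normalizer $Z_n=\int_{\mathcal H}\bar f_i e^{-n\widehat I_n}$. You only bound $Z_n$ from below. This upper bound is not automatic. First, $\widehat I_n$ is not nonnegative: its minimum over $\theta\in\Theta$ is minus the Kullback--Leibler divergence between the reward laws with natural parameters $\hat\theta_n$ and $\theta_i$, which is strictly negative whenever $\hat\theta_n\neq\theta_i$. So $Z_n$ can exceed $1$. Second, your global error estimate $\epsilon_n|\theta_i-\beta(\eta)|$ is not uniformly small near the boundary of $\mathcal H$, so it does not control $Z_n$ either.

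The fix is one line. Once $\bar t_n\in A'(\Theta)$, the concave map $\theta\mapsto(\theta-\theta_i)\bar t_n-A(\theta)+A(\theta_i)$ attains its supremum over $\Theta=\beta(\mathcal H)$ at $\hat\theta_n$. Hence
\[
Z_n\le \exp\Bigl\{n\sup_{\theta\in\Theta}\bigl[(\theta-\theta_i)\bar t_n-A(\theta)+A(\theta_i)\bigr]\Bigr\}\int_{\mathcal H}\bar f_i(\eta)\,\dd\eta
=\exp\bigl\{n\bigl[(\hat\theta_n-\theta_i)A'(\hat\theta_n)-A(\hat\theta_n)+A(\theta_i)\bigr]\bigr\}.
\]
The bracket is exactly that divergence, and it tends to $0$ because $\hat\theta_n\to\theta_i$. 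So $\limsup_n\frac1n\log Z_n\le0$, and combining this with your numerator lower bound gives the second claim.

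The paper proves this upper bound differently. It invokes the quadratic-KL Assumption~\ref{ass:quadratic-kl} to get $L_n(\eta)\le L_n(\eta_n)-\widetilde c\,(\eta-\eta_n)^2$, then bounds $Z_n\le\|\bar f_i\|_\infty e^{nL_n(\eta_n)}\sqrt{\pi/(n\widetilde c)}$. The supremum bound above avoids both that assumption and the boundedness of $\bar f_i$, since a polynomial prefactor is irrelevant at the exponential scale.
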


\begin{proof}
Set
\[
    \eta_i=s(\mu_i),
    \qquad
    \theta_i:=\theta(\mu_i)=\beta(\eta_i),
    \qquad
    m_i:=A'(\theta_i),
\]
and write
\[
    \bar m_n:=\frac1n\sum_{r=1}^nt(X_{i,r}).
\]

For \(\eta\in\mathcal H\), write
\[
    p_\eta(w)
    :=
    \frac{\dd P_{s^{-1}(\eta)}}{\dd\lambda}(w)
    =
    \exp\{\beta(\eta)t(w)-A(\beta(\eta))\}h(w).
\]
Consider a realized full adaptive history $h_t=(a_1,x_1,\ldots,a_t,x_t)$,
For a candidate parameter vector
$\boldsymbol\eta=(\eta_1,\ldots,\eta_K)\in\mathcal H^K$,
the density of this realized history under the policy \(\pi\), with respect to
counting measure on actions and \(\lambda^{\otimes t}\) on rewards, is
\[
    L_t^\pi(\boldsymbol\eta;h_t)
    =
    \prod_{r=1}^t
    \pi_r(a_r\mid h_{r-1})
    \prod_{r=1}^t
    p_{\eta_{a_r}}(x_r).
\]
Here \(\pi_r(a_r\mid h_{r-1})\) is the sampling probability assigned by the
policy to the realized action \(a_r\) after the realized past \(h_{r-1}\).

Using the product prior in information coordinates, the posterior density of
\(\boldsymbol\eta\) is therefore proportional to
\[
    \prod_{\ell=1}^K
    \bar f_\ell(\eta_\ell)
    \prod_{r:a_r=\ell}
    p_{\eta_\ell}(x_r).
\]
Thus the posterior factorizes arm by arm.  In particular, the marginal posterior
of arm \(i\) depends only on the rewards observed from arm \(i\).  If these
rewards are denoted
$X_{i,1},\ldots,X_{i,n}$,
then, for every measurable set \(B\subseteq\mathcal H\),
\[
    \varPi_{i,n}(B)
    =
    \frac{
        \int_B
        \bar f_i(\eta)
        \prod_{r=1}^n p_\eta(X_{i,r})
        \,\dd\eta
    }{
        \int_{\mathcal H}
        \bar f_i(\eta)
        \prod_{r=1}^n p_\eta(X_{i,r})
        \,\dd\eta
    }.
\]
Substituting the exponential-family density gives
\[
    \prod_{r=1}^n p_\eta(X_{i,r})
    =
    \exp\left\{
        \beta(\eta)\sum_{r=1}^nt(X_{i,r})
        -
        nA(\beta(\eta))
    \right\}
    \prod_{r=1}^n h(X_{i,r}).
\]
\(\prod_{r=1}^n h(X_{i,r})\) does not depend on \(\eta\), so
it also cancels between numerator and denominator.
Bayes' rule gives
\[
    \varPi_{i,n}(B)
    =
    \frac{
        \int_B
        \bar f_i(\eta)
        \exp\{n[\beta(\eta)\bar m_n-A(\beta(\eta))]\}
        \,\dd\eta
    }{
        \int_{\mathcal H}
        \bar f_i(\eta)
        \exp\{n[\beta(\eta)\bar m_n-A(\beta(\eta))]\}
        \,\dd\eta
    }.
\]
Multiplying both numerator and denominator by
\[
    \exp\{-n[\beta(\eta_i)\bar m_n-A(\beta(\eta_i))]\}
\]
allows us to write
\[
    \varPi_{i,n}(B)
    =
    \frac{
        \int_B \bar f_i(\eta)e^{nL_n(\eta)}\,\dd\eta
    }{
        Z_n
    },
\]
where
\[
    L_n(\eta)
    :=
    (\beta(\eta)-\beta(\eta_i))\bar m_n
    -
    A(\beta(\eta))
    +
    A(\beta(\eta_i))
\]
and
\[
    Z_n
    :=
    \int_{\mathcal H}\bar f_i(\eta)e^{nL_n(\eta)}\,\dd\eta.
\]
Note \(L_n(\eta_i)=0\) and
\[
    L_n(\eta)\to (\beta(\eta)-\beta(\eta_i))A'(\beta(\eta_i)) - A(\beta(\eta))+A(\beta(\eta_i))
    = -D(\mu_i,s^{-1}(\eta))=-I_i(\eta)
\]
on compact subsets of \(\mathcal H\).

We first prove that
\begin{equation}
\label{eq:one-arm-denominator-rate-simple-upper}
    \frac1n\log Z_n\to0.
\end{equation}

For the lower bound on \(Z_n\), fix \(\tau>0\).  Since $\bar m_n\to m_i$, $L_n(\eta)\to -I_i(\eta)$ uniformly on compact neighborhoods of \(\eta_i\).  

Since \(I_i(\eta_i)=0\), choose a compact interval \(J_\tau\Subset\mathcal H\) containing \(\eta_i\) such that, for
all sufficiently large \(n\), $\inf_{\eta\in J_\tau}L_n(\eta)\ge -\tau$.

By continuity and strict positivity of \(\bar f_i\),
\[
    \int_{J_\tau}\bar f_i(\eta)\,\dd\eta>0.
\]
Therefore
\[
    Z_n
    \ge
    \int_{J_\tau}\bar f_i(\eta)e^{nL_n(\eta)}\,\dd\eta
    \ge
    e^{-n\tau}\int_{J_\tau}\bar f_i(\eta)\,\dd\eta.
\]
Thus
\[
    \liminf_{n\to\infty}\frac1n\log Z_n\ge -\tau.
\]
Letting \(\tau\downarrow0\) gives
\[
    \liminf_{n\to\infty}\frac1n\log Z_n\ge0.
\]

For the upper bound on \(Z_n\), for all large \(n\) we have
\(\bar m_n\in A'(\Theta)\).  Define the pseudo-MLE
\[
    \theta_n:=(A')^{-1}(\bar m_n),
    \qquad
    \eta_n:=\beta^{-1}(\theta_n).
\]
Then $\theta_n\to\theta_i$ and $\eta_n\to \eta_i$.
Moreover, using \(\bar m_n=A'(\theta_n)\),
\[
\begin{aligned}
    L_n(\eta_n)-L_n(\eta)
    &=
    (\theta_n-\beta(\eta))A'(\theta_n)
    -
    A(\theta_n)
    +
    A(\beta(\eta))                                                \\
    &=
    D(s^{-1}(\eta_n),s^{-1}(\eta)).
\end{aligned}
\]
Therefore, by assumption \eqref{eq:ckl-under-simple-upper},
\[
    L_n(\eta)
    \le
    L_n(\eta_n)-\widetilde{c}\,(\eta-\eta_n)^2.
\]
Since
\[
    L_n(\eta_n)
    =
    D(s^{-1}(\eta_n),\mu_i)
    \to0,
\]
boundedness of \(\bar f_i\) gives
\[
\begin{aligned}
    Z_n
    &\le
    \|\bar f_i\|_\infty
    e^{nL_n(\eta_n)}
    \int_{\mathcal H}e^{-n\widetilde{c} (\eta-\eta_n)^2}\,\dd\eta        \\
    &\le
    \|\bar f_i\|_\infty
    e^{nL_n(\eta_n)}
    \sqrt{\frac{\pi}{n\widetilde{c}}}.
\end{aligned}
\]
Hence
\[
    \limsup_{n\to\infty}\frac1n\log Z_n\le0.
\]
This proves \eqref{eq:one-arm-denominator-rate-simple-upper}.

Now fix a compact interval \(K\Subset\mathcal H\).  Since \(\bar m_n\to m_i\),
\[
    L_n(\eta)\to -D(\mu_i,s^{-1}(\eta))=-I_i(\eta)
\]
uniformly on \(K\).  Hence
\[
\begin{aligned}
    \limsup_{n\to\infty}
    \frac1n\log
    \int_K \bar f_i(\eta)e^{nL_n(\eta)}\,\dd\eta
    &\le
    -\inf_{\eta\in K}I_i(\eta).
\end{aligned}
\]
Combining this numerator bound with
\eqref{eq:one-arm-denominator-rate-simple-upper} gives the compact-interval
upper bound.

Next let \(G\subseteq\mathcal H\) be a nonempty open interval.  Choose
\(\eta_0\in G\) such that
\[
    I_i(\eta_0)\le \inf_{\eta\in G}I_i(\eta)+\tau.
\]
Choose a compact interval \(J\Subset G\) containing \(\eta_0\) and small enough
that
\[
    \sup_{\eta\in J}I_i(\eta)\le I_i(\eta_0)+\tau.
\]
By uniform convergence of \(L_n\) on \(J\) and strict positivity of
\(\bar f_i\) on \(J\),
\[
\begin{aligned}
    \liminf_{n\to\infty}
    \frac1n\log
    \int_G \bar f_i(\eta)e^{nL_n(\eta)}\,\dd\eta
    &\ge
    \liminf_{n\to\infty}
    \frac1n\log
    \int_J \bar f_i(\eta)e^{nL_n(\eta)}\,\dd\eta                  \\
    &\ge
    -\sup_{\eta\in J}I_i(\eta)
    \ge
    -\inf_{\eta\in G}I_i(\eta)-2\tau.
\end{aligned}
\]
Using again the denominator rate
\eqref{eq:one-arm-denominator-rate-simple-upper}, and then letting
\(\tau\downarrow0\), gives the open-interval lower bound.

It remains to prove the one-sided estimates.  Consider \(a<\eta_i\).  Since
\(\eta_n\to \eta_i\), eventually \(a<\eta_n\).  The function $\theta\mapsto \theta\bar m_n-A(\theta)$
is strictly concave and maximized at \(\theta_n\).  Since \(\beta\) is strictly
increasing, \(L_n(\eta)\) is increasing on \(\{\eta<\eta_n\}\).  Therefore, for all large
\(n\),
\[
    \sup_{\eta\le a}L_n(\eta)=L_n(a).
\]
Thus
\[
    \varPi_{i,n}(\eta\le a)
    \le
    \frac{e^{nL_n(a)}}{Z_n},
\]
because \(\int_{\{\eta\le a\}}\bar f_i(\eta)\,\dd\eta\le1\).
Since $L_n(a)\to -I_i(a)$
and \(n^{-1}\log Z_n\to0\), we obtain
\[
    \limsup_{n\to\infty}
    \frac1n\log\varPi_{i,n}(\eta\le a)
    \le
    -I_i(a).
\]
The proof for \(b>\eta_i\) is identical: eventually \(b>\eta_n\), \(L_n\) is
decreasing on \(\{\eta>\eta_n\}\), and hence
\[
    \sup_{\eta\ge b}L_n(\eta)=L_n(b).
\]
This gives the upper-tail estimate.

\end{proof}

The following lemma shows that, if arms \(i\) and \(j\)
receive asymptotic sampling proportions \(q_i\) and \(q_j\), then the posterior
probability of incorrectly ordering them decays with exponent
\[
    \inf_{z\in[\mu_j,\mu_i]}
    \{q_iD(\mu_i,z)+q_jD(\mu_j,z)\}.
\]

\begin{lemma}[Posterior Ordering Large Deviations]
\label{lem:posterior-ordering-simple-upper}
Fix two arms \(i\) and \(j\) with \(\mu_i>\mu_j\).  Suppose that
\[
    \frac{N_i(T)}{T}\to q_i>0,
    \qquad
    \frac{N_j(T)}{T}\to q_j>0.
\]
Then almost surely
\begin{equation} \label{eq:behavior_of_Pi}
    -\frac1T
    \log
    \varPi_T(\mu_j\ge \mu_i)
    \to
    \inf_{z\in[\mu_j,\mu_i]}
    \{q_iD(\mu_i,z)+q_jD(\mu_j,z)\}.
\end{equation}
\end{lemma}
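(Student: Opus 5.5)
The plan is to work in the information coordinate $\eta=s(\mu)$ throughout. Conditional on $\mathcal F_T$, the posterior factorizes across arms. Hence $\eta_i$ and $\eta_j$ are independent with marginals $\varPi_{i,N_i(T)}$ and $\varPi_{j,N_j(T)}$, and
\[
\varPi_T(\eta_j\ge\eta_i)=\int \varPi_{j,N_j(T)}([u,\infty))\,\varPi_{i,N_i(T)}(\dd u).
\]
Write $n_i=N_i(T)\approx q_iT$ and $n_j\approx q_jT$. These counts tend to infinity, so the one-arm estimates of Lemma~\ref{lem:one-arm-laplace-simple-upper} apply along the subsequences $n=N_i(T)$ and $n=N_j(T)$. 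Those estimates hold on the almost-sure law-of-large-numbers events for each arm. Set $J(z):=q_iD(\mu_i,z)+q_jD(\mu_j,z)$. It is continuous on $[\mu_j,\mu_i]$, with $J(\mu_i)=q_jD(\mu_j,\mu_i)$ and $J(\mu_j)=q_iD(\mu_i,\mu_j)$.

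\textbf{Lower bound on the probability (upper bound on the exponent).} Fix $z^\star\in(\mu_j,\mu_i)$ nearly attaining the infimum; an endpoint minimizer can be approximated by interior points by continuity. Fix a small $\tau>0$ and let $G=(s(z^\star)-\tau,s(z^\star)+\tau)$. Then
\[
\varPi_T(\eta_j\ge\eta_i)\ge \varPi_{i,n_i}\bigl(\eta\in(s(z^\star)-\tau,s(z^\star))\bigr)\,\varPi_{j,n_j}\bigl(\eta\in(s(z^\star),s(z^\star)+\tau)\bigr).
\]
The open-interval lower bound of Lemma~\ref{lem:one-arm-laplace-simple-upper} gives the exponent $-q_i\inf_{G}I_i-q_j\inf_{G}I_j+o(T)$. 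Let $\tau\downarrow0$, use continuity of $D$, and then optimize over $z^\star$. This yields $\limsup_T -\frac1T\log\varPi_T\le\inf_z J(z)$.

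\textbf{Upper bound on the probability (lower bound on the exponent).} Discretize. Choose grid points $\mu_j=z_0<z_1<\dots<z_m=\mu_i$ in $\eta$-coordinates, with $\eta$-values $u_k=s(z_k)$. The event $\{\eta_j\ge\eta_i\}$ is contained in the union of three kinds of events. The first is $\{\eta_i\le u_0\}$, which has one-sided exponent $q_iI_i(u_0)=J(\mu_j)$. The second is $\{\eta_j\ge u_m\}$, which has exponent $q_jI_j(u_m)=J(\mu_i)$. The third is, for each $k$, the event $\{\eta_i\in[u_{k-1},u_k],\ \eta_j\ge u_{k-1}\}$. By independence its probability is at most $\varPi_{i,n_i}([u_{k-1},u_k])\,\varPi_{j,n_j}(\eta\ge u_{k-1})$. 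The compact-interval upper bound controls the first factor. The one-sided bound controls the second when $u_{k-1}>\eta_j$; when $u_{k-1}=\eta_j$ the factor is trivially at most one, and $I_j(\eta_j)=0$. The resulting exponent is at least $\min_{z\in[z_{k-1},z_k]}q_iD(\mu_i,z)+q_jD(\mu_j,z_{k-1})$. Continuity of $D$ on the compact interval $[\mu_j,\mu_i]$ means this equals $\inf_z J-\omega(\text{mesh})$, where $\omega$ is a modulus of continuity. With a finite union of $m+2$ terms, the maximum term dominates on the exponential scale. Refining the mesh then gives $\liminf_T-\frac1T\log\varPi_T\ge\inf_zJ(z)$.

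\textbf{Main obstacle.} The main difficulty is transferring the fixed-$n$ rates to the random sample sizes $n_i=N_i(T)$, with the correct factor $q_i$. My approach will be as follows. For a fixed interval $B$, Lemma~\ref{lem:one-arm-laplace-simple-upper} gives $\frac1n\log\varPi_{i,n}(B)\to$ its rate along all $n$, on an almost-sure event; the rate is a $\limsup$ or $\liminf$ bound. Since $N_i(T)\to\infty$ and $N_i(T)/T\to q_i$, it follows that $\frac1T\log\varPi_{i,N_i(T)}(B)$ is $q_i$ times that rate asymptotically. Only finitely many intervals are used for each mesh and each $\tau$, so a countable intersection of almost-sure events suffices, taking rational meshes and rational $\tau$. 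A smaller point is to check that the posterior of $\eta_i$ given the adaptive $\mathcal F_T$ coincides with $\varPi_{i,N_i(T)}$. This follows from the factorization already shown in the proof of Lemma~\ref{lem:one-arm-laplace-simple-upper}, because sampling kernels cancel.
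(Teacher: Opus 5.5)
Your proposal is correct and is essentially the paper's proof: posterior factorization across arms, the one-arm Laplace estimates transported along the random subsequences $N_i(T),N_j(T)$, a small rectangle near a (near-)minimizer for the lower bound, and a finite union bound for the upper bound. Three differences are minor. You cover $\{V\ge U\}$ by strips $[u_{k-1},u_k]\times[u_{k-1},\infty)$ controlled by the one-sided tail bound, whereas the paper covers the compact triangle $E_0$ by rectangles and then uses the monotonicity of $I_i,I_j$ to reduce $\inf_{u\le v}$ to the diagonal; your mesh/uniform-continuity argument replaces that reduction. You use interior near-minimizers instead of treating endpoint minimizers separately. Finally, the rational-mesh intersection is unnecessary, since the one-arm bounds already hold for all intervals simultaneously on the law-of-large-numbers event (and your mesh endpoints $s(\mu_j),s(\mu_i)$ are not rational anyway).
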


\begin{proof}
Write
\[
    \eta_i=s(\mu_i),
    \qquad
    \eta_j=s(\mu_j),
    \qquad
    \eta_i>\eta_j.
\]
Recall from the proof of Lemma~\ref{lem:one-arm-laplace-simple-upper} that,
conditional on the realized history, the arm-wise posterior of
\(\eta_i=s(\mu_i)\) is
\[
    \varPi_{i,T}(\dd\eta)
    =
    \frac{
        \bar f_i(\eta)
        \exp\{\beta(\eta)S_i(T)-N_i(T)A(\beta(\eta))\}
    }{
        Z_{i,T}
    }\,\dd\eta,
\]
where
\[
    Z_{i,T}
    :=
    \int_{\mathcal H}
        \bar f_i(u)
        \exp\{\beta(u)S_i(T)-N_i(T)A(\beta(u))\}
    \,\dd u .
\]
Because the prior is a product prior and the likelihood factors arm by arm, the
full posterior factorizes:
\[
    \varPi_T(\dd\eta_1,\ldots,\dd\eta_K)
    =
    \bigotimes_{\ell=1}^K \varPi_{\ell,T}(\dd\eta_\ell).
\]
Consequently, let \(U\sim\varPi_{i,T}\) and \(V\sim\varPi_{j,T}\) denote the posterior
information coordinate qualities (arm parameter) of arms \(i\) and \(j\), then \(U\) and \(V\)
are independent under the posterior, and for Borel sets \(B,C\subseteq\mathcal H\),
\[
    \varPi_T(U\in B,\ V\in C)
    =
    \varPi_{i,T}(B)\varPi_{j,T}(C).
\]
In particular,
\[
    \varPi_T(\mu_j\ge \mu_i)
    =
    \varPi_T(\eta_j\ge \eta_i)
    =
    \varPi_T(V\ge U).
\]
Define
\[
    I_i(u):=D(\mu_i,s^{-1}(u)),
    \qquad
    I_j(v):=D(\mu_j,s^{-1}(v)),
\]
and
\[
    C_{ij}
    :=
    \inf_{w\in[\eta_j,\eta_i]}
    \{q_iI_i(w)+q_jI_j(w)\}.
\]
The one-arm bounds on $\log \varPi_{i,n}$ of Lemma~\ref{lem:one-arm-laplace-simple-upper} hold
along every subsequence \(n\to\infty\) on the arm-wise strong-law event.
Therefore they also hold along the random subsequences
\(N_i(T)\) and \(N_j(T)\).  Multiplying the one-arm exponents by
\(N_i(T)/T\to q_i\) and \(N_j(T)/T\to q_j\) gives the corresponding
\(T\)-speed bounds: For intervals \(B,C\),
\[
\limsup_{T\to\infty}
\frac1T\log\{\varPi_{i,T}(B)\varPi_{j,T}(C)\}
\le
-q_i\inf_{u\in B}I_i(u)-q_j\inf_{v\in C}I_j(v)
\]
when \(B,C\) are compact, and the corresponding lower bound when \(B,C\) are
open.

We first prove the lower bound for \eqref{eq:behavior_of_Pi}.  Let \(w^\star\in[\eta_j,\eta_i]\) be a minimizer of
\[
    w\mapsto q_iI_i(w)+q_jI_j(w).
\]
If \(w^\star\in(\eta_j,\eta_i)\), choose \(\delta>0\) so small that
\[
    (w^\star-\delta,w^\star)
    \times
    (w^\star,w^\star+\delta)
    \subseteq
    \{(u,v):v\ge u\}.
\]
Then, by posterior product decomposition and the open-interval lower bounds in
Lemma~\ref{lem:one-arm-laplace-simple-upper},
\[
\begin{aligned}
    \liminf_{T\to\infty}
    \frac1T\log \varPi_T(V\ge U)
    &\ge
    -q_i\inf_{u\in(w^\star-\delta,w^\star)}I_i(u)
    -q_j\inf_{v\in(w^\star,w^\star+\delta)}I_j(v).
\end{aligned}
\]
Letting \(\delta\downarrow0\) gives
\[
    \liminf_{T\to\infty}
    \frac1T\log \varPi_T(V\ge U)
    \ge
    -C_{ij}.
\]
If \(w^\star=\eta_j\), use instead the rectangle $U\in(\eta_j-\delta,\eta_j)$ and $V\in(\eta_j,\eta_j+\delta)$.
Since
\[
    \inf_{u\in(\eta_j-\delta,\eta_j)} I_i(u)\to I_i(\eta_j),
    \qquad
    \inf_{v\in(\eta_j,\eta_j+\delta)} I_j(v)\to I_j(\eta_j)=0,
\]
the same lower bound follows.  

\noindent Similarly if \(w^\star=\eta_i\), use $U\in(\eta_i-\delta,\eta_i)$ and $V\in(\eta_i,\eta_i+\delta)$.
Then
\[
    \inf_{u\in(\eta_i-\delta,\eta_i)} I_i(u)\to I_i(\eta_i)=0,
    \qquad
    \inf_{v\in(\eta_i,\eta_i+\delta)} I_j(v)\to I_j(\eta_i),
\]
and the same lower bound again follows.  Thus
\[
    \liminf_{T\to\infty}
    \frac1T\log \varPi_T(V\ge U)
    \ge
    -C_{ij}.
\]

For the upper bound, decompose
\[
    \{V\ge U\}
    \subseteq
    E_-\cup E_0\cup E_+,
\]
where $E_-:=\{U\le \eta_j\}$, $E_+:=\{V\ge \eta_i\}$ and $E_0:=\{\eta_j\le U\le V\le \eta_i\}$.

The one-sided estimates in Lemma~\ref{lem:one-arm-laplace-simple-upper} give
\[
    \limsup_{T\to\infty}
    \frac1T\log\varPi_T(E_-)
    \le
    -q_i I_i(\eta_j)
    =
    -q_iD(\mu_i,\mu_j),
\]
and
\[
    \limsup_{T\to\infty}
    \frac1T\log\varPi_T(E_+)
    \le
    -q_j I_j(\eta_i)
    =
    -q_jD(\mu_j,\mu_i).
\]
Since \(C_{ij}\le q_iD(\mu_i,\mu_j)\), by taking \(w=\eta_j\), and
\(C_{ij}\le q_jD(\mu_j,\mu_i)\), by taking \(w=\eta_i\), both endpoint events have
exponent at least \(C_{ij}\).

It remains to control \(E_0\).  The set $K_0:=\{(u,v):\eta_j\le u\le v\le \eta_i\}$ is compact.  Fix \(\tau>0\).  By continuity of $(u,v)\mapsto q_iI_i(u)+q_jI_j(v)$
on the compact square \([\eta_j,\eta_i]^2\), there exist finitely many compact
rectangles
\[
    R_m=I_m\times J_m,\qquad m=1,\ldots,M,
\]
whose union covers \(K_0\), such that each \(R_m\subseteq[\eta_j,\eta_i]^2\) and
\[
    \inf_{(u,v)\in R_m}
    \{q_iI_i(u)+q_jI_j(v)\}
    \ge
    \inf_{(u,v)\in K_0}
    \{q_iI_i(u)+q_jI_j(v)\}
    -\tau .
\]
By posterior product decomposition,
\[
    \varPi_T(E_0)
    \le
    \sum_{m=1}^M
    \varPi_{i,T}(I_m)\varPi_{j,T}(J_m),
\]
where \(\varPi_{i,T}\) and \(\varPi_{j,T}\) are the corresponding arm-wise posterior
marginals.  Applying the compact-interval upper bound from
Lemma~\ref{lem:one-arm-laplace-simple-upper} along the subsequences
\(N_i(T)\) and \(N_j(T)\), we get, for each \(m\),
\[
\begin{aligned}
    \limsup_{T\to\infty}
    \frac1T\log\{\varPi_{i,T}(I_m)\varPi_{j,T}(J_m)\}
    &\le
    -q_i\inf_{u\in I_m}I_i(u)
    -q_j\inf_{v\in J_m}I_j(v)  \\
    &=
    -\inf_{(u,v)\in R_m}
    \{q_iI_i(u)+q_jI_j(v)\}.
\end{aligned}
\]
Since the cover is finite, the union bound yields
\[
\begin{aligned}
    \limsup_{T\to\infty}
    \frac1T\log\varPi_T(E_0)
    &\le
    -
    \inf_{(u,v)\in K_0}
    \{q_iI_i(u)+q_jI_j(v)\}
    +\tau .
\end{aligned}
\]
Letting \(\tau\downarrow0\), we obtain
\[
    \limsup_{T\to\infty}
    \frac1T\log\varPi_T(E_0)
    \le
    -
    \inf_{\eta_j\le u\le v\le \eta_i}
    \{q_iI_i(u)+q_jI_j(v)\}.
\]

We now reduce this infimum to the diagonal ($u=v$). Note the derivative $$
\frac{\partial }{\partial z} D(\mu,z) = \theta'(z)\left[ A'(\theta(z))-A'(\theta(\mu)) \right],
$$
is negative for $z<\mu$ and positive for $z>\mu$. Hence, on the interval
\([\eta_j,\eta_i]\), the map \(u\mapsto I_i(u)\) is non-increasing whereas \(v\mapsto I_j(v)\) is non-decreasing.
Therefore, if \(u\le v\), then for every \(w\in[u,v]\), $I_i(w)\le I_i(u)$ and $I_j(w)\le I_j(v)$.

Thus every feasible off-diagonal point can be moved to a diagonal point with no
larger objective.  Hence
\[
    \inf_{\eta_j\le u\le v\le \eta_i}
    \{q_iI_i(u)+q_jI_j(v)\}
    =
    \inf_{w\in[\eta_j,\eta_i]}
    \{q_iI_i(w)+q_jI_j(w)\}
    =
    C_{ij}.
\]
Combining the bounds for \(E_-\), \(E_0\), and \(E_+\) by a union bound gives
\[
    \limsup_{T\to\infty}
    \frac1T\log \varPi_T(V\ge U)
    \le
    -C_{ij}.
\]
Together with the lower bound,
\[
    -\frac1T\log\varPi_T(V\ge U)\to C_{ij}.
\]
Returning to \(z=s^{-1}(w)\) proves the claimed formula.

\end{proof}

\subsubsection{Asymptotic Behavior of the Certificate $R_T$}

Recall we defined
\begin{equation}
    C^\dagger(\mu)
    :=
    \min_{j\neq b^\star}
    \inf_{z\in[\mu_j,\mu_{b^\star}]}
    \left\{
        p_{b^\star}^\dagger(\eta) D(\mu_{b^\star},z)
        +
        p_j^\dagger(\eta) D(\mu_j,z)
    \right\}.
\end{equation}

\begin{lemma}[Terminal Certificate Limit]
\label{lem:RT-limit-simple-upper}
Under Algorithm~\ref{alg:icpgws-simple-upper},
\[
    \frac{R_T}{T}
    \to
    C^\dagger(\mu)
    \qquad\text{a.s.}
\]
\end{lemma}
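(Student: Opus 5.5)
The approach is to reduce the claim to Lemma~\ref{lem:posterior-ordering-simple-upper}, applied pairwise between the true best arm and each challenger. The allocation limits needed for it come from Lemma~\ref{lem:sampling-proportions-simple-upper}, and Lemma~\ref{lem:center-consistency-simple-upper} ensures the empirical leader eventually coincides with $b^\star$.

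\emph{Step 1: restrict to a probability-one event.} We work on the intersection $\Omega_0$ of the following events:
\begin{itemize}
\item the arm-wise strong-law events used in Lemma~\ref{lem:one-arm-laplace-simple-upper};
\item the consistency event of Lemma~\ref{lem:center-consistency-simple-upper};
\item the allocation event $N_i(T)/T\to p_i^\dagger(\eta)$ for all $i$, from Lemma~\ref{lem:sampling-proportions-simple-upper};
\item the event that the $\eta_i$ are pairwise distinct, which has probability one because the prior is continuous.
\end{itemize}
On $\Omega_0$, since $\widehat\eta_i(T)\to\eta_i$ and the top arm is unique, there is a random $T_1$ such that $\widehat b_T=b^\star$ for all $T\ge T_1$. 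Hence for $T\ge T_1$,
\[
    \frac{R_T}{T}
    =
    \min_{j\neq b^\star}\left(-\frac1T\log\varPi_T(\mu_j\ge\mu_{b^\star})\right).
\]
Here we use that $-\log\max_j$ equals $\min_j(-\log)$, and the index set $\{j\neq b^\star\}$ no longer varies with $T$.

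\emph{Step 2: pairwise limits.} Fix $j\neq b^\star$. We have $\mu_{b^\star}>\mu_j$, and $N_{b^\star}(T)/T\to p_{b^\star}^\dagger(\eta)>0$ and $N_j(T)/T\to p_j^\dagger(\eta)>0$. Positivity holds because $2\le W^\dagger\le K$ and every weight is strictly positive, as noted after \eqref{eq:simple-limiting-weights}. Lemma~\ref{lem:posterior-ordering-simple-upper} then applies with $q_i=p_{b^\star}^\dagger$ and $q_j=p_j^\dagger$, giving
\[
    -\frac1T\log\varPi_T(\mu_j\ge\mu_{b^\star})\to C_j^\dagger(\mu).
\]

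\emph{Step 3: conclude.} A minimum of finitely many convergent sequences converges to the minimum of the limits. Therefore $R_T/T\to\min_{j\neq b^\star}C_j^\dagger(\mu)=C^\dagger(\mu)$ on $\Omega_0$.

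\emph{Main obstacle.} The delicate point is verifying the hypotheses of Lemma~\ref{lem:posterior-ordering-simple-upper} in the adaptive setting. That lemma uses the one-arm posterior bounds along the random subsequences $N_i(T)$, which requires the strong-law event to hold for the \emph{arm-wise} reward sequences $(X_{i,r})_{r\ge1}$ rather than for the sampled history. I would handle this with the standard stack-of-rewards representation: conditional on $\mu$, draw i.i.d.\ reward sequences for each arm in advance and let the policy read them in order. Under this representation the posterior factorization proved inside Lemma~\ref{lem:one-arm-laplace-simple-upper} is unaffected by adaptivity, since the sampling kernels cancel. Also, $N_i(T)\to\infty$ by forced exploration, so bounds valid along every sequence $n\to\infty$ transfer to $n=N_i(T)$.

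One further point to check is that Lemma~\ref{lem:posterior-ordering-simple-upper} holds almost surely for the random pair $(b^\star,j)$. This holds because there are finitely many ordered pairs $(i,j)$, and for each fixed pair the lemma holds on the event $\{\mu_i>\mu_j\}$ intersected with $\Omega_0$.
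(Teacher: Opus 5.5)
Your proposal is correct and follows the paper's proof essentially step for step. You use center consistency to get $\widehat b_T=b^\star$ eventually, the sampling-proportions lemma to get positive limiting allocations, the posterior-ordering large-deviation lemma applied to each pair $(b^\star,j)$, and a minimum over the finite challenger set. Your extra care about the probability-one event, the random pair $(b^\star,j)$, and the stack-of-rewards representation makes explicit what the paper leaves implicit; the stack-of-rewards point is already built into the paper's model via $X_t=Y_{i,N_i(t)}$.
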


\begin{proof}
By Lemma~\ref{lem:center-consistency-simple-upper}, $\widehat b_T=b^\star$ eventually almost surely.  By Lemma~\ref{lem:sampling-proportions-simple-upper},
\[
    \frac{N_i(T)}{T}\to p_i^\dagger(\eta)
    \qquad\text{for every }i.
\]
Since \(p_i^\dagger(\eta)>0\) for every arm, applying
Lemma~\ref{lem:posterior-ordering-simple-upper} with \(i=b^\star\) and each
\(j\neq b^\star\) gives
\[
    -\frac1T
    \log
    \varPi_T(\mu_j\ge \mu_{b^\star})
    \to
    \inf_{z\in[\mu_j,\mu_{b^\star}]}
    \left\{
        p_{b^\star}^\dagger(\eta) D(\mu_{b^\star},z)
        +
        p_j^\dagger(\eta) D(\mu_j,z)
    \right\}.
\]
Taking the minimum over the finite set \(j\neq b^\star\) yields
\[
    \frac{R_T}{T}
    =
    \min_{j\neq b^\star}
    \left\{
        -\frac1T\log
        \varPi_T(\mu_j\ge \mu_{b^\star})
    \right\}
    \to
    C^\dagger(\mu). 
\]

\end{proof}

\subsubsection{Lower Tail of the Limiting Certificate.}

We now prove that \(C^\dagger(\mu)\) has the same small lower tail as
\(\Gamma_s^2/8\).

\begin{lemma}[Top-Two Gap and Triple-Gap Expansions]
\label{lem:top-gap-triple-simple-upper}
Under Assumption~\ref{ass:regular-info-prior},
\[
    \mathbb P(\Gamma_s\le\varepsilon)
    =
    \kappa\varepsilon+o(\varepsilon),
    \qquad
    \varepsilon\downarrow0.
\]
Moreover, define 
$\Gamma_{3,s}:=\eta_{(1)}-\eta_{(3)}$ 
for \(K\ge3\), and \(\Gamma_{3,s}:=+\infty\) for \(K=2\), then
\[
    \mathbb P(\Gamma_{3,s}\le\varepsilon)
    =
    O(\varepsilon^2).
\]
\end{lemma}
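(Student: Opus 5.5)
The plan is to mirror the Gaussian arguments of Lemma~\ref{lem:top-gap-density} and Lemma~\ref{lem:triple-near-tie}, now working with the information-coordinate variables $\eta_i=s(\mu_i)$. These are independent with continuous, bounded, strictly positive densities $\bar f_i$ on $\mathcal H$, which is exactly what Assumption~\ref{ass:regular-info-prior} provides.

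For the first claim, I decompose by the identity of the best arm, exactly as before:
\[
\mathbb P(\Gamma_s\le\varepsilon)=\sum_{i=1}^K\int_{\mathcal H}\bar f_i(x)\bigl[\bar G_i(x)-\bar G_i(x-\varepsilon)\bigr]\,\dd x,
\qquad \bar G_i(x):=\prod_{k\neq i}\bar F_k(x).
\]
Here $\bar F_k$ is extended by $0$ to the left of $\mathcal H$. Since each $\bar F_k$ is $C^1$ with derivative $\bar f_k$, the function $\bar G_i$ is differentiable with $\bar G_i'=\sum_{j\ne i}\bar f_j\prod_{k\ne i,j}\bar F_k$, and $\bar G_i'\le\sum_{j\neq i}\|\bar f_j\|_\infty$. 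Two facts follow:
\begin{itemize}
\item the difference quotient $\varepsilon^{-1}[\bar G_i(x)-\bar G_i(x-\varepsilon)]$ is bounded uniformly in $x$ and $\varepsilon$;
\item by continuity of $\bar f_j$ (and of $\bar F_k$), this quotient converges pointwise to $\bar G_i'(x)$ as $\varepsilon\downarrow 0$. At interior points of $\mathcal H$ this holds for every $x$, which is enough because $\bar f_i$ vanishes outside $\mathcal H$.
\end{itemize}
Dominated convergence, with dominating function $\bar f_i(x)\cdot\mathrm{const}$, then gives $\varepsilon^{-1}\mathbb P(\Gamma_s\le\varepsilon)\to\sum_i\sum_{j\neq i}\int\bar f_i\bar f_j\prod_{k\ne i,j}\bar F_k=\kappa$. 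This is the same as the Gaussian case; no Gaussian-specific simplification is needed.

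For the triple gap, I use the same union bound over ordered triples $(i,j,k)$ as in Lemma~\ref{lem:triple-near-tie}:
\[
\mathbb P(\Gamma_{3,s}\le\varepsilon)\le\sum\int\bar f_i(x)\Bigl[\int_{x-\varepsilon}^x\bar f_j\Bigr]\Bigl[\int_{x-\varepsilon}^x\bar f_k\Bigr]\dd x\le \sum\|\bar f_j\|_\infty\|\bar f_k\|_\infty\,\varepsilon^2 ,
\]
using boundedness of the densities and $\int\bar f_i=1$.

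The only step needing care is the dominated-convergence justification near the boundary of $\mathcal H$, where the continuity of $\bar f$ may fail at the endpoints. I would handle this by noting that the pointwise convergence is needed only almost everywhere on $\mathcal H$, an open interval, and that boundedness supplies the domination, so the endpoints are irrelevant.
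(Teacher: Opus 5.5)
Your proposal is correct and follows essentially the same route as the paper. The paper also decomposes by the identity of the best arm with $\bar G_i=\prod_{k\neq i}\bar F_k$ (extending $\bar F_k$ by $0$ below $\mathcal H$) and uses bounded difference quotients with dominated convergence to get $\kappa$. For the triple gap it uses the same union bound over ordered triples, giving $K(K-1)(K-2)\max_i\|\bar f_i\|_\infty^2\,\varepsilon^2$.
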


\begin{proof}
For the top-two gap expansion, condition on arm \(i\) having information coordinate
value \(u\).  The event that arm \(i\) is best and that the top-two gap is at most
\(\varepsilon\) has conditional probability
\[
    \prod_{k\neq i}\bar F_k(u)
    -
    \prod_{k\neq i}\bar F_k(u-\varepsilon),
\]
where \(\bar F_k(t)\) is understood to be \(0\) below the left endpoint of
\(\mathcal H\) and \(1\) above the right endpoint.  Hence
\[
    \mathbb P(\Gamma_s\le\varepsilon)
    =
    \sum_{i=1}^K
    \int_{\mathcal H}
        \bar f_i(u)
        \left[
            \prod_{k\neq i}\bar F_k(u)
            -
            \prod_{k\neq i}\bar F_k(u-\varepsilon)
        \right]\dd u.
\]
Let
\[
    G_i(u):=\prod_{k\neq i}\bar F_k(u).
\]
Then
\[
    G_i'(u)
    =
    \sum_{j\neq i}
    \bar f_j(u)\prod_{k\neq i,j}\bar F_k(u)
\]
where the derivative exists, and \(G_i\) is Lipschitz because the densities are
bounded.  Therefore
\[
    \frac{G_i(u)-G_i(u-\varepsilon)}{\varepsilon}
    \to
    G_i'(u)
\]
for almost every \(u\), and the quotient is uniformly bounded.  Dominated
convergence gives
\[
\begin{aligned}
    \frac{\mathbb P(\Gamma_s\le\varepsilon)}{\varepsilon}
    \to
    \sum_{i=1}^K
    \sum_{j\neq i}
    \int_{\mathcal H}
        \bar f_i(u)\bar f_j(u)
        \prod_{k\neq i,j}\bar F_k(u)\,\dd u  
    =\kappa .
\end{aligned}
\]

For the triple-gap bound, the case \(K=2\) is trivial.  Assume \(K\ge3\), and
let
\[
    M_{\bar{f}}:=\max_{i\in[K]}\|\bar f_i\|_\infty<\infty.
\]
If \(\Gamma_{3,s}\le\varepsilon\), then there exist three distinct arms
\(i,j,k\) whose information coordinate values all lie in an interval of length
\(\varepsilon\) below the largest of the three.  Hence, by a union bound over
ordered triples,
\[
\begin{aligned}
    \mathbb P(\Gamma_{3,s}\le\varepsilon)
    &\le
    \sum_{i,j,k\ {\rm distinct}}
    \int_{\mathcal H}
        \bar f_i(u)
        \left[
            \int_{u-\varepsilon}^{u}\bar f_j(v)\,\dd v
        \right]
        \left[
            \int_{u-\varepsilon}^{u}\bar f_k(w)\,\dd w
        \right]\dd u  \\
    &\le
    K(K-1)(K-2)M_{\bar{f}}^2\varepsilon^2. 
\end{aligned}
\]

\end{proof}

Define the ordered top-pair tie manifold
\[
    \mathcal T^\uparrow
    :=
    \biguplus_{i\neq j}
    \left\{
        (i,j,u,\zeta_{-ij}):
        u\in\mathcal H,\ \zeta_k\in\mathcal H,\ \zeta_k<u\ \forall k\neq i,j
    \right\}.
\]
Here \(i\) denotes the best arm and \(j\) denotes the second-best arm.  For a
parameter vector \(\eta\) with unique best and second-best arms, define
\[
    \xi^\uparrow(\eta)
    :=
    \bigl(b^\star,b^\star_2,\eta_{b^\star},(\eta_k)_{k\neq b^\star,b^\star_2}\bigr),
\]
where $b^\star$ is the best arm index, and \(b^\star_2\) is the second-best arm index. Note that it does not include $\eta_{b^\star_2}$, which is meant to be recovered by $\Gamma_s$.

On the component \((i,j)\), define the finite measure
\[
    \lambda(\dd\xi)
    :=
    \bar f_i(u)\bar f_j(u)
    \prod_{k\neq i,j}\bar f_k(\zeta_k)\,\dd u\,\dd \zeta_{-ij}.
\]
Then
    $\lambda(\mathcal T^\uparrow)=\kappa$.

\begin{lemma}[Compact Localization of Top-Two Gap Density]
\label{lem:compact-localization-simple-upper}
For every \(\rho>0\), there exists a compact set
\[
    \mathcal K_\rho\subset\mathcal T^\uparrow
\]
such that, for every fixed \(R<\infty\),
\[
    \mathbb P\{\Gamma_s\le R\varepsilon,\,
    \xi^\uparrow(\eta)\in\mathcal K_\rho\}
    \ge
    (\kappa-\rho)R\varepsilon+o(\varepsilon),
\]
and
\[
    \mathbb P\{\Gamma_s\le R\varepsilon,\,
    \xi^\uparrow(\eta)\notin\mathcal K_\rho\}
    \le
    \rho R\varepsilon+o(\varepsilon),
\]
as \(\varepsilon\downarrow0\).
\end{lemma}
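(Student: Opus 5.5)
The plan is to write the joint event as an integral over the ordered tie manifold and reduce it to a local version of Lemma~\ref{lem:top-gap-triple-simple-upper}. Change coordinates from $\eta$ (with unique best and second-best arms) to $(\xi^\uparrow(\eta),\Gamma_s)$, where $\xi^\uparrow=(i,j,u,\zeta_{-ij})$ and the second-best value is $\eta_j=u-\Gamma_s$. This map has unit Jacobian on each component $(i,j)$. The prior density in these coordinates is
\[
\bar f_i(u)\,\bar f_j(u-\gamma)\prod_{k\neq i,j}\bar f_k(\zeta_k),
\]
restricted to $\zeta_k<u-\gamma$, together with $u-\gamma\in\mathcal H$. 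Hence for any Borel $B\subseteq\mathcal T^\uparrow$,
\[
\mathbb P\{\Gamma_s\le R\varepsilon,\ \xi^\uparrow(\eta)\in B\}=\int_0^{R\varepsilon}\int_B \bar f_i(u)\bar f_j(u-\gamma)\prod_{k\neq i,j}\bar f_k(\zeta_k)\,\mathbbm 1\{\zeta_k<u-\gamma\ \forall k,\ u-\gamma\in\mathcal H\}\,\dd\xi\,\dd\gamma .
\]
Dividing by $\varepsilon$, the inner integral as a function of $\gamma$ should converge to $\lambda(B)$ as $\gamma\downarrow0$. If so,
\[
\mathbb P\{\Gamma_s\le R\varepsilon,\ \xi^\uparrow\in B\}=R\varepsilon\,\lambda(B)+o(\varepsilon)\qquad\text{for each fixed } B \text{ and } R.
\]

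The key step is justifying this limit. Pointwise, $\bar f_j(u-\gamma)\to\bar f_j(u)$ by continuity of $\bar f_j$. The constraint indicators $\mathbbm 1\{\zeta_k<u-\gamma\}$ and $\mathbbm 1\{u-\gamma\in\mathcal H\}$ converge to $\mathbbm 1\{\zeta_k<u\}$ and $1$ on the open set $\mathcal T^\uparrow$. For domination, use $M_{\bar f}\,\bar f_i(u)\prod_k\bar f_k(\zeta_k)$, which is integrable since the $\bar f$ are probability densities and bounded by Assumption~\ref{ass:regular-info-prior}. Dominated convergence then gives convergence of the inner integral. Averaging a function that converges as $\gamma\downarrow0$ over $\gamma\in[0,R\varepsilon]$ yields the claimed $R\varepsilon\lambda(B)+o(\varepsilon)$. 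I expect this domination-and-boundary bookkeeping to be the only real obstacle, and it is routine.

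Given this, choose $\mathcal K_\rho$ by inner regularity. Since $\lambda$ is a finite Borel measure on the open (hence $\sigma$-compact, Polish) space $\mathcal T^\uparrow$ with $\lambda(\mathcal T^\uparrow)=\kappa<\infty$, there is a compact $\mathcal K_\rho\subset\mathcal T^\uparrow$ with $\lambda(\mathcal K_\rho)\ge\kappa-\rho$. For example, take on each finitely many component the set $\{|u|\le M, |\zeta_k|\le M,\ \zeta_k\le u-\ell, \text{distance to }\partial\mathcal H\ge\ell\}$ and send $\ell\downarrow0$, $M\uparrow\infty$ using monotone convergence. The first inequality follows by applying the displayed asymptotic with $B=\mathcal K_\rho$. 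For the second, apply it with $B=\mathcal T^\uparrow\setminus\mathcal K_\rho$, whose $\lambda$-mass is at most $\rho$. Alternatively, subtract the first from $\mathbb P(\Gamma_s\le R\varepsilon)=\kappa R\varepsilon+o(\varepsilon)$ of Lemma~\ref{lem:top-gap-triple-simple-upper}, noting that ties have probability zero so $\xi^\uparrow$ is almost surely defined. Both give the bound $\rho R\varepsilon+o(\varepsilon)$. Since $\mathcal K_\rho$ does not depend on $R$, the statements hold for every fixed $R$.
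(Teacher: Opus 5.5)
Your proof is correct, but it is organized differently from the paper's.

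\textbf{The paper's route.} The paper fixes $\mathcal K_\rho$ by inner regularity first and then uses compactness directly. The margin $\ell_\rho>0$ with $\zeta_k\le u-\ell_\rho$ makes the moving constraint $\zeta_k<u-r$ (and $u-r\in\mathcal H$) automatically satisfied for all $r\le R\varepsilon$ once $\varepsilon$ is small. Uniform continuity of $\bar f_j$ on the compact $u$-projection then gives $\int_{u-R\varepsilon}^{u}\bar f_j(v)\,\dd v=R\varepsilon\,\bar f_j(u)+o(\varepsilon)$ uniformly on $\mathcal K_\rho$. So the first bound needs no limit interchange at all. The complement bound is obtained only by subtracting from the total expansion $\mathbb P(\Gamma_s\le R\varepsilon)=\kappa R\varepsilon+o(\varepsilon)$ of Lemma~\ref{lem:top-gap-triple-simple-upper}.

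\textbf{Your route.} You prove $\mathbb P\{\Gamma_s\le R\varepsilon,\ \xi^\uparrow(\eta)\in B\}=R\varepsilon\,\lambda(B)+o(\varepsilon)$ for an arbitrary Borel $B\subseteq\mathcal T^\uparrow$, via the unit-Jacobian change of variables and dominated convergence. Pointwise convergence holds because $\mathcal T^\uparrow$ is defined by strict inequalities and $\mathcal H$ is open. The function $M_{\bar f}\,\bar f_i(u)\prod_{k\neq i,j}\bar f_k(\zeta_k)$ is an integrable dominator.

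\textbf{What each buys.} Your version is more general in three ways:
\begin{itemize}
\item Compactness of $\mathcal K_\rho$ plays no role in the lemma itself; it is needed only downstream, for the uniform expansion in Lemma~\ref{lem:Cdagger-local-expansion-simple-upper}.
\item The complement bound follows by applying the same identity to $\mathcal T^\uparrow\setminus\mathcal K_\rho$, without appealing to the earlier lemma.
\item Taking $B=\mathcal T^\uparrow$ (ties being null) re-derives the top-two expansion of Lemma~\ref{lem:top-gap-triple-simple-upper}.
\end{itemize}
The paper gives up that generality in exchange for a purely uniform, Taylor-type estimate on a set kept away from the constraint boundaries, which avoids dominated convergence.
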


\begin{proof}
Since \(\mathcal T^\uparrow\) is a finite disjoint union of open subsets of
Euclidean spaces and \(\lambda\) is a finite Borel measure with density with
respect to Lebesgue measure on each component, \(\lambda\) is a finite Radon
measure.  In particular, it is inner regular.  Choose a compact set $\mathcal K_\rho\subset\mathcal T^\uparrow$
such that $\lambda(\mathcal K_\rho)\ge \kappa-\rho$.

Because \(\mathcal K_\rho\) is compact and lies inside the open ordered tie
manifold, there exists \(\ell_\rho>0\) such that, for every
\((i,j,u,\zeta_{-ij})\in\mathcal K_\rho\), $ \zeta_k\le u-\ell_\rho$ for all $k\neq i,j$.

For all sufficiently small \(\varepsilon\), uniformly over
\(0\le r\le R\varepsilon\), the second-arm value \(u-r\) remains above all
background values on \(\mathcal K_\rho\).  Hence, restricted to
\(\mathcal K_\rho\),
\[
\begin{aligned}
    &\mathbb P\{\Gamma_s\le R\varepsilon,\,
    \xi^\uparrow(\eta)\in\mathcal K_\rho\}                         \\
    &\quad=
    \sum_{i\neq j}
    \int
        \mathbbm{1}_{\{(i,j,u,\zeta_{-ij})\in\mathcal K_\rho\}}
        \bar f_i(u)
        \prod_{k\neq i,j}\bar f_k(\zeta_k)
        \left[
            \int_{u-R\varepsilon}^{u}\bar f_j(v)\,\dd v
        \right]
    \dd u\,\dd \zeta_{-ij}.
\end{aligned}
\]
By continuity of \(\bar f_j\) on the compact projection of
\(\mathcal K_\rho\),
\[
    \int_{u-R\varepsilon}^{u}\bar f_j(v)\,\dd v
    =
    R\varepsilon\,\bar f_j(u)+o(\varepsilon)
\]
uniformly over \(\mathcal K_\rho\).  Therefore
\[
    \mathbb P\{\Gamma_s\le R\varepsilon,\,
    \xi^\uparrow(\eta)\in\mathcal K_\rho\}
    =
    R\varepsilon\,\lambda(\mathcal K_\rho)+o(\varepsilon)
    \ge
    (\kappa-\rho)R\varepsilon+o(\varepsilon).
\]
The complement bound follows by subtracting this lower bound from the total
top-two gap expansion:
\[
    \mathbb P(\Gamma_s\le R\varepsilon)
    =
    \kappa R\varepsilon+o(\varepsilon). 
\]

\end{proof}

\begin{lemma}[Small Certificate Implies Small Information Gap]
\label{lem:nonlocal-exclusion-simple-upper}
There exists a constant \(R^\star<\infty\), depending only on \(K\) and
\(\widetilde{c}\), such that
\[
    C^\dagger(\mu)\le u
    \quad\Longrightarrow\quad
    \Gamma_s\le R^\star\sqrt u.
\]
\end{lemma}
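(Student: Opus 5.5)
We bound $C^\dagger(\mu)$ from below by a multiple of $\Gamma_s^2$ and then invert. We use the unnormalized form \eqref{eq:Cdagger-unnormalized-simple-upper}: $W^\dagger(\eta)\le K$, and $w_j^\dagger(\eta)\ge \min\{1,\Gamma_s^2/(4\widetilde c\,\Delta_{j,s}^2)\}$. Together these give
\[
    C^\dagger(\mu)\ge \frac1K\min_{j\neq b^\star}\inf_{z\in[\mu_j,\mu_{b^\star}]}\bigl\{D(\mu_{b^\star},z)+w_j^\dagger(\eta)D(\mu_j,z)\bigr\}.
\]
The inner infimum is then controlled with the global quadratic KL bound \eqref{eq:ckl-under-simple-upper}. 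Writing $w:=s(z)\in[\eta_j,\eta_{b^\star}]$, this gives $D(\mu_{b^\star},z)\ge\widetilde c(\eta_{b^\star}-w)^2$ and $D(\mu_j,z)\ge \widetilde c(w-\eta_j)^2$.

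The objective is then bounded below by $\widetilde c\,\inf_{w}\{(\eta_{b^\star}-w)^2+\omega(w-\eta_j)^2\}$ with $\omega:=w_j^\dagger\in(0,1]$. This elementary quadratic minimization gives exactly $\widetilde c\,\frac{\omega}{1+\omega}\Delta_{j,s}^2\ge \frac{\widetilde c}{2}\,\omega\,\Delta_{j,s}^2$. We split into two cases according to the cap in the weight.

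\emph{Case $\omega=1$.} The bound is at least $\frac{\widetilde c}{2}\Delta_{j,s}^2\ge\frac{\widetilde c}{2}\Gamma_s^2$, since $\Delta_{j,s}\ge\Gamma_s$.

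\emph{Case $\omega=\Gamma_s^2/(4\widetilde c\Delta_{j,s}^2)<1$.} The bound is at least $\frac{\widetilde c}{2}\cdot\frac{\Gamma_s^2}{4\widetilde c}=\Gamma_s^2/8$.

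In both cases, the pairwise term is at least $\min\{\widetilde c/2,1/8\}\,\Gamma_s^2=\frac{\widetilde c}{2}\Gamma_s^2$, using $\widetilde c\le1/4$.

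Taking the minimum over $j$ yields $C^\dagger(\mu)\ge \frac{\widetilde c}{2K}\Gamma_s^2$. Consequently $C^\dagger(\mu)\le u$ implies $\Gamma_s\le\sqrt{2K/\widetilde c}\,\sqrt u$, so the statement holds with $R^\star:=\sqrt{2K/\widetilde c}$.

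The only delicate point is ensuring the quadratic KL lower bound may be applied uniformly, with no locality restriction. This is exactly why Assumption~\ref{ass:quadratic-kl} is global and why we work with $\widetilde c=\min\{c_{\rm KL},1/4\}$. The monotonicity of $s$ guarantees that $z\in[\mu_j,\mu_{b^\star}]$ corresponds to $w\in[\eta_j,\eta_{b^\star}]$. The bound can in fact be taken over all real $w$, since relaxing the constraint only lowers the infimum, and this keeps the argument free of boundary cases.
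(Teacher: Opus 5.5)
Your proof is correct and follows the paper's argument essentially step for step: the quadratic KL bound, the closed-form minimum $\widetilde c\,\frac{\omega}{1+\omega}\Delta_{j,s}^2$, the two cases of the weight cap, and $W^\dagger(\eta)\le K$. Your observation that $\min\{\widetilde c/2,1/8\}=\widetilde c/2$ because $\widetilde c\le 1/4$ simply makes the paper's constant $R^\star=\sqrt{K/c_{\rm gap}}$ explicit as $\sqrt{2K/\widetilde c}$.
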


\begin{proof}
Fix \(j\neq b^\star\), write
$\Delta:=\Delta_{j,s}=\eta_{b^\star}-\eta_j$,
    and
    $a:=w_j^\dagger(\eta)$.
For \(z\in[\mu_j,\mu_{b^\star}]\), write
    $r:=s(z)-\eta_j\in[0,\Delta]$.
By \eqref{eq:ckl-under-simple-upper},
\[
\begin{aligned}
    D(\mu_{b^\star},z)+aD(\mu_j,z)
    &\ge
    \widetilde{c}\{(\Delta-r)^2+a r^2\}.
\end{aligned}
\]
If \(a=1\), then
\[
    \inf_{0\le r\le\Delta}
    \widetilde{c}\{(\Delta-r)^2+r^2\}
    =
    \frac{\widetilde{c}}{2}\Delta^2
    \ge
    \frac{\widetilde{c}}{2}\Gamma_s^2.
\]
If \(a<1\), then by \eqref{eq:simple-limiting-weights},
    $a=\frac{\Gamma_s^2}{4\widetilde{c}\,\Delta^2}$.
Thus
\[
\begin{aligned}
    \inf_{0\le r\le\Delta}
    \widetilde{c}\{(\Delta-r)^2+a r^2\}
    =
    \widetilde{c}\frac{a}{1+a}\Delta^2  
    \ge
    \frac{\widetilde{c} a}{2}\Delta^2
    =
    \frac{\Gamma_s^2}{8},
\end{aligned}
\]
where we used \(a<1\).  Therefore, for every challenger \(j\),
\[
    \inf_{z\in[\mu_j,\mu_{b^\star}]}
    \{D(\mu_{b^\star},z)+w_j^\dagger(\eta) D(\mu_j,z)\}
    \ge
    c_{\rm gap}\Gamma_s^2,
\]
where $c_{\rm gap}:=
    \min\left\{\frac{\widetilde{c}}{2},\frac18\right\}>0$.

Since \(W^\dagger(\eta)\le K\), \eqref{eq:Cdagger-unnormalized-simple-upper} gives $C^\dagger(\mu)
    \ge
    \frac{c_{\rm gap}}{K}\Gamma_s^2$.
Hence \(C^\dagger(\mu)\le u\) implies
\[
    \Gamma_s\le \sqrt{\frac K{c_{\rm gap}}}\sqrt u.
\]
Taking $R^\star:=\sqrt{K/c_{\rm gap}}$ gives the desired result.

\end{proof}

\begin{lemma}[Local Expansion of the Limiting Certificate]
\label{lem:Cdagger-local-expansion-simple-upper}
Fix a compact set \(\mathcal K\subset\mathcal T^\uparrow\).  Uniformly over
parameter vectors satisfying
    $\xi^\uparrow(\eta)\in\mathcal K$
    and
    $\Gamma_s\downarrow0$,
we have
\[
    C^\dagger(\mu)
    =
    \frac{\Gamma_s^2}{8}\{1+o(1)\}.
\]
\end{lemma}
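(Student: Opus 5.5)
Since $\xi^\uparrow(\eta)\in\mathcal K$ keeps the top value $u=\eta_{b^\star}$ in a compact subset of $\mathcal H$ and separates the background arms from $u$ by a margin $\ell>0$ (as in the proof of Lemma~\ref{lem:compact-localization-simple-upper}), all relevant parameters stay in a fixed compact region. The proof then has three parts: show the weights become exactly those of a two-arm problem, expand the KL divergence to second order in the information coordinate, and handle the third-and-lower arms.

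\textbf{Weights.} For every $k$ other than the top two, $\Delta_{k,s}\ge \ell$. Hence
\[
w_k^\dagger(\eta)=\min\{1,\Gamma_s^2/(4\widetilde c\,\Delta_{k,s}^2)\}\le \Gamma_s^2/(4\widetilde c\,\ell^2)=o(1).
\]
The second-best arm has weight $1$ because $\widetilde c\le 1/4$. Therefore $W^\dagger(\eta)=2+o(1)$, uniformly over $\mathcal K$, and
\[
p_{b^\star}^\dagger=\tfrac12+o(1),\qquad p_{b_2^\star}^\dagger=\tfrac12+o(1).
\]

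\textbf{Local KL expansion.} Write
\[
D(\mu,z)=\bigl(\theta(\mu)-\theta(z)\bigr)A'(\theta(\mu))-A(\theta(\mu))+A(\theta(z)),
\]
and change variables to $\eta$ using $\beta$. The map $(\eta,\eta')\mapsto D(s^{-1}(\eta),s^{-1}(\eta'))$ is $C^3$ near the diagonal, because $A,\theta\in C^4$ and $s'=\sqrt I>0$. Its Hessian in $\eta'$ at $\eta'=\eta$ is $I(\mu)/s'(\mu)^2=1$. Taylor's theorem with a remainder that is uniform on a compact neighbourhood then gives
\[
D(s^{-1}(\eta),s^{-1}(\eta'))=\tfrac12(\eta-\eta')^2\bigl(1+O(|\eta-\eta'|)\bigr).
\]
For the pair $(b^\star,b_2^\star)$, set $w=s(z)\in[\eta_{b_2^\star},\eta_{b^\star}]$. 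Then
\[
p_{b^\star}^\dagger D(\mu_{b^\star},z)+p_{b_2^\star}^\dagger D(\mu_{b_2^\star},z)
=\bigl(\tfrac12+o(1)\bigr)\tfrac12\bigl[(\eta_{b^\star}-w)^2+(w-\eta_{b_2^\star})^2\bigr](1+O(\Gamma_s)).
\]
The bracket is minimized at the midpoint, where it equals $\Gamma_s^2/2$. So the infimum over $z$ is $\frac{\Gamma_s^2}{8}(1+o(1))$, and this gives the upper bound $C^\dagger(\mu)\le \frac{\Gamma_s^2}{8}(1+o(1))$.

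\textbf{Other challengers.} The step I expect to be the main obstacle is showing that every other challenger $k$ has exponent at least $\Gamma_s^2/8$. These challengers are not local, so the quadratic expansion does not apply. I will use \eqref{eq:ckl-under-simple-upper} together with the computation in the proof of Lemma~\ref{lem:nonlocal-exclusion-simple-upper}. Write $a=w_k^\dagger$.
\begin{itemize}
\item If $a<1$, the unnormalized objective satisfies $\ge \widetilde c\,\frac{a}{1+a}\Delta_{k,s}^2=\frac{\Gamma_s^2}{4(1+a)}$. Dividing by $W^\dagger=2+o(1)$ gives $\frac{\Gamma_s^2}{8}(1+o(1))$, since $a=o(1)$.
\item If $a=1$, which requires $\Delta_{k,s}\le\Gamma_s/(2\sqrt{\widetilde c})$, this is impossible once $\Gamma_s<2\sqrt{\widetilde c}\,\ell$, because $\Delta_{k,s}\ge\ell$.
\end{itemize}
Hence the minimum over $j$ is attained, up to a factor $1+o(1)$, by the second-best arm, which yields $C^\dagger(\mu)=\frac{\Gamma_s^2}{8}\{1+o(1)\}$ uniformly.
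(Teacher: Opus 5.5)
Your proposal is correct and follows the paper's proof essentially step for step. Both arguments use the margin $\ell$ to get $W^\dagger=2+O(\Gamma_s^2)$, the uniform quadratic Fisher--Rao expansion of the KL divergence to show the top pair contributes $\Gamma_s^2/8\,(1+o(1))$, and the coercivity computation from Lemma~\ref{lem:nonlocal-exclusion-simple-upper} to bound every other challenger below by $\frac{\Gamma_s^2}{4(1+w_k^\dagger)}/W^\dagger$. Your explicit $C^3$ justification of the Taylor remainder and your observation that the cap $w_k^\dagger=1$ cannot bind once $\Gamma_s<2\sqrt{\widetilde c}\,\ell$ simply spell out details the paper leaves implicit.
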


\begin{proof}
Let \(b^\star_2\) denote the second-best arm and put $\gamma:=\Gamma_s$.

Because \(\mathcal K\) is compactly contained in
\(\mathcal T^\uparrow\), there exists \(\ell>0\) such that, whenever
\(\xi^\uparrow(\eta)\in\mathcal K\),
\[
    \eta_{b^\star}-\eta_k\ge \ell,
    \qquad k\notin\{b^\star,b^\star_2\},
\]
for all sufficiently small \(\gamma\).  Also, all relevant
information coordinate values remain in a compact subset of \(\mathcal H\).

The KL divergence has the uniform local Fisher coordinate expansion
\begin{equation}
\label{eq:local-kl-simple-upper}
    D(s^{-1}(\eta),s^{-1}(v))
    =
    \frac12(\eta-v)^2+O(|\eta-v|^3),
\end{equation}
for \(\eta,v\) in any compact subset of \(\mathcal H\).  This follows from a
Taylor expansion of the KL divergence and the definition of \(s\).

For the top two arms, $w_{b^\star}^\dagger(\eta)=1$ and $w_{b^\star_2}^\dagger(\eta)
    =
    \min\left\{
        1,\frac{\gamma^2}{4\widetilde{c}\,\gamma^2}
    \right\}
    =
    1,$
Hence using
\eqref{eq:local-kl-simple-upper},
\[
\begin{aligned}
    &\inf_{z\in[\mu_{b^\star_2},\mu_{b^\star}]}
    \{D(\mu_{b^\star},z)+D(\mu_{b^\star_2},z)\}                      \\
    &\qquad=
    \inf_{w\in[\eta_{b^\star}-\gamma,\eta_{b^\star}]}
    \left\{
        \frac12(\eta_{b^\star}-w)^2
        +
        \frac12(w-\eta_{b^\star}+\gamma)^2
    \right\}
    +
    o(\gamma^2)                                             \\
    &\qquad=
    \frac{\gamma^2}{4}+o(\gamma^2).
\end{aligned}
\]

For any other challenger \(k\notin\{b^\star,b^\star_2\}\), its information gap
\(\Delta_{k,s}:=\eta_{b^\star}-\eta_k\) is bounded below by \(\ell\).  Therefore $w_k^\dagger(\eta)
    =
    \frac{\gamma^2}{4\widetilde{c}\,\Delta_{k,s}^2}
    =
    O(\gamma^2)$,
for all sufficiently small \(\gamma\).  Hence
\[
    W^\dagger(\eta)
    =
    1+w_{b^\star_2}^\dagger(\eta)+\sum_{k\notin\{b^\star,b^\star_2\}}w_k^\dagger(\eta)
    =
    2+O(\gamma^2).
\]
Furthermore, by the same coercivity calculation as in
Lemma~\ref{lem:nonlocal-exclusion-simple-upper},
\[
\begin{aligned}
    \inf_{z\in[\mu_k,\mu_{b^\star}]}
    \{D(\mu_{b^\star},z)+w_k^\dagger(\eta) D(\mu_k,z)\}
    \ge
    \widetilde{c}
    \frac{w_k^\dagger(\eta)}{1+w_k^\dagger(\eta)}
    \Delta_{k,s}^2
    =
    \frac{\gamma^2}{4}\{1+o(1)\}.
\end{aligned}
\]
Thus
\[
    \min_{j\neq b^\star}
    \inf_{z\in[\mu_j,\mu_{b^\star}]}
    \{D(\mu_{b^\star},z)+w_j^\dagger(\eta) D(\mu_j,z)\}
    =
    \frac{\gamma^2}{4}\{1+o(1)\}.
\]
Combining this with $W^\dagger(\eta)=2+o(1)$ in \eqref{eq:Cdagger-unnormalized-simple-upper} gives
\[
    C^\dagger(\mu)
    =
    \frac{\gamma^2}{8}\{1+o(1)\}.
\]
The preceding estimates are uniform on \(\mathcal K\), so the proof is complete.

\end{proof}

\begin{lemma}[Small Lower Tail of the Limiting Certificate]
\label{lem:Cdagger-tail-simple-upper}
As \(u\downarrow0\),
\[
    \mathbb P(C^\dagger(\mu)\le u)
    =
    \kappa\sqrt{8u}+o(\sqrt u).
\]
\end{lemma}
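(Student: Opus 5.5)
The plan is to sandwich the event $\{C^\dagger(\mu)\le u\}$ between top-two-gap events, using the compact localization of Lemma~\ref{lem:compact-localization-simple-upper} with $\varepsilon:=\sqrt u$. Fix $\rho>0$ and $\tau\in(0,1)$, and let $\mathcal K_\rho$ be the compact set supplied by that lemma. By Lemma~\ref{lem:Cdagger-local-expansion-simple-upper} applied with $\mathcal K=\mathcal K_\rho$, there is $\gamma_0>0$ such that whenever $\xi^\uparrow(\eta)\in\mathcal K_\rho$ and $\Gamma_s\le\gamma_0$,
\[
(1-\tau)\frac{\Gamma_s^2}{8}\le C^\dagger(\mu)\le(1+\tau)\frac{\Gamma_s^2}{8}.
\]

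\textbf{Lower bound.} On $\{\xi^\uparrow(\eta)\in\mathcal K_\rho\}$, the condition $\Gamma_s\le\sqrt{8u/(1+\tau)}$ forces $C^\dagger(\mu)\le u$ once $u$ is small enough that $\sqrt{8u/(1+\tau)}\le\gamma_0$. Apply the first estimate of Lemma~\ref{lem:compact-localization-simple-upper} with $R:=\sqrt{8/(1+\tau)}$ and $\varepsilon=\sqrt u$. This gives
\[
\mathbb P(C^\dagger(\mu)\le u)\ge(\kappa-\rho)\sqrt{8/(1+\tau)}\,\sqrt u+o(\sqrt u).
\]

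\textbf{Upper bound.} Split according to whether $\xi^\uparrow(\eta)$ lies in $\mathcal K_\rho$.

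Inside $\mathcal K_\rho$, Lemma~\ref{lem:nonlocal-exclusion-simple-upper} shows that $C^\dagger\le u$ implies $\Gamma_s\le R^\star\sqrt u$, which is below $\gamma_0$ for small $u$. The local expansion then improves this to $\Gamma_s\le\sqrt{8u/(1-\tau)}$, so this piece is at most $\mathbb P(\Gamma_s\le\sqrt{8u/(1-\tau)})=\kappa\sqrt{8/(1-\tau)}\sqrt u+o(\sqrt u)$ by Lemma~\ref{lem:top-gap-triple-simple-upper}.

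Outside $\mathcal K_\rho$, Lemma~\ref{lem:nonlocal-exclusion-simple-upper} again gives $\Gamma_s\le R^\star\sqrt u$. The second estimate of Lemma~\ref{lem:compact-localization-simple-upper} with $R=R^\star$ then bounds this piece by $\rho R^\star\sqrt u+o(\sqrt u)$.

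A subtlety is that the unique best and second-best arms must be well defined, which fails only on a null set. The triple-gap bound is not needed here, because off-compact configurations are already handled by the localization lemma.

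\textbf{Conclusion.} Combining the two bounds,
\[
(\kappa-\rho)\sqrt{\tfrac{8}{1+\tau}}\le\liminf_{u\downarrow0}\frac{\mathbb P(C^\dagger(\mu)\le u)}{\sqrt u}\le\limsup_{u\downarrow0}\frac{\mathbb P(C^\dagger(\mu)\le u)}{\sqrt u}\le\kappa\sqrt{\tfrac{8}{1-\tau}}+\rho R^\star.
\]
Since $R^\star$ depends only on $K$ and $\widetilde c$, not on $\rho$, letting $\tau\downarrow0$ and then $\rho\downarrow0$ gives the limit $\kappa\sqrt8$, as claimed.

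The main obstacle is controlling the contribution outside $\mathcal K_\rho$. The local expansion is uniform only on compacts, so near the boundary of $\mathcal H$, or when a third arm is close to the top two, $C^\dagger$ need not be comparable to $\Gamma_s^2/8$. This is why the crude but global bound of Lemma~\ref{lem:nonlocal-exclusion-simple-upper}, with a constant independent of $\rho$, is essential: it turns the $\rho$-small mass bound of the localization lemma into a vanishing error term.
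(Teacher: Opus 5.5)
Your proof is correct and follows essentially the same route as the paper: you localize to the compact set $\mathcal K_\rho$, use the uniform expansion $C^\dagger(\mu)=\frac{\Gamma_s^2}{8}\{1+o(1)\}$ there for the lower bound and for the in-compact part of the upper bound, and combine the global exclusion $\Gamma_s\le R^\star\sqrt u$ (with $R^\star$ independent of $\rho$) with the second localization estimate off $\mathcal K_\rho$. The differences are cosmetic: you use slack factors $\sqrt{8/(1\pm\tau)}$ instead of the paper's $(1\mp\tau)\sqrt8$, and you send $\tau$ and $\rho$ to zero in the opposite order; you are also right that only the top-two gap expansion, not the triple-gap estimate, is actually needed.
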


\begin{proof}
We prove matching lower and upper bounds.

For the lower bound, fix \(\rho>0\) and \(\tau\in(0,1)\).  Choose
\(\mathcal K_\rho\) as in
Lemma~\ref{lem:compact-localization-simple-upper}.  By the uniform local
expansion in Lemma~\ref{lem:Cdagger-local-expansion-simple-upper}, for all
sufficiently small \(u\),
\[
    \left\{
        \Gamma_s\le (1-\tau)\sqrt{8u},
        \ \xi^\uparrow(\eta)\in\mathcal K_\rho
    \right\}
    \subseteq
    \{C^\dagger(\mu)\le u\}.
\]
Indeed, on \(\mathcal K_\rho\),
\[
    C^\dagger(\mu)
    =
    \frac{\Gamma_s^2}{8}\{1+o(1)\},
\]
uniformly as \(\Gamma_s\downarrow0\), and the factor
\((1-\tau)^2\) leaves enough slack.  Therefore, using
Lemma~\ref{lem:compact-localization-simple-upper} with
\(\varepsilon=\sqrt u\) and \(R=(1-\tau)\sqrt8\),
\[
\begin{aligned}
    \mathbb P(C^\dagger(\mu)\le u)
    &\ge
    \mathbb P\left(
        \Gamma_s\le (1-\tau)\sqrt{8u},
        \ \xi^\uparrow(\eta)\in\mathcal K_\rho
    \right)  \\
    &\ge
    (\kappa-\rho)(1-\tau)\sqrt{8u}
    +
    o(\sqrt u).
\end{aligned}
\]
Letting \(\rho\downarrow0\) and then \(\tau\downarrow0\) yields
\[
    \liminf_{u\downarrow0}
    \frac{\mathbb P(C^\dagger(\mu)\le u)}{\sqrt u}
    \ge
    \kappa\sqrt8.
\]

For the upper bound, Lemma~\ref{lem:nonlocal-exclusion-simple-upper} gives
\[
    \{C^\dagger(\mu)\le u\}
    \subseteq
    \{\Gamma_s\le R^\star\sqrt u\}.
\]
Fix \(\rho>0\), and again use the compact set \(\mathcal K_\rho\).  On
\[
    \{C^\dagger(\mu)\le u,\ \xi^\uparrow(\eta)\in\mathcal K_\rho\},
\]
Lemma~\ref{lem:nonlocal-exclusion-simple-upper} implies
\(\Gamma_s\to0\) uniformly as \(u\downarrow0\).  Hence the uniform local
expansion gives, for all sufficiently small \(u\),
\[
    C^\dagger(\mu)
    \ge
    (1-\tau)\frac{\Gamma_s^2}{8}.
\]
Consequently, on the same event,
\[
    \Gamma_s
    \le
    \sqrt{\frac{8u}{1-\tau}}
    \le
    (1+\tau)\sqrt{8u},
\]
after decreasing \(u\) if necessary and replacing \(\tau\) by a smaller slack
constant.  Therefore
\[
\begin{aligned}
    \mathbb P(C^\dagger(\mu)\le u)
    \le
    \mathbb P\{\Gamma_s\le (1+\tau)\sqrt{8u}\} 
    +
    \mathbb P\{\Gamma_s\le R^\star\sqrt u,\,
    \xi^\uparrow(\eta)\notin\mathcal K_\rho\}.
\end{aligned}
\]
By Lemmas~\ref{lem:top-gap-triple-simple-upper} and
\ref{lem:compact-localization-simple-upper},
\[
    \mathbb P(C^\dagger(\mu)\le u)
    \le
    \kappa(1+\tau)\sqrt{8u}
    +
    \rho R^\star\sqrt u
    +
    o(\sqrt u).
\]
Letting \(u\downarrow0\), then \(\rho\downarrow0\), and finally
\(\tau\downarrow0\), gives
\[
    \limsup_{u\downarrow0}
    \frac{\mathbb P(C^\dagger(\mu)\le u)}{\sqrt u}
    \le
    \kappa\sqrt8.
\]
Combining the two bounds proves the claim.

\end{proof}

\subsubsection{Lower Bound on the Calibration Threshold}

\begin{lemma}[Lower Bound on the Calibration Threshold]
\label{lem:threshold-simple-upper}
For every \(\varepsilon\in(0,1)\), there exists \(\alpha_0>0\) such that, for
every \(0<\alpha\le\alpha_0\), there exists
\(T_0(\alpha,\varepsilon)<\infty\) such that, for all
\(T\ge T_0(\alpha,\varepsilon)\),
\[
    r_{T,\alpha}
    \ge
    (1-\varepsilon)\frac{\alpha^2T}{8\kappa^2}.
\]
\end{lemma}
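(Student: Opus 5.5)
The plan mirrors the Gaussian proof of Lemma~\ref{lem:pgws-threshold-lower}, replacing the explicit Gaussian tail by Lemma~\ref{lem:Cdagger-tail-simple-upper} and the almost-sure limit by Lemma~\ref{lem:RT-limit-simple-upper}.

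Fix $\varepsilon\in(0,1)$ and set
\[
x_{\alpha,\varepsilon}:=(1-\varepsilon)\frac{\alpha^2}{8\kappa^2}.
\]
By Lemma~\ref{lem:Cdagger-tail-simple-upper}, as $\alpha\downarrow0$,
\[
\mathbb P(C^\dagger(\mu)\le x_{\alpha,\varepsilon})=\kappa\sqrt{8x_{\alpha,\varepsilon}}+o(\alpha)=\sqrt{1-\varepsilon}\,\alpha+o(\alpha).
\]
Since $\sqrt{1-\varepsilon}<1$, there is $\alpha_0>0$ such that for every $0<\alpha\le\alpha_0$ we have the strict inequality $\mathbb P(C^\dagger(\mu)\le x_{\alpha,\varepsilon})<\alpha$. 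Here $0<\kappa<\infty$ is guaranteed by Assumption~\ref{ass:regular-info-prior}, so $x_{\alpha,\varepsilon}$ is well defined and positive.

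Next, fix such an $\alpha$. Lemma~\ref{lem:RT-limit-simple-upper} gives $R_T/T\to C^\dagger(\mu)$ almost surely under the joint Bayesian law of \textsc{IC-PGWS}, hence also in distribution. Applying the Portmanteau theorem to the closed set $(-\infty,x_{\alpha,\varepsilon}]$ yields
\[
\limsup_{T\to\infty}\mathbb P(R_T\le x_{\alpha,\varepsilon}T)\le \mathbb P(C^\dagger(\mu)\le x_{\alpha,\varepsilon})<\alpha .
\]
So there is $T_0(\alpha,\varepsilon)$ with $\mathbb P(R_T\le x_{\alpha,\varepsilon}T)<\alpha$ for all $T\ge T_0$. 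We then use the definition \eqref{eq:icpgws-terminal-threshold} of $r_{T,\alpha}$ as $\inf\{r\ge0:\mathbb P(R_T\le r)\ge\alpha\}$. Since $r\mapsto\mathbb P(R_T\le r)$ is nondecreasing, every $r\le x_{\alpha,\varepsilon}T$ has $\mathbb P(R_T\le r)<\alpha$, which forces $r_{T,\alpha}\ge x_{\alpha,\varepsilon}T$. This is exactly the claim.

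Two points need care. First, $R_T$ must be a genuine nonnegative random variable. Since $\max_j\varPi_T(\cdot)\le1$ we have $R_T\in[0,\infty]$; it could be $+\infty$, but that only helps the bound. Second, the almost-sure limit $C^\dagger(\mu)$ must be a well-defined random variable, which holds off the null tie set. Neither is a real obstacle, because the substantive work (the posterior large deviations and the lower-tail expansion of $C^\dagger$) is already done in Lemmas~\ref{lem:RT-limit-simple-upper} and~\ref{lem:Cdagger-tail-simple-upper}. The one thing I would double-check is the order of quantifiers: $\alpha_0$ may depend only on $\varepsilon$ through the $o(\alpha)$ term of the tail lemma, and $T_0$ may depend on both $\alpha$ and $\varepsilon$.
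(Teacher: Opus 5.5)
Your proposal is correct and follows essentially the same route as the paper's proof. The lower-tail expansion of \(C^\dagger(\mu)\) gives \(\mathbb P(C^\dagger(\mu)\le x_{\alpha,\varepsilon})=\sqrt{1-\varepsilon}\,\alpha+o(\alpha)<\alpha\) for small \(\alpha\); then the almost-sure limit \(R_T/T\to C^\dagger(\mu)\), the Portmanteau theorem on the closed half-line, and the lower-quantile definition of \(r_{T,\alpha}\) give the bound, and your remarks on monotonicity and the order of quantifiers only make explicit what the paper leaves implicit.
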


\begin{proof}
Set $c:=\frac{1}{8\kappa^2}$ and $u_{\alpha,\varepsilon}:=(1-\varepsilon)c\alpha^2$.
By Lemma~\ref{lem:Cdagger-tail-simple-upper},
\[
\begin{aligned}
    \mathbb P(C^\dagger(\mu)\le u_{\alpha,\varepsilon})
    &=
    \kappa\sqrt{8(1-\varepsilon)c}\,\alpha+o(\alpha)  \\
    &=
    \sqrt{1-\varepsilon}\,\alpha+o(\alpha).
\end{aligned}
\]
Therefore, for all sufficiently small \(\alpha\),
    $\mathbb P(C^\dagger(\mu)\le u_{\alpha,\varepsilon})<\alpha$.
By Lemma~\ref{lem:RT-limit-simple-upper},
\[
    \frac{R_T}{T}\to C^\dagger(\mu)
    \qquad\text{a.s.}
\]
Thus \(R_T/T\) converges to \(C^\dagger(\mu)\) in distribution.  Applying the
Portmanteau theorem to the closed set $(-\infty,u_{\alpha,\varepsilon}]$
gives
\[
    \limsup_{T\to\infty}
    \mathbb P\left(
        \frac{R_T}{T}\le u_{\alpha,\varepsilon}
    \right)
    \le
    \mathbb P(C^\dagger(\mu)\le u_{\alpha,\varepsilon})
    <
    \alpha.
\]
Hence, for all sufficiently large \(T\),
    $\mathbb P(R_T\le u_{\alpha,\varepsilon}T)<\alpha$.
By the definition of \(r_{T,\alpha}\) as the lower \(\alpha\)-quantile of
\(R_T\),
\[
    r_{T,\alpha}
    \ge
    u_{\alpha,\varepsilon}T
    =
    (1-\varepsilon)\frac{\alpha^2T}{8\kappa^2}. 
\]

\end{proof}

\subsubsection{Proof of Theorem~\ref{thm:simple-transfer-upper}.}

\begin{proof}
We first prove abstention calibration.  By the definition of the lower
\(\alpha\)-quantile,
\[
    \mathbb P(R_T<r_{T,\alpha})
    \le
    \alpha
    \le
    \mathbb P(R_T\le r_{T,\alpha}).
\]
If \(\mathbb P(R_T=r_{T,\alpha})>0\), the boundary randomization in
\eqref{eq:icpgws-terminal-rule-simple-upper} gives
\[
\begin{aligned}
    \mathbb P(\widehat a_T=?)
    =
    \mathbb P(R_T<r_{T,\alpha})
    +
    \theta_{T,\alpha}
    \mathbb P(R_T=r_{T,\alpha}) 
    =
    \alpha.
\end{aligned}
\]
If \(\mathbb P(R_T=r_{T,\alpha})=0\), then
\[
    \mathbb P(R_T<r_{T,\alpha})
    =
    \mathbb P(R_T\le r_{T,\alpha})
    =
    \alpha,
\]
by the defining property of the lower quantile and the absence of an atom at
the quantile.  Since \(\theta_{T,\alpha}=0\) in this case, the same conclusion
holds.  Therefore
\[
    \mathcal A_T (\textsc{IC-PGWS}(\alpha))=\alpha.
\]

For the undetected error probability,
Lemma~\ref{lem:posterior-certification-simple-upper} gives
\[
    \mathcal E_T (\textsc{IC-PGWS}(\alpha))
    \le
    (K-1)e^{-r_{T,\alpha}}.
\]
By Lemma~\ref{lem:threshold-simple-upper}, for every
\(\varepsilon\in(0,1)\), all sufficiently small \(\alpha\), and all sufficiently
large \(T\),
\[
    r_{T,\alpha}
    \ge
    (1-\varepsilon)\frac{\alpha^2T}{8\kappa^2}.
\]
Hence
\[
    \mathcal E_T(\textsc{IC-PGWS}(\alpha))
    \le
    (K-1)
    \exp\left\{
        -(1-\varepsilon)\frac{\alpha^2T}{8\kappa^2}
    \right\}.
\]
Taking logarithms, dividing by \(\alpha^2T\), taking
\(T\to\infty\), then \(\alpha\downarrow0\), and finally
\(\varepsilon\downarrow0\), yields
\[
    \limsup_{\alpha\downarrow0}
    \limsup_{T\to\infty}
    \frac{1}{\alpha^2T}
    \log \mathcal E_T(\textsc{IC-PGWS}(\alpha))
    \le
    -\frac{1}{8\kappa^2}. 
\]

\end{proof}

\section*{Acknowledgments}
The authors thank Junwen Yang (National University of Singapore) for helpful discussions during the initial phase of this work.

\newpage
\bibliographystyle{ims}
\bibliography{main_bib}

\end{document}